\renewcommand{\cite}{\citep}  %
\renewcommand{\cite}{\citep}
\crefname{section}{Section}{Sections}
\crefname{appendix}{Appendix}{Appendices}
\crefname{theorem}{Theorem}{Theorems}
\crefname{lemma}{Lemma}{Lemmas}
\crefname{corollary}{Corollary}{Corollaries}
\crefname{proposition}{Proposition}{Propositions}
\crefname{definition}{Definition}{Definitions}
\crefname{assumption}{Assumption}{Assumptions}
\Crefname{algorithm}{Algorithm}{Algorithms}
\crefname{figure}{Figure}{Figures}
\crefname{table}{Table}{Tables}
\newcommand{\myparagraph}[1]{\par\noindent\textbf{{#1}.}} %
\newcommand{\Expect}{\mathbb{E}}
\newcommand{\supq}{S}
\newcommand{\Var}{\mathbb{V}}
\newcommand{\Cov}{\mathrm{Cov}}
\newcommand{\SIF}{I}  %
\newcommand{\id}{\mathbf{I}} %
\renewcommand{\epsilon}{\varepsilon}  %
\newcommand \reals {\mathbb{R}}
\newcommand \T {^{\top}}	%
\newcommand{\ones}{\mathbf{1}}
\newcommand \expect {\mathbb{E}}
\newcommand{\Prob}{\mathbb{P}}
\newcommand{\eps}{\epsilon}
\newcommand{\Rank}{\operatorname{\bf Rank}}
\newcommand{\Tr}{\operatorname{\bf Tr}}
\DeclarePairedDelimiterX{\inp}[2]{\langle}{\rangle}{#1, #2} 
\newcommand \diag {\operatorname*{Diag}}
\newcommand{\myabs}[1]{\left\lvert #1 \right\rvert}
\newcommand{\mynorm}[1]{\left\| #1 \right\|}
\newcommand \argmin {\operatorname*{arg\,min}} %
\newcommand \conv {\operatorname*{conv}} %
\newcommand \poly {\operatorname*{poly}} %
\newcommand \grad {\nabla}
\newcommand \D {\mathrm{d}}
\newcommand \Zcal {\mathcal Z}
\newcommand \Acal {\mathcal A}
\newcommand \Bcal {\mathcal B}
\newcommand \Ccal {\mathcal C}
\newcommand \Hcal {\mathcal H}
\newcommand \Kcal {\mathcal K}
\newcommand \Gcal {\mathcal G}
\newcommand \Qcal {\mathcal Q}
\newcommand \Ncal {\mathcal N}
\newcommand{\Sigmam}{\varSigma}
\newtheorem{theorem}{Theorem}
\newtheorem{lemma}[theorem]{Lemma}
\newtheorem{proposition}[theorem]{Proposition}
\newtheorem{corollary}[theorem]{Corollary}
\newtheorem{remark}[theorem]{Remark}
\newtheorem{definition}[theorem]{Definition}
\newtheorem{assumption}{Assumption}
\newenvironment{customasmp}[1]
  {\innercustomasmp}
  {\endinnercustomasmp}
\newenvironment{customthm}[1]
  {\innercustomthm}
  {\endinnercustomthm}
\newcommand{\est}{\theta_n}
\newcommand{\staterr}{\mathcal{E}}
\newcommand{\event}{\mathcal{G}}
\newcommand\blfootnote[1]{%
  \begingroup
  \renewcommand\thefootnote{}\footnote{#1}%
  \addtocounter{footnote}{-1}%
  \endgroup
} 
\title{Statistical and Computational Guarantees for Influence Diagnostics}
\author{Jillian Fisher$^1$ \qquad Lang Liu$^1$ \qquad Krishna Pillutla$^{1\dagger}$ \qquad Yejin Choi$^{1, 2}$ \qquad Zaid Harchaoui$^1$\vspace{0.3cm} \\
$^1$University of Washington
$\qquad$
$^2$Allen Institute for Artificial Intelligence
}\date{\vspace{-1em}}
\begin{document}
\maketitle
\blfootnote{
$^\dagger$Now at Google Research
}

\begin{abstract}

Influence diagnostics such as influence functions and approximate maximum influence perturbations are popular in machine learning and in AI domain applications. Influence diagnostics are powerful statistical tools to identify influential datapoints or subsets of datapoints. We establish finite-sample statistical bounds, as well as computational complexity bounds, for influence functions and approximate maximum influence perturbations using efficient inverse-Hessian-vector product implementations. We illustrate our results with generalized linear models and large attention based models on synthetic and real data.  \end{abstract}

\doparttoc %
\faketableofcontents %

\section{INTRODUCTION}

Statistical machine learning models have been increasingly used in fully or partially automatized data analysis processes and artificial intelligence applications~\cite{rudin2019stop}. The automatizing of decisions impacting the society inspire a parallel effort to develop methods to identify the factors impacting specific decisions. The heightened scrutiny on the way statistical models now operate at a large scale and at a fast pace has led to a renewed interest in statistical diagnostics such as the influence function~\cite{cook,koh,arnoldi,louvet2022influence}.

The influence function or curve of a statistical estimator has been proposed to measure the sensitivity of the estimator to individual datapoints. 
Computing the influence of a particular datapoint boils down to computing an inverse-Hessian-vector product. Due to a greater focus on least-squares-type estimator with small samples, the computational aspects have received relatively little attention until recently~\cite{koh,arnoldi}, while the statistical aspects have mainly focused on large sample classical asymptotics~\cite{rousseeuw2011robust,marco}. 

The statistical analysis of influence functions for generalized linear models presents several challenges. For non-squared loss functions, the curvature captured by the Hessian varies away from the true parameter $\theta_\star$, a property that can be modelled using self-concordance. Moreover, non-asymptotic analyses for misspecified generalized linear models require recently developed tools such as matrix concentration inequalities~\cite{mackey2014matrix}. We present non-asymptotic statistical bounds for influence functions of generalized linear models under pseudo self-concordance assumptions. Thanks to a novel interpretation of ~\citet{broderick2020automatic}'s maximum subset influence using superquantiles, we also obtain non-asymptotic guarantees for this diagnostic tool as well. 

The computational analysis of influence is equally interesting. The statistical and computational trade-offs have not received attention to the best of our knowledge. We review classical algorithms such as the conjugate gradient method~\cite{saad2003iterative,bai2021matrix} and an approach using the Arnoldi iteration~\cite{arnoldi}, and we develop approaches using variance reduced stochastic optimization algorithms~\cite{bertsekas2015convex,bach2021learning}.
 Our analysis reveals interesting trade-offs depending on the near low-rank structure that is the eigendecay of the Hessian for small to moderate sample sizes relative to the dimension, as well as the potential benefits of using linearly convergent stochastic algorithms.
\myparagraph{Outline}
In \Cref{sec:bg}, we introduce influence diagnostics and the computational challenges they present in high dimensional settings. In \Cref{sec:influence}, we obtain finite-sample bounds on empirical influence functions for generalized linear models. We also achieve computational accuracy bounds on empirical influence functions computed using deterministic Krylov-based methods and stochastic optimization based methods. In \Cref{sec:mis}, we provide similar guarantees for maximum subset influence owing to a novel .superquantile interpretation. Lastly, in \Cref{sec:expt}, we provide numerical illustrations of our theoretical bounds on synthetic data and real data, with generalized linear models and large attention based models.

\section{INFLUENCE FUNCTIONS}
\label{sec:bg}

We are interested in the parameter $\theta_\star \in \Theta = \reals^p$ defined as 
\begin{align} \label{eq:population-min}
    \theta_\star := \argmin_{\theta \in \Theta} \Big[ F(\theta):= \expect_{Z \sim P}\left[ \ell(Z, \theta) \right] \Big] \,,
\end{align}
where 
$P$ is an unknown probability distribution over a data space $\Zcal$ and
$\ell: \Zcal \times \Theta \to \reals_+$ is a loss function that is closed, convex, and thrice continuously differentiable in the second argument. We assume this argmin is unique. 

For instance, binary logistic regression corresponds to $\Zcal = \reals^p \times \{\pm 1\}$ and a loss $\ell\big((x, y), \theta\big) = \log\big(1 + \exp(-y \inp{\theta}{x})\big)$. Here, problem \eqref{eq:population-min} is equivalent to finding parameters $\theta_\star \in \Theta$ that minimize the Kullback-Leiblier divergence between the unknown data distribution $P$ and the parametric model $P_\theta(Y|X=x) = 1/\big(1 + \exp(-y \inp{\theta}{x})\big)$. 

Since the data distribution $P$ is unknown, we estimate $\theta_\star$ using an i.i.d. sample $Z_{1:n} := (Z_1, \cdots, Z_n) \sim P^n$. This leads to the M-estimation problem,
\begin{align} \label{eq:sample-min}
    \theta_n := \argmin_{\theta \in \Theta} \frac{1}{n} \sum_{i=1}^n \ell(Z_i, \theta) \,,
\end{align}
where we assume the argmin to be unique. 
For the logistic regression example, $\theta_n$ is also the maximum likelihood estimator of $\theta_\star$. 

\myparagraph{Influence Functions}
We quantify the influence of a fixed data point $z$ on the estimator $\theta_n$ using the perturbation
\begin{align*}
    \theta_{n, \eps, z} := \argmin_{\theta \in \Theta} 
    \left\{
        \frac{1-\eps}{n} \sum_{i=1}^n \ell(Z_i, \theta) 
        + \eps  \, \ell(z, \theta) 
    \right\} 
\end{align*}
for some $\eps > 0$. 
The difference $(\theta_{n, \eps, z} - \theta_n)/\eps$ 
is a measure of the local effect that the datapoint $z$ has on the estimator $\theta_n$, as illustrated in \Cref{fig:inf-illustration}.
Influence functions provide a way to avoid recomputing this estimator for each $z \in \Zcal$ of interest by using a linear approximation of the map $\eps\mapsto \theta_{n, \eps, z}$~\cite{hampel1974influence}.
Concretely, we approximate
\begin{align} \label{eq:linearization-estimate}
    \frac{\theta_{n, \eps, z} - \theta_n}{\eps} 
    \approx \frac{\D \theta_{n, \eps, z}}{\D \eps} \Big|_{\eps=0} =: I_n(z) \,.
\end{align}
This quantity is well-defined when the Hessian $H_n(\theta) := (1/n)\sum_{i=1}^n \grad^2 \ell(Z_i, \theta)$ is invertible at $\theta = \theta_n$. We bound this approximation error in \Cref{thm:linearization_paper}.

This idea of taking infinitesimal perturbations to approximate the effect of modifying data in statistics dates back to the Ph.D. dissertation of \citet{Hampel1968ContributionsTT} and subsequently, the infinitesimal jackknife~\cite{jaeckel1972infinitesimal}. 
A celebrated result of \citet{cook}, obtained from invoking the implicit function theorem to differentiate through the first order optimality conditions of $\theta_n$, gives the closed-form  
\begin{align} \label{eq:IF:closed-form}
    I_n(z) =  - H_n(\theta_n)^{-1}  \grad \ell(z, \theta_n) \,.
\end{align}
Since $I_n(z)$ does not depend on $\theta_{n,\epsilon,z}$, there is no need to re-solve the M-estimation problem for each $z$. 
Instead, we solve
a single linear system involving $H_n(\theta_n)$; we return to the computational aspects later. %

In this work, we are interested in the non-asymptotic statistical behavior of the influence function $I_n(z)$. 
To define the population limit, we denote the perturbed population minimizer with an $\eps$-fraction of the mass moved to $z$ as,
\[
    \theta_{\star, \eps, z} 
    := \argmin_{\theta \in \Theta} 
    \left\{
        \expect_{Z \sim (1-\eps)P + \eps \delta_z} \left[ \ell(Z, \theta) \right]
    \right\} \,,
\]
where $\delta_z$ denotes the point mass at $z$. 
The population influence function 
is defined similar to \eqref{eq:linearization-estimate} as the derivative 
\begin{align}
    I(z) := \frac{\D \theta_{\star, \eps, z}}{\D \eps} \Big|_{\eps = 0} 
    = \lim_{\eps \to 0} \frac{\theta_{\star, \eps, z} - \theta_\star}{\eps} 
    \,.
\end{align}
If the Hessian $H_\star = \grad^2 F(\theta_\star)$ of the population objective \eqref{eq:population-min} is strictly positive definite at $\theta_\star$, we get a closed form expression similar to \eqref{eq:IF:closed-form} due to \citet{cook}: 
\begin{align} \label{eq:IF-pop:closed-form}
    I(z) =  - H_\star^{-1}  \grad \ell(z, \theta_\star) \,.
\end{align}

As $n \to \infty$, uniform convergence arguments would give $\theta_n \to \theta_\star$ in probability under appropriate assumptions. From the continuous mapping theorem, we would expect that the sample influence function $I_n(z) = -H_n(\theta_n)^{-1} \grad \ell(z, \theta_n)$  converges to the population influence $I(z) = -H_\star^{-1} \grad \ell(z, \theta_\star)$. We establish finite-sample bounds in \Cref{sec:influence} to formalize this convergence.

\myparagraph{Most Influential Subset} 
Similar to measuring the influence of a fixed point $z$, we  also consider the influence of subsets of the sample $Z_{1:n}$. Given a scalar $\alpha \in (0, 1)$, the most influential subset method of \citet{broderick2020automatic} aims to find the subset of the data of size at most $\alpha n$ that, when removed, leads to the largest increase of a continuously differentiable test function $h:\reals^p \to \reals$. A typical example of $h$ is the loss $h(\theta) = \ell(z_{\text{test}}, \theta)$ of a fixed test point $z_{\text{test}}$.

This approach relies on perturbing the weights of a weighted M-estimation problem around the nominal weights~\cite{giordano2019swiss}. Given weights $w$ in the probability simplex $\Delta^{n-1}$, define 
$\theta_{n, w} := \argmin_{\theta \in \Theta} \sum_{i=1}^n w_i \ell(Z_i, \theta)$,
so that $\theta_n = \theta_{n, \ones_n/n}$. 
Finding the maximum influence of any subset of data of size at most $\alpha n$ for a test function $h$ amounts to solving $\max_{w \in W_\alpha} h\big( \theta_{n, w}\big)$ where 
\begin{gather*}
    W_\alpha := \left\{ 
    w \in \Delta^{n-1} \, : \,
    \parbox{12em}{at most $\alpha n$ elements of $w$ are zero and the rest are equal} 
    \right\} \,.
\end{gather*}
The most influential subset corresponds to the zero entries of the maximizing $w$. 
Unfortunately, this expression cannot be computed tractably as $|W_\alpha|$ grows exponentially in $n$. 
Instead, \citet{broderick2020automatic} 
use a linear approximation
\[
    h(\theta_{n, w})
    \approx h(\theta_n)
    + \inp*{w - \frac{\ones_n}{n}}{\grad_w h(\theta_{n, w}) \Big|_{w = \ones_n/n}} \,.
\]
Finding the most influential subset according to this linear approximation leads to the maximum subset influence
\begin{align} \label{eq:sif-def}
    \SIF_{\alpha, n}(h)
    := \max_{w \in W_\alpha} \inp*{w}{\grad_w h(\theta_{n, w}) \Big|_{w = \ones_n/n}} \,.
\end{align}
Similar to \eqref{eq:IF:closed-form}, the implicit function theorem together with the chain rule gives the closed form 
\begin{align} \label{eq:sif}
\begin{aligned}
        \SIF_{\alpha, n}(h) = 
        \max_{w \in W_\alpha} \sum_{i=1}^n w_i v_i, \quad \text{where} \\
        v_i = - \inp*{\grad h(\theta_n)}{H_n(\theta_n)^{-1} \grad \ell(Z_i, \theta_n)} \,.
\end{aligned}
\end{align}
While the maximization over $W_\alpha$ in \eqref{eq:sif} is an instance of the NP-hard knapsack problem, its solution coincides with that of its continuous relaxation over $\conv W_\alpha$ when $\alpha n$ is an integer and the $v_i$'s are unique.
This continuous knapsack problem is solved by a greedy algorithm that zeros out the smallest $\alpha n$ entries of $v_i$'s~\cite{dantzig1957discrete}. 

In this work, we also study the non-asymptotic statistical behavior of the subset influence $\SIF_{\alpha, n}$. The population limit in this case is more subtle than for $I_n$ of \eqref{eq:IF:closed-form}. Using similar arguments, we would expect the vector $v$ to be related to the random variable $\phi(Z)$ where $\phi:\Zcal \to \reals$ maps $z \mapsto -\inp{\grad h(\theta_\star)}{H_\star^{-1} \grad \ell(z, \theta_\star)}$, but the maximum over $W_\alpha$ is tricky. 
In \Cref{sec:mis}, we rigorously define this population limit and establish convergence guarantees. 

\myparagraph{Computational Aspects}
While linearization methods based on the infinitesimal jackknife avoid recomputing the M-estimator for each $z$, a na\"ive implementation of $I_n(z)$ (and similarly, $\SIF_{\alpha, n}$) requires materializing and inverting the Hessian matrix $H_n(\theta_n) \in \reals^{p \times p}$ in $O(n p^2 + p^3)$ time with $O(p^2)$ storage. This approach does not scale to modern applications in deep learning with dense Hessians and large $n, p$.  Instead, we rely on iterative algorithms to approximately 
minimize the convex quadratic 
\begin{align} \label{eq:quadratic}
    g_n(u) := \frac{1}{2} \inp{u}{H_n(\theta_n) u} + \inp{\grad \ell(z, \theta_n)}{u} 
    \,.
\end{align}
Indeed, the unique minimizer $u_\star$ of $g_n$ satisfies $0 = \grad g_n(u_\star) = H_n(\theta_n) u_\star + \grad \ell(z, \theta_n)$ so that $u_\star = I_n(z)$ in \eqref{eq:IF:closed-form} as desired.
Modern automatic differentiation software supports the efficient computation of the Hessian-vector product $u \mapsto \grad^2 \ell(z, \theta) u$ without materializing the Hessian. We review some iterative algorithms that can achieve this. 

The conjugate gradient method is a classical algorithm to solve linear systems defined by a positive definite matrix. It converges linearly, but each iteration requires a full batch Hessian-vector product $u \mapsto H_n(\theta_n) u$.
We postpone precise rates to \Cref{sec:influence}.

Alternatively, one might optimize the quadratic $g_n(u)$ with stochastic gradient descent (SGD). Here, each iteration requires a Hessian-vector product at only one sample $Z_i$, but the convergence rate is sublinear. 
We can get a linear rate at the same $O(1)$ per-iteration complexity through the use of variance reduction with the stochastic variance reduced gradient~\cite[SVRG;][]{johnson2013accelerating} or its accelerated counterpart~\cite{lin2018catalyst}.

The LiSSA algorithm~\cite{lissa} solves this linear system by approximating the matrix inverse with its Neumann series $M^{-1} = \sum_{k=0}^\infty (\id - M)^k$ for positive definite $M$ with $\norm{M}_2 < 1$. 
By using an unbiased stochastic estimator $\grad^2 \ell(Z_{I}, \theta_n)$ to $M = H_n(\theta_n)$, where $I$ is a random index, this reduces exactly to the SGD baseline. 
See \Cref{app:pseudocode} for details.

\citet{arnoldi} propose to 
solve the linear system with a low-rank approximation of the Hessian. Concretely, let $H_n(\theta_n) = Q \Lambda Q\T$ denote its eigenvalue decomposition with $\Lambda = \diag(\lambda_1, \ldots, \lambda_d)$ arranged in non-increasing order. The rank-$k$ approximation of $v = H_n(\theta_n)^{-1} u$ is given by 
$v_k = Q \diag(\lambda_1^{-1}, \ldots, \lambda_k^{-1}, 0, \ldots, 0) Q\T u$. The $k$-largest eigenvalues and their eigenvectors are approximated using the Lanczos/Arnoldi iterations~\cite{lanczos1950iteration,arnoldi_it}. 
This algorithm requires computations of a full batch Hessian-vector product.

For a full error characterization of the influence estimate $\hat I_n(z)$ returned by an iterative algorithm, we must take into account both the statistical error $I_n(z) - I(z)$ and the computational error $\hat I_n(z) - I_n(z)$. 
This will be our goal for the next section. 

\begin{SCfigure}
    \includegraphics[width = .3\linewidth]{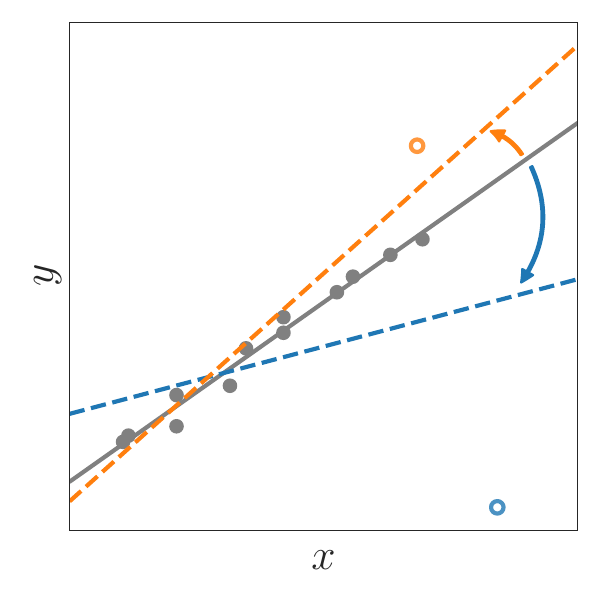}
    \caption{Illustration of the influence of a point $z$ on the model parameters. The base model (gray) line is drastically affected when the blue point is included (blue dotted line) but less affected when the orange point is included (orange dotted line).}
    \label{fig:inf-illustration}
\end{SCfigure} %
\section{ERROR ANALYSIS OF INFLUENCE ESTIMATION}
\label{sec:influence}

We start by establishing a bound on the statistical error of the influence $I_n(z) = - H_n(\theta_n)^{-1} \grad \ell(z, \theta_n)$ of a data point $z$ to the population limit $I(z) = -H(\theta_\star)^{-1}\grad \ell(z, \theta_*)$. 

We give an error bound $\norm{I_n(z) - I(z)}_{H_\star}$ 
in the natural geometry implied by the population Hessian $H_\star := H(\theta_\star)$ at the true parameter $\theta_\star$; here we use the notation $\norm{u}_{A}^2 = \inp{u}{A u}$ for a positive definite matrix $A$. 
The $H_\star$-norm captures the behavior of $I(z)$ and $I_n(z)$ in an \emph{affine-invariant manner}. That is, if we parameterize the problem in terms of $\theta' = A\theta$ for an invertible matrix $A$ so that the loss is $\ell'(z, \theta') = \ell(z, A^{-1}\theta')$, the influence functions $I'$ in this new parameterization satisfies $I'(z) = A \, I(z)$ and similarly for its sample version. Letting $H_\star' := \expect_{Z \sim P} [\grad^2 \ell'(z, \theta_\star')]$ be the (reparameterized) Hessian at the minimizer $\theta_\star' = A \theta_\star$, we can verify that
$\norm{I_n'(z) - I'(z)}_{H_\star'} = \norm{I_n(z) - I(z)}_{H_\star}$, i.e., the error criterion is affine-invariant.

\subsection{Statistical Error Bound}
Our statistical error bound depends on a notion of effective 
dimension of the statistical model. Define the covariance matrix of the gradient as 
\begin{align}
    G(\theta) = \Cov_{Z \sim P}\big(\grad \ell(Z, \theta)\big) \,,
\end{align}
where $\Cov(\xi) = \expect[\xi \xi\T] - \expect[\xi]\,\expect[\xi]\T$ is the covariance matrix of a random vector $\xi$.
We define the \textbf{effective dimension} of this problem as 
\begin{align}
    p_\star = \Tr\left[ H_\star^{-1/2} G_\star H_\star^{-1/2} \right]\,,
\end{align}
where $G_\star := G(\theta_\star)$ is the gradient covariance at $\theta_\star$.

The covariance $G_\star$ has a special meaning for maximum likelihood estimation. Concretely, if the loss $\ell(z, \theta) = -\log P_\theta(z)$ is the negative log likelihood and the statistical model $P_{\theta_\star}$ is well-specified, then $G_\star$ is the information matrix at $\theta_\star$. In this case, we have $G_\star=H_\star$ so that the effective dimension $p_\star$ equals the ambient dimension $p$. 

For misspecified models or for general M-estimation problems beyond maximum likelihood, $G_\star$ and $H_\star$ are distinct in general. The effective dimension $p_\star$ captures the mismatch between the two; it can be much smaller or much larger than $p$. We can have $p_\star \ll p$ when the eigenvalues of $G_\star$ decay faster than those of $H_\star$. Conversely, we get that $p_\star > p$ when 
the eigenvalues of $G_\star$ decay slower than those of $H_\star$.
We refer to \Cref{sec:eigendecay} for precise calculations. Note that regardless of whether $p_\star > p$ or $p_\star <p$, a dependence on $p_\star$ is unavoidable since ${p_\star}/{n}$ is a lower bound on the estimation error \cite{Fortunati2016TheMC}.

\myparagraph{Assumptions}
We make the following assumptions.
\begin{enumerate}[label=(\alph*), nosep]
    \item  \label{asmp1a}
    For any $z\in \Zcal$, the loss function $\ell(z, \cdot)$ is pseudo self-concordant for some $R \ge 1$: 
\[
    \vert D^3_\theta \ell(z, \theta)[u,u,v]\vert \leq R \norm{u}^2_{\grad^2 \ell(z, \theta)} \, \norm{v}_2 \,,
\]
where $D^3_x f(x)[u, u, v] := \frac{\D}{\D t} \inp{u}{\grad^2 f(x + tv) \, u}|_{t=0}$ for $f$ thrice continuously differentiable and where $\norm{\cdot}_2$ denotes the spectral norm for matrices. 
\item \label{asmp1b}
There exists a constant $K_1 \ge 1$ such that the normalized gradient $H_\star^{-1/2} \grad \ell(Z, \theta_\star)$ at $\theta_\star$ is sub-Gaussian with parameter $K_1$. 
\item \label{asmp1c} 
There exists $K_2 \ge 1$ such that the standardized Hessian $H_\star^{-1/2} \, \grad^2 \ell(Z, \theta_\star) \, H_\star^{-1/2} - \id_p$ at $\theta_\star$ satisfies a Bernstein condition with parameter $K_2$ (\Cref{defn:matrix_bernstein} in Appendix \ref{sec:tools}). Moreover, 
    \begin{align*}
        \sigma^2_H := \norm*{ \Var\big(H(\theta_\star)^{-1/2} \, \grad^2 \ell(Z, \theta_\star) \, H(\theta_\star)^{-1/2}\big)}_2
    \end{align*}
is finite, where we denote $\Var(H) = \expect[H H\T] - \expect[H] \, \expect[H]\T$ for a random matrix $H$.
\end{enumerate}

Self-concordance was introduced by \citet{nesterov1994interior} to give an affine-invariant analysis of Newton's method and was adapted by \citet{bach2010self} to apply to logistic regression; we use the latter assumption. 
This assumption prevents $\grad^2 \ell(z, \theta)$ from changing too quickly with $\theta$. 
The most useful consequence of this assumption is a spectral approximation of the Hessian 
$(1/2) H(\theta') \preceq H(\theta) \preceq 2\, H(\theta')$ for $\theta$ and $\theta'$ close enough in terms of the Euclidean distance. 

We make the last two assumptions to argue about the concentration of $\grad \ell(Z, \theta_\star)$ and $\grad^2 \ell(Z, \theta_\star)$ respectively to their expected values for $Z \sim P$. We make appropriate normalizations so that the assumptions are affine invariant, similar to the error criterion.
Since $\expect[\nabla \ell(Z, \theta_\star)] = 0$,
Assumption \ref{asmp1b} gives a high-probability bound on $\norm{\grad \ell(Z, \theta_\star)}_{H_\star^{-1}}$ in the natural $H_\star^{-1}$ norm of the gradient.
Assumption \ref{asmp1c} gives the spectral concentration 
$(1/2) H(\theta) \preceq H_n(\theta) \preceq 2\, H(\theta)$ for a fixed $\theta$ with high probability for $n$ large enough. 

\myparagraph{Example}
The assumptions outlined above hold for all generalized linear models under some regularity conditions.
We give one concrete examples here (more can be found in \cref{sec:techn:glms_examples}). 

\noindent \textit{Logistic Regression:} Let $\Zcal \subset B_{p,M} \times \{\pm 1\}$ for some $M > 0$.
Consider the loss $\ell(z, \theta) = \log\big(1 + \exp(-y \inp{\theta}{x})\big)$ and let $\sigma(z) = \frac{1}{1+e^{-z}}$. Assume that $H(\theta_\star) \succ 0$.
\begin{enumerate}[label=(\alph*), nosep]
    \item Pseudo self-concordance. Note that $\nabla_\theta^2 \ell(z, \theta) = \sigma(\theta^\top x) [1 - \sigma(\theta^\top x)] xx^\top$ and $D_\theta^3 \ell(z, \theta)[u, u, v] = \sigma(\theta^\top x) [1 - \sigma(\theta^\top x)] [1 - 2\sigma(\theta^\top x)] (u^\top x)^2 (v^\top x)$.
    It follows that $\abs{D^3_\theta \ell(z, \theta)[u,u,v]} \le M \norm{v}_2 \norm{u}^2_{\nabla^2 \ell(z,\theta)}$ and thus $\ell$ is pseudo self-concordant with $R \ge M$.
    \item Sub-Gaussian gradient. Note that $\norm{\nabla_\theta \ell(Z, \theta_\star)}_2 = \norm{[1 - \sigma(Y \theta_\star^\top X)] Y X}_2 \le M$.
    Therefore, the normalized gradient $H(\theta_\star)^{-1/2} \nabla \ell(Z,\theta_\star)$ is sub-Gaussian (cf. \Cref{lem:bounded_subG} from \Cref{sec:tools}).
    \item Bernstein Hessian. Note that $\norm{\nabla_\theta^2 \ell(Z, \theta_\star)}_2 \le \norm{XX^\top}_2 / 4 \le M^2/4$. It follows that the standardized Hessian $H(\theta_\star)^{-1/2} \nabla_\theta^2 \ell(Z, \theta_\star) H(\theta_\star)^{-1/2} - I_p$ satisfies the matrix Bernstein condition (cf. \Cref{lem:bounded_bernstein}  from \Cref{sec:tools}).
\end{enumerate}

\myparagraph{Statistical Error Bound} Below and throughout, we omit absolute constants.  
\begin{theorem} \label{thm:if-stat-main}
Suppose the assumptions above hold and
\begin{align*}
    n \ge C_{K_1, K_2, \sigma_H} \left( \frac{R^2 p_\star}{\mu_\star} \log\frac{1}{\delta} + \log\frac{p}{\delta} \right) \,,
\end{align*}
where $\mu_\star = \lambda_{\min}(H_\star)$ and $C_{K_1, K_2, \sigma_H}$ is a constant depending on $K_1, K_2,$ and $\sigma_H$. 
Then, with probability at least $1-\delta$,
we have 
$\frac14 H_\star \preceq H_n(\est) \preceq 3 H_\star$ and 
\begin{align*}
    \norm{I_n(z) - I(z)}_{H_\star}^2 
    \le C_{K_1, K_2, \sigma_H}\, \frac{R^2 p_\star^2}{\mu_\star n} \log^3 \left(\frac{p}{\delta}
    \right) \, .
\end{align*}
\end{theorem}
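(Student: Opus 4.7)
The plan is to start from the algebraic identity
\begin{equation*}
H_n(\theta_n)\bigl(I_n(z) - I(z)\bigr) = -\bigl(\grad \ell(z, \theta_n) - \grad \ell(z, \theta_\star)\bigr) - \bigl(H_n(\theta_n) - H_\star\bigr)\,I(z),
\end{equation*}
obtained by subtracting the population equation $H_\star I(z) = -\grad\ell(z,\theta_\star)$ from its sample counterpart $H_n(\theta_n)\,I_n(z) = -\grad\ell(z,\theta_n)$. Once the spectral equivalence $\tfrac14 H_\star \preceq H_n(\theta_n) \preceq 3 H_\star$ is in hand, inverting $H_n(\theta_n)$ and taking the $H_\star$-norm reduces matters to bounding the two residuals $\lVert\grad\ell(z,\theta_n)-\grad\ell(z,\theta_\star)\rVert_{H_\star^{-1}}$ and $\lVert(H_n(\theta_n)-H_\star)\,I(z)\rVert_{H_\star^{-1}}$.

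I would build the spectral equivalence in three layers. First, a matrix Bernstein inequality applied to the i.i.d.\ centered matrices $H_\star^{-1/2}\grad^2\ell(Z_i,\theta_\star)H_\star^{-1/2} - \id$, permissible by assumption~\ref{asmp1c}, gives $\lVert H_\star^{-1/2}(H_n(\theta_\star) - H_\star)H_\star^{-1/2}\rVert_2 \lesssim \sqrt{\sigma_H^2 \log(p/\delta)/n} + K_2 \log(p/\delta)/n$, which is $o(1)$ already under $n \gtrsim \log(p/\delta)$. Second, a standard self-concordant $M$-estimation argument---expand the first-order optimality condition at $\theta_n$ via the fundamental theorem of calculus, and combine the Hessian concentration just obtained with assumption~\ref{asmp1b} (so that the empirical gradient at $\theta_\star$ has squared $H_\star^{-1}$-norm $\lesssim p_\star\log(1/\delta)/n$ by a Hanson--Wright-type bound)---yields $\lVert\theta_n-\theta_\star\rVert_{H_\star}^2 \lesssim p_\star\log(1/\delta)/n$. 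Third, the integrated form of pseudo self-concordance (assumption~\ref{asmp1a}), namely $\grad^2\ell(z,\theta') \preceq e^{R\lVert\theta'-\theta\rVert_2}\grad^2\ell(z,\theta)$, transfers $H_n(\theta_\star)$ to $H_n(\theta_n)$ at a multiplicative cost $e^{R\lVert\theta_n-\theta_\star\rVert_2}$; using $\lVert\theta_n-\theta_\star\rVert_2 \le \mu_\star^{-1/2}\lVert\theta_n-\theta_\star\rVert_{H_\star}$ together with the stated lower bound on $n$ keeps this factor bounded, yielding the claimed equivalence with the explicit constants $3$ and $1/4$.

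With the equivalence in hand, I would bound the two residual terms separately. For the gradient residual, integrating $\grad\ell(z,\theta_t)$ along the segment between $\theta_\star$ and $\theta_n$ and using pseudo self-concordance to replace $\grad^2\ell(z,\theta_t)$ by a constant multiple of $\grad^2\ell(z,\theta_\star)$ gives a bound of order $\lVert\theta_n-\theta_\star\rVert_{H_\star}^2 \lesssim p_\star\log(1/\delta)/n$, which is subdominant. For the Hessian residual, I factor out the operator-norm bound $\lVert H_\star^{-1/2}(H_n(\theta_n)-H_\star)H_\star^{-1/2}\rVert_2 \lesssim R\sqrt{p_\star\log(p/\delta)/(\mu_\star n)}$ (whose dominant contribution is the self-concordance transfer step above) and multiply by $\lVert I(z)\rVert_{H_\star}^2 = \lVert\grad\ell(z,\theta_\star)\rVert_{H_\star^{-1}}^2$, which is at most $p_\star\log(1/\delta)$ by sub-Gaussian concentration from assumption~\ref{asmp1b}. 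The product delivers the dominant $R^2 p_\star^2\log^3(p/\delta)/(\mu_\star n)$ term of the theorem.

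The main obstacle is the circular dependence between the spectral equivalence for $H_n(\theta_n)$, which is needed to invert the error decomposition, and the $M$-estimator bound on $\lVert\theta_n-\theta_\star\rVert_{H_\star}$, which is needed to run the self-concordance transfer from $H_n(\theta_\star)$ to $H_n(\theta_n)$. I break the circularity by a nested argument: matrix Bernstein controls $H_n(\theta_\star)$ unconditionally, which drives the $M$-estimator bound, which then certifies that $\theta_n$ sits in a self-concordant neighborhood of $\theta_\star$, at which point the final equivalence at $\theta_n$ closes. The sample-size threshold $n \gtrsim R^2 p_\star\log(1/\delta)/\mu_\star$ in the theorem is precisely what is required for this chain to close, which also explains why the same $R^2/\mu_\star$ factor appears in the final error bound.
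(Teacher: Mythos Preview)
Your proposal is correct and follows essentially the same route as the paper: matrix Bernstein for $H_n(\theta_\star)\to H_\star$, the self-concordant localization result (the paper cites Ostrovskii--Bach) to get $\|\theta_n-\theta_\star\|_{H_\star}^2\lesssim p_\star\log(1/\delta)/n$ from the Hanson--Wright-type bound on $\|S_n(\theta_\star)\|_{H_\star^{-1}}$, and then pseudo self-concordance to transfer $H_n(\theta_\star)$ to $H_n(\theta_n)$---with the same resolution of the circularity you flag. The only cosmetic difference is your opening identity $H_n(\theta_n)(I_n-I)=-\bigl(\grad\ell(z,\theta_n)-\grad\ell(z,\theta_\star)\bigr)-(H_n(\theta_n)-H_\star)I(z)$, which is slightly cleaner than the paper's direct triangle-inequality split of $I_n-I$ (it avoids an explicit cross term), but the terms you then bound are the paper's $A_2$ and $A_3\cdot A_1$, controlled via exactly the same events; one detail you leave implicit is that bounding the gradient residual in the $H_\star^{-1}$-norm still requires a separate control of $\|H_\star^{-1/2}\grad^2\ell(z,\theta_\star)H_\star^{-1/2}\|_2$, which the paper handles as its event $\mathcal G_3$.
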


\myparagraph{Remark} In this result, we view $z$ as a random element following the data distribution $P$. The quantities $\norm{\nabla \ell(z, \theta_\star)}_{H_\star^{-1}}$ and $\norm{H_\star^{-1/2} H(z, \theta_\star) H_\star^{-1/2}}_2$ are controlled using the sub-Gaussian gradient and matrix Bernstein assumptions. A similar result holds if we treat $z$ as a fixed datapoint, since these quantities are now fixed as well.

\Cref{thm:if-stat-main} has several merits.
First, it is adapted to the eigenspectrum of $G_\star$ and $H_\star$ via the effective dimension $p_\star$; the bound only has a logarithmic dependence on the ambient dimension $p$.
The effective dimension $p_\star$ is also affine-invariant, similar to the error criterion. The only geometry-dependent (i.e., not affine-invariant) term in \Cref{thm:if-stat-main} is the minimal eigenvalue $\mu_\star$ of the Hessian $H_\star$. Third, we get a fast $1 / n$ rate, faster than the $1/\sqrt{n}$ rate typical of uniform convergence arguments. 

We now sketch the key aspects of its proof. The full proof is given in \Cref{appx:proof_influence_est}. 
\begin{proof}[Proof Sketch of \Cref{thm:if-stat-main}]
We use the triangle inequality to bound $ \norm{I_n(z) - I(z)}_{H_\star}$ by
\begin{align*}
    & \, \norm*{\big(H_n(\est)^{-1} - H_\star^{-1}\big) \big(\grad \ell(z, \est) - \grad \ell(z, \theta_\star) \big)}_{H_\star} \\ &+ 
    \norm*{\big(H_n(\est)^{-1} - H_\star^{-1}\big){\grad \ell(z, \theta_\star)}}_{H_\star} 
    \\&+ \norm*{H_\star^{-1}\big(\grad \ell(z, \est) - \grad \ell(z, \theta_\star)\big)}_{H_\star}.
\end{align*}
The proof follows from arguing that $\theta_n \to \theta_\star$, $\grad \ell (z, \theta_n) \to \grad \ell(z, \theta_\star)$, and $H_n(\theta_n) \to H_\star$ in the appropriate sense.
The first comes from a localization result of \citet{ostrovskii2021finite} that states that $\theta_n$ lies in a Dikin ellipsoid of radius $\sqrt{p_\star / n}$ around $\theta_\star$ for $n$ large enough, i.e.,  $\norm{\theta_n - \theta_\star}_{H_\star}^2 \lesssim p_\star / n$. The second comes from arguing using pseudo self-concordance that the gradient $\grad \ell(z, \cdot)$ is Lipschitz w.r.t.~$\norm{\cdot}_{H_\star}$ in the Dikin Ellipsoid around $\theta_\star$. 
For the last one, we argue that $H_n(\theta_n) \approx H_n(\theta_\star)$
from pseudo self-concordance, and formalize $H_n(\theta_\star) \to H_\star$ by matrix concentration. 
\end{proof}

In addition to the statistical error bound in \Cref{thm:if-stat-main}, we also provide a bound for the approximation error in \eqref{eq:linearization-estimate}.
Here, we treat $z$ as a fixed data point and make the following boundedness assumptions in addition to the assumptions above.
\begin{enumerate}[label=(\alph*), nosep]
 \setcounter{enumi}{3}
 \item \label{asmp2b} The normalized gradient is bounded in a neighborhood of $\theta_\star$, i.e., there exist $M_1 \ge 1, \rho \in (0, R^{-1}]$ such that $\mynorm{\nabla \ell(z, \theta)}_{H_\star^{-1}} \le M_1$ for all $\mynorm{\theta - \theta_\star}_{H_\star} \le \rho$.
\item \label{asmp2c}The normalized Hessian is bounded in a neighborhood of $\theta_\star$, i.e., there exist $M_2 \ge 1, \rho \in (0, R^{-1}]$ such that $\mynorm{H(z, \theta)}_{H_\star^{-1}} \le M_2$ for all $\mynorm{\theta - \theta_\star}_{H_\star} \le \rho$.
\end{enumerate}

\begin{theorem}\label{thm:linearization_paper}
    Suppose that the assumptions above hold, $\varepsilon \le C \min\{\rho/M_1, 1/M_2, \sqrt{\mu_\star}/RM_1\}$, and
    \begin{align*}
        n \ge C_{K_1, K_2, \sigma_H} \left[ \left( \frac{R^2}{\mu_\star} + \frac1\rho \right) p_\star \log{\frac1\delta} + \log{\frac{p}{\delta}} \right].
    \end{align*}
    Then, with probability at least $1 - \delta$,
    \begin{align*}
        &\mynorm{\frac{\theta_{n,\varepsilon,z} - \theta_n}{\varepsilon} - I_n(z)}_{H_n(\theta_n)}^2 \le C_{M_1, M_2} \times \\
        &\;\; \left\{\left[ \exp\left( \frac{C_{K_1, M_1} R}{\sqrt{\mu_\star}}\left( \sqrt{\frac{p_\star}{n} \log{\frac{1}{\delta}}} + \varepsilon \right) \right)- 1 \right]^2 + \varepsilon^2 \right\}.
    \end{align*}
\end{theorem}
A full proof can be found in \Cref{sec:linearization}.

\begin{table*}[t]
\caption{The number of calls to a {Hessian-vector product oracle} $u\mapsto \grad^2 \ell(z, \theta)u$ so that (a) the 
computational error is at most $\eps$, and 
(b) the total error is 
at most $\eps$ in the sense of \Cref{prop:total_error}. 
We show the dependence of the former on the condition number $\kappa_n = L / \lambda_{\min}(H_n(\theta_n))$, the optimal magnitude $\Delta_n = \|I_n(z)\|_{H_n(\theta_n)}^2$, and the SGD noise $\sigma_n^2$, defined in \Cref{sec:comp:sgd}. 
The total error bound depends on the corresponding population quantities $\kappa_\star = L / \lambda_{\min}(H_\star)$, $\Delta_\star = \|I(z)\|_{H_\star}^2$, and $\sigma_\star^2$, as well the effective dimension $p_\star$. 
We omit the dependence on problem constants $R, L, K_1, K_2, \sigma_H^2$, as well as logarithmic terms in $p, p_\star, \delta$.
For the low-rank approximation, we assume that the total complexity to obtain a rank-$k$ approximation is $O(k)$ full batch Hessian-vector products. We present computational error bounds assuming the eigenvalues $\lambda_i(H_n(\theta_n))$ of $H_n(\theta_n)$ decay polynomially as $i^{-\beta}$ ($\beta > 1$)  or exponentially as $e^{-\nu i}$ ($\nu > 0$). The same decay is assumed for $H_\star$ for the total error bound. 
The full proofs of these bounds are given in \Cref{appx:computation_bound}.
}
\label{tab:total-cost}
\setlength{\tabcolsep}{5pt} %
\begin{center}
{\renewcommand{\arraystretch}{1.4}%
\begin{tabular}{lccc}
\toprule
\textbf{Method}  & 
\textbf{Computational Error}
&
\textbf{Total Error} &
\textbf{Reference}
\\
\midrule
Conjugate Gradient 
& 
$n \sqrt{\kappa_n} \log \frac{\Delta_n}{\eps}$
& 
$\frac{\kappa_\star^{3/2} p_\star^2}{\eps} \log \frac{\Delta_\star}{\eps}$
& \Cref{cor:total:cg}
\\
SGD &
$\frac{\sigma_n^2}{\eps} + \kappa_n \log \frac{\kappa_n \Delta_n}{\eps}$ &
$\frac{\sigma_\star^2}{\eps} + \kappa_\star \log\frac{\kappa_\star \Delta_\star}{\eps}$
& \Cref{cor:sgd:total-error}
\\ 
SVRG & 
$(n + \kappa_n) \log \frac{\kappa_n \Delta_n}{\eps}$ 
& 
$\kappa_\star\left( 1 + \frac{p_\star^2}{ \eps}\right) \log \frac{\kappa_\star \Delta_\star}{\eps}$ 
& \Cref{cor:total:svrg}
\\
Accelerated SVRG &
$(n + \sqrt{n \kappa_n}) \log \frac{\kappa_n \Delta_n}{\eps}$  & 
$\kappa_\star\left( \sqrt{\frac{p_\star^2}{ \eps}} + \frac{p_\star^2}{\eps}\right) \log \frac{\kappa_\star \Delta_\star}{\eps}$ 
& \Cref{cor:total:svrg}
\\
Low-Rank Approx. ($\lambda_i \propto i^{-\beta}$)
& 
$
n\left(\frac{\kappa_n \Delta_n}{\eps}\right)^{\frac{1}{\beta-1}}
$
& 
$
\left( \frac{\kappa_\star}{\eps} \right)^{\frac{\beta}{\beta -1}} p_\star^2 \Delta_\star^{\frac{1}{\beta-1}}
$ 
&  \Cref{cor:total:low-rank}
\\
Low-Rank Approx. ($\lambda_i \propto e^{-\nu i}$) & 
$
\frac{n}{\nu} \log \frac{\kappa_n \Delta_n}{\eps}
$
& 
$
\frac{\kappa_\star p_\star^2}{\nu \eps} \log \frac{\kappa_\star \Delta_\star}{\eps} 
$
& \Cref{cor:total:low-rank}
\\
\bottomrule 
\end{tabular}
}
\end{center}
\end{table*}

\subsection{Computational and Total Error Bounds}
\label{sec:total-err-main}

We consider iterative first-order algorithms to compute the influence function $I_n(z) = \argmin_{u} g_n(u)$ by minimizing the convex quadratic $g_n(u)$ defined in \eqref{eq:quadratic}. 

We aim to find an $\eps$-approximate minimizer $u$ that satisfies $\expect[\norm{u - I_n(z)}_{H_n(\theta_n)}^2 | Z_{1:n}]\le \eps$. This error criterion is not only affine-invariant, but is also equivalent to $\expect[g_n(u) - \min g_n | Z_{1:n}] \le 2\eps$. Throughout this section, we assume for all $z \in \Zcal$ that $\ell(z, \cdot)$ is $L$-smooth, i.e., $\norm{\grad^2 \ell(z, \theta)}_2 \le L$ for all $\theta$. The complexity of minimizing $g_n$ with first order algorithms depends on the condition number $\kappa_n := L / \lambda_{\min}\big(H_n(\theta_n)\big)$. The corresponding condition number of the population Hessian $H_\star$ is $\kappa_\star := L / \lambda_{\min}(H_\star) = L/\mu_\star$. 

Any $\eps$-approximate minimizer $\hat I_n(z)$ of $g_n$ satisfies the following total error bound. 
\begin{proposition} \label{prop:total_error}
    Consider the setting of \Cref{thm:if-stat-main}, and let $\Gcal$ denote the event under which its conclusions hold. 
    Let $\hat I_n(\theta)$ be an estimate of $I_n(\theta)$ that satisfies $\expect\left[\norm{\hat I_n(z) - I_n(z)}_{H_n(\theta_n)}^2 | Z_{1:n}\right] \le \eps$. 
    Then,
    \[
        \expect\left[ \norm{\hat I_n(z) - I(z)}_{H_\star}^2 \, \middle| \, \Gcal \right]
        \le 8 \eps + 
         C \frac{R^2 p_\star^2}{\mu_\star n} \log^3\frac{p}{\delta}  \,,
    \]
    where $C = C_{K_1, K_2, \sigma_H}$ is as in \Cref{thm:if-stat-main}.
\end{proposition}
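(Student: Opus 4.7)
The plan is to decompose the total error into a statistical part (controlled by \Cref{thm:if-stat-main}) and a computational part (controlled by hypothesis), and bridge the two error metrics using the spectral equivalence between $H_n(\theta_n)$ and $H_\star$ guaranteed on $\Gcal$.

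First, I would apply the triangle inequality and the elementary bound $(a+b)^2 \le 2a^2 + 2b^2$ in the $H_\star$-norm:
\[
    \norm{\hat I_n(z) - I(z)}_{H_\star}^2
    \le 2 \norm{\hat I_n(z) - I_n(z)}_{H_\star}^2 + 2 \norm{I_n(z) - I(z)}_{H_\star}^2 .
\]
The second summand is exactly the statistical error handled by \Cref{thm:if-stat-main}, and on the event $\Gcal$ it is bounded deterministically by $C \frac{R^2 p_\star^2}{\mu_\star n} \log^3(p/\delta)$, so its conditional expectation given $\Gcal$ inherits the same bound.

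Next, I need to convert the first summand from an $H_\star$-norm into the $H_n(\theta_n)$-norm in which the computational guarantee is stated. This is where the event $\Gcal$ plays its role: by \Cref{thm:if-stat-main}, on $\Gcal$ we have $\tfrac14 H_\star \preceq H_n(\theta_n)$, equivalently $H_\star \preceq 4\, H_n(\theta_n)$, which yields the pointwise inequality $\norm{v}_{H_\star}^2 \le 4\norm{v}_{H_n(\theta_n)}^2$ for every $v$. Applied with $v = \hat I_n(z) - I_n(z)$ and then using the tower property with the assumed bound $\expect[\norm{\hat I_n(z) - I_n(z)}_{H_n(\theta_n)}^2 \mid Z_{1:n}] \le \eps$, I get
\[
    \expect\!\left[ \norm{\hat I_n(z) - I_n(z)}_{H_\star}^2 \,\Big|\, \Gcal \right]
    \le 4\, \expect\!\left[ \expect\!\left[\norm{\hat I_n(z) - I_n(z)}_{H_n(\theta_n)}^2 \,\Big|\, Z_{1:n}\right] \,\Big|\, \Gcal \right]
    \le 4\eps .
\]
Here I use that $\Gcal$ is measurable with respect to $Z_{1:n}$, so conditioning first on $Z_{1:n}$ and then on $\Gcal$ is legitimate.

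Putting the two pieces together gives
\[
    \expect\!\left[\norm{\hat I_n(z) - I(z)}_{H_\star}^2 \,\Big|\, \Gcal\right]
    \le 2 \cdot 4 \eps + 2 \cdot C\, \frac{R^2 p_\star^2}{\mu_\star n} \log^3 \frac{p}{\delta} ,
\]
absorbing the factor of $2$ into the constant $C$ to obtain the stated conclusion. There is no real obstacle here; the only subtlety to verify carefully is the interaction of the two conditionings (on $Z_{1:n}$ for the computational bound and on $\Gcal$ for the statistical bound), which is resolved by noting $\Gcal \in \sigma(Z_{1:n})$ and applying the tower rule.
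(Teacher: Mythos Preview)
Your proposal is correct and follows essentially the same approach as the paper: decompose via the triangle inequality into computational and statistical pieces, use the spectral sandwich $\tfrac14 H_\star \preceq H_n(\theta_n)$ on $\Gcal$ to convert the $H_\star$-norm to the $H_n(\theta_n)$-norm with a factor of $4$, and then invoke the hypothesis and \Cref{thm:if-stat-main} respectively. Your explicit handling of the tower property (noting $\Gcal \in \sigma(Z_{1:n})$) is actually more careful than the paper's one-line sketch.
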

This bound is obtained by translating the approximation error in the $H_n(\theta_n)$-norm to the $H_\star$-norm using the spectral Hessian approximation under $\Gcal$ and the triangle inequality. 

The conjugate gradient method is known to require $T_n(\eps) := \sqrt{\kappa_n} \,  \log\left(\norm{I_n(z)}_{H_n(\theta_n)}^2 / \eps \right)$ iterations (ignoring constants) to return an $\eps$-approximate minimizer~\cite[e.g.][]{saad2003iterative,chen2005matrix,bai2021matrix}. 
Since each iteration requires $n$ Hessian-vector products, the total computational complexity to obtain an $\eps$-approximate minimizer is $O\big(n \, T_n(\eps)\big)$. To make the statistical error $\norm{I_n(z) - I(z)}_{H_\star}^2$ to be smaller than $\eps$, we must choose $n \ge n(\eps) = \tilde O\big(R^2 p_\star^2 / (\mu_\star \eps)\big)$ (ignoring constants and logarithmic factors). 
\Cref{prop:total_error} now says that the overall computational complexity to reduce the total error under $O(\eps)$ is $O\big(n(\eps) T(\eps)\big)$. 

\Cref{tab:total-cost}
presents this bound with sample-dependent quantities such as $\kappa_n$ and $\norm{I_n(z)}_{H_n(\theta_n)}$ translated to their population versions.   
\Cref{tab:total-cost} also lists the corresponding bounds for the other algorithms we consider. We discuss the implications of the total error bounds. We use $\tilde O(\cdot)$ to suppress logarithmic terms in $1/\epsilon$ below. 

\myparagraph{Marginal Benefits of Variance Reduction}
For a fixed $n$, the computational error bounds agree with the conventional wisdom that SVRG is significantly faster than SGD, especially for small $\eps$. Indeed, the error $\tilde O (n+\kappa_n)$ of SVRG only depends logarithmically on $1/\eps$, while the SGD error $\tilde O(\sigma_n^2 / \eps + \kappa_n)$ is polynomial. 
However, the statistical error bounds suggest that the sample size must be $n = \tilde O(R^2 p_\star^2 / \mu_\star \eps)$, so the total error of SVRG scales as $1/\eps$. This matches SGD up to constants. 
SVRG has better constants only if the SGD noise $\sigma_\star^2 > p_\star^2 / \mu_\star$ is large. 

\myparagraph{Marginal Benefits of Acceleration}
For fixed $n$, accelerated SVRG's rate of $\tilde O(n + \sqrt{n \kappa_n})$ is faster than SVRG for ill-conditioned problems where $\kappa_n > n$, but is no worse for well-conditioned problems where $\kappa_n \le n$. To have a small total error, we need $n = \tilde O(1/ \varepsilon)$, while the condition numbers satisfy $\kappa_n\leq4 \kappa_\star$ for $\kappa_n$ a constant (under~\Cref{thm:if-stat-main}). Thus, for $\eps$ small, the problem is well-conditioned and acceleration offers marginal benefits. 

\myparagraph{Stochastic Methods Outperform Full Batch Methods}
The total error of the conjugate gradient method is $\tilde O(\kappa_\star^{3/2} p_\star^2  / \eps)$ while SVRG is $\tilde O(\kappa_\star p_\star^2 / \eps)$. Thus, SVRG always has better constants than the conjugate gradient method. This is also true of accelerated SVRG. 

\myparagraph{Low-rank Approximations Work for Faster Eigendecay}
For a slow polynomial decay $\lambda_i(H_\star) \propto i^{-\beta}$ of the eigenvalues of $H_\star$ for $\beta > 1$, the total error scales as $\eps^{-\beta / (\beta-1)}$, which is worse than the $1/\eps$ rate for all other methods considered.
However, for a faster exponential decay $\lambda_i(H_\star) \propto e^{-\nu i}$ for $\nu > 0$, its $1/\eps$ rate matches SVRG exactly up to a factor of $\nu$, despite being a full batch method.  %
\section{MOST INFLUENTIAL DATA SUBSETS}\label{sec:mis}

We now turn to the subset influence defined in \Cref{sec:bg}. We start by formalizing the population 
limit and then establish statistical error bounds. 
Let $h: \Theta \to \reals$ be a continuously differentiable test function and $\alpha \in (0, 1)$ be fixed throughout. We only consider $n$ where $\alpha n$ is an integer.

\myparagraph{Population Limit}
In order to derive the population limit of the subset influence $I_{\alpha, n}(h)$ from \eqref{eq:sif}, we interpret the weights $w \in W_\alpha \subset \Delta^{n-1}$ as a probability distribution over the $n$ datapoints. This gives
\[
    I_{\alpha, n}(h) = \max_{w \in W_\alpha} \expect_{i \sim w}[\phi_n(Z_i)], 
\] 
where $\phi_n(z) = - \inp{\grad h(\theta_n)}{H_n(\theta_n)^{-1} \grad \ell(z, \theta_n)}$.
This suggests that the population limit should be $ \sup_{Q \in \Qcal} \expect_{Z \sim Q} [\phi(Z)]$ 
over an appropriate set of distributions $\Qcal$, where
$\phi(z) = - \inp{\grad h(\theta_\star)}{H_\star^{-1} \grad \ell(z, \theta_\star)}$.

Since the maximum of a linear program occurs at a corner, we can pass from the max over $W_\alpha$ to its convex hull 
\[
    \conv W_\alpha = \left\{ w \in \Delta_{n-1} \, :\, w_i(1-\alpha)n \le 1  \,\,\, \forall \, i \, \right\} \,.
\]
Compared to the uniform distribution $\ones_n/n$ over $Z_{1:n}$, $w \in \conv W_\alpha$ allows for weights that are a factor of $(1-\alpha)^{-1}$ larger.
If $P$ is a continuous distribution with density $f_P$, then a natural choice for $\Qcal$ is the set of distributions with density $f_Q(z) \le f_P(z) / (1-\alpha)$. 

We can formalize this discussion through the notion of a tail statistic known as the \emph{superquantile} or the \emph{conditional value at risk}~\cite{rockafellar2000optimization}. The superquantile of a random variable $Z\sim P$ at level $\alpha$ is defined as 
\[
    \supq_\alpha(Z) := \sup\left\{ \expect_{Z \sim Q}[Z] \, :\, \frac{\D Q}{\D P} \le \frac{1}{1-\alpha} \right\} \,,
\]
where $\D Q / \D P$ denotes the Radon-Nikodym derivative of $Q$ w.r.t.~$P$. This constraint subsumes both the density ratio constraint in the continuous case and the weight ratio constraint in the discrete case.
The superquantile has a long and storied history in economics and quantitative finance, with recent applications in machine learning; we refer to \cite{laguel2021superquantiles} for a survey. 
We overload notation to denote the superquantile of the empirical measure over $v_1, \ldots, v_n$ as $\supq_\alpha(v_1, \ldots, v_n)$. 

We formalize the connection between the maximum subset influence $\SIF_{n, \alpha}$ and the superquantile. 
\begin{proposition} \label{prop:sif-sq}
If $\alpha n$ is an integer, then
    $\SIF_{n, \alpha}(h) = \supq_\alpha(v_1, \ldots, v_n)$ where $v_i = - \inp*{\grad h(\theta_n)}{H_n(\theta_n)^{-1} \grad \ell(Z_i, \theta_n)}$.
\end{proposition}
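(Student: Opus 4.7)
The plan is to realize both $\SIF_{n,\alpha}(h)$ and $\supq_\alpha(v_1,\ldots,v_n)$ as the optimal value of the same linear program, so that their equality follows from the fact that a linear objective on a polytope attains its maximum at a vertex and that every vertex of $\conv W_\alpha$ already lies in $W_\alpha$.

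First, I would express the empirical superquantile as a linear program. The empirical measure $\hat P_n = n^{-1}\sum_{i=1}^n \delta_{v_i}$ puts mass $1/n$ on each atom, and any candidate distribution $Q$ with $\D Q/\D \hat P_n \le (1-\alpha)^{-1}$ is specified by weights $w_i = Q(\{v_i\})$ satisfying $\sum_i w_i = 1$ and $0 \le w_i \le 1/((1-\alpha)n)$. By the characterization of $\conv W_\alpha$ recalled just above the statement, this feasible set is exactly $\conv W_\alpha$, so
\[
\supq_\alpha(v_1,\ldots,v_n) \;=\; \sup_{w \in \conv W_\alpha} \sum_{i=1}^n w_i v_i.
\]

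Next, I would relate this to $\SIF_{n,\alpha}(h) = \max_{w \in W_\alpha} \sum_i w_i v_i$. One direction is immediate from $W_\alpha \subseteq \conv W_\alpha$. For the reverse inequality, the linear objective on the compact polytope $\conv W_\alpha$ attains its maximum at a vertex, so I would classify these vertices: a standard LP argument shows that any vertex must saturate $n-1$ of the box constraints $w_i \in \{0,\, 1/((1-\alpha)n)\}$, and combined with $\sum_i w_i = 1$ (which is where $\alpha n \in \mathbb{Z}$ enters) this forces exactly $\alpha n$ coordinates to equal zero and the remaining $(1-\alpha)n$ to equal $1/((1-\alpha)n)$. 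Every such vertex belongs to $W_\alpha$, which yields $\supq_\alpha(v_1,\ldots,v_n) \le \SIF_{n,\alpha}(h)$ and closes the argument.

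The main obstacle is the vertex classification and the role of integrality: without $\alpha n \in \mathbb{Z}$, the coordinate pinned by $\sum_i w_i = 1$ would take a fractional value rather than one of $\{0,\, 1/((1-\alpha)n)\}$, producing extreme points of $\conv W_\alpha$ that are not in $W_\alpha$; the identity would then only hold up to the gap between the LP relaxation and its integer version, which is precisely the continuous-versus-knapsack phenomenon the surrounding text invokes via \citet{dantzig1957discrete}.
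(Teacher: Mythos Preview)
Your proposal is correct and matches the paper's (implicit) argument: the paper identifies $\supq_\alpha(v_1,\ldots,v_n)$ with the linear program over $\conv W_\alpha$ in the superquantile review and, in the main text, invokes exactly the LP-relaxation/continuous-knapsack observation (citing \citet{dantzig1957discrete}) that the optimum over $\conv W_\alpha$ is attained in $W_\alpha$ when $\alpha n$ is an integer. Your vertex classification just spells out that step, so the approaches coincide.
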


\Cref{prop:sif-sq} motivates us to define the \textbf{population subset influence} as 
\begin{align}
I_\alpha(h) = \supq_\alpha \Big[ - \grad h(\theta_\star)\T {H(\theta_*)^{-1} \grad \ell(Z, \theta_\star)} \Big]  \,.
\end{align}

\myparagraph{Assumptions}
We need to use the strengthen assumptions made in \Cref{thm:linearization_paper} for technical reasons. We also add the following

\begin{enumerate}[label=(\alph*), nosep]
    \setcounter{enumi}{5}
    \item  \label{asmp2d}
    The test function $h$ is bounded as $\norm{\nabla h(\theta)}_{H_\star^{-1}} \le M_1'$ and $\norm{H_\star^{-1/2}\nabla^2 h(\theta)H_\star^{-1/2}}_{2} \le M_2'$ for all $\norm{\theta - \theta_\star}_{H_\star} \le \rho$.
\end{enumerate}
Assumption~\ref{asmp2d} asserts the boundedness of the test function $h$.
We make this assumption in a neighborhood around $\theta_\star$. 

\myparagraph{Statistical Bound}
We now state our main bound. 

\begin{theorem} \label{thm:subset:main}
    Suppose the assumptions above hold and the sample size $n$ satisfies 
    the condition in \Cref{thm:if-stat-main}.
    Then, with probability at least $1-\delta$, we have
    \[
        \big( I_{\alpha, n}(h) - I_\alpha(h)  \big)^2 
        \le 
        \frac{C_{M_1, M_2, M_1', M_2'}}{(1-\alpha)^2}
        \frac{R^2 p_\star}{\mu_\star n} \log \frac{n \vee  p}{\delta} \,.
    \]
\end{theorem}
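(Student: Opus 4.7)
The plan is to split the error into a \emph{plug-in} part and a \emph{sampling} part. Define
\[
\phi_n(z) := -\inp{\grad h(\theta_n)}{H_n(\theta_n)^{-1} \grad \ell(z, \theta_n)}, \qquad
\phi(z) := -\inp{\grad h(\theta_\star)}{H_\star^{-1} \grad \ell(z, \theta_\star)},
\]
so that $I_{\alpha, n}(h) = \supq_\alpha(\phi_n(Z_1), \ldots, \phi_n(Z_n))$ by \Cref{prop:sif-sq} and $I_\alpha(h) = \supq_\alpha(\phi(Z))$. The triangle inequality then gives
\[
|I_{\alpha,n}(h) - I_\alpha(h)| \le \underbrace{\bigl| \supq_\alpha(\phi_n(Z_{1:n})) - \supq_\alpha(\phi(Z_{1:n})) \bigr|}_{(\mathrm{A})} + \underbrace{\bigl| \supq_\alpha(\phi(Z_{1:n})) - \supq_\alpha(\phi(Z)) \bigr|}_{(\mathrm{B})}.
\]

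For (A) I would first invoke a non-expansiveness property of the discrete superquantile: since every weight $w_i$ in $\conv W_\alpha$ satisfies $w_i \le ((1-\alpha) n)^{-1}$, we have $(\mathrm{A}) \le (1-\alpha)^{-1} \sup_{z \in \Zcal} |\phi_n(z) - \phi(z)|$. Next I would expand $\phi_n(z) - \phi(z)$ via the three-term telescope
\begin{align*}
\phi_n(z) - \phi(z)
& = -\inp{\grad h(\theta_n) - \grad h(\theta_\star)}{H_n(\theta_n)^{-1} \grad \ell(z, \theta_n)} \\
& \quad - \inp{\grad h(\theta_\star)}{\bigl(H_n(\theta_n)^{-1} - H_\star^{-1}\bigr) \grad \ell(z, \theta_n)} \\
& \quad - \inp{\grad h(\theta_\star)}{H_\star^{-1} \bigl(\grad \ell(z, \theta_n) - \grad \ell(z, \theta_\star)\bigr)}
\end{align*}
and control each piece using two ingredients delivered by the event of \Cref{thm:if-stat-main}: the Dikin localization $\norm{\theta_n - \theta_\star}_{H_\star}^2 \lesssim R^2 p_\star / (\mu_\star n) \log(p/\delta)$, which places $\theta_n$ in the neighborhood where assumptions~\ref{asmp2b}--\ref{asmp2d} apply, and the spectral sandwich $\tfrac14 H_\star \preceq H_n(\theta_n) \preceq 3 H_\star$. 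Combined with the identity $H_n(\theta_n)^{-1} - H_\star^{-1} = H_\star^{-1}(H_\star - H_n(\theta_n)) H_n(\theta_n)^{-1}$, pseudo self-concordance (to convert the gradient and Hessian differences into $H_\star$-norm distances), and the bounds on $\grad h, \grad^2 h$ from~\ref{asmp2d}, each of the three terms collapses to a constant multiple of $\norm{\theta_n - \theta_\star}_{H_\star}$. Squaring yields $(\mathrm{A})^2 \lesssim (1-\alpha)^{-2} R^2 p_\star / (\mu_\star n) \log(p/\delta)$.

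For (B) I would use concentration of the empirical superquantile. Assumptions~\ref{asmp2b} and~\ref{asmp2d} imply $|\phi(z)| \le M_1 M_1'$ uniformly in $z$, so via the Rockafellar--Uryasev dual representation $\supq_\alpha(X) = \min_{\eta \in \reals}\bigl\{\eta + (1-\alpha)^{-1} \expect[(X-\eta)_+]\bigr\}$, the difference between empirical and population superquantiles is bounded by $(1-\alpha)^{-1}$ times the uniform deviation of the empirical mean of the $1$-Lipschitz family $\{(x - \eta)_+ : \eta \in [-M_1 M_1', M_1 M_1']\}$. A standard Hoeffding-plus-discretization (or Rademacher) argument then gives $(\mathrm{B}) \lesssim M_1 M_1' (1-\alpha)^{-1} \sqrt{\log(n/\delta)/n}$ with probability at least $1-\delta$. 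Squaring, combining (A) and (B) via $(a+b)^2 \le 2a^2 + 2b^2$, and absorbing the $1/n$ tail into $p_\star/n$ (since $p_\star \ge 1$) yields the claim.

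The main obstacle is step (A): one must propagate the Hessian perturbation bound from \Cref{thm:if-stat-main} through the telescoped expansion while preserving the affine-invariant $H_\star$-geometry, so that every term reduces to the $H_\star$-distance $\norm{\theta_n - \theta_\star}_{H_\star}$ and not to a cruder coordinate-wise bound. The concentration step (B) is technical but essentially standard once the boundedness of $\phi(Z)$ is in hand.
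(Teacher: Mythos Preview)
Your plan is essentially the paper's: split into a plug-in error (your (A)) and a sampling error (your (B)); handle (A) by a telescope on $\phi_n - \phi$ controlled via localization, self-concordance, and the Hessian sandwich; handle (B) via the Rockafellar--Uryasev dual, boundedness $|\phi(z)|\le M_1M_1'$, and a Hoeffding-plus-$\varepsilon$-net argument over $\eta$ in a compact interval. The paper's only organizational difference is that it inserts an intermediate step replacing $H_n(\theta_n)^{-1}$ by the \emph{population} Hessian $H(\theta_n)^{-1}$ before moving $\theta_n\to\theta_\star$, whereas you do it in one shot; the two are equivalent.

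One point in your sketch is not quite right and is worth fixing. You assert that all three telescoped terms ``collapse to a constant multiple of $\norm{\theta_n-\theta_\star}_{H_\star}$'', but the middle term involving $H_n(\theta_n)^{-1}-H_\star^{-1}$ does not. After your resolvent identity and the sandwich (which bounds $\norm{H_n(\theta_n)^{-1}}_{H_\star}$), you are left with $\norm{H_n(\theta_n)-H_\star}_{H_\star^{-1}}$. Pseudo self-concordance handles $H_n(\theta_n)-H_n(\theta_\star)$, giving a contribution $\lesssim (R/\sqrt{\mu_\star})\norm{\theta_n-\theta_\star}_{H_\star}$, but the remaining piece $H_n(\theta_\star)-H_\star$ is a pure sampling fluctuation that has nothing to do with $\theta_n-\theta_\star$; it requires a separate matrix-Bernstein bound of order $\sqrt{\log(p/\delta)/n}$. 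The spectral sandwich from \Cref{thm:if-stat-main} is only the $O(1)$ consequence of that concentration, not the quantitative $O(n^{-1/2})$ rate you need here. Relatedly, the Dikin localization rate is $\norm{\theta_n-\theta_\star}_{H_\star}^2\lesssim p_\star/n$, not $R^2p_\star/(\mu_\star n)$; the $R/\sqrt{\mu_\star}$ factor enters precisely through the self-concordance step just described. The fix is immediate---invoke the same matrix-Bernstein event already used inside \Cref{thm:if-stat-main}---and your stated final rate then follows.
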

\Cref{thm:subset:main} has the same merits as \Cref{thm:if-stat-main}: it uses the effective dimension $p_\star$ and exhibits only a logarithmic dependence on the ambient dimension $p$.
We square the left side so that it scales for $\alpha \to 0$ as the squared norm $\norm{(1/n) \sum_{i=1}^n I_n(Z_i) - \expect_{Z \sim P}[I(Z)]}_{H_\star}^2$, comparable to \Cref{thm:if-stat-main}. We get a fast $\log n / n$ rate rather than a slow $1/\sqrt{n}$ rate. 

The proof relies on the equivalent expression
\[
    \supq_\alpha(Z) = \inf_{\eta \in \reals} \left\{
    \Phi(Z, \eta) := 
    \eta + \frac{1}{1-\alpha} \expect(Z - \eta)_+ \right\}
\]
of the superquantile
where $(\cdot)_+ = \max\{\cdot, 0\}$. 
We analyze the convergence of
$\Phi\big(\phi_n(Z_{1:n}), \eta\big)$ to $\Phi(\phi(Z), \eta)$ for fixed $\eta$ using the same techniques as \Cref{thm:if-stat-main}. Then, we construct an $\eps$-net so the bound holds for all $\eta$, including the minimizer. The full proof is given in \Cref{sec:a:subset-if}.

\paragraph{Related Work.}
\label{sec:related}

Influence functions or curves have originally been proposed by~\citet{hampel1974influence}, and partly motivated by \citet{jaeckel1972infinitesimal}'s ``infinitesimal jackknife''. \citet{cook} showed that the influence function can be computed using inverse Hessian gradient products. Recent works on influence functions include~\cite{cook1986assessment,hadi1995unifying,zhu2004diagnostic,ma2014statistical,zhao2019multiple}. 
The theoretical statistical analysis has mostly focused on large-sample asymptotics hence in small dimensions, and we refer to the recent work~\cite{marco} for a comprehensive survey. 

Efficiently computing influence functions, or related inverse-Hessian-vector products, has received attention recently in the context of the training of deep neural networks using natural gradient or Newton-like algorithms~\cite{curveball}. Specifically, on influence functions, stochastic convex optimization algorithms~\cite{lissa}, conjugate gradient methods~\cite{saad2003iterative}, and low-rank variants~\cite{arnoldi} have been applied. The recent discovery of linear convergence for variance-reduced  optimization algorithms makes them potentially competitive for the efficient computation of influence functions.  %
\section{EXPERIMENTS}
\label{sec:expt}

\begin{figure*}
    \centering
\includegraphics[width=1\linewidth]{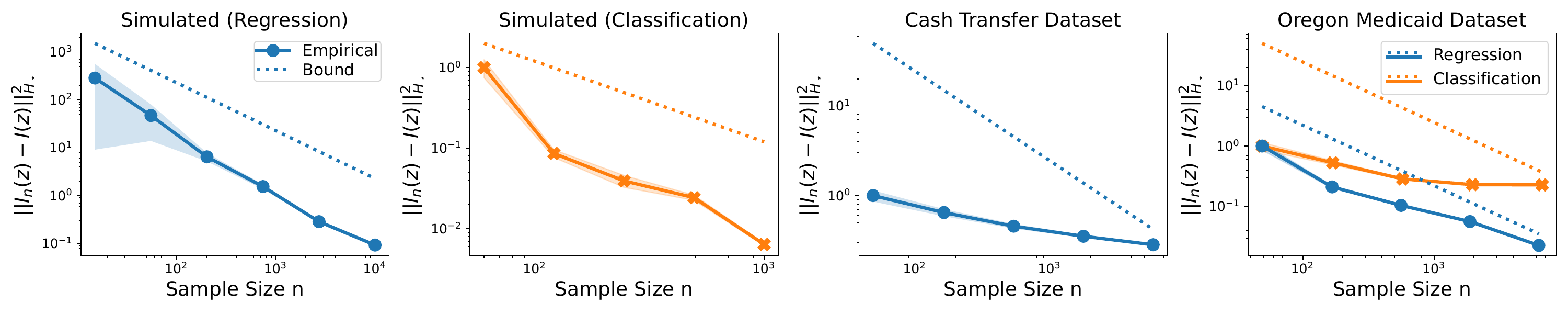}
    \caption{Convergence of the empirical influence function to the population (solid line) compared to the bound of \Cref{thm:if-stat-main} (dotted line) with linear regression and classification models for simulated (left two) and real data (right two). 
    We plot the mean over $100$ repetitions, and the shaded area denotes the $95\%$ standard error.
    }
    \label{fig:sim_realdata}
\end{figure*}

We explore the convergence of the empirical influence function to its population counterpart for classical linear models. We also report the findings from large attention based models, for which little statistical theory is known, yet maximum influential subsets can still be computed as for any black-box model. 
\Cref{appx:exp_details} contains the full details of this section.  The code as well as the scripts to
reproduce the experiments are made publicly available online~\textsf{{https://github.com/jfisher52/influence\_theory}}.

\subsection{Linear Models}
We consider synthetic ridge regression and binary logistic regression in $\reals^9$. The input $x \sim \Ncal(0, \id)$ is normal, and the outputs are generated with a linear or logistic model from i.i.d.~noise based on a fixed $\theta_\star$. We also consider two real datasets: (1) Oregon Medicaid~\cite{oregon_exp}, where the goal is to estimate the overall health (classification) and the number of good health days in the last month (regression) of an individual, and (2) Cash Transfer~\cite{cash_exp}, where the goal is to estimate the total consumption of an individual (regression). 
Both datasets use some economic and demographic features and treatments as inputs to the model; they contain 20K and 50K points respectively. 

We plot the statistical convergence of the exact empirical influence $I_n(z)$ to the population influence $I(z)$ for fixed $z$ using various sample sizes $n$ as well as the bound of \Cref{thm:if-stat-main}. For the real data, we use the full dataset as the population. 
We measure the influence of points $z$ that are outliers added to the training set for the simulations and a random sample for the real data. 

\myparagraph{Results: Tightness of \Cref{thm:if-stat-main}}
The results are given in \Cref{fig:sim_realdata}.
We see for the simulated datasets (left two plots) that the empirical observations for a straight line in log-log scale whose slope matches that of the bound. This indicates that the $1/n$ rate of our bounds is also observed empirically.\footnote{
A log-log plot of $y=cx^a$ is a straight line with slope $a$.
}
This is also approximately true for the regression line in the Oregon Medicaid dataset. We note that its classification line and the Cash Transfer dataset have slopes that differ from the bound. 
This phenomenon could be due to the error in the population influence used for the plots: we approximate it from a larger data sample because we do not have access to the population distribution. Note that we do not see such a behavior in the simulated classification task, where we can more accurately approximate the population.
In all of these cases, \Cref{thm:if-stat-main} is still an upper bound on the empirical error. 

\begin{figure*}%
    \centering
    {\includegraphics[width=1\linewidth]{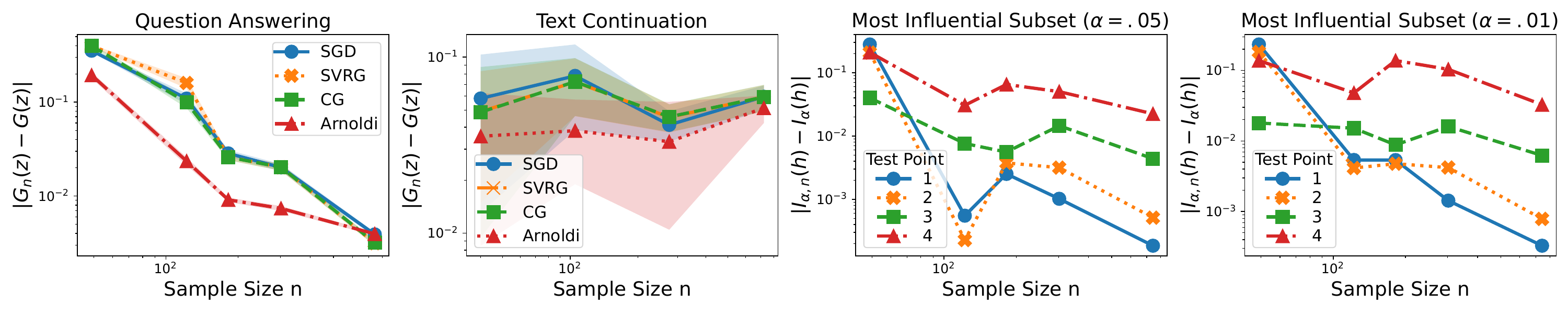} }%
    \caption{
    \textbf{Left two}: Convergence of the approximate empirical influence to the population for text generation tasks measured in terms of predictions as in \eqref{eq:pred-influence}. 
    The solid line denotes the mean of $|G_n(z_i) - G(z_i)|$ for $i=1,\cdots, 4$ and the shaded area denotes its standard deviation.
    \textbf{Right two}: Convergence of the influence value $I_{\alpha, n}(h_i)$ found by the most influential subset method to its population version $I_\alpha(h_i)$ on the question-answering task for different test functions $h_i = \ell(z_{\text{test}, i}, \theta)$. 
    }
    \label{fig:nlp-data}
\end{figure*}

\subsection{Large Transformer Language Models}

\myparagraph{Setup}
We consider (a) a question-answering task where the goal is to respond to a natural language question with a factually correct answer, and (b) a text continuation task where the goal is to generate ten tokens following a given context. 
We use a BART-base model~\cite{lewis2020bart} on the zsRE dataset~\cite{zsre} and a DistilGPT-2 model~\cite{sanh2019distilbert} on the WikiText-103 dataset~\cite{merity2017pointer} respectively.
We subsample the training set size for various $n$ and finetune a pretrained model to get $\theta_n$. 
We take the largest value of $n$ as the population version: this value was 5K and 2K respectively. 
We estimate the population influence with 100 epochs of SVRG, while we use 50 passes through the data for the approximate methods.
We compute the influence $I_n(z)$ for 5 points $z_1, \cdots, z_5$.  The quadratic $g_n$ from \eqref{eq:quadratic} is nonconvex and unbounded below if the Hessian $H_n(\theta_n)$ is not positive semidefinite; we find this to be the case for our experiments with the deep nets. To overcome this, we consider 
\[
    I_{n, \lambda}(z) = - \big(H_n(\theta_n) + \lambda \id\big)^{-1} \grad \ell(z, \theta_n) \,.
\]
We choose the smallest $\lambda$ so that the quadratic objective $g_n(u_t)$ from \eqref{eq:quadratic} is bounded below for iterates $u_t$ obtained from SGD, ensuring that $H+\lambda\id$ is positive semidefinite. 

\myparagraph{Error Criterion}
The norm $\norm{\hat I_n(z) - I(z)}$ bound may be vacuous for failing to capture the permutation symmetries of the parameters of a deep network. Instead, we measure the effect of a point $z$ on a test function $h(\theta) = \ell(z_{\text{test}}, \theta)$ as
\begin{align} \label{eq:pred-influence}
    G_n(z) = \inp{\grad h(\theta_n)}{I_{n, \lambda}(z)} \,,
\end{align}
and compare it against its population counterpart $G(z)$. From the chain rule, it follows that $G(z)$ is the linearization $\frac{\D}{\D \eps} h(\theta_{n, \eps, z})|_{\eps = 0}$ similar to \eqref{eq:linearization-estimate}. 
In our experiments, $h(\theta)$ is the loss on the test set. 
The results are given in \Cref{fig:nlp-data}. 

\myparagraph{Results: Total Error Versus $n$}
For the question-answering task, the error reduces by a factor of $10$ as $n$ increases from $40$ to $300$ (slope $\approx -1.5$) indicating an empirical $n^{-1.5}$ rate. For the text continuation task, we find that the error in influence estimation does not vary significantly with $n$ and has a high variance. Indeed, the open-ended nature of the text continuation task suggests that no one point $z$ should have a large influence on the predictions of a test point $z_{\text{test}}$, leading to noisy influence estimates. 

\myparagraph{Comparing Computational Approximations}
We observe that SGD $\approx$ SVRG in \Cref{fig:nlp-data}. This corroborates the
total error bounds of \Cref{tab:total-cost} which show that variance-reduced SVRG has the same total error as SGD despite being significantly faster in optimization. 
At a large computation budget, we find that the conjugate gradient method also exhibits an error comparable to SGD and SVRG. The benefits of stochastic algorithms such as SGD become evident for large datasets where SGD gives a reasonable estimate without even making a full pass (its error is independent of $n$, cf. \Cref{tab:total-cost}). 
For the question-answering dataset, we find that the low-rank approximation provided by the Arnoldi method \cite{arnoldi} has the smallest error for $n \le 200$, while it is identical to the others for large $n$.

\myparagraph{Most Influential Subsets}
We repeat the question-answering experiment to find the most influential subset of data for different $n$ with test function $h_i(\theta) = \ell(z_{\text{test}, i}, \theta)$ for four chosen test points. We use the low-rank (Arnoldi) method to approximate the inverse Hessian-vector product because this method has the best error properties in \Cref{fig:nlp-data} (left two).
For different values of $\alpha$, we observe that the estimation error tends to decrease with $n$. We note that a few outliers are to be expected with large-scale deep nets with real data where theoretical assumptions are not precisely met.

The type of influential examples recovered varied from surface-level attributes to deeper features, such as topics, as $n$ increased; see~\Cref{fig:MIS_visual} for examples. 
In some cases, the most influential examples were semantically related questions with different answers. For instance, for the test question "Was Goldmoon male or female" (female), a highly negatively influential questions was "What is the gender of Jacques Rivard?" (male). However, for others the relations seemed more structural. For example, the test question
"The nationality of Jean-Louis Laya was what?" (French), we recovered as highly negatively influential, "The nationality of Yitzhak Rabin is?" (Hebrew). 

\label{sec:subset-if}
\begin{figure}
    \centering
    \includegraphics[scale = .2]{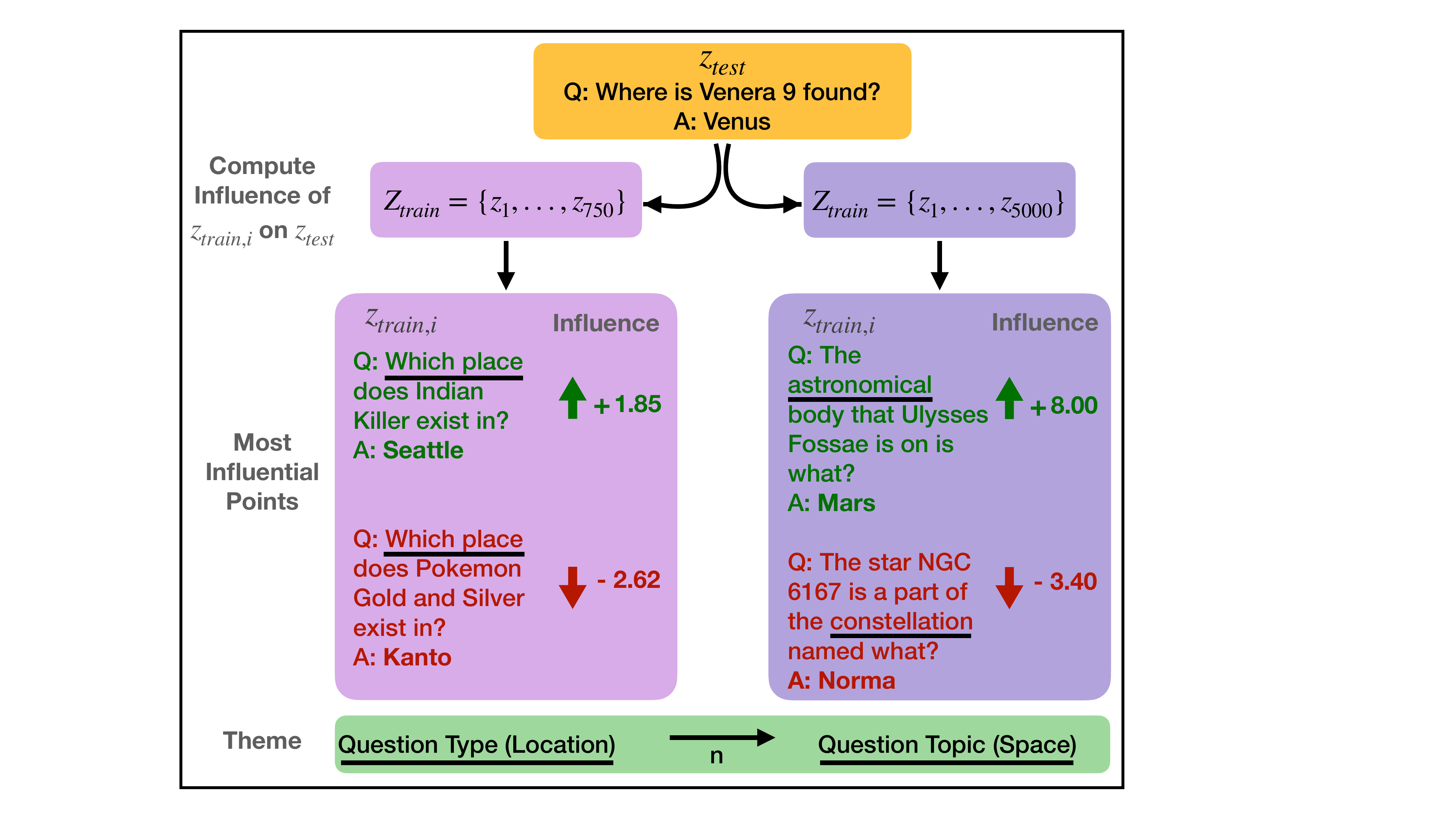}
    \caption{As the sample size $n$ increases, we see a shift in the quality of the most influential questions. Lower $n$ results in surface-level attributes, such as question type, while larger $n$ results in deeper features, such as the topic.}
    \label{fig:MIS_visual}
\end{figure} %
\section{CONCLUSION}
\label{sec:disc}
As statistical learning models and deep nets are being increasingly used, influence diagnostics are precious tools to study the influence of datapoints on predictions, decisions, and outcomes. In this paper, we presented statistical and computational guarantees for influence functions for generalized linear models. We established the statistical consistency of most influential subsets method \citep{broderick2020automatic} together with nonasymptotic bounds. 
We illustrated our results on simulated and real datasets. Extending our results to sparse regularized models as well as deep neural network models are interesting venues for future work.  
\paragraph{Acknowledgements}
This work was supported by NSF DMS-2023166, CCF-2019844, DMS-2052239, DMS-2134012, DMS-2133244, NIH, CIFAR-LMB, and research awards. Part of this work was done while Z. Harchaoui was visiting the Simons Institute for the Theory of Computing, and while K. Pillutla was at the University of Washington. 
\bibliographystyle{abbrvnat}

\bibliography{custom}

\clearpage
\onecolumn
\appendix
\part{Appendix} %
\addcontentsline{toc}{section}{Appendix}
\parttoc %
\clearpage

\section{Notation Review}
\myparagraph{Setup}
We review notation from the paper, which will be used throughout the appendix. 
We define the parameter of interest $\theta_\star \in \Theta = \reals^p$ as 
\begin{align*} 
    \theta_\star := \argmin_{\theta \in \Theta} \Big[ F(\theta):= \expect_{Z \sim P}\left[ \ell(Z, \theta) \right] \Big] \,,
\end{align*}
where 
$P$ is an unknown probability distribution over a data space $\Zcal$ and
$\ell: \Zcal \times \Theta \to \reals_+$ is a loss function. We define the estimate of $\theta_\star$ using an i.i.d. sample $Z_{1:n} := (Z_1, \cdots, Z_n) \sim P^n$ as
\begin{align*}
\theta_n := \argmin_{\theta \in \Theta} \frac{1}{n} \sum_{i=1}^n \ell(Z_i, \theta) .
\end{align*}
We define the gradient of the loss function as $S(z, \theta) = \nabla_\theta\ell (Z, \theta)$ and the empirical gradient of the loss function as $S_n(\theta) := \frac{1}{n} \sum_{i=1}^n \nabla_{\theta} \ell(Z_i,\theta)$.

We define the population Hessian $H_\star = \nabla_{\theta_\star}^2 \ell (z,\theta_\star)$ of the population objective and the estimate of the Hessian as $H_n(\theta):= \frac{1}{n}\sum_{i=1}^n \nabla_\theta^2 \ell (Z_i, \theta)$. 
\myparagraph{Influence Function}
We define $G_\star = \text{Cov}_{Z\sim P}(\nabla_{\theta_\star} \ell(Z,\theta_\star))$ the gradient covariance at $\theta_\star$ and the  effective dimension $p_\star = \Tr(H_\star^{-1/2}G_\star H_\star^{-1/2})$.
 We define the population influence function as $I(z) := H_\star^{-1} \nabla_{\theta_\star} \ell(z, \theta_\star)$. We quantify the influence of a fixed data point $z$ on the estimator $\theta_n$ as $I_n(z)$ defined as
\begin{align*} 
    I_n(z) =  - H_n(\theta_n)^{-1}  \grad \ell(z, \theta_n) \,.
\end{align*}

\myparagraph{Most Influential Subset}
Let $\alpha \in (0, 1)$ and $h:\reals^p \to \reals$ be a continuously differentiable test function. Then we define the weights $w$ in the probability simplex $\Delta^{n-1} 
\theta_{n, w} := \argmin_{{\theta \in \Theta}} \sum_{i=1}^n w_i \ell(Z_i, \theta)$ and use them to define $W_\alpha$ as
\begin{gather*}
    W_\alpha := \left\{ 
    w \in \Delta^{n-1} \, : \,
    \parbox{12em}{at most $\alpha n$ elements of $w$ are zero and the rest are equal} 
    \right\} \,.
\end{gather*}
The maximum influence of any subset of data of size at most $\alpha n$ for a test function $h$ is expressed by 
\begin{align*}
        \SIF_{\alpha, n}(h) = 
        \max_{w \in W_\alpha} \left\{-\sum_{i=1}^n w_i \inp*{\grad h(\theta_n)}{H_n(\theta_n)^{-1} \grad \ell(Z_i, \theta_n)} \right\}.
\end{align*}
The population subset influence is defined as,
\begin{align}
I_\alpha(h) = \supq_\alpha \Big[ - \grad h(\theta_\star)\T {H(\theta_*)^{-1} \grad \ell(Z, \theta_\star)} \Big]  \,,
\end{align}
where $S_\alpha$ is the superquantile at level $\alpha$. We refer to \Cref{sec:a:superquantile}.

\myparagraph{Miscellaneous}
An unqualified norm $\norm{\cdot}$
refers to the Euclidean norm $\norm{v}_2$ for a vector $v$
and the spectral norm $\norm{M}_2$ for a matrix $M$.
We define a vector norm 
$\norm{x}_A = \inp{x}{Ax}$ 
and matrix norm $\norm{B}_A = \norm{A^{1/2}B A^{1/2}}_2$ for a positive definite $A$. 
We define the convex hull as $\conv T$ for a set $T \subset \reals^n$. 

We define $\Var(M) = \expect[MM\T] - \expect[M]\expect[M]\T$ for a random matrix $M$. 
We also denote $\D Q / \D P$ as the Radon-Nikodym derivative of $Q$ w.r.t. $P$. When $P$ and $Q$ have respective densities $p, q$, we have $\D Q /\D P(z) = q(z) / p(z)$ as simply the density ratio or likelihood ratio.

Lastly, we define the condition number of a positive definite matrix $A$ with spectral norm $\|A\|_2 \leq L$ and
minimum eigenvalue $\lambda_{\text{min}}(A)$ as $\kappa = L / \lambda_{\text{min}}(A)$. 

\section{Review of Computational Approaches} 
\label{app:pseudocode}

We present the pseudocode of the various computational approaches we consider in this work: 
\begin{itemize}
    \item \Cref{alg:cgd}: Conjugate gradient method,
    \item \Cref{alg:sgd}: Stochastic gradient descent, 
    \item \Cref{alg:lissa}: LiSSA, 
    \item \Cref{alg:svrg}: Stochastic variance-reduced gradient (SVRG) method, 
    \item \Cref{alg:arnoldi}: Low-rank approximation via the Arnoldi/Lanczos iterations. 
\end{itemize}

\begin{algorithm}[H]
\caption{Conjugate Gradient Method to Compute the Influence Function}
   \begin{algorithmic}[1]
   \Require vector $v$, batch Hessian vector product oracle $\text{HVP}_n(u)=H_n(\theta_n)u$, number of iterations $T$
        \State $u_0 =0$, $r_0 =- v - \text{HVP}_n(u_0), d_0 = r_0$
        \For{$t = 0,...,T-1$}
            \State $\alpha_t =  \frac{d^\top_t r_t}{d^\top_t \text{HVP}_n(d_t)}$
            \State $u_{t+1} = u_t + \alpha_t d_t$
            \State $r_{t+1} = - v - \text{HVP}_n(u_{t+1})$
            \State $\beta_t = \frac{r_{t+1}^\top r_{t+1}}{r_{t}^\top r_{t}}$
            \State $d_{t+1} = r_{t+1} + \beta_t d_t$
        \EndFor
        \State \textbf{return }$u_{T}$
\end{algorithmic}
\label{alg:cgd}
\end{algorithm}

\begin{algorithm}[H]
\caption{Stochastic Gradient Descent Method to Compute the Influence Function}
   \begin{algorithmic}[1]
   \Require vector $v$, Hessian vector product oracle $\text{HVP}(i, u)= \nabla^2 \ell(z_i, \theta_n)u$, number of iterations $T$, learning rate $\gamma$
        \State $u_0 = 0$
        \For{$t = 0,...,T-1$}
            \State Sample $i_t \sim \text{Unif}([n])$ 
            \State $u_{t+1} = u_t - \gamma (\text{HVP}(i_t, u_t)+v)$
        \EndFor
        \State \textbf{return }$u_{T}$
\end{algorithmic}
\label{alg:sgd}
\end{algorithm}

\begin{algorithm}[H]
\caption{The LiSSA Method to Compute the Influence Function \cite{lissa}}
\begin{algorithmic}[1]
  \Require  vector $v$, Hessian vector product oracle $\text{HVP}(i, u)= \nabla^2 \ell(z_i, \theta_n)u$, number of approximations $S$, number of iterations $T$, scaling factor $\gamma$ 
         \For{$s = 1,...,S$}
             \State $u_0^{(s)} = -v$
             \For{$t = 0,...,T-1$}
                 \State Sample $i_t \sim \text{Unif}([n])$ 
                 \State $u_{t+1}^{(s)} = -\gamma v + u_t^{(s)} - \gamma\,\text{HVP}(i_t, u_t^{(s)})$
             \EndFor
        \EndFor
     \State $u_T= \frac{1}{S}\bigg(\sum_{s=1}^{S} u_{T}^{(s)}\bigg)$   
     \State \textbf{return }$u_T$
\end{algorithmic}
\label{alg:lissa}
\end{algorithm}

\myparagraph{Connection between SGD and LiSSA} 
Observe that the updates of LiSSA for a fixed $s$ are identical to that of SGD: 
\begin{align*}
    u_{t+1}^{(s)} =   -\gamma v + u_t^{(s)} - \gamma \text{HVP}(i_t,u_t^{(s)}) = u_t^{(s)} - \gamma(\text{HVP}(i_t, u_t^{(s)}) + v) \,.
\end{align*}
Formally, we show that the sequence $u_1,...,u_t$ produced by stochastic gradient descent with initial guess $u_0 = -v $ (instead of $u_0 = 0$ as required by \Cref{alg:sgd}) and $u_1',...,u_t'$ produced by LiSSA with number of repetitions $S=1$ are identical. 
Note that $u_0 = u_0' = -v$. We show by induction that the two sequences $(u_t)$ and $(u_t')$ are identical provided the same samples $i_0, \cdots, i_{T-1}$ are drawn.
Suppose $u_t = u_t'$ for some $t \ge 0$. 
We have, 
\begin{align*}
    u_{t+1}' =   -\gamma v + u_t' - \gamma \text{HVP}(i_t,u_t') = u_t' - \gamma(\text{HVP}(i_t, u_t') + v)
    = u_t - \gamma(\text{HVP}(i_t, u_t) + v) = u_{t+1},
\end{align*}
showing that the sequences are identical.

\begin{algorithm}[H]
\caption{Stochastic Variance Reduced Gradient Method to Compute the Influence Function}
   \begin{algorithmic}[1]
   \Require vector $v$, Hessian vector product oracle $\text{HVP}(i, u)= \nabla^2 \ell(z_i, \theta_n)u$, number of epochs $S$, number of iterations per epoch $T$, learning rate $\gamma$
        \State $u_T^{(0)} = 0$
        \For {$s= 1,2,...,S$}
        \State $u_0^{(s)} = u_T^{(s-1)}$
        \State $\tilde{u}_0^{(s)} = \frac{1}{n}\sum_{i=1}^n \text{HVP}(u_0^{(s)}) - v$
        \For{$t = 0,...,T-1$}
            \State Sample $i_t \sim \text{Unif}([n])$
            \State $u_{t+1}^{(s)} = u_t^{(s)} - \gamma (\text{HVP}(i_t, u_t^{(s)})-\text{HVP}(i_t,u_0^{(s)})+\tilde{u}_0^{(s)})$
        \EndFor
    \EndFor
    \State \textbf{return }$u_{T}^{(S)}$
\end{algorithmic}
\label{alg:svrg}
\end{algorithm}

\begin{algorithm}[H]
\caption{Arnoldi Method to Compute the Influence Function \cite{arnoldi}}

\begin{algorithmic}[1]
   \Require vector $v$, test function $h$,
   initial guess $u_0$, batch Hessian vector product oracle $\text{HVP}_n(u)= H_n(\theta_n)u$, number of top eigenvalues $k$, number of iterations $T$
   \Ensure An estimate of $\inp{\grad h(\theta)}{H_n(\theta_n)^{-1} v}$
\State Obtain $\Lambda, G$ = \textproc{Arnoldi}$(u_0, T, k)$
\Comment{Cache the results for future calls}
\State \Return $\inp{G \grad h(\theta)}{\Lambda^{-1} G v}$ 
\Statex 
\Procedure{Arnoldi}{$u_0$, $T$, $k$}
\State $w_0=1 = u_0 / \|u_0 \|_2$
\State $A = \textbf{0}_{T+1\times T}$
\For{$t = 1,...,T$}
    \State Set $u_t = \text{HVP}_n(w_t) - \sum_{j=1}^t \inp{u_t}{w_j} \, w_j$
    \State Set $A_{j, t} = \inp{u_t}{w_j}$ for $j = 1, \ldots, t$ and $A_{t+1, t} = \norm{u_t}_2$
    \State Update $w_{t+1} = u_t / \norm{u_t}$
    \EndFor
\State Set $\tilde{A} = A[1:T,\,\, :] \in \reals^{T \times T}$ (discard the last row)
\State Compute an eigenvalue decomposition $\tilde A = \sum_{j=1}^{T} \lambda_j e_j e_j^\top$ with $\lambda_j$’s in descending order
\State Define $G:\reals^p \to \reals^k$ as the operator 
$Gu =\big(\inp{u}{W\T e_1}, \cdots, \inp{u}{W\T e_k}\big)$, where 
$W = (w_1\T; \cdots; w_{T}\T) \in \reals^{T \times p}$
\State \textbf{return }
diagonal matrix
$\Lambda = \diag(\lambda_1, \cdots, \lambda_k)$ and the operator $G$
\EndProcedure
\end{algorithmic}
\label{alg:arnoldi}
\end{algorithm} %
\section{Effective Dimensions and Eigenspectra of the Hessian and Gradient Covariance}
\label{sec:eigendecay}

Recall the following definitions, the population Hessian $H_\star = \nabla^2 F(\theta_\star)$ of the population objective and $G_\star = \text{Cov}_{Z\sim P}(\nabla \ell(Z,\theta_\star))$ the gradient covariance at $\theta_\star$. We are interested in how the effective dimension $p_\star = \text{Tr}(H_\star^{-1/2}G_\star H_\star^{-1/2})$ differs from the parameter dimension $p$ due to the eigendecay of $H_\star$. First, we assume that $H_\star$ and $G_\star$ share the same eigenvectors. Then, using the eigenvalue decomposition of a matrix, we can say that for $Q$ containing the eigenvectors as its columns,
\begin{align*}
    H_\star = Q \Lambda_H Q^\top,\\
    G_\star = Q \Lambda_G Q^\top 
\end{align*}
where $\Lambda_A = \text{Diag}\{\lambda_{a,i}\}$ contains the eigenvalues of $A$ in non-increasing order. Therefore, we get
\begin{align*}
    H_\star^{-1/2} = Q \Lambda_H^{-1/2} Q^\top \,.
\end{align*}
Using these definitions we now show the following,
\begin{align*}
    H_\star^{-1/2}G_\star H_\star^{-1/2} &= (Q \Lambda_H^{-1/2} Q^\top)(Q \Lambda_G Q^\top)(Q \Lambda_H^{-1/2} Q^\top) \\
    & = Q \Lambda_H^{-1/2} \Lambda_G \Lambda_H^{-1/2} Q^\top\\
    & = Q \text{Diag}\bigg\{\frac{\lambda_{g,1}}{\lambda_{h,1}}...\frac{\lambda_{g,p}}{\lambda_{h,p}}\bigg \} Q^\top.
\end{align*}
Therefore, due to the cyclic property of traces we define,
\begin{align*}
    \Tr(H_\star^{-1/2}G_\star H_\star^{-1/2}) = \sum_{i=1}^p \frac{\lambda_{g,i}}{\lambda_{h,i}}.
\end{align*}
Here we have shown that the dimension dependency of $p_\star$ is dependent on the eigendecay of $G_\star$ and $H_\star$. To illustrate this point, we show four examples of how these calculations continue. All examples are outlined in Table \ref{tab:decay}.
\myparagraph{Polynomial - Polynomial Eigendecay}
We assume that both $G_\star$ and $H_\star$ have polynomial eigendecay, that is, $\lambda_{g,i} \lesssim i^{-\alpha}$ and $\lambda_{h,i} \lesssim i^{-\beta}$. Then we can write,
\begin{align*}
    p_\star \lesssim \sum_{i=1}^p i^{\beta - \alpha} \lesssim \int_1^p x^{\beta - \alpha} \D x \lesssim p^{\beta - \alpha +1}.
\end{align*}

\myparagraph{Polynomial - Exponential Eigendecay}
We assume that $G_\star$ has polynomial eigendecay and $H_\star$ have exponential eigendecay, that is $\lambda_{g,i} \lesssim i^{-\alpha}$ and $\lambda_{h,i} \lesssim e^{-\nu i}$. Then we can write,
\begin{align*}
    p_\star \lesssim \sum_{i=1}^p e^{\nu i} i^{-\alpha} \lesssim p^{1-\alpha}e^{\nu p},
\end{align*}
where the last inequality holds because $e^{\nu x} x^{-\alpha}$ is increasing when $x$ is large enough. 

\myparagraph{Exponential - Polynomial Eigendecay}
We assume that $G_\star$ has exponential eigendecay and $H_\star$ have polynomial eigendecay, that is $\lambda_{g,i} \lesssim e^{-\mu i}$ and $\lambda_{h,i} \lesssim i^{-\beta} $. Then we can write,
\begin{align*}
    p_\star \lesssim \sum_{i=1}^p e^{-\mu i} i^{\beta} \lesssim 1,
\end{align*}
where the last inequality holds because $e^{-\mu x} x^{\beta}$ is decreasing when $x$ is large enough. 

\myparagraph{Exponential - Exponential Eigendecay}
We assume that $G_\star$ has exponential eigendecay and $H_\star$ have exponential eigendecay, that is $\lambda_{g,i} \lesssim e^{-i \mu}$ and $\lambda_{h,i} \lesssim e^{-i \nu}$ . Then we can write,
\begin{align*}
    p_\star \lesssim \sum_{i=1}^p e^{(\nu - \mu)i}. 
\end{align*} 
If $\mu > \nu$, then 
\begin{align*}
   \sum_{i=1}^p e^{ (\nu - \mu)i} \lesssim 1.
\end{align*} 
If $\mu < \nu$, then 
\begin{align*}
   \sum_{i=1}^p e^{ (\nu - \mu)i} \lesssim \int_1^p e^{ (\nu - \mu)i} = \frac{1}{\nu - \mu}\bigg(e^{(\nu-\mu)p} - e^{(\nu-\mu)}\bigg) \lesssim e^{(\nu - \mu)p}.
\end{align*} 
And if $\mu = \nu$, then 
\begin{align*}
   \sum_{i=1}^p e^{0} = p.
\end{align*} 
\begin{table}[t]
    \caption{Comparison between the effective dimension $p_\star$ and the parameter dimension $p$ in different regimes of eigendecays of $G_\star$ and $H_\star$ assuming they share the same eigenvectors.}
    \label{tab:decay}
    \centering
    \renewcommand{\arraystretch}{1.2}
    \begin{tabular}{lccccc}
        \addlinespace[0.4em]
        \toprule
        & \multicolumn{2}{c}{\textbf{Eigendecay}} & \multicolumn{2}{c}{\textbf{Dimension Dependency}} & \textbf{Ratio} \\
        & $G_\star$ & $H_\star$ & $p_\star$ & $p$ & $p_\star/p$ \\
        \midrule
        Poly-Poly & $i^{-\alpha}$ & $i^{-\beta}$ & $p^{(\beta - \alpha + 1) \vee 0}$ & $p$ & $p^{(\beta - \alpha)\vee(-1)}$ \\
        Poly-Exp & $i^{-\alpha}$ & $ e^{-\nu i}$ & $p^{1-\alpha} e^{\nu p}$ & $p$ & $p^{-\alpha} e^{\nu p}$ \\
        Exp-Poly & $e^{-\mu i}$ & $i^{-\beta}$ & $1$ & $p$ & $p^{-1}$ \\
        Exp-Exp & $e^{-\mu i}$ & $e^{-\nu i}$ & \begin{tabular}{@{}c@{}} \qquad $p$ \mbox{ if }  $\mu = \nu$ \\ \qquad $1$ \mbox{ if } $\mu > \nu$ \\ $e^{(\nu - \mu) p}$ \mbox{ if } $\mu < \nu$ \end{tabular} & $p$ & \begin{tabular}{@{}c@{}} $1$ \mbox{ if }  $\mu = \nu$ \\ $p^{-1}$ \mbox{ if } $\mu > \nu$ \\ $p^{-1} e^{(\nu - \mu) p}$ \mbox{ if } $\mu < \nu$ \end{tabular} \\
        \bottomrule
    \end{tabular}
\end{table} %
\section{Statistical Error Bounds for Influence Estimation}
\label{appx:proof_influence_est}

The main purpose of this section is to prove the statistical error bound \Cref{thm:if-stat-main}.
We use $C$ to denote an absolute constant which may change from line to line.
We use subscripts to emphasize the dependency on problem-specific constants, e.g., $C_{K_1}$ is a constant that only depends on $K_1$. 

\myparagraph{Notation}
Let $z$ be a fixed data point not related to the sample $Z_1, \cdots, Z_n \sim P$.
Recall that the influence of upweighting an observation $z$ on the model parameter $\theta$ is given by
\begin{align}
    I_n(z) = - H_n(\est)^{-1} S(z, \est),
\end{align}
where $H_n(\theta) := \frac1n \sum_{i=1}^n \nabla_\theta^2 \ell(Z_i, \theta)$ is the empirical Hessian and $S(z, \theta) := \nabla_\theta \ell(z, \theta)$ is the gradient at $z$.
Let $\theta_\star$ be the minimizer (assumed to exist) of the population risk $\mathbb{E}[\ell(z, \theta)]$ and $H(\theta) := \Expect[\nabla_\theta^2 \ell(z, \theta)]$.
We write $H_\star := H(\theta_\star)$ for short.
We are interested in bounding the difference
\begin{align*}
    \staterr := \norm{H_n(\est)^{-1} S(z, \est) - H_\star^{-1} S(z, \theta_\star)}_{H_\star},
\end{align*}
where $\norm{u}_{A} := \sqrt{u^\top A u}$ for a vector $u$ and a positive semidefinite matrix $A$.

\subsection{Assumptions}
We state the full assumptions under which the statistical bound holds. 

\begin{assumption}\label{assm:pseudo_self_conc}
For any $z\in \Zcal$, the loss function $\ell(z, \cdot)$ is pseudo self-concordant for some $R \ge 1$: 
\[
    \vert D^3_\theta \ell(z, \theta)[u,u,v]\vert \leq R \norm{u}^2_{\grad^2 \ell(z, \theta)} \, \norm{v}_2 \,,
\]
where $D^3_x f(x)[u, v, w] := \frac{\D}{\D t} \inp{u}{\grad^2 f(x + tw) \, v}|_{t=0}$ for $f$ thrice continuously differentiable. 
\end{assumption}
The most useful consequence of this assumption is a spectral approximation of the Hessian 
$(1/2) H(\theta') \preceq H(\theta) \preceq 2\, H(\theta')$ for $\theta$ and $\theta'$ close enough in terms of the $L_2$ distance. 

\begin{assumption}\label{assm:sub_gaussian_grad}
(Sub-Gaussian Gradient). There exists a constant $K_1 \ge 1$ such that the normalized gradient $H(\theta_\star)^{-1/2} \grad \ell(Z, \theta_\star)$ at $\theta_\star$ is sub-Gaussian with parameter $K_1$ (see \Cref{sec:defs} for a precise definition). 
\end{assumption}

\begin{assumption}\label{assm:matrix_berstein}
(Matrix Bernstein of Hessian). The standardized Hessian $H(\theta_\star)^{-1/2} \, \grad^2 \ell(Z, \theta_\star) \, H(\theta_\star)^{-1/2} - \id_p$ at $\theta_\star$ satisfies a Bernstein condition with parameter $K_2 \ge 1$ (see \Cref{sec:defs} for a definition). Moreover, 
    \begin{align*}
        \sigma^2_H := \norm*{ \Var\big(H(\theta_\star)^{-1/2} \, \grad^2 \ell(Z, \theta_\star) \, H(\theta_\star)^{-1/2}\big)}_2
    \end{align*}
is finite, where we denote $\Var(M) = \expect[M M\T] - \expect[M] \, \expect[M]\T$ for a random matrix $M$. 
\end{assumption}

\subsection{Proof of the Statistical Bound of Theorem \ref{thm:if-stat-main}}

We now state and prove the full version of \Cref{thm:if-stat-main}. Note that this bound is stated in terms of the $H_\star$ norm, but without the square.

\begin{customthm}{\ref{thm:if-stat-main}}
Under Assumptions \ref{assm:pseudo_self_conc},\ref{assm:sub_gaussian_grad}, and \ref{assm:matrix_berstein}, we have, with probability at least $1-\delta$,
\begin{align*}
    \staterr \le C_{K_1, K_2, \sigma_H}  \log\left(\frac{2p}{\delta}\right) \sqrt{\log\left(\frac{e}{\delta}\right)}
    \left( 1 + R\sqrt{\frac{p_\star}{\mu_\star}} \right) \sqrt{\frac{ p_\star}{n}} 
\end{align*}
whenever $n \ge C_{K_1, K_2, \sigma_H} \left( \frac{p_\star}{\mu_\star} R^2 \log\left(\frac{e}{\delta}\right) + \log\left(\frac{2p}{\delta}\right)\right)$, where $p_\star := \Tr \{ H_\star^{-1/2}G_\star H_\star^{-1/2} \}$ and $\mu_\star = \lambda_{\min}(H_\star)$.
\end{customthm}

\begin{proof}
Define
\begin{equation}\label{eq:rn_tn}
    \begin{split}
        r_n &:= \sqrt{CK_1^2 \log{(2e/\delta)} \frac{p_\star}{n}} \\
        t_n &:= \frac{2\sigma_H^2}{-K_2 + \sqrt{K_2^2 + 2\sigma_H^2 n / \log{(4p/\delta)}}}.
    \end{split}
\end{equation}
Note that they both decay as $O(n^{-1/2})$.
The proof consists of several key steps.

\textbf{Step 1. Upper bound $\staterr$ by basic terms involving the standardized gradient and the standardized Hessian.}
By the triangle inequality, it holds that
\begin{align}\label{eq:diff_first_bound}
    \staterr \le \norm{(H_n(\est)^{-1} - H_\star^{-1}) S(z, \est)}_{H_\star} + \norm{H_\star^{-1}(S(z, \est) - S(z, \theta_\star))}_{H_\star}.
\end{align}
The first term in \eqref{eq:diff_first_bound} can be upper bounded by
\begin{align}\label{eq:diff_first_term}
    \norm{[H_n(\est)^{-1} - H_\star^{-1}] [S(z, \est) - S(z, \theta_\star)]}_{H_\star} + \norm{[H_n(\est)^{-1} - H_\star^{-1}] S(z, \theta_\star)}_{H_\star}.
\end{align}
By the triangle inequality again, it can be shown that, for any $v \in \reals^p$,
\begin{align*}
    \norm{[H_n(\est)^{-1} - H_\star^{-1}] v}_{H_\star}
    &= \norm{[H_\star^{1/2} H_n^{-1}(\est) H_\star^{1/2} - H_\star^{-1/2}H_\star^{1/2}] H_\star^{-1/2} v}_2\\
    &\le \norm{H_\star^{1/2} H_n^{-1}(\est) H_\star^{1/2} - \id_p}_2 \norm{H_\star^{-1/2} v}_2.
\end{align*}
As a result, \eqref{eq:diff_first_term} can be further upper bounded by
\begin{align*}
    \underbrace{\norm{H_\star^{1/2} H_n(\est)^{-1} H_\star^{1/2} - \id_p}_2}_{A_3} \Big\{ \underbrace{\norm{H_\star^{-1/2} [S(z, \est) - S(z, \theta_\star)]}_2}_{A_2} + \underbrace{\norm{H_\star^{-1/2} S(z, \theta_\star)}_2}_{A_1} \Big\}.
\end{align*}
Similarly, the second term in \eqref{eq:diff_first_bound} can be upper bounded by
\begin{align*}%
    \norm{H_\star^{-1/2} [S(z, \est) - S(z, \theta_\star)]}_2 = A_2.
\end{align*}
Hence, it suffices to bound the three terms $A_1$, $A_2$, and $A_3$.
For that purpose, we define the following events
\begin{align*}
    \event_1 &:= \left\{ \norm{H_\star ^{-1/2} S(z,\theta_\star)}^2_2 \le C K_1^2 \log{(e/\delta)} p_\star \right\} \\
    \event_2 &:= \left\{ \norm{\est - \theta_\star}_{H_\star}^2 \le C K^2_1\log(e/\delta) \frac{p_\star}{n} \right\} \\
    \event_3 &:= \left\{ \norm{H_\star ^{-1/2}H(z,\theta_\star)H_\star^{-1/2} - \id_p}_2 \leq t_1 \right\}\\
    \event_4 &:= \left\{ \norm{H_\star^{1/2} H_n(\est)^{-1} H_\star^{1/2} - \id_p}_2 \le \frac{Rr_n/\sqrt{\mu_\star} + t_n}{1 - Rr_n/\sqrt{\mu_\star} - t_n} \right\}.
\end{align*}
Moreover, we assume $n \ge \max\{ 4(K_2+2\sigma^2_H)\log(16p/\delta), C K_1^2\log(e/\delta)p_\star R^2/\mu_\star \}$ throughout the proof.
In the following, we bound $A_1$, $A_2$, $A_3$ on the event $\event_1 \event_2 \event_3 \event_4$, and control the probability of this event.

\textbf{Step 2. Control $A_1$.}
On the event $\event_1$, we know
\begin{align*}
    A_1 \leq \sqrt{C K_1^2 \log{(e/\delta)} p_\star}.
\end{align*}

\textbf{Step 3. Control $A_2$.}
According to Taylor's theorem, it holds that
\begin{align*}
    S(z, \est) - S(z, \theta_\star) = H(z, \bar \theta) (\est - \theta_\star),
\end{align*}
where $\bar \theta \in \text{Conv}\{\est, \theta_\star\}$.
Therefore, we can rewrite $A_2$ as
\begin{align*}
    A_2 &= \norm{H_\star^{-1/2}H(z, \bar \theta) (\est - \theta_\star)}_2\\
    & =\norm{H_\star^{-1/2}H(z, \bar \theta)H_\star^{-1/2}H_\star^{1/2} (\est - \theta_\star)}_2.
\end{align*}
Consequently,
\begin{align*}
    A_2 \le \norm{H_\star^{-1/2} H(z, \bar \theta) H_\star^{-1/2}}_2 \norm{H_\star^{1/2}(\est - \theta_\star)}_2.
\end{align*}
According to \Cref{prop:hessian}, we have
\begin{align*}
    e^{-R\norm{\bar \theta - \theta_\star}_2} H(z, \theta_\star) \preceq H(z,\bar{\theta}) \preceq e^{R\norm{\bar \theta - \theta_\star}_2} H(z, \theta_\star).
\end{align*}
Note that $R \norm{\bar \theta - \theta_\star}_2 \le R \norm{\est - \theta_\star}_2 \le R\mu_\star^{-1/2} \norm{\est - \theta_\star}_{H_\star}$.
It follows from the event $\event_2$ that
\begin{align}\label{eq:one_point_hess_sand}
    \frac12 H(z, \theta_\star) \preceq H(z, \bar \theta) \preceq 2H(z, \theta_\star).
\end{align}
As a result, we have
\begin{align*}
    \norm{H_\star^{-1/2} H(z, \bar \theta) H_\star^{-1/2}}_2 \le 2 \norm{H_\star^{-1/2} H(z, \theta_\star) H_\star^{-1/2}}_2.
\end{align*}
On the event $\event_3$, we know
\begin{align}\label{eq:normalized_hessian}
    \norm{H_\star ^{-1/2}H(z, \theta_\star)H_\star^{-1/2}}_2 \leq 1 + t_1.
\end{align}
Therefore, by the event $\event_2$ and \eqref{eq:normalized_hessian}, $A_2$ is upper bounded by
\begin{align*}
    A_2 \le C (1 + t_1) r_n.
\end{align*}

\textbf{Step 4. Control $A_3$.}
On the event $\event_4$, we have
\begin{align*}
    A_3 \le \frac{Rr_n/\sqrt{\mu_\star} + t_n}{1 - Rr_n/\sqrt{\mu_\star} - t_n}.
\end{align*}

\textbf{Step 5. Control the probability of the event $\event_1\event_2\event_3\event_4$.}

\textit{Event $\event_1$}.
Since $\theta_\star$ is a minimizer of the population risk, then, by the first order optimality condition, we have $E[S(z, \theta_\star)]=0$.
Moreover, we have 
\begin{align*}
     \text{Cov}(G_\star^{-1/2} S(z, \theta_\star))&= E[G_\star^{-1/2} S(z, \theta_\star)S(z, \theta_\star)^\top G_\star^{-1/2}] \\
    & = G_\star^{-1/2} E[ S(z, \theta_\star)S(z, \theta_\star)^\top] G_\star^{-1/2}\\
    & = G_\star^{-1/2} G_\star G_\star^{-1/2} = \id_p.
\end{align*}
It follows that $G_\star^{-1/2} S(z, \theta_\star)$ is an isotropic random vector.
Let $J := G_\star^{1/2} H_\star^{-1} G_\star^{1/2}$.
It can be checked that
\begin{align*}
    \norm{H_\star^{-1/2} S(z, \theta_\star)}^2_2 = \norm{G_\star^{-1/2} S(z, \theta_\star)}_J^2, 
\end{align*}
where we denote $\norm{A}_B = \norm{B^{1/2} A B^{1/2}}_2$ for positive semidefinite $B$. 
Now it follows from \Cref{thm:isotropic_tail} that, with probability at least $1-\delta/4$,
\begin{align*}
    \norm{H_\star ^{-1/2} S(z,\theta_\star)}^2_2 \le C \left[ \textbf{Tr}(J) + K_1^2 \left(\norm{J}_2 \sqrt{\log(e/\delta)}+\norm{J}_\infty \log(1/\delta) \right) \right] \le C K_1^2 \log{(e/\delta)} p_\star,
\end{align*}
since $\norm{J}_\infty \le \norm{J}_2 \le \textbf{Tr}(J) = p_\star$.
Therefore, $\Prob(\event_1) \ge 1 - \delta/4$.

\textit{Event $\event_2$}. By \Cref{lem:est_error}, we have $\Prob(\event_2) \ge 1 - \delta/4$.

\textit{Event $\event_3$}. By Assumption \ref{assm:matrix_berstein}, we know that
\begin{align*}
    H_\star^{-1/2}H(z,\theta_\star)H_\star^{-1/2} - \id_p
\end{align*}
satisfies a Bernstein condition with parameter $K_2$.
It follows from \Cref{thm:bernstein_matrix} that $\Prob(\event_3) \ge 1 - \delta/4$.

\textit{Event $\event_4$}. It follows directly from \Cref{prop:bound_inverse_hess} that $\Prob(\event_4) \ge 1 - \delta/4$.

Now, by a union bound, we obtain $\Prob(\event_1\event_2\event_3\event_4) \ge 1 - \delta$.

\textbf{Step 6. Conclusion.} Putting all the above results together, we have shown that, with probability at least $1 - \delta$,
\begin{align*}
    \staterr \le C \frac{Rr_n/\sqrt{\mu_\star} + t_n}{1 - Rr_n/\sqrt{\mu_\star} - t_n} \left[ \sqrt{K_1^2 \log{(e/\delta)} p_\star} + (1 + t_1) r_n\right] + (1 + t_1) r_n.
\end{align*}
\end{proof}

\subsection{Intermediate Results}

The proof of \Cref{thm:if-stat-main} relies on two key results: 1) the estimator $\theta_n$ belongs to a neighborhood of $\theta_\star$ stated in \Cref{lem:est_error}, and 2) the inverse empirical Hessian $H_n(\theta_n)^{-1}$ is close to it population counterpart $H_\star^{-1}$ stated in \Cref{prop:bound_inverse_hess}.
Before we prove them, we give several useful lemmas.

\begin{lemma}\label{lem:emp_risk_self_concord}
    Under \Cref{assm:pseudo_self_conc}, the empirical risk $F_n$ is pseudo self-concordant with parameter $R$.
\end{lemma}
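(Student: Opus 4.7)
The plan is to exploit the linearity of differentiation together with the averaging structure of $F_n(\theta) = \frac{1}{n}\sum_{i=1}^n \ell(Z_i, \theta)$. Since differentiation is linear, the trilinear form $D^3 F_n(\theta)[u,u,v]$ is simply the average of the per-sample trilinear forms $D^3 \ell(Z_i, \theta)[u,u,v]$. Similarly, the Hessian $\nabla^2 F_n(\theta) = H_n(\theta)$ is the average of the per-sample Hessians $\nabla^2 \ell(Z_i, \theta)$, so that $\|u\|^2_{H_n(\theta)} = \frac{1}{n}\sum_{i=1}^n \|u\|^2_{\nabla^2 \ell(Z_i, \theta)}$.

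First, I would apply the triangle inequality to pull the absolute value inside the sum:
\[
|D^3 F_n(\theta)[u,u,v]| \;\le\; \frac{1}{n}\sum_{i=1}^n |D^3 \ell(Z_i,\theta)[u,u,v]|.
\]
Then I would invoke \Cref{assm:pseudo_self_conc} term by term, which yields
\[
|D^3 F_n(\theta)[u,u,v]| \;\le\; \frac{R\,\|v\|_2}{n}\sum_{i=1}^n \|u\|^2_{\nabla^2 \ell(Z_i,\theta)}.
\]
Finally, recognizing the average of the quadratic forms as $\|u\|^2_{H_n(\theta)} = \|u\|^2_{\nabla^2 F_n(\theta)}$ closes the argument and produces the desired constant $R$ (not a sample-dependent version).

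There is essentially no obstacle here; the statement is a routine verification that pseudo self-concordance is preserved under (uniform) convex combinations of functions, and it is mentioned explicitly so that it can be cited later when invoking spectral Hessian approximation results on the empirical objective. The only thing to be slightly careful about is that the constant does not degrade: because the weights $1/n$ sum to one, the same $R$ carries over, rather than a sum like $nR$. This is exactly what makes self-concordance a convenient affine-invariant regularity notion in the first place.
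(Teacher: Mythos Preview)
Your proposal is correct and follows essentially the same approach as the paper: apply the triangle inequality to $D^3 F_n(\theta)[u,u,v] = \frac{1}{n}\sum_i D^3 \ell(Z_i,\theta)[u,u,v]$, bound each summand via \Cref{assm:pseudo_self_conc}, and then recognize $\frac{1}{n}\sum_i u^\top \nabla^2 \ell(Z_i,\theta)u = u^\top \nabla^2 F_n(\theta) u$. Your remark that the constant $R$ is preserved because the weights sum to one is exactly the point.
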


\begin{proof}
    By \Cref{assm:pseudo_self_conc}, the loss $\ell(Z_i, \cdot)$ is pseudo self-concordant with parameter $R$ for every $i \in \{1, \dots, n\}$.
    Since $F_n(\theta) = \frac1n \sum_{i=1}^n \ell(Z_i, \theta)$, we have
    \begin{align*}
        \abs{D_\theta^3 F_n(\theta)[u, u, v]}
        &= \myabs{\frac1n \sum_{i=1}^n D_\theta^3 \ell(Z_i, \theta)[u, u, v]}
        \le \frac1n \sum_{i=1}^n \abs{D_\theta^3 \ell(Z_i, \theta)[u, u, v]} \\
        &\le \frac1n \sum_{i=1}^n R \norm{v}_2 u^\top \nabla_\theta^2 \ell(Z_i, \theta) u = R \norm{v}_2 u^\top \nabla_\theta^2 F_n(\theta) u.
    \end{align*}
    This completes the proof.
\end{proof}

The next lemma provides a sufficient condition for the estimator $\theta_n$ to be close to $\theta_\star$.
\begin{lemma}\label{lem:localization}
    Under \Cref{assm:pseudo_self_conc}, whenever
    \begin{align*}
        \norm{S_n(\theta_\star)}_{H_n^{-1}(\theta_\star)} \le \sqrt{\lambda_{\min}(H_n(\theta_\star))}/(2R),
    \end{align*}
    the estimator $\est$ uniquely exists and satisfies
    \begin{align*}
        \norm{\est - \theta_\star}_{H_n(\theta_\star)} \le 4 \norm{S_n(\theta_\star)}_{H_n^{-1}(\theta_\star)}.
    \end{align*}
\end{lemma}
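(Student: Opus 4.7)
The plan is to exploit the pseudo self-concordance of the empirical risk $F_n$ (which holds with parameter $R$ by \Cref{lem:emp_risk_self_concord}) to obtain a quadratic-type lower bound on $F_n$ on a sphere around $\theta_\star$, and then force the minimizer inside the enclosed ball by the sublevel-set argument. Let $H := H_n(\theta_\star)$, $\mu := \lambda_{\min}(H)$, and $r := \norm{S_n(\theta_\star)}_{H^{-1}}$, so that the hypothesis reads $Rr/\sqrt{\mu} \le 1/2$.

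First, for $\theta = \theta_\star + u$ with $\norm{u}_H \le 4r$, we get $\norm{u}_2 \le 4r/\sqrt{\mu} \le 2/R$. Applying the standard Hessian comparison implied by pseudo self-concordance (cf.\ \Cref{prop:hessian}) along the segment $s \mapsto \theta_\star + su$ yields $\phi''(s) \ge e^{-Rs\norm{u}_2} \norm{u}_H^2$ where $\phi(s) := F_n(\theta_\star + su)$. Combining with the Taylor identity $\phi(1) - \phi(0) = \phi'(0) + \int_0^1 (1-s)\phi''(s)\,\D s$, a direct computation of the integral gives
\[
F_n(\theta) - F_n(\theta_\star) \;\ge\; \langle S_n(\theta_\star), u\rangle + \norm{u}_H^2 \cdot c\bigl(R\norm{u}_2\bigr), \qquad c(\alpha) := \frac{\alpha - 1 + e^{-\alpha}}{\alpha^2}.
\]

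Second, I would evaluate this on the sphere $\norm{u}_H = 4r$, on which $\alpha := R\norm{u}_2 \le 2$. Since $c$ is decreasing on $(0,\infty)$ with $c(0) = 1/2$ and $c(2) = (1+e^{-2})/4 > 1/4$, and since $\langle S_n(\theta_\star), u\rangle \ge -r\norm{u}_H$ by Cauchy--Schwarz in $H^{-1}/H$ duality,
\[
F_n(\theta) - F_n(\theta_\star) \;\ge\; 16 r^2 c(2) - 4r^2 \;>\; 0.
\]
Hence the closed convex sublevel set $\{F_n \le F_n(\theta_\star)\}$ contains $\theta_\star$, meets neither the sphere nor its exterior (by convexity and connectedness), and is therefore contained in the open ball of radius $4r$ around $\theta_\star$ in $H$-norm. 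This sublevel set is then compact, so continuity of $F_n$ yields a global minimizer $\theta_n$ strictly inside the ball, giving $\norm{\theta_n - \theta_\star}_H < 4r$. Uniqueness follows since $H_n(\theta) \succeq e^{-2} H \succ 0$ throughout the ball by the same Hessian comparison, making $F_n$ strictly convex there.

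The main obstacle is the calculus verification that $c(\alpha) > 1/4$ on $[0, 2]$, which reduces to checking $c'(\alpha) \le 0$, equivalently $(2-\alpha)e^{\alpha} \le \alpha + 2$; this follows by matching Taylor coefficients at $\alpha = 0$ up to order two and comparing third derivatives. Everything else is a routine combination of the pseudo self-concordance inequality with convexity.
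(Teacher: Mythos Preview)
Your argument is correct and follows essentially the same route as the paper: both start from the pseudo self-concordance of $F_n$ (via \Cref{lem:emp_risk_self_concord}), after which the paper simply invokes \Cref{prop:f_localization} (a restatement of \citet[Prop.~2]{bach2010self}) as a black box, whereas you unpack and reprove that localization result directly via the Hessian comparison of \Cref{prop:hessian} and a sublevel-set argument. The only difference is packaging, not substance.
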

\begin{proof}
    By \Cref{lem:emp_risk_self_concord}, we have $F_n$ is pseudo self-concordant with parameter $R$.
    Since $\theta_n$ is the empirical risk minimizer, the claim follows from \Cref{prop:f_localization} with $f = F_n$ and $x = \theta_\star$.
\end{proof}

\begin{lemma}\label{lem:norm_score}
    Under \Cref{assm:sub_gaussian_grad}, it holds that, with probability at least $1 - \delta$,
    \begin{align*}
        \norm{S_n(\theta_\star)}_{H_\star^{-1}}^2 \le \frac1n C K_1^2 \log{(e/\delta)} p_\star.
    \end{align*}
\end{lemma}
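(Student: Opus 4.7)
The plan is to reduce the bound to a single isotropic-type sub-Gaussian tail estimate by rescaling. Let $Y_i := H_\star^{-1/2} \nabla \ell(Z_i, \theta_\star)$ for $i = 1, \dots, n$. Since $\theta_\star$ is the minimizer of the population risk, first-order optimality yields $\Expect[Y_i] = 0$. By \Cref{assm:sub_gaussian_grad}, each $Y_i$ is a mean-zero sub-Gaussian vector with parameter $K_1$, and the $Y_i$ are i.i.d. Moreover, the covariance of $Y_i$ is precisely $J := H_\star^{-1/2} G_\star H_\star^{-1/2}$, so $\Tr(J) = p_\star$ and $\|J\|_2 \le \Tr(J) = p_\star$.

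Next, I would observe that the target quantity can be rewritten as
\[
\|S_n(\theta_\star)\|_{H_\star^{-1}}^2 \;=\; \Big\| \tfrac{1}{n} \sum_{i=1}^n Y_i \Big\|_2^2 \;=\; \frac{1}{n} \Big\| \tfrac{1}{\sqrt{n}} \sum_{i=1}^n Y_i \Big\|_2^2.
\]
Set $Z := \frac{1}{\sqrt{n}} \sum_{i=1}^n Y_i$. For any unit vector $u$, the scalar $\langle u, Z\rangle = \frac{1}{\sqrt{n}} \sum_i \langle u, Y_i\rangle$ is a normalized sum of $n$ i.i.d. mean-zero scalar sub-Gaussians with parameter $K_1$, hence itself sub-Gaussian with parameter $K_1$. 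Thus $Z$ is a mean-zero sub-Gaussian vector with parameter $K_1$ and covariance $J$, which puts us in exactly the setting used in Step 5 of the proof of \Cref{thm:if-stat-main} for the single-observation score.

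I would then invoke \Cref{thm:isotropic_tail} applied to $Z$: with probability at least $1-\delta$,
\[
\|Z\|_2^2 \;\le\; C\Big[\Tr(J) + K_1^2 \big(\|J\|_2 \sqrt{\log(e/\delta)} + \|J\|_\infty \log(1/\delta)\big)\Big].
\]
Using $\|J\|_\infty \le \|J\|_2 \le \Tr(J) = p_\star$ and absorbing constants, this collapses to $\|Z\|_2^2 \le C K_1^2 \log(e/\delta)\, p_\star$. Dividing by $n$ gives exactly the claimed bound on $\|S_n(\theta_\star)\|_{H_\star^{-1}}^2$.

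The proof is essentially a bookkeeping exercise: the only mild subtlety is verifying that the sub-Gaussian parameter is preserved at $K_1$ after the $1/\sqrt{n}$ normalization of the i.i.d. sum, which follows from the one-dimensional fact applied direction-wise. There is no genuine obstacle beyond correctly invoking the already-available concentration tool \Cref{thm:isotropic_tail}; the step to verify carefully is that $J$ has the right interpretation ($\Tr(J) = p_\star$) so that the bound comes out in terms of the effective dimension rather than the ambient dimension $p$.
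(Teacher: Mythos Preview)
Your overall strategy is right, but there is a genuine gap in the step where you invoke \Cref{thm:isotropic_tail}. That theorem, as stated in the paper, requires the random vector to be \emph{isotropic}, i.e., to have identity covariance. Your vector $Z = \tfrac{1}{\sqrt{n}}\sum_i Y_i$ has covariance $J = H_\star^{-1/2} G_\star H_\star^{-1/2}$, not $\id_p$, so you cannot apply the theorem to $Z$ directly. You are implicitly using a non-isotropic Hanson--Wright variant (``$\|Z\|_2^2$ concentrates around $\Tr(\Cov Z)$''), but that is not what \Cref{thm:isotropic_tail} says: the $J$ there is a \emph{weighting matrix} for the norm $\|S\|_J^2$, not the covariance of $S$. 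Also, your claim that your setup is ``exactly the setting used in Step~5 of the proof of \Cref{thm:if-stat-main}'' is not quite right: there too the vector is first whitened to be isotropic before the theorem is invoked.

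The paper's fix is to whiten by $G_\star^{-1/2}$ rather than $H_\star^{-1/2}$: set $W := \sqrt{n}\, G_\star^{-1/2} S_n(\theta_\star)$, which is isotropic by construction since $G_\star = \Cov(\nabla\ell(Z,\theta_\star))$. One then checks the algebraic identity $\|S_n(\theta_\star)\|_{H_\star^{-1}}^2 = \|W\|_{J'}^2$ with $J' = G_\star^{1/2} H_\star^{-1} G_\star^{1/2}/n$, and applies \Cref{thm:isotropic_tail} to the pair $(W, J')$. Since $\Tr(J') = p_\star/n$ by cyclicity of trace, the same endgame you wrote (bounding $\|J'\|_\infty \le \|J'\|_2 \le \Tr(J')$ and absorbing constants) then goes through. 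The sub-Gaussianity of $W$ with parameter $CK_1$ is handled via \Cref{lem:sum_subG}. So the missing ingredient in your argument is precisely this whitening step that makes the concentration tool applicable.
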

\begin{proof}
    Define $W := \sqrt{n} G_\star^{-1/2} S_n(\theta_\star)$.
    It can be verified that $\Expect[W] = \sqrt{n} G_\star^{-1/2} S(\theta_\star) = 0$ and
    \begin{align*}
        \Expect[WW^\top]
        &= \frac1n G_\star^{-1/2} \Expect\left[ \left( \sum_{i=1}^n S(Z_i, \theta_\star) \right) \left( \sum_{i=1}^n S(Z_i, \theta_\star) \right)^\top  \right]^2 G_\star^{-1/2} \\
        &= G_\star^{-1/2} \frac1n \sum_{i=1}^n \Expect[S(Z_i, \theta_\star) S(Z_i, \theta_\star)^\top] G_\star^{-1/2} = \id_p.
    \end{align*}
    Moreover, by \Cref{lem:sum_subG} and \Cref{assm:sub_gaussian_grad}, we get that $W$ is sub-Gaussian with $\norm{W}_{\psi_2} \le C K_1$.
    Define $J := G_\star^{1/2} H_\star^{-1} G_\star^{1/2} / n$.
    It is clear that $\norm{S_n(\theta_\star)}_{H_\star^{-1}}^2 = \norm{W}_{J}^2$.
    By \Cref{thm:isotropic_tail}, we have, with probability at least $1 - \delta$,
    \begin{align*}
    \norm{S_n(\theta_\star)}_{H_\star^{-1}}^2 \le C K_1^2 \log{(e/\delta)} p_\star.
    \end{align*}
    Here we have used $\norm{J}_{\infty} \le \norm{J}_2 \le \Tr(J) = p_\star$, $\log{(1/\delta)} \le \log{(e/\delta)}$, and $\sqrt{\log{(e/\delta)}} \le \log{(e/\delta)}$.
\end{proof}

\begin{lemma}\label{lem:hess_sand}
    Under \Cref{assm:matrix_berstein}, it holds that, with probability at least $1 - \delta$,
    \begin{align*}
        \frac12 H_\star \preceq H_n(\theta_\star) \preceq \frac32 H_\star,
    \end{align*}
    whenever $n \ge 4(K_2 + 2\sigma_H^2) \log{(2p/\delta)}$.
\end{lemma}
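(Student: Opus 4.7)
The plan is to reduce this to a matrix Bernstein concentration argument applied to the standardized Hessian. Concretely, define the i.i.d.\ centered random matrices
\[
    X_i := H_\star^{-1/2}\, \grad^2 \ell(Z_i, \theta_\star)\, H_\star^{-1/2} - \id_p,
\]
so that $\Expect[X_i] = 0$. By Assumption~\ref{assm:matrix_berstein}, each $X_i$ satisfies the matrix Bernstein condition with parameter $K_2$, and the variance proxy satisfies $\|\Var(X_i)\|_2 \le \sigma_H^2$. The average $\bar X_n := (1/n)\sum_{i=1}^n X_i$ is precisely $H_\star^{-1/2} H_n(\theta_\star) H_\star^{-1/2} - \id_p$, so bounding $\|\bar X_n\|_2$ directly controls the multiplicative deviation of $H_n(\theta_\star)$ from $H_\star$ in the Loewner order.

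Next, I would invoke the matrix Bernstein inequality (Theorem~\ref{thm:bernstein_matrix}) applied to $\bar X_n$. The standard form of this inequality yields a tail probability of the shape $2p\,\exp\bigl(-n t^2 / (c_1 \sigma_H^2 + c_2 K_2 t)\bigr)$ for absolute constants $c_1, c_2$. Setting $t = 1/2$ and inverting the exponent to achieve failure probability $\delta$ gives a sufficient condition of the form $n \ge C(K_2 + \sigma_H^2)\log(2p/\delta)$ for some constant. The sample size hypothesis $n \ge 4(K_2 + 2\sigma_H^2)\log(2p/\delta)$ in the statement is exactly such a condition (with the factor $4$ absorbing absolute constants), so with probability at least $1-\delta$ we obtain
\[
    \bigl\| H_\star^{-1/2} H_n(\theta_\star) H_\star^{-1/2} - \id_p \bigr\|_2 \le \tfrac12.
\]

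From there the conclusion is immediate: the spectral bound above is equivalent to
\[
    \tfrac12 \id_p \preceq H_\star^{-1/2} H_n(\theta_\star) H_\star^{-1/2} \preceq \tfrac32 \id_p,
\]
and sandwiching this between $H_\star^{1/2}$ on both sides yields the claimed $\tfrac12 H_\star \preceq H_n(\theta_\star) \preceq \tfrac32 H_\star$.

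The only real obstacle is bookkeeping the absolute constants in the matrix Bernstein inequality so that the sample-size threshold comes out exactly as $4(K_2 + 2\sigma_H^2)\log(2p/\delta)$; this is a routine verification once the precise version of Theorem~\ref{thm:bernstein_matrix} stated in the appendix is plugged in. No additional probabilistic or analytic ingredients are needed beyond \Cref{assm:matrix_berstein} and the matrix Bernstein bound itself.
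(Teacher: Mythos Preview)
Your proposal is correct and matches the paper's proof essentially line for line: define the standardized Hessian deviations, apply the matrix Bernstein inequality (Theorem~\ref{thm:bernstein_matrix}) to get the tail bound $2p\exp\{-nt^2/[2(\sigma_H^2 + K_2 t)]\}$, set $t=1/2$, and invert to obtain the stated sample-size threshold. The constant bookkeeping you flag as routine indeed works out exactly to $n \ge 4(K_2 + 2\sigma_H^2)\log(2p/\delta)$ with no slack.
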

\begin{proof}
    By \Cref{assm:matrix_berstein} and \Cref{thm:bernstein_matrix}, it holds that, for any $t > 0$,
    \begin{align*}
        \Prob\left( \norm{H_\star^{-1/2} H_n(\theta_\star) H_\star^{-1/2} - \id_p}_{2} \ge t \right) \le 2p \exp\left\{ -\frac{nt^2}{2(\sigma_H^2 + K_2 t)} \right\}.
    \end{align*}
    The claim then follows by setting $t = 1/2$.
\end{proof}

Now we are ready to prove the localization result.

\begin{proposition}\label{lem:est_error}
    Under Assumptions \ref{assm:pseudo_self_conc},\ref{assm:sub_gaussian_grad}, and \ref{assm:matrix_berstein}, we have, with probability at least $1 - \delta$, the estimator $\est$ uniquely exists and satisfies
    \begin{align}\label{eq:est_error}
        \norm{\est - \theta_\star}_{H_\star}^2 \le CK_1^2 \, \frac{p_\star}{n} \,  \log\left(\frac{e}{\delta}\right) 
    \end{align}
    whenever $n \ge \max\{ 4(K_2+2\sigma^2_H)\log(4p/\delta), \frac{C K_1^2p_\star R^2}{\mu_\star}  \log(e/\delta)\}$.
\end{proposition}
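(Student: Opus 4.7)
The plan is to assemble \Cref{lem:localization}, \Cref{lem:norm_score}, and \Cref{lem:hess_sand} via a union bound, and to use the two sample-size conditions to verify the hypothesis of \Cref{lem:localization}. Throughout, constants are tracked loosely, with a generic $C$ that may change from line to line.

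First I would apply \Cref{lem:hess_sand} at confidence level $\delta/2$: under the assumption $n \ge 4(K_2+2\sigma_H^2)\log(4p/\delta)$, an event $\event_H$ of probability at least $1-\delta/2$ occurs on which $\tfrac12 H_\star \preceq H_n(\theta_\star) \preceq \tfrac32 H_\star$. On $\event_H$ we get $\lambda_{\min}(H_n(\theta_\star)) \ge \mu_\star/2$ and, by inversion, $H_n(\theta_\star)^{-1} \preceq 2 H_\star^{-1}$. Next I would apply \Cref{lem:norm_score} at confidence $\delta/2$, obtaining an event $\event_S$ of probability at least $1-\delta/2$ on which $\norm{S_n(\theta_\star)}_{H_\star^{-1}}^2 \le C K_1^2 \log(2e/\delta)\, p_\star / n$. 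Combining the two bounds on $\event_H \cap \event_S$ yields
\[
    \norm{S_n(\theta_\star)}_{H_n(\theta_\star)^{-1}}^2 \le 2 \norm{S_n(\theta_\star)}_{H_\star^{-1}}^2 \le C K_1^2 \log(2e/\delta)\, \frac{p_\star}{n}.
\]

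I would then verify the hypothesis of \Cref{lem:localization}, namely $\norm{S_n(\theta_\star)}_{H_n(\theta_\star)^{-1}}^2 \le \lambda_{\min}(H_n(\theta_\star))/(4R^2)$. Using $\lambda_{\min}(H_n(\theta_\star)) \ge \mu_\star/2$, it suffices to have $C K_1^2 \log(2e/\delta)\, p_\star / n \le \mu_\star / (8R^2)$, i.e., $n \ge C K_1^2 R^2 p_\star \log(2e/\delta) / \mu_\star$, which is exactly the second sample-size condition (absorbing numerical constants into $C$). \Cref{lem:localization} then guarantees the existence and uniqueness of $\theta_n$ and the bound $\norm{\theta_n - \theta_\star}_{H_n(\theta_\star)}^2 \le 16\, \norm{S_n(\theta_\star)}_{H_n(\theta_\star)^{-1}}^2$.

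Finally, I would transfer this back to the $H_\star$-norm: on $\event_H$ we have $H_\star \preceq 2 H_n(\theta_\star)$, so
\[
    \norm{\theta_n - \theta_\star}_{H_\star}^2 \le 2 \norm{\theta_n - \theta_\star}_{H_n(\theta_\star)}^2 \le 32\, \norm{S_n(\theta_\star)}_{H_n(\theta_\star)^{-1}}^2 \le C K_1^2 \log(2e/\delta)\, \frac{p_\star}{n},
\]
and a union bound over $\event_H$ and $\event_S$ gives the conclusion with probability at least $1-\delta$. No step is technically hard; the only point requiring a little care is keeping track of which norm ($H_\star$ vs.\ $H_n(\theta_\star)$) is used at each stage so that the factor of $2$ from the spectral sandwich is applied in the correct direction, and ensuring that the absolute constants absorbed into the two sample-size conditions are large enough to cover all the factors of $2$ generated by those norm conversions.
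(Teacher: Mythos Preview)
Your proposal is correct and follows essentially the same approach as the paper's proof: define the two high-probability events from \Cref{lem:norm_score} and \Cref{lem:hess_sand}, use the spectral sandwich to convert between the $H_\star^{-1}$ and $H_n(\theta_\star)^{-1}$ norms, verify the hypothesis of \Cref{lem:localization} via the second sample-size condition, and transfer the resulting bound back to the $H_\star$-norm. Your tracking of the norm conversions is in fact slightly more explicit than the paper's writeup.
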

\begin{proof}
    We define two events,
    \begin{align*}
        \event_1 &:= \left\{ \norm{S_n(\theta_\star)}_{H_\star^{-1}}^2 \le \frac{1}{n} C K^2_1\log(e/\delta)p_\star \right\} \\
        \event_2 &:= \left\{ \frac{1}{2}  H_\star\preceq  H_n(\theta_\star) \preceq \frac{3}{2} H_\star \right\}.
    \end{align*}
    It suffices to prove the bound \eqref{eq:est_error} on $\event_1 \event_2$ and show $\Prob(\event_1 \event_2) \ge 1 - \delta$.
    
    \textbf{Step 1. Prove the bound.}
    By the events $\event_2$, we have $\sqrt{\lambda_{\min}(H_n(\theta_\star))}/(2R) \ge \sqrt{\mu_{\star}}/(2\sqrt{2}R)$.
    Note that $n \ge C K_1^2\log(e/\delta)p_\star R^2/\mu_\star$.
    It follows from the event $\event_1$ that $\norm{S_n(\theta_\star)}_{H_\star^{-1}} \le \sqrt{\lambda_{\min}(H_n(\theta_\star))}/(2\sqrt{2}R)$.
    By the event $\event_2$, we have
    \begin{align*}
        \norm{S_n(\theta_\star)}_{H_n^{-1}(\theta_\star)} \le \sqrt{2} \norm{S_n(\theta_\star)}_{H_\star^{-1}} \le \frac{\sqrt{\lambda_{\min}(H_n(\theta_\star))}}{2R}.
    \end{align*}
    According to \Cref{lem:localization}, $\est$ uniquely exists and satisfies
    \begin{align*}
        \norm{\est - \theta_\star}_{H_\star}^2 \le 16 \norm{S_n(\theta_\star)}_{H_n^{-1}(\theta_\star)}^2.
    \end{align*}
    Now the bound \eqref{eq:est_error} follows from the event $\event_1$.
    
    \textbf{Step 2. Control the probability.}
    According to \Cref{lem:norm_score} and \Cref{lem:hess_sand}, we know $\Prob(\event_1) \ge 1 - \delta/2$ and $\Prob(\event_2) \ge 1 - \delta/2$, respectively.
    Consequently,
    \begin{align*}
        \Prob(\event_1 \event_2) = 1 - \Prob(\event_1^c \event_2^c) \ge 1 - \Prob(\event_1^c) - \Prob(\event_2^c) \ge 1 - \delta,
    \end{align*}
    which completes the proof.
\end{proof}

We then bound the difference between the inverse empirical Hessian and the inverse population Hessian.
Recall that we use the notation $\norm{A}_B := \norm{B^{1/2} A B^{1/2}}_2$ for $B$ positive semidefinite.
\begin{proposition}\label{prop:bound_inverse_hess}
    Under Assumptions \ref{assm:pseudo_self_conc}, \ref{assm:sub_gaussian_grad}, and \ref{assm:matrix_berstein}, we have, with probability at least $1 - \delta$,
    \begin{align*}
        \norm{H_n(\est)^{-1} - H_\star^{-1}}_{H_\star} \le C_{K_1, K_2, \sigma_H} \left( \sqrt{\log\left(\frac{2p}{\delta}\right)} + R\sqrt{\frac{p_\star}{\mu_\star} \log\left(\frac{e}{\delta}\right) } \right) \frac1{\sqrt{n}}
    \end{align*}
    whenever $n \ge C_{K_1, K_2, \sigma_H} \left(\log\left(\frac{2p}{\delta}\right) + \frac{p_\star}{\mu_\star}R^2\log\left(\frac{e}{\delta}\right)\right)$.
\end{proposition}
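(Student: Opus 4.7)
The plan is to bound the operator norm of $E := H_\star^{-1/2} H_n(\est) H_\star^{-1/2} - I_p$ and then pass to the inverse via the identity $(I_p+E)^{-1} - I_p = -E(I_p+E)^{-1}$. Since the matrix norm in the conclusion satisfies $\|H_n(\est)^{-1} - H_\star^{-1}\|_{H_\star} = \|H_\star^{1/2} H_n(\est)^{-1}H_\star^{1/2} - I_p\|_2 = \|(I_p+E)^{-1} - I_p\|_2$, a Neumann-type estimate $\|(I_p+E)^{-1} - I_p\|_2 \le \|E\|_2/(1-\|E\|_2)$ valid for $\|E\|_2 < 1$ will deliver the result once $\|E\|_2$ is shown to be at most a small constant under the sample-size hypothesis.

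I would split $E = E_1 + E_2$ with
\begin{align*}
E_1 &:= H_\star^{-1/2}\bigl(H_n(\est) - H_n(\theta_\star)\bigr) H_\star^{-1/2}, \\
E_2 &:= H_\star^{-1/2} H_n(\theta_\star) H_\star^{-1/2} - I_p.
\end{align*}
The term $E_2$ captures the concentration of the sample Hessian evaluated at the true parameter around its expectation. Here \Cref{assm:matrix_berstein} together with the matrix Bernstein inequality (\Cref{thm:bernstein_matrix}), exactly as used in \Cref{lem:hess_sand}, gives with probability at least $1-\delta/2$ a bound $\|E_2\|_2 \lesssim \sqrt{(\sigma_H^2+K_2)\log(2p/\delta)/n}$ whenever $n \gtrsim (\sigma_H^2+K_2)\log(2p/\delta)$.

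The term $E_1$ is handled by combining pseudo self-concordance with localization. Applying \Cref{prop:hessian} to each $\ell(Z_i,\cdot)$ and averaging gives the multiplicative envelope
\[
 e^{-R\|\est-\theta_\star\|_2}\, H_n(\theta_\star) \preceq H_n(\est) \preceq e^{R\|\est-\theta_\star\|_2}\, H_n(\theta_\star).
\]
By \Cref{lem:est_error}, on an event of probability at least $1-\delta/2$ we have $\|\est-\theta_\star\|_{H_\star}^2 \lesssim K_1^2\, p_\star \log(e/\delta)/n$, and since $\|\est-\theta_\star\|_2 \le \mu_\star^{-1/2}\|\est-\theta_\star\|_{H_\star}$, the sample-size hypothesis $n \gtrsim (R^2 p_\star/\mu_\star)\log(e/\delta)$ makes $R\|\est-\theta_\star\|_2 \le 1/2$, so $e^{R\|\est-\theta_\star\|_2}-1 \le 2R\|\est-\theta_\star\|_2$. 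Conjugating the envelope by $H_\star^{-1/2}$ and using $\|H_\star^{-1/2}H_n(\theta_\star)H_\star^{-1/2}\|_2 \le 1 + \|E_2\|_2 \le 3/2$ yields $\|E_1\|_2 \lesssim R\sqrt{p_\star \log(e/\delta)/(\mu_\star n)}$ on the intersection of these two events.

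A union bound then gives, with probability at least $1-\delta$,
\[
 \|E\|_2 \le \|E_1\|_2 + \|E_2\|_2 \lesssim \sqrt{\log(2p/\delta)/n} + R\sqrt{p_\star \log(e/\delta)/(\mu_\star n)},
\]
which is at most $1/2$ under the stated sample size. The Neumann-type estimate then converts this into the claimed bound (up to a factor of $2$). The main obstacle I anticipate is the clean multiplicative-to-additive conversion in the pseudo-self-concordance step: one must simultaneously localize $\est$ in the $H_\star$-geometry, translate this to the Euclidean distance through the factor $\mu_\star^{-1/2}$, and chain the resulting bound together with the concentration bound on $E_2$ without losing the sharp $1/\sqrt{n}$ rate or the correct dependence on $R$, $p_\star$, and $\mu_\star$.
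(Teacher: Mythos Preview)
Your proposal is correct and follows essentially the same approach as the paper's proof. The paper splits $\|H_n(\est)-H_\star\|_{H_\star^{-1}}$ (which is exactly your $\|E\|_2$) into the same two pieces, bounds the $H_n(\theta_\star)$-to-$H_\star$ part via matrix Bernstein, the $H_n(\est)$-to-$H_n(\theta_\star)$ part via pseudo self-concordance together with the localization of \Cref{lem:est_error}, and then passes to the inverse using the same Neumann-type inequality (packaged there as \Cref{lem:spectral_norm}); the only cosmetic difference is that the paper invokes the Lipschitz consequence \Cref{lem:hessian_lip} rather than the multiplicative envelope of \Cref{prop:hessian} directly.
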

\begin{proof}
    Define
    \begin{equation*}
    \begin{split}
        r_n &:= \sqrt{CK_1^2 \log{(2e/\delta)} \frac{p_\star}{n}} \\
        t_n &:= \frac{2\sigma_H^2}{-K_2 + \sqrt{K_2^2 + 2\sigma_H^2 n / \log{(4p/\delta)}}}.
    \end{split}
    \end{equation*}
    Note that they both decays as $O(n^{-1/2})$.
    In the following of the proof, we assume that $n \ge \max\{ 4(K_2+3\sigma^2_H)\log(4p/\delta), CK_1^2\log(2e/\delta)p_\star R^2/\mu_\star \}$.
    According to \Cref{lem:spectral_norm}, it suffices to bound $\norm{H_n(\est) - H_\star}_{H_\star^{-1}}$.
    By the triangle inequality, we have
    \begin{align}\label{eq:inverse_hess_first_bound}
        \norm{H_n(\est) - H_\star}_{H_\star^{-1}} \le \underbrace{\norm{H_n(\est) - H_n(\theta_\star)}_{H_\star^{-1}}}_{A} + \underbrace{\norm{H_n(\theta_\star) - H_\star}_{H_\star^{-1}}}_{B}.
    \end{align}
    We will control these two terms separately.
    The strategy is similar to the proof of \Cref{lem:est_error}: we prove the bound on some events and control the probability of these events.
    Define
    \begin{align*}
        \event_1 &:= \left\{ \norm{S_n(\theta_\star)}_{H_\star^{-1}}^2 \le \frac{1}{n} C K^2_1\log(2e/\delta)p_\star \right\} \\
        \event_2 &:= \left\{ (1 - t_n)  H_\star\preceq  H_n(\theta_\star) \preceq (1 + t_n) H_\star \right\}.
    \end{align*}
    When $n \ge 4(K_2+2\sigma^2_H)\log(4p/\delta)$, we have $t_n \le 1/3$.
    It then follows from the proof of \Cref{lem:est_error} that
    \begin{align}\label{eq:est_error_2}
        \norm{\est - \theta_\star}_{H_\star}^2 \le \frac1n CK_1^2 \log{(2e/\delta)} p_\star
    \end{align}
    on the event $\event_1 \event_2$ and $\Prob(\event_1) \ge 1 - \delta/2$.
    
    \textbf{Step 1. Control $A$ and $B$.}
    By \eqref{eq:est_error_2}, it holds that $\norm{\est - \theta_\star}_{H_\star} \le r_n$.
    By \Cref{lem:emp_risk_self_concord} and \Cref{lem:hessian_lip}, we have
    \begin{align*}
        A = \norm{H_n(\est) - H_n(\theta_\star)}_{H_\star^{-1}}
        \le R e^{R \norm{\theta_n - \theta_\star}_2} \norm{H_n(\theta_\star)}_{H_\star^{-1}} \norm{\est - \theta_\star}_2.
    \end{align*}
    Since $\norm{\est - \theta_\star}_2 \le \mu_\star^{-1/2} r_n$ and $n \ge CK_1^2\log(2e/\delta)p_\star R^2/\mu_\star$, we have $\norm{\est - \theta_\star}_2 \le 1/R$.
    As a result,
    \begin{align*}
        A \le R e \norm{H_n(\theta_\star)}_{H_\star^{-1}} r_n / \sqrt{\mu_\star} \le 3Re r_n /(2\sqrt{\mu_\star}),
    \end{align*}
    where the last inequality follows from the event $\event_2$ and $t_n \le 1/2$.
    As for $B$, it follows from the event $\event_2$ that $B \le t_n$.
    Therefore, absorbing $3e/2$ into the constant $C$ in $r_n$, we obtain
    \begin{align*}
        \norm{H_n(\est) - H_\star}_{H_\star^{-1}}
        \le Rr_n/\sqrt{\mu_\star} + t_n.
    \end{align*}
    And it follows from \Cref{lem:spectral_norm} that
    \begin{align*}
        \norm{H_n(\est)^{-1} - H_\star^{-1}}_{H_\star} \le \frac{Rr_n/\sqrt{\mu_\star} + t_n}{1 - Rr_n/\sqrt{\mu_\star} - t_n}.
    \end{align*}
    
    \textbf{Step 2. Control the probability of $\event_1\event_2$.}
    By the matrix Bernstein inequality \Cref{thm:bernstein_matrix}, we have $\Prob(\event_2) \ge 1 - \delta/2$.
    This implies that $\Prob(\event_1\event_2) \ge 1 - \delta$ since $\Prob(\event_1) \ge 1 - \delta/2$.
\end{proof}
\begin{table}[t]
    \centering
    \renewcommand{\arraystretch}{1.5}
    \begin{adjustbox}{max width=0.99\linewidth}
    \begin{tabular}{llll}
    \toprule
   \textbf{Model} & \textbf{Data} &\textbf{Loss Function} & \textbf{Self-Concordance Parameter $R$}\\
    \hline
   Linear Regression & $x \in \mathbb{R}^p , y \in \mathbb{R}$ & $\ell(\theta,z) := \frac{1}{2}(y-\theta^\top x)^2$ & $0$\\
   Binary Logistic Regression& $x \in \mathbb{R}^p , y \in \{0,1\}$ & $\ell(\theta,z) := -\log(\sigma(y \cdot \theta^\top x))$ & $ \|x\|_2$\\
   Poisson Regression& $x \in \mathbb{R}^p , y \in \mathbb{N}$ & $\ell(\theta,z) := -y(\theta^\top x) + \exp(\theta^\top x) + \log(y!)$& $\|x\|_2$\\
   Multiclass Logistic Regression & $x \in \mathbb{R}^p , y \in \{1,...,K\}$  & $\ell(\theta,z) := \log(1+ \sum_{i=1}^{K} e^{w_i^Tx}) - \sum_{i=2}^{K} y_i(w_i^\top X)$ & $2\|x\|_2$ \\
   \bottomrule
    \end{tabular}
    \end{adjustbox}
    \caption{Examples of M-estimation for various generalized linear models and the corresponding values of the pseudo self-concordance parameter $R$. Each regression estimates a set of parameters $\theta$ based on input values $x$ and output values $y$. 
    }
\end{table}

\section{Linearization Error Bound}
\label{sec:linearization}

We control in this section the linearization error in \Cref{thm:linearization_paper}.

\subsection{Setup}
\label{sub:linear_setup}

Recall that
\begin{align*}
    \theta_n := \argmin_{\theta \in \Theta} \left[ F_n(\theta) := \frac1n \sum_{i=1}^n \ell(Z_i, \theta) \right]
\end{align*}
and
\begin{align*}
    \theta_{n, \varepsilon, z} := \argmin_{\theta \in \Theta} \left[ (1 - \varepsilon) F_n(\theta) + \varepsilon \ell(z, \theta) \right].
\end{align*}

Since $z$ is a fixed data point, we make the following boundedness assumptions at $z$ in addition to \Cref{assm:pseudo_self_conc,assm:sub_gaussian_grad,assm:matrix_berstein}. 

\begin{assumption}[Bounded Gradient at $z$]\label{assm:linear_bounded_gradient}
    The normalized gradient at $z$ is bounded in a neighborhood of $\theta_\star$, i.e., there exist $M_1 \ge 1, \rho \in (0, R^{-1}]$ such that $\mynorm{\nabla \ell(z, \theta)}_{H_\star^{-1}} \le M_1$ for all $\mynorm{\theta - \theta_\star}_{H_\star} \le \rho$.
\end{assumption}

\begin{assumption}[Bounded Hessian at $z$]\label{assm:linear_bounded_hessian}
    The normalized Hessian at $z$ is bounded in a neighborhood of $\theta_\star$, i.e., there exist $M_2 \ge 1, \rho \in (0, R^{-1}]$ such that $\mynorm{H(z, \theta)}_{H_\star^{-1}} \le M_2$ for all $\mynorm{\theta - \theta_\star}_{H_\star} \le \rho$.
\end{assumption}

\textbf{Remark.} When the Hessian $H(z, \theta)$ is well-defined, we know $\nabla \ell(z, \cdot)$ is continuous and thus \Cref{assm:linear_bounded_gradient} is satisfied automatically.

\subsection{Proof of the Linearization Error Bound}
\label{sub:linear_proof}

\begin{customthm}{\ref{thm:linearization_paper}'}
    Under \Cref{assm:pseudo_self_conc,assm:sub_gaussian_grad,assm:matrix_berstein,assm:linear_bounded_gradient,assm:linear_bounded_hessian}, it holds that, with probability at least $1 - \delta$,
    \begin{align*}
        \mynorm{\frac{\theta_{n,\varepsilon,z} - \theta_n}{\varepsilon} - I_n(z)}_{H_n(\theta_n)}
        \leq \frac{\sqrt{2} M_1 \left((1 - \varepsilon) (e^{R \Ccal_n} - 1) + \varepsilon(2M_2 + 1) \right)}{1 - (1 - \varepsilon) (e^{R \Ccal_n} - 1) - \varepsilon(2M_2 + 1) } ,
    \end{align*}
    where $\Ccal_{n} := C \mu_\star^{-1/2} \left[ K_1 \sqrt{p_\star \log{\frac{e}{\delta}} / n} + \varepsilon M_1/(1-\varepsilon) \right]$, whenever $\varepsilon \le \min\{\rho / (C M_1 + \rho), C/M_2, \sqrt{\mu_\star} / (\sqrt{\mu_\star} + 8RM_1)\}$ and
    \begin{align*}
      n \ge \max\left\{ 8(K_2 + 4\sigma_H^2) \log{\frac{4p}\delta}, \frac{CK_1^2p_\star R^2}{\min\{\mu_\star, \rho^2 R^2\}} \log{\frac{e}\delta} \right\}.
    \end{align*}
\end{customthm}
\begin{proof}
    The proof is inspired by \citet{giordano2019swiss}.
    By the optimality of $\theta_{n,\varepsilon, z}$, it holds that
    \begin{align*}
        (1 - \varepsilon) \nabla F_n(\theta_{n, \varepsilon, z}) + \varepsilon \nabla \ell(z, \theta_{n,\varepsilon,z}) = 0.
    \end{align*}
    Define $\bar H_n(\theta) := \int_0^1 H_n(\theta_n + t(\theta - \theta_n)) \D t$ and $\bar H(z, \theta) := \int_0^1 H(z, \theta_n + t(\theta - \theta_n)) \D t$, where $H(z, \theta) := \grad^2 \ell(z, \theta)$.
    It follows from the Integral form of the Remainder of Taylor's theorem (defined in \Cref{sec:tools}) that
    \begin{align*}
        (1 - \varepsilon) \bar H_n(\theta_{n,\varepsilon,z}) (\theta_{n,\varepsilon,z} - \theta_n) + \varepsilon \nabla \ell(z, \theta_n) + \varepsilon \bar H(z, \theta_{n,\varepsilon,z}) (\theta_{n,\varepsilon,z} - \theta_n)=0,
    \end{align*}
    where we have used $\grad F_n(\theta_n) = 0$. 
    This implies that
    \begin{align*}
        \theta_{n,\varepsilon,z} - \theta_n = -\left[ (1 - \varepsilon) \bar H_n(\theta_{n,\varepsilon,z}) + \varepsilon \bar H(z, \theta_{n,\varepsilon,z}) \right]^{-1} \varepsilon \nabla \ell(z, \theta_n),
    \end{align*}
    and thus
    \begin{align*}
        &\quad \mynorm{\frac{\theta_{n,\varepsilon,z} - \theta_n}{\varepsilon} - I_n(z)}_{H_n(\theta_n)} \\
        &= \mynorm{\left\{ \left[ (1 - \varepsilon) \bar H_n(\theta_{n,\varepsilon,z}) + \varepsilon \bar H(z, \theta_{n,\varepsilon,z}) \right]^{-1} - H_n(\theta_n)^{-1} \right\} \nabla \ell(z, \theta_n)}_{H_n(\theta_n)} \\
        &= \mynorm{\left\{ H_n(\theta_n)^{1/2} \left[ (1 - \varepsilon) \bar H_n(\theta_{n,\varepsilon,z}) + \varepsilon \bar H(z, \theta_{n,\varepsilon,z}) \right]^{-1} H_n(\theta_n)^{1/2} - \id_p \right\} H_n(\theta_n)^{-1/2}\nabla \ell(z, \theta_n)}_2 \\
        &\le \mynorm{H_n(\theta_n)^{1/2} \left[ (1 - \varepsilon) \bar H_n(\theta_{n,\varepsilon,z}) + \varepsilon \bar H(z, \theta_{n,\varepsilon,z}) \right]^{-1} H_n(\theta_n)^{1/2} - \id_p}_2 \mynorm{H_n(\theta_n)^{-1/2}\nabla \ell(z, \theta_n)}_2 \\
        &= \underbrace{\mynorm{\left[ (1 - \varepsilon) \bar H_n(\theta_{n,\varepsilon,z}) + \varepsilon \bar H(z, \theta_{n,\varepsilon,z}) \right]^{-1} - H_n(\theta_n)^{-1}}_{H_n(\theta_n)}}_{A_1} \underbrace{\mynorm{H_n(\theta_n)^{-1}\nabla \ell(z, \theta_n)}_{H_n(\theta_n)}}_{A_2}.
    \end{align*}

    Recall $r_n$ and $t_n$ from \eqref{eq:rn_tn}.
    To proceed, we define the following events
    \begin{align*}
      \Gcal_1 &:= \left\{ \norm{S_n(\theta_\star)}_{H_\star^{-1}}^2 \le \frac1n CK_1^2 \log{(e/\delta)} p_\star \right\} \\
      \Gcal_2 &:= \left\{\frac12 H_\star \preceq H_n(\theta_\star) \preceq \frac32 H_\star \right\} \\
      \Gcal_3 &:= \left\{ \norm{H_\star^{1/2} H_n(\est)^{-1} H_\star^{1/2} - \id_p}_2 \le \frac{Rr_n/\sqrt{\mu_\star} + t_n}{1 - Rr_n/\sqrt{\mu_\star} - t_n} \right\}.
    \end{align*}
    Moreover, we assume $\varepsilon \le \min\{\rho / (C M_1 + \rho), C/M_2, \sqrt{\mu_\star} / (\sqrt{\mu_\star} + 8RM_1)\}$ and
    \begin{align*}
      n \ge \max\left\{ 8(K_2 + 4\sigma_H^2) \log{\frac{4p}\delta}, \frac{CK_1^2p_\star R^2}{\min\{\mu_\star, \rho^2 R^2\}} \log{\frac{e}\delta} \right\}.
    \end{align*}
    throughout the proof.
    Note that $Rr_n/\sqrt{\mu_\star} + t_n \le 1/2$ under this requirement of $n$.
    Recall from the proof of \Cref{lem:est_error}, \Cref{prop:bound_inverse_hess}, and \Cref{prop:perturb} that $\Prob(\Gcal_1 \Gcal_2 \Gcal_3) \ge 1 - \delta$ and
    \begin{equation}\label{eq:est_error_thetas}
    \begin{split}
      \norm{\theta_n - \theta_\star}_{H_\star}^2 &\le CK_1^2 \frac{p_\star}{n} \log{\frac{e}{\delta}} \\
      \norm{\theta_{n,\varepsilon,z} - \theta_\star}_{H_\star}^2 &\le C K_1^2 \frac{p_\star}{n} \log{\frac{e}{\delta}} + \frac{128\varepsilon^2}{(1 - \varepsilon)^2} M_1^2.
    \end{split}
    \end{equation}
    Therefore, it suffices to bound $A_1$ and $A_2$ on the event $\Gcal_1 \Gcal_2 \Gcal_3$.

    \textbf{Step 1. Bound $A_1$.}
    We will  use \Cref{lem:spectral_norm} to bound $A_1$. We define
    \begin{align*}
        B
        &:= \mynorm{(1 - \varepsilon) \bar H_n(\theta_{n,\varepsilon,z}) + \varepsilon \bar H(z, \theta_{n,\varepsilon,z}) - H_n(\theta_n)}_{H_n(\theta_n)^{-1}} \\
        &\le (1 - \varepsilon) \underbrace{\mynorm{\bar H_n(\theta_{n,\varepsilon,z}) - H_n(\theta_n)}_{H_n(\theta_n)^{-1}}}_{B_1} + \varepsilon \underbrace{\mynorm{\bar H(z, \theta_{n,\varepsilon,z}) - H_n(\theta_n)}_{H_n(\theta_n)^{-1}}}_{B_2}.
    \end{align*}
    We first bound $B_1$.
    By Jensen's inequality, we get
    \begin{align*}
        B_1 &\le \int_0^1 \mynorm{H_n(\theta_n + t(\theta_{n,\varepsilon,z} - \theta_n)) - H_n(\theta_n)}_{H_n(\theta_n)^{-1}} \D t\\
        &= \int_0^1 \mynorm{H_n(\theta_n + t(\theta_{n,\varepsilon, z} - \theta_n))}_{H_n(\theta_n)^{-1}}\D t + 1.
    \end{align*}
    By \Cref{lem:emp_risk_self_concord} and \Cref{prop:hessian}, it holds that
    \begin{align*}
        e^{-R t\norm{\theta_{n,\varepsilon,z} - \theta_n}_2} H_n(\theta_n) \preceq H_n(\theta_n + t(\theta_{n,\varepsilon,z} - \theta_n)) \preceq e^{R t\norm{\theta_{n,\varepsilon,z} - \theta_n}_2} H_n(\theta_n).
    \end{align*}
    It then follows from \Cref{prop:perturb} and $t \in [0, 1]$ that
        \begin{align*}
        e^{-R \Ccal_n} H_n(\theta_n) \preceq H_n(\theta_n + t(\theta_{n,\varepsilon,z} - \theta_n)) \preceq e^{R\Ccal_n}H_n(\theta_n),
    \end{align*}
    where $\Ccal_{n} := C \mu_\star^{-1/2} \left[ K_1 \sqrt{p_\star \log{\frac{e}{\delta}} / n} + \varepsilon M_1/(1-\varepsilon) \right]$.
    Since $1 - e^{-x} \le e^x - 1$ for all $x \ge 0$, we get
    \begin{align*}
        B_1 \le e^{R\Ccal_{n}} - 1.
    \end{align*}
    
    We then bound $B_2$.
    We start the same as before using Jensen's inequality, we get
    \begin{align*}
        B_2 \le \int_0^1 \mynorm{H(z, \theta_n + t(\theta_{n, \varepsilon,z} - \theta_n)) - H_n(\theta_n)}_{H_n(\theta_n)^{-1}} \D t.
    \end{align*}
    Using the triangle inequality we can write
    \begin{align*}
        B_2
        &\le \int_0^1 \left[ \mynorm{H(z, \theta_n + t(\theta_{n,\varepsilon, z} - \theta_n))}_{H_n(\theta_n)^{-1}}  + \norm{H_n(\theta_n)}_{H_n(\theta_n)^{-1}} \right] \D t \\
        &= \int_0^1 \mynorm{H(z, \theta_n + t(\theta_{n,\varepsilon, z} - \theta_n))}_{H_n(\theta_n)^{-1}}\D t + 1.
    \end{align*}
    Then it follows from the event $\event_3$ and the requirement of $n$ that
        \begin{align*}
        B_2
        &\le \frac{1}{1 - Rr_n/\sqrt{\mu_\star} - t_n} \int_0^1 \mynorm{H(z, \theta_n + t(\theta_{n,\varepsilon, z} - \theta_n))}_{H_\star^{-1}} \D t + 1 \\
        &\le 2\int_0^1 \mynorm{H(z, \theta_n + t(\theta_{n,\varepsilon, z} - \theta_n))}_{H_\star^{-1}} \D t + 1
    \end{align*}
    Since $\norm{\theta_n + t(\theta_{n, \varepsilon, z} - \theta_n)- \theta_\star}_{H_\star} \leq \max\{\norm{\theta_n - \theta_\star}_{H_\star}, \norm{\theta_{n, \varepsilon, z}- \theta_\star}_{H_\star}\}$ for $t \in [0,1]$, it follows from \Cref{prop:perturb} that
    \begin{align*}
      \norm{\theta_n + t(\theta_{n, \varepsilon, z} - \theta_n)- \theta_\star}_{H_\star} \le C \left[ K_1 \sqrt{\frac{p_\star}{n} \log{\frac{e}{\delta}}} + \frac{\varepsilon}{1-\varepsilon} M_1 \right] < \rho
    \end{align*}
    by the requirement of $n$ and $\varepsilon$.
    As a result, we have
    \begin{align*}
       \mynorm{H(z, \theta_n + t(\theta_{n,\varepsilon, z} - \theta_n))}_{H_\star^{-1}} \le M_2
    \end{align*}
    by Assumption \ref{assm:linear_bounded_hessian}.
    Combining the above results we obtain
    \begin{align*}
        B_2 \leq 2M_2 + 1,
    \end{align*}
    which implies
    \begin{align*}
        B \leq (1 - \varepsilon) (e^{R \Ccal_n} - 1) + \varepsilon (2M_2 + 1)  \leq \lambda_{\min}(\id_p) = 1,
    \end{align*}
    where the last inequality holds by the requirements of $n$ and $\varepsilon$.
    
    Hence, applying \Cref{lem:spectral_norm} to $H_n(\theta_n)^{-1/2} [(1-\varepsilon) \bar H_n(\theta_{n,\varepsilon,z}) + \varepsilon \bar H(z, \theta_{n, \varepsilon, z})] H_n(\theta_n)^{-1/2}$ and $\id_p$ yields
    \begin{align*}
        A_1 \leq \frac{(1 - \varepsilon) (e^{R \Ccal_n} - 1) + \varepsilon(2M_2 + 1) }{1 - (1 - \varepsilon) (e^{R \Ccal_n} - 1) - \varepsilon (2M_2 + 1) }.
    \end{align*}
    \textbf{Step 2. Bound $A_2$.}
    By the event $\event_3$ and the requirement of $n$, we have (similar to the bound of $B_2$)
    \begin{align*}
        A_2
        = \norm{\nabla \ell(z, \theta_n)}_{H_n(\theta_n)^{-1}} \le \sqrt{2} \norm{\nabla \ell(z, \theta_n)}_{H_\star^{-1}}.
    \end{align*}
    By \eqref{eq:est_error_thetas} and the requirement of $n$, it holds that $\norm{\theta_n - \theta_\star}_{H_\star} < \rho$ and thus, by Assumption \ref{assm:linear_bounded_gradient},
    \begin{align*}
        A_2
        \le \sqrt{2} M_1.
    \end{align*}
    \textbf{Step 3. Combine the bounds of $A_1$ and $A_2$.}
    Combining the bounds for $A_1$ and $A_2$ we arrive at the final result,
    \begin{align*}
        \mynorm{\frac{\theta_{n,\varepsilon,z} - \theta_n}{\varepsilon} - I_n(z)}_{H_n(\theta_n)}
        \leq \frac{\sqrt{2} M_1\left((1 - \varepsilon) (e^{R \Ccal_n} - 1) + \varepsilon (2M_2 + 1) \right )}{1 - (1 - \varepsilon) (e^{R \Ccal_n} - 1) - \varepsilon(2M_2 + 1) }.
    \end{align*}
\end{proof}

\subsection{Intermediate Results}
\label{sub:linear_intermediate}

The proof of \Cref{thm:linearization_paper} relies on a key result: the perturbed estimator $\theta_{n,\varepsilon,z}$ is close to $\theta_n$ stated in \Cref{prop:perturb}.

\begin{proposition}\label{prop:perturb}
    Under \Cref{assm:pseudo_self_conc,assm:sub_gaussian_grad,assm:matrix_berstein,assm:linear_bounded_gradient,assm:linear_bounded_hessian}, it holds that
    \begin{align*}
      \norm{\theta_{n,\varepsilon,z} - \theta_n}_{H_\star}^2 \le C K_1^2 \frac{p_\star}{n} \log{\frac{e}{\delta}} + \frac{128\varepsilon^2}{(1 - \varepsilon)^2} M_1^2,
    \end{align*}
    whenever $\varepsilon \le \sqrt{\mu_\star} / (\sqrt{\mu_\star} + 8RM_1)$ and
    \begin{align*}
      n \ge \max\left\{ 4(K_2 + 2\sigma_H^2) \log{\frac{4p}\delta}, \frac{CK_1^2p_\star R^2}{\mu_\star} \log{\frac{e}\delta} \right\}.
    \end{align*}
  \end{proposition}
  \begin{proof}
    By the triangle inequality, we have
    \begin{align*}
      \norm{\theta_{n,\varepsilon,z} - \theta_n}_{H_\star} \le \norm{\theta_{n,\varepsilon,z} - \theta_\star}_{H_\star} + \norm{\theta_{n} - \theta_\star}_{H_\star}.
    \end{align*}
    It remains to control $\norm{\theta_{n,\varepsilon,z} - \theta_\star}_{H_\star}$ and $\norm{\theta_{n} - \theta_\star}_{H_\star}$.
    The second term is controlled by \Cref{lem:est_error}.
    We will control the first term with a similar argument.

    We define two events
    \begin{align*}
      \Gcal_1 &:= \left\{ \norm{S_n(\theta_\star)}_{H_\star^{-1}}^2 \le \frac1n CK_1^2 \log{(e/\delta)} p_\star \right\} \\
      \Gcal_2 &:= \left\{\frac12 H_\star \preceq H_n(\theta_\star) \preceq \frac32 H_\star \right\},
    \end{align*}
    and assume that $\varepsilon \le \sqrt{\mu_\star} / (\sqrt{\mu_\star} + 8RM_1)$ and
    \begin{align*}
      n \ge \max\left\{ 4(K_2 + 2\sigma_H^2) \log{\frac{4p}\delta}, \frac{CK_1^2p_\star R^2}{\mu_\star} \log{\frac{e}\delta} \right\}.
    \end{align*}
    It follows from \Cref{lem:est_error} that $\Prob(\Gcal_1 \Gcal_2) \ge 1 - \delta$ and
    \begin{align*}
      \norm{\theta_n - \theta_\star}_{H_\star}^2 \le C K_1^2 \frac{p_\star}{n} \log{\frac{e}{\delta}}.
    \end{align*}

    We then control $\norm{\theta_{n,\varepsilon,z} - \theta_\star}_{H_\star}$ on the event $\Gcal_1 \Gcal_2$.
    Following the proof of \Cref{lem:emp_risk_self_concord}, we know that $(1 - \varepsilon) F_n(\cdot) + \varepsilon \ell(z, \cdot)$ is pseudo self-concordant with parameter $R$.
    Let
    \begin{align*}
      S_{n,\varepsilon,z}(\theta) := (1 - \varepsilon) S_n(\theta) + \varepsilon S(z, \theta) \quad \mbox{and} \quad H_{n,\varepsilon,z}(\theta) := (1 - \varepsilon) H_n(\theta) + \varepsilon H(z, \theta).
    \end{align*}
    Since we assume $\ell(z,\theta)$ is convex then $H(z,\theta) \succeq 0$. Then, by the event $\Gcal_2$, we have
    \begin{align*}
      H_{n,\varepsilon,z}(\theta_\star) \succeq \left( \frac{1 - \varepsilon}2 \right) H_\star.
    \end{align*}
    As a result, it holds that
    \begin{align*}
      \norm{S_{n,\varepsilon,z}(\theta_\star)}_{H_{n,\varepsilon,z}(\theta_\star)^{-1}}
      &\le \sqrt{\frac{2}{1 - \varepsilon}} \norm{S_{n,\varepsilon,z}(\theta_\star)}_{H_\star^{-1}} \\
      &\le \sqrt{\frac{2}{1 - \varepsilon}} \left[ (1 - \varepsilon)\norm{S_n(\theta_\star)}_{H_\star^{-1}} + \varepsilon \norm{S(z, \theta_\star)}_{H_\star^{-1}} \right].
    \end{align*}
    By \Cref{assm:linear_bounded_gradient}, we obtain
    \begin{align*}
      \norm{S_{n,\varepsilon,z}(\theta_\star)}_{H_{n,\varepsilon,z}(\theta_\star)^{-1}}
      \le \sqrt{\frac{2}{1 - \varepsilon}} \left[ (1 - \varepsilon)\norm{S_n(\theta_\star)}_{H_\star^{-1}} + \varepsilon M_1 \right]
    \end{align*}
    Since $\sqrt{\lambda_{\min}(H_{n,\varepsilon,z}(\theta_\star))} \ge \sqrt{(1 - \varepsilon) \mu_\star/2}$, it follows from the event $\Gcal_1$ and the requirement of $n$ that
    \begin{align*}
      \norm{S_{n,\varepsilon,z}(\theta_\star)}_{H_{n,\varepsilon,z}(\theta_\star)^{-1}} \le \frac{\sqrt{\lambda_{\min}(H_{n,\varepsilon,z}(\theta_\star))}}{2R}.
    \end{align*}
    According to \Cref{prop:f_localization}, $\theta_{n,\varepsilon,z}$ uniquely exists and satisfies
    \begin{align*}
        \norm{\theta_{n,\varepsilon,z} - \theta_\star}_{H_{n,\varepsilon,z}(\theta_\star)}^2
        \le 16 \norm{S_{n,\varepsilon,z}(\theta_\star)}_{H_{n,\varepsilon,z}(\theta_\star)^{-1}}^2
        \le \frac{64}{1-\varepsilon}\left[ (1-\varepsilon)^2 C K_1^2 \frac{p_\star}{n} \log{\frac{e}{\delta}} + \varepsilon^2 M_1^2 \right],
    \end{align*}
    which implies
    \begin{align}\label{eq:dist_perturb_star}
      \norm{\theta_{n,\varepsilon,z} - \theta_\star}_{H_\star}^2
      \le C K_1^2 \frac{p_\star}{n} \log{\frac{e}{\delta}} + \frac{128\varepsilon^2}{(1 - \varepsilon)^2} M_1^2.
    \end{align}
  \end{proof} %
\section{Computational Error Bounds}
\label{appx:computation_bound}

We analyze the computation error of the algorithms discussed in \Cref{sec:bg} used to compute the empirical influence function.
Throughout, we assume that 
the target precision satisfies $\eps \le \norm{I(z)}_{H_\star}^2$. 
If not, taking $\hat I_n(z) = 0$ satisfies the desired precision and there is nothing to do. 

\myparagraph{Condition Numbers}
Throughout, we assume that the loss function $\ell(\cdot, z)$ is $L$-smooth for each $Z$ and that
$H_n(\est)$ is invertible. Let $\mu_n = \lambda_{\min}(H_n(\theta_n))$ denote the minimal eigenvalue.
The computational bounds depend on the condition number 
\[
    \kappa_n := \frac{L}{\mu_n} \,.
\] 
The corresponding population condition number is
\[
    \kappa_\star = \frac{L}{\mu_\star} \,,
\]
where $\mu_\star = \lambda_{\min}(H_\star)$. They are related as follows. 

\myparagraph{$K$-Condition Numbers}
Another useful notion to obtain the convergence rate of the conjugate gradient method is the $K$-condition number defined as
\begin{align*}
    K_n := \frac{[\Tr{H_n(\theta_n)}/p]^p}{\det{H_n(\theta_n)}}.
\end{align*}
Its population counterpart is defined as
\begin{align*}
    K_\star := \frac{[\Tr{H_\star}/p]^p}{\det{H_\star}}.
\end{align*}

\begin{proposition} \label{prop:cond-num}
    Consider the setting of \Cref{thm:if-stat-main}, and let $\Gcal$ denote the event under which its conclusions hold. Under this event $\Gcal$, we have, 
    \begin{enumerate}[label=(\alph*)]
        \item $\kappa_n \le 4 \kappa_\star$, and
        \item if $\norm{I_n(z) - I(z)}_{H_\star}^2 = \eps$, then 
        $\norm{I_n(z)}_{H_n(\theta_n)}^2 \le 6 \norm{I(z)}_{H_\star}^2 + 6 \eps$.
    \end{enumerate}
\end{proposition}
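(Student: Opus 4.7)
The plan is to derive both parts directly from the spectral sandwich $\frac14 H_\star \preceq H_n(\theta_n) \preceq 3 H_\star$ that holds on the event $\Gcal$ per \Cref{thm:if-stat-main}. These inclusions translate immediately into two-sided comparisons of minimal eigenvalues and of $\norm{\cdot}_{H_n(\theta_n)}$ versus $\norm{\cdot}_{H_\star}$, which is all we need.

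For part (a), I would observe that $H_n(\theta_n) \succeq \frac14 H_\star$ implies $\mu_n := \lambda_{\min}(H_n(\theta_n)) \geq \mu_\star/4$ by monotonicity of the minimal eigenvalue in the Loewner order. Hence $\kappa_n = L/\mu_n \leq 4L/\mu_\star = 4\kappa_\star$. No other ingredients are needed.

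For part (b), I would use the upper sandwich $H_n(\theta_n) \preceq 3 H_\star$ to write $\norm{I_n(z)}_{H_n(\theta_n)}^2 \leq 3 \norm{I_n(z)}_{H_\star}^2$. Then I apply the triangle inequality in the $H_\star$-norm together with the elementary bound $(a+b)^2 \leq 2a^2 + 2b^2$:
\begin{equation*}
\norm{I_n(z)}_{H_\star}^2
\leq 2\norm{I_n(z) - I(z)}_{H_\star}^2 + 2\norm{I(z)}_{H_\star}^2
= 2\eps + 2\norm{I(z)}_{H_\star}^2,
\end{equation*}
using the hypothesis that the statistical error equals $\eps$. Combining the two yields $\norm{I_n(z)}_{H_n(\theta_n)}^2 \leq 6\eps + 6\norm{I(z)}_{H_\star}^2$, as claimed.

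There is no real obstacle here; the proposition is a short bookkeeping consequence of \Cref{thm:if-stat-main}, whose purpose is to let the computational complexity results in \Cref{tab:total-cost} be stated in terms of the population quantities $\kappa_\star$ and $\Delta_\star = \norm{I(z)}_{H_\star}^2$ rather than their empirical counterparts $\kappa_n$ and $\norm{I_n(z)}_{H_n(\theta_n)}^2$. The only thing worth flagging is that both constants ($4$ in part (a) and $6$ in part (b)) are tied to the specific constants $1/4$ and $3$ appearing in the Hessian sandwich; any sharpening of \Cref{thm:if-stat-main} would propagate here.
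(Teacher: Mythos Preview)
Your proposal is correct and matches the paper's own proof essentially line for line: both parts follow from the spectral sandwich $\frac14 H_\star \preceq H_n(\theta_n) \preceq 3 H_\star$, with (a) via $\mu_n \ge \mu_\star/4$ and (b) via $\norm{\cdot}_{H_n(\theta_n)}^2 \le 3\norm{\cdot}_{H_\star}^2$ followed by the triangle inequality and $(a+b)^2 \le 2a^2 + 2b^2$.
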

\begin{proof}
    We have under $\Gcal$ that $(1/4) H_\star \preceq H_n(\theta_n) \preceq 3 H_\star$.
    This implies that $\mu_n \ge \mu_\star / 4$, $\Tr{H_n(\theta_n)} \le 3\Tr{H_\star}$, and $\det{H_n(\theta_n)} \ge \det{H_\star}/4^p$.
    For the second part, we get from the triangle inequality, 
    \begin{align*}
    \norm{I_n(z)}_{H_n(\theta_n)}^2 
    \le 3 \norm{I_n(z)}_{H_\star}^2
     \le 6 \norm{I(z)}_{H_\star}^2 + 6 \norm{I_n(z) - I(z)}_{H_\star}^2 \,.
    \end{align*}
\end{proof}

\subsection{Total Error}
\label{appx:total_error}

We combine the computational error with the statistical error to get the total error bound.
This is a restatement of \Cref{prop:total_error} of the main paper. 

\begin{proposition} \label{prop:a:total_error}
    Consider the setting of \Cref{thm:if-stat-main}, and let $\Gcal$ denote the event under which its conclusions hold. 
    Let $\hat I_n(\theta)$ be an estimate of $I_n(\theta)$ that satisfies $\expect\left[\norm{\hat I_n(z) - I_n(z)}_{H_n(\theta_n)}^2 \middle| Z_{1:n} \right] \le \eps$. 
    Then, we have, 
    \[
        \expect\left[ \norm{\hat I_n(z) - I(z)}_{H_\star}^2 \, \middle| \, \Gcal \right]
        \le 8 \eps + 
         C_{K_1, K_2, \sigma_H} \frac{R^2 p_\star^2}{\mu_\star n} \poly\log\frac{p}{\delta}  \,,
    \]
    whenever $n \ge C_{K_1, K_2, \sigma_H} \left( \frac{p_\star}{\mu_\star} R^2 \log\left(\frac{e}{\delta}\right) + \log\left(\frac{2p}{\delta}\right)\right)$.
\end{proposition}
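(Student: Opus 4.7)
The plan is to reduce the total error to the sum of a computational error (which is given in the $H_n(\theta_n)$ norm) and a statistical error (which is controlled by \Cref{thm:if-stat-main} in the $H_\star$ norm), using the spectral approximation $\frac14 H_\star \preceq H_n(\theta_n) \preceq 3 H_\star$ guaranteed on the event $\Gcal$ to switch between the two norms.

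Concretely, I would first apply the basic inequality $\|a+b\|_{H_\star}^2 \le 2\|a\|_{H_\star}^2 + 2\|b\|_{H_\star}^2$ with $a = \hat I_n(z) - I_n(z)$ and $b = I_n(z) - I(z)$ to decompose
\[
    \bigl\|\hat I_n(z) - I(z)\bigr\|_{H_\star}^2
    \le 2\bigl\|\hat I_n(z) - I_n(z)\bigr\|_{H_\star}^2
    + 2\bigl\|I_n(z) - I(z)\bigr\|_{H_\star}^2.
\]
On the event $\Gcal$, the spectral approximation gives $H_\star \preceq 4\, H_n(\theta_n)$, so for any vector $v$ we have $\|v\|_{H_\star}^2 \le 4 \|v\|_{H_n(\theta_n)}^2$. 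Applying this to the computational error term yields $\|\hat I_n(z) - I_n(z)\|_{H_\star}^2 \le 4 \|\hat I_n(z) - I_n(z)\|_{H_n(\theta_n)}^2$.

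Next, I would take conditional expectation given $\Gcal$. For the computational error, the hypothesis $\expect[\|\hat I_n(z) - I_n(z)\|_{H_n(\theta_n)}^2 \mid Z_{1:n}] \le \eps$ together with the tower property delivers a contribution of at most $2 \cdot 4 \cdot \eps = 8\eps$. For the statistical error, \Cref{thm:if-stat-main} yields (deterministically on $\Gcal$) the bound $\|I_n(z) - I(z)\|_{H_\star}^2 \le C_{K_1,K_2,\sigma_H} R^2 p_\star^2 / (\mu_\star n) \cdot \log^3(p/\delta)$, whose factor of $2$ can be absorbed into $C_{K_1,K_2,\sigma_H}$. Summing these two bounds gives exactly the claim.

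The whole argument is essentially two applications of the triangle inequality plus norm translation via spectral approximation, so there is no genuine obstacle; the only care needed is to ensure that the sample-size hypothesis is consistent with the one required for \Cref{thm:if-stat-main} to guarantee $\Gcal$ holds with high probability, and that the computational error hypothesis is conditional on $Z_{1:n}$ so that conditioning further on $\Gcal$ (which is $Z_{1:n}$-measurable) is legitimate.
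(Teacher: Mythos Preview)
Your proposal is correct and matches the paper's own proof essentially line for line: the paper also invokes the spectral sandwich $\frac14 H_\star \preceq H_n(\theta_n) \preceq 3 H_\star$ on $\Gcal$ to get $\norm{u}_{H_\star}^2 \le 4 \norm{u}_{H_n(\theta_n)}^2$, then combines this with the triangle inequality (in the form $\|a+b\|^2 \le 2\|a\|^2 + 2\|b\|^2$) and \Cref{thm:if-stat-main} to obtain the $8\eps$ plus statistical-error bound.
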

\begin{proof}
    Following the proof of \Cref{thm:if-stat-main}, we have under $\Gcal$ that 
    \[
        \frac{1}{4}H_\star \preceq H_n(\theta_n) \preceq 3 H_\star \,.
    \]
    Therefore, $\norm{u}_{H_\star}^2 \le 4 \norm{u}_{H_n(\theta_n)}^2$. Combining this with the triangle inequality completes the proof. 
\end{proof}

\subsection{The Conjugate Gradient Method} \label{sec:comp:cg}

We start by recalling the convergence analysis of the conjugate gradient method, providing a full proof for completeness. 
\begin{proposition}\label{prop:cgd_bound}
    Consider the sequence $(u_t)$ produced by the conjugate gradient method for solving $u_\star = H_n(\est)^{-1} S(z, \est)$.
    It holds that
    \begin{align*}
        \norm{u_t - u_\star}_{H_n(\est)}^2 \le 4 \left( \frac{\sqrt{\kappa_n} - 1}{\sqrt{\kappa_n} + 1} \right)^{2t} \norm{u_0 - u_\star}_{H_n(\est)}^2.
    \end{align*}
    In other words, we get $\norm{u_t - u_\star}_{H_n(\theta_n)}^2 \le \eps$ after $t_{\mathrm{cg}}$ iterations, where 
    \[
        t_{\mathrm{cg}} \le \frac{\sqrt{\kappa_n}}{2} \log \left(\frac{4 \norm{u_0 - u_\star}_{H_n(\theta_n)}^2}{\eps}\right) \,.
    \]
\end{proposition}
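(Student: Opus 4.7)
The plan is to invoke the classical Krylov-subspace analysis of conjugate gradient applied to the strongly convex quadratic $g_n$. Recall that $u_\star$ is characterized by $H_n(\theta_n) u_\star = -\nabla\ell(z,\theta_n)$, and that CG, starting from $u_0$, produces iterates satisfying
\[
u_t - u_\star \in \bigl\{\,p\bigl(H_n(\theta_n)\bigr)(u_0 - u_\star)\ :\ p \in \Pcal_t,\ p(0)=1\,\bigr\},
\]
where $\Pcal_t$ denotes polynomials of degree at most $t$. Moreover, by the minimization property of CG (it minimizes the $H_n(\theta_n)$-norm of the error over the shifted Krylov subspace $u_0 + \Kcal_t(H_n(\theta_n), r_0)$), we get
\[
\|u_t - u_\star\|_{H_n(\theta_n)} \;=\; \min_{\substack{p \in \Pcal_t\\ p(0)=1}} \|p(H_n(\theta_n))(u_0 - u_\star)\|_{H_n(\theta_n)}.
\]

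Using the eigendecomposition of $H_n(\theta_n)$ and the fact that $H_n(\theta_n)$ commutes with $p(H_n(\theta_n))$, this is bounded above by $\min_{p} \max_{\lambda \in [\mu_n, L]} |p(\lambda)| \cdot \|u_0 - u_\star\|_{H_n(\theta_n)}$. The second step is then the standard trick: take $p$ to be the rescaled-and-shifted Chebyshev polynomial $T_t$ on $[\mu_n, L]$, normalized so $p(0)=1$. A direct computation using the identity $T_t(x) = \cosh(t\,\mathrm{arccosh}(x))$ for $|x|\ge1$ yields
\[
\max_{\lambda\in[\mu_n,L]}|p(\lambda)| \;\le\; 2\left(\frac{\sqrt{\kappa_n}-1}{\sqrt{\kappa_n}+1}\right)^{t}.
\]
Squaring gives the factor of $4$ stated in the proposition.

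For the iteration complexity, I would set the right-hand side equal to $\eps$ and solve for $t$. Using $\log\bigl(\tfrac{\sqrt{\kappa_n}+1}{\sqrt{\kappa_n}-1}\bigr) = \log\bigl(1 + \tfrac{2}{\sqrt{\kappa_n}-1}\bigr) \ge \tfrac{2}{\sqrt{\kappa_n}+1} \ge \tfrac{1}{\sqrt{\kappa_n}}$ for $\kappa_n \ge 1$, it suffices to take
\[
t \;\ge\; \frac{\sqrt{\kappa_n}}{2}\log\!\left(\frac{4\|u_0-u_\star\|_{H_n(\theta_n)}^2}{\eps}\right),
\]
giving the claimed bound on $t_{\mathrm{cg}}$.

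The only non-routine step is the Chebyshev min-max estimate, which is entirely standard and does not interact with any of the statistical assumptions of the paper; everything else is algebraic manipulation. No genuine obstacle arises since $H_n(\theta_n)$ is assumed invertible (hence positive definite by convexity of $\ell$), so the quadratic $g_n$ is strongly convex and CG is well-defined with the stated rate.
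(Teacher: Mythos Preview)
Your proposal is correct and follows essentially the same approach as the paper: both reduce the CG error to a polynomial min-max problem over the Krylov subspace and invoke the standard Chebyshev bound $\max_{\lambda}|p(\lambda)| \le 2\bigl(\tfrac{\sqrt{\kappa}-1}{\sqrt{\kappa}+1}\bigr)^t$, then square. Your derivation of the iteration count via $\log\bigl(\tfrac{\sqrt{\kappa_n}+1}{\sqrt{\kappa_n}-1}\bigr) \ge \tfrac{2}{\sqrt{\kappa_n}+1} \ge \tfrac{1}{\sqrt{\kappa_n}}$ is a detail the paper omits, but otherwise the arguments coincide.
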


\begin{proof}
    We follow the proof template of \citet[Chapter 3.4]{chen2005matrix}.
    Throughout, we use the shorthand $A = H_n(\theta_n)$.
    By construction, we have $u_{k} \in \text{Span}\{p_0, \dots, p_{k-1}\}$.
    It then follows from $p_{k} = r_{k} + \beta_{k-1} p_{k-1}$ that $\text{Span}\{p_0, \dots, p_{k-1}\} = \text{Span}\{r_0, \dots, r_{k-1}\}$.
    Moreover, since $r_{k} = b - Au_k = r_{k-1} - \alpha_{k-1} Ap_{k-1}$, we get
    \begin{align*}
        \text{Span}\{r_0, \dots, r_{k-1}\} = \text{Span}\{r_0, Ar_0, \dots, A^{k-1}r_0\} =: \mathcal{K}_{k}(A, r_0),
    \end{align*}
    where $\mathcal{K}_{k}(A, r_0)$ is known as the Krylov subspace of order $k$ for the matrix $A$ and the generating vector $r_0$.
    Since $u_0 = 0$, it holds that $r_0 = b = Au_\star$ and thus
    \begin{align*}
        \mathcal{K}_k(A, r_0) = \text{Span}\{b, Ab, \dots, A^{k-1}b\}.
    \end{align*}
    We will write $\mathcal{K}_k$ for short.
    
    For an arbitrary $x \in \Kcal_k$, there exists $\{\alpha_i\}_{i=0}^{k-1}$ such that $x = \sum_{i=0}^{k-1} \alpha_i A^i b$.
    Let $f(t) := \sum_{i=0}^{k-1} \alpha_i t^i$.
    It follows that
    \begin{align*}
        \norm{u - u_\star}_A^2 = (f(A)Au_\star - u_\star)^\top A (f(A)Au_\star - u_\star) = u_\star^\top g(A) A g(A) u_\star,
    \end{align*}
    where $g(t) := 1 - f(t) t$ and $A = A^\top$ has been used.
    Since $A$ is positive semi-definite, it admits an eigenvalue decomposition $A = Q \Lambda Q^\top$.
    It then follows from $A^k = Q \Lambda^k Q$ that
    \begin{align*}
        u_\star^\top g(A) A g(A) u_\star = u_\star^\top Q g(\Lambda) \Lambda g(\Lambda) Q^\top u_\star.
    \end{align*}
    Denote $y := Q^\top u_\star$ and $\Lambda = \diag\{\lambda_j\}$.
    Then we get
    \begin{align*}
        u_\star^\top Q g(\Lambda) \Lambda g(\Lambda) Q^\top u_\star = \sum_{j=1}^p \lambda_j g(\lambda_j)^2 y_j^2.
    \end{align*}
    
    Note that
    \begin{align*}
        \norm{u - u_\star}_A^2 = u^\top A u - 2 u^\top Au_\star + u_\star^\top A u_\star = u^\top A u - 2 u^\top b + u_\star^\top A u_\star
    \end{align*}
    According to \citet[Equation 3.31]{chen2005matrix},
    \begin{align*}
        \norm{u_k - u_\star}_A^2 = \min_{x \in \text{Span}\{p_0, \dots, p_{k-1}\}} \norm{x - u_\star}_A^2 = \min_{g \in \Gcal_k} \sum_{j=1}^p \lambda_j g(\lambda_j)^2 y_j^2,
    \end{align*}
    where $\Gcal_k$ is the collection of polynomials of degree $k$ that take value $1$ at $0$.
    Define
    \begin{align*}
        C(\Lambda) := \min_{g \in \Gcal_k} \max_{j \in [p]} \abs{g(\lambda_j)}.
    \end{align*}
    Using properties of Chebyshev polynomials, we obtain \citep[e.g.,][Equation 3.46]{chen2005matrix}
    \begin{align*}
        C(\Lambda) \le 2 \left( \frac{\sqrt{\kappa} - 1}{\sqrt{\kappa} + 1} \right)^k,
    \end{align*}
    where $\kappa := \lambda_{\max}(A) / \lambda_{\min}(A)$.
    As a result,
    \begin{align*}
        \norm{u_k - u_\star}_A^2 &\le \min_{g \in \Gcal_k} \sum_{j=1}^p \lambda_j \max_{j' \in [p]} g(\lambda_{j'})^2 y_j^2 = C(\Lambda)^2 \sum_{j=1}^p \lambda_j y_j^2 = C(\Lambda)^2 y^\top \Lambda y = C(\Lambda)^2 u_\star^\top A u_\star \\
        &\le 4\left( \frac{\sqrt{\kappa} - 1}{\sqrt{\kappa} + 1} \right)^{2k} \norm{u_0 - u_\star}_A^2.
    \end{align*}
    We use the bound $\kappa \le \kappa_n$ to complete the proof. 
\end{proof}

\begin{corollary}[Total Computational Cost; Conjugate Gradient Method]
\label{cor:total:cg}
    Fix $\eps > 0$. Consider the setting of \Cref{thm:if-stat-main}, and let $\Gcal$ denote the high probability event under which its conclusions hold.
    Choose a sample size $n$ such that 
    \[
        n = C_{K_1, K_2, \sigma_H}   \frac{R^2 p_\star^2 }{\mu_\star \eps} \poly\log \frac{p}{\delta} \,.
    \]
    Then, under $\Gcal$, the number $N_{\mathrm{cg}}$ of gradient and Hessian-vector oracle calls required to obtain a point $\hat I_n(z)$ using the conjugate gradient method initialized at $u_0 = 0$ such that 
    $\norm{\hat I_n(z) - I(z)}_{H_\star}^2 \le \eps$ is bounded by 
    \[
        N_{\mathrm{cg}} \le 
        C_{K_1, K_2, \sigma_H}  \,\,
        \frac{R^2 p_\star^2 \kappa_\star^{3/2}}{L \eps} \,  \, \log\left(\frac{\norm{I(z)}_{H_\star}^2}{\eps} + 1\right) \,
         \poly\log \frac{p}{\delta}
         \,.
    \]
\end{corollary}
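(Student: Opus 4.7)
The plan is to combine the total error decomposition of \Cref{prop:a:total_error} with the conjugate gradient convergence rate of \Cref{prop:cgd_bound}, and then translate all sample-dependent quantities into their population counterparts via \Cref{prop:cond-num}.

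First, I would split the budget between statistical and computational error. By \Cref{prop:a:total_error},
\[
    \norm{\hat I_n(z) - I(z)}_{H_\star}^2 \le 8\, \eps_{\mathrm{comp}} + C_{K_1, K_2, \sigma_H}\, \frac{R^2 p_\star^2}{\mu_\star n} \poly\log\frac{p}{\delta},
\]
whenever $\expect[\norm{\hat I_n(z) - I_n(z)}_{H_n(\theta_n)}^2 | Z_{1:n}] \le \eps_{\mathrm{comp}}$ and we are on the event $\Gcal$. With the prescribed choice of $n$, the statistical term is at most $\eps/2$, so it suffices to drive the computational error below $\eps_{\mathrm{comp}} = \eps/16$. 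Since CG is deterministic given $Z_{1:n}$, the conditional expectation coincides with the deterministic error $\norm{u_t - u_\star}_{H_n(\theta_n)}^2$, so no stochastic considerations arise.

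Second, I would invoke \Cref{prop:cgd_bound}, which guarantees that the computational error at iteration $t$ is below $\eps/16$ as long as
\[
    t \ge \frac{\sqrt{\kappa_n}}{2}\, \log\!\left( \frac{64\, \norm{I_n(z)}_{H_n(\theta_n)}^2}{\eps} \right),
\]
after initializing $u_0 = 0$ so that $\norm{u_0 - u_\star}_{H_n(\theta_n)}^2 = \norm{I_n(z)}_{H_n(\theta_n)}^2$. Now I use \Cref{prop:cond-num} to control the sample-dependent quantities on $\Gcal$: part~(a) gives $\kappa_n \le 4\kappa_\star$, and part~(b) together with the already-achieved statistical bound $\norm{I_n(z) - I(z)}_{H_\star}^2 \lesssim \eps$ yields
\[
    \norm{I_n(z)}_{H_n(\theta_n)}^2 \le 6\, \norm{I(z)}_{H_\star}^2 + 6\, \eps.
\]
Consequently, the iteration count is bounded by a constant multiple of $\sqrt{\kappa_\star}\, \log\!\big(\norm{I(z)}_{H_\star}^2/\eps + 1\big)$.

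Finally, since each CG iteration requires $n$ Hessian-vector product evaluations (one full-batch $u \mapsto H_n(\theta_n) u$), the total oracle complexity is
\[
    N_{\mathrm{cg}} \;\le\; n \cdot t \;\lesssim\; \frac{R^2 p_\star^2}{\mu_\star \eps} \cdot \sqrt{\kappa_\star}\, \log\!\left( \frac{\norm{I(z)}_{H_\star}^2}{\eps} + 1 \right) \cdot \poly\log\frac{p}{\delta}.
\]
Substituting $\mu_\star = L/\kappa_\star$ converts $1/\mu_\star \cdot \sqrt{\kappa_\star} = \kappa_\star^{3/2}/L$, yielding the claimed bound. The bookkeeping is routine; the only subtle point is to verify that the constant budget split (computational error $\le \eps/16$, statistical error $\le \eps/2$) is consistent with the choice of $n$, which follows by absorbing numerical factors into the constant $C_{K_1, K_2, \sigma_H}$.
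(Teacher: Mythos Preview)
Your proof is correct and follows essentially the same approach as the paper: combine the total error decomposition of \Cref{prop:a:total_error} with the CG convergence bound of \Cref{prop:cgd_bound}, use the chosen $n$ to make the statistical error at most $\eps/2$ and the computational error at most $\eps/16$, and then invoke \Cref{prop:cond-num} to replace $\kappa_n$ and $\norm{I_n(z)}_{H_n(\theta_n)}$ by their population counterparts. The only additions beyond the paper's version are your explicit remark that CG is deterministic (so the conditional expectation in \Cref{prop:a:total_error} is trivial) and the final substitution $\mu_\star = L/\kappa_\star$, both of which are welcome clarifications.
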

\begin{proof}
    We combine the total error bound of \Cref{prop:a:total_error} with the computational bound of \Cref{prop:cgd_bound}. 
    Under $\Gcal$, note that the choice of the sample size $n$ implies that the statistical error is bounded from \Cref{thm:if-stat-main} by 
    \[
        \norm{I_n(z) - I(z)}_{H_\star}^2 \le \frac{\eps}{2}. 
    \]
    Let $t_{\mathrm{cg}}$ be the number of conjugate gradient iterations $t$ such that the $\norm{\hat I_n(z) - I_n(z)}_{H_n(\theta_n)}^2 \le \eps / 16$ as given in \Cref{prop:cgd_bound}. By \Cref{prop:a:total_error}, the total error is then $\eps$ and the total number of gradient and Hessian-vector product oracle calls in $N = t_{\mathrm{cg}} n$, since each iteration requires a full pass over the data. 
    To complete the proof, we invoke \Cref{prop:cond-num} to bound the initial gap $\norm{u_0 - u_\star}^2_{H_n(\theta_n)} = \norm{I_n(z)}_{H_n(\theta_n)}$ and the condition number $\kappa_n$ in terms of their respective population quantities.
\end{proof}

\begin{remark}
    When the spectrum of $H_\star$ decays as $O(i^{-\beta})$ for $\beta \in [0, 1)$, we can obtain a more refined analysis using the K-condition number.
    In the following, we assume that $p > 1$ and
    \begin{align*}
        n \ge C_{K_1, K_2, \sigma_H} (p^2 + \varepsilon^{-1}) R^2 \frac{p_\star}{\mu_\star} \poly\log \frac{p}{\delta}.
    \end{align*}
    
    Following the proof of \Cref{prop:cgd_bound}, it holds that
    \begin{align*}
        \norm{u_t - u_\star}_A^2 \le C^2(\Lambda) \norm{u_0 - u_\star}_A^2.
    \end{align*}
    According to \citet[Theorem 4.3]{axelsson2000sublinear}, we have
    \begin{align*}
        C(\Lambda) \le \left( \frac{3 \log{K_n}}{t} \right)^{t/2}.
    \end{align*}
    Using the event $\event_4$ from the proof of \Cref{thm:if-stat-main}, we know that $(1 - p^{-1}) H_\star \preceq H_n(\theta_n) \preceq (1 + p^{-1}) H_\star$.
    As a result, we have $K_n \le (1 + p^{-1})^p (1 - p^{-1})^{-p} K_\star \le C K_\star$.
    Moreover, it follows from \Cref{thm:if-stat-main} that the statistical error is controlled by $\varepsilon / 2$.
    
    We then control the computational error.
    Since $\lambda_i \sim i^{-\beta}$, we have $\Tr{H_\star} \sim p^{1-\beta}/(1 - \beta)$ and $\det{H_\star} \sim (p!)^{-\beta}$.
    Consequently, it follows from Stirling's approximation that $K_\star \sim (2\pi p)^{\beta/2} e^{-\beta p} (1 - \beta)^{-p}$.
    If $t > 6\log{(C K_\star)} > 6 \log{K_n}$, then we only need $t > C \log{\left( 1 + \frac{\norm{I(z)}_{H_\star}^2}{\varepsilon} \right)}$ to achieve $\varepsilon/2$ computation error.
    Therefore, we have
    \begin{align*}
        t_{\text{cg}} \gtrsim 6\log\left[ C (2\pi p)^{\beta/2} e^{-\beta p} (1 - \beta)^{-p} \right] + C \log{\left( 1 + \frac{\norm{I(z)}_{H_\star}^2}{\varepsilon} \right)},
    \end{align*}
    and thus
    \begin{align*}
        N_{\text{cg}} \sim C_{K_1, K_2, \sigma_H} (p^2 + \varepsilon^{-1}) R^2 \frac{p_\star}{\mu_\star} \left\{ 6\log\left[ C (2\pi p)^{\beta/2} e^{-\beta p} (1 - \beta)^{-p} \right] + C \log{\left( 1 + \frac{\norm{I(z)}_{H_\star}^2}{\varepsilon} \right)} \right\} \poly\log \frac{p}{\delta}.
    \end{align*}
\end{remark}

\subsection{Stochastic Gradient Descent} \label{sec:comp:sgd}

We consider using SGD to solve the linear system $H_n(\theta_n) u + \grad \ell(z, \theta_n) = 0$. We do so by minimizing the quadratic $g_n$ from \eqref{eq:quadratic}: 
\begin{align*} %
    g_n(u) = \frac{1}{2} \inp{u}{H_n(\theta_n) u} + \inp{\grad \ell(z, \theta_n)}{u} \,.
\end{align*}
We run SGD by sampling an index $i_t$ uniformly at random to update 
\[
    u_{t+1} = u_t - \gamma\big( H(Z_{i_t}, \theta_n ) u_t + \ell(z, \theta_n)\big) \,.
\]

The bounds depend on the following quantities:
\begin{enumerate}[label=(\alph*),nolistsep,leftmargin=\widthof{ (3) }]
    \item Let $\mu_n = \lambda_{\min}(H_n(\theta_n))$ be the minimal eigenvalue of $H_n(\theta_n)$. 
    \item Define the matrix $W_n = \big(H_n(\theta_n)^{-1/2} H(Z_i, \theta_n) H_n(\theta_n)^{-1/2} - \id_p \big)$ and 
    \[
        \Sigmam_n = \frac{1}{n} \sum_{i=1}^n 
        W_n
         H_n(\theta_n)^{1/2} I_n(z) I_n(z)\T  H_n(\theta_n)^{1/2} W_n \,.
    \]  
    \item Define the noise term 
    \[
         \sigma_n^2 := \Tr \Sigmam_n  + p \norm{\Sigmam_n}_2 \,.
    \]

\end{enumerate}

We have the following convergence bound for SGD~\cite{jain2017parallelizing,jain2017markov}; cf. \Cref{sec:techn:convergence} for details.

\begin{lemma} \label{lem:sgd:computational_bound}
    The sequence $(\bar u_t)$ produced by tail-averaged SGD on the function $g_n(u)$ from \eqref{eq:quadratic} with a learning rate of $\gamma = (2 L)^{-1}$ satisfies
    \[
        \expect\norm{\bar u_t- u_\star}_{H_n(\theta_n)}^2
        \le C \left(
            \kappa_n \, \norm{u_0 - u_\star}_{H_n(\theta_n)}^2
            \exp\left(- \frac{t}{4\kappa_n}\right) + \frac{\sigma_n^2}{t}
        \right) \,.
    \]
    Therefore, it returns a point $\bar u_t$ satisfying $\expect\norm{\bar u_t- u_\star}_{H_n(\theta_n))}^2 \le \eps$ after  $t\ge t_{\mathrm{sgd}}$ steps where
    \[
        t_{\mathrm{sgd}} \le C \left( 
            \frac{\sigma_n^2}{\eps} 
            + \kappa_n \log \left( 
            \frac{\kappa_n \norm{u_0 - u_\star}^2_{H_n(\theta_n)}}{\eps}
            \right)
        \right)  \,,
    \]
    where $\kappa_n = L / \mu_n$ is the condition number. 
\end{lemma}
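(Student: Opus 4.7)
The plan is to reduce this to a standard tail-averaged SGD analysis for quadratic (linear least-squares) problems, as developed in the line of work by \citet{jain2017parallelizing,jain2017markov}. Let $u_\star = I_n(z)$, the unique minimizer of $g_n$, which satisfies $H_n(\theta_n) u_\star + \nabla \ell(z, \theta_n) = 0$. Rewriting the SGD update centered at $u_\star$, with $i_t$ drawn uniformly from $[n]$, gives
\begin{align*}
u_{t+1} - u_\star
&= u_t - u_\star - \gamma\bigl(H(Z_{i_t}, \theta_n) u_t + \nabla \ell(z, \theta_n)\bigr) \\
&= \bigl(\id - \gamma H(Z_{i_t}, \theta_n)\bigr)(u_t - u_\star) + \gamma \, \xi_{i_t} \,,
\end{align*}
where $\xi_i := -(H(Z_i, \theta_n) - H_n(\theta_n)) u_\star$ is a mean-zero noise term. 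This is exactly the canonical ``stochastic approximation on a quadratic'' setup, with curvature operator $H_n(\theta_n)$ (whose eigenvalues lie in $[\mu_n, L]$) and per-step noise whose covariance at the optimum is
\[
    C_n := \frac{1}{n}\sum_{i=1}^n \xi_i \xi_i^\top
    = \frac{1}{n}\sum_{i=1}^n (H(Z_i, \theta_n) - H_n(\theta_n))\, u_\star u_\star^\top\, (H(Z_i, \theta_n) - H_n(\theta_n)) \,.
\]
A direct computation shows $H_n(\theta_n)^{-1/2} C_n H_n(\theta_n)^{-1/2} = \Sigmam_n$, so the natural variance functionals of this preconditioned noise are exactly $\Tr \Sigmam_n$ and $\norm{\Sigmam_n}_2$.

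Next, I would apply the standard bias--variance decomposition for tail-averaged SGD on a quadratic with learning rate $\gamma = 1/(2L)$. Writing $\bar u_t$ as the tail average, the error decomposes as
\begin{align*}
    \Expect\norm{\bar u_t - u_\star}_{H_n(\theta_n)}^2 \;\le\; \underbrace{\Expect\norm{\bar u_t^{\mathrm{bias}} - u_\star}_{H_n(\theta_n)}^2}_{\text{bias}} \;+\; \underbrace{\Expect\norm{\bar u_t^{\mathrm{var}}}_{H_n(\theta_n)}^2}_{\text{variance}} \,,
\end{align*}
where $\bar u_t^{\mathrm{bias}}$ is the iterate when noise is suppressed and $\bar u_t^{\mathrm{var}}$ when started at the optimum. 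The bias term is controlled by strong convexity and smoothness of $g_n$: since $\gamma = 1/(2L)$, the noiseless iteration is a contraction in the $H_n(\theta_n)$-norm with rate $1 - \gamma \mu_n/2 \le \exp(-1/(4\kappa_n))$, and tail averaging loses at most a $\kappa_n$ factor in front, giving the $\kappa_n \, \norm{u_0 - u_\star}_{H_n(\theta_n)}^2 \exp(-t/(4\kappa_n))$ term. The variance term is controlled by the operator $\gamma \sum_s (\id - \gamma H_n(\theta_n))^s$ applied to $C_n$; the trace of this operator against $C_n$ contributes $\Tr\Sigmam_n / t$, while a uniform operator-norm bound on the same operator (capturing the coupling between noise directions and curvature eigenvectors) contributes $p \norm{\Sigmam_n}_2 / t$. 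Together this yields the stated $\sigma_n^2 / t$ variance term.

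Finally, the iteration complexity statement follows by setting each of the two terms below $\eps/2$: the exponential bias term is at most $\eps/2$ once $t \ge 4\kappa_n \log(2\kappa_n\norm{u_0-u_\star}_{H_n(\theta_n)}^2 / \eps)$, and the variance term is at most $\eps/2$ once $t \ge 2\sigma_n^2 / \eps$. Summing (and absorbing constants) gives the bound on $t_{\mathrm{sgd}}$.

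The main obstacle I expect is matching the exact form of the variance, namely $\sigma_n^2 = \Tr\Sigmam_n + p \norm{\Sigmam_n}_2$, rather than just $\Tr\Sigmam_n$. This split is not a byproduct of a loose step; it reflects the fact that to bound $\Expect\norm{\bar u_t^{\mathrm{var}}}_{H_n(\theta_n)}^2$ without additional commutativity assumptions between the noise covariance $C_n$ and $H_n(\theta_n)$, one needs a two-scale argument that separately handles the trace of the noise along curvature directions (yielding $\Tr\Sigmam_n$) and a worst-case contribution across directions bounded by $p \norm{\Sigmam_n}_2$. Carrying through the Jain--Kakade--Kidambi--Netrapalli--Sidford style operator recursion, while keeping the bounds affine-invariant in the $H_n(\theta_n)$-geometry, will be the most delicate part.
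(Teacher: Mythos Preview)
Your proposal is correct and matches the paper's approach: the paper does not prove this lemma from scratch but simply invokes the tail-averaged SGD bound of \citet{jain2017parallelizing,jain2017markov} (restated as \Cref{thm:techn:sgd}) with $A = H_n(\theta_n)$, $\hat A = H(Z_i,\theta_n)$, $B^2 = L$, and then identifies $(1+\rho)\sigma^2 = \Tr\Sigmam_n + p\norm{\Sigmam_n}_2 = \sigma_n^2$ via exactly the computation $H_n(\theta_n)^{-1/2} C_n H_n(\theta_n)^{-1/2} = \Sigmam_n$ that you wrote down. Your plan to re-derive the bias--variance decomposition is more detailed than what the paper actually does, but it is the same argument underneath.
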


\myparagraph{Total Error Bound}
We give a total error bound under a stronger assumption on the normalized Hessian.
We strengthen the matrix Bernstein condition on the normalized Hessian into a spectral norm bound in a neighborhood around $\theta_\star$ as formalized below. 
\begin{customasmp}{3'}[Bounded Hessian]\label{assm:bounded_hessian_1}
    The normalized Hessian is bounded in a neighborhood of $\theta_\star$, i.e., there exist $M_2 > 1$ and $\rho > 0$ such that $\mynorm{H(z, \theta)}_{H_\star^{-1}} \le M_2$ for all $z \in \Zcal$ and $\mynorm{\theta - \theta_\star}_{H_\star} \le \rho$.
\end{customasmp}

This gives the following total error bound. 
\begin{proposition}[Total Error bound for SGD]
\label{prop:sgd-total-error}
    Fix $\eps > 0$. Consider the setting of \Cref{thm:if-stat-main} and let $\Gcal$ denote the event under which its conclusions hold. Suppose also that \Cref{assm:bounded_hessian_1} is true. 
    With probability at least $1-\delta$, 
    the total error of $\hat I_n(z)$ obtained from $t$ iterations of tail-averaged SGD is bounded as 
        \[
    \expect\left[ \norm{\hat I_n(z) - I(z)}_{H_\star}^2
    \, \middle| \Gcal \right] 
        \le 
        C_{K_1, M_2, \sigma_H} \, 
        \left( \Acal_1 + \Acal_2 + \Acal_3 
        \right) \poly\log\frac{p}{\delta} \,,
    \]
    where
    \begin{align*}
        \Acal_1 &= \frac{R^2 p_\star^2}{n \mu_\star}
        \left(1 + \kappa_\star \exp\left( - \frac{t}{16 \kappa_\star}\right) \right)
        \\
        \Acal_2 &= \kappa_\star \norm{I(z)}_{H_\star}^2
        \exp\left(-\frac{t}{16 \kappa_\star }\right)
        \\
        \Acal_3 &= \frac{p_\star p^2 }{n t}
        + \frac{R^2 p_\star p^2}{\mu_\star n t}
        + \frac{p_\star}{t} \norm{I(z)}_{H_\star}^2
    \end{align*}
    whenever
    \[
        n \ge C_{K_1, M_2, \sigma_H}
        \,  p_\star \left(\frac{R^2}{\mu_\star} + \frac{1}{\rho}\right) \log\frac{p}{\delta} \,.
    \]
\end{proposition}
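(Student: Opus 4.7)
The plan is to combine the SGD computational bound in \Cref{lem:sgd:computational_bound} with the total error decomposition in \Cref{prop:a:total_error}, then translate every sample-dependent quantity into its population counterpart under the event $\Gcal$. Initializing SGD at $u_0 = 0$, the gap to the optimum $u_\star = I_n(z)$ is $\norm{u_0 - u_\star}_{H_n(\theta_n)}^2 = \norm{I_n(z)}_{H_n(\theta_n)}^2$, so \Cref{lem:sgd:computational_bound} yields
\[
    \expect\bigl[\norm{\bar u_t - I_n(z)}_{H_n(\theta_n)}^2 \,\big|\, Z_{1:n}\bigr]
    \lesssim \kappa_n \norm{I_n(z)}_{H_n(\theta_n)}^2 \exp\bigl( - t/(4\kappa_n) \bigr) + \sigma_n^2/t \,.
\]
The three terms $\Acal_1, \Acal_2, \Acal_3$ in the proposition correspond respectively to the statistical-error component of $\norm{I_n(z)}_{H_n(\theta_n)}^2$ amplified by the exponential factor, the population component $\norm{I(z)}_{H_\star}^2$ amplified by the exponential factor, and the SGD noise term $\sigma_n^2/t$ expressed in population quantities.

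The first translation uses \Cref{prop:cond-num}: under $\Gcal$, we have $\kappa_n \le 4\kappa_\star$ (so $\exp(-t/(4\kappa_n)) \le \exp(-t/(16\kappa_\star))$), and $\norm{I_n(z)}_{H_n(\theta_n)}^2 \le 6\norm{I(z)}_{H_\star}^2 + 6\norm{I_n(z) - I(z)}_{H_\star}^2$. \Cref{thm:if-stat-main} then bounds the last term by $C_{K_1, K_2, \sigma_H}\, R^2 p_\star^2 / (\mu_\star n) \cdot \poly\log(p/\delta)$. Distributing the $\kappa_n \exp(-t/(4\kappa_n))$ prefactor over these two pieces yields exactly $\Acal_1$ (the statistical contribution amplified by the exponential factor, plus the residual statistical floor that survives once $t$ is large) and $\Acal_2$ (the population contribution amplified by the exponential factor).

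The main step is bounding $\sigma_n^2 = \Tr(\Sigmam_n) + p\norm{\Sigmam_n}_2$ in population quantities. Writing $\Sigmam_n = (1/n)\sum_i v_i v_i^\top$ with $v_i = W_n^{(i)} H_n(\theta_n)^{1/2} I_n(z)$ and $W_n^{(i)} = H_n(\theta_n)^{-1/2} H(Z_i, \theta_n) H_n(\theta_n)^{-1/2} - \id_p$, I would invoke \Cref{assm:bounded_hessian_1} to obtain a uniform spectral bound on $W_n^{(i)}$. This requires $\theta_n$ to lie in the $\rho$-neighborhood of $\theta_\star$, which via \Cref{thm:if-stat-main} holds provided $n \gtrsim p_\star/\rho \cdot \log(p/\delta)$; this drives the $1/\rho$ piece of the sample size assumption. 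The resulting spectral bounds on $W_n^{(i)}$ control $\Tr(\Sigmam_n)$ and $\norm{\Sigmam_n}_2$ in terms of $\norm{I_n(z)}_{H_n(\theta_n)}^2$, which we decompose again into $\norm{I(z)}_{H_\star}^2$ plus the statistical error from \Cref{thm:if-stat-main}. Dividing by $t$ and tracking the factors of $p$ (from the explicit $p\norm{\Sigmam_n}_2$) versus $p_\star$ (from the effective-dimension structure inherited through $G_\star$) produces the three pieces of $\Acal_3$.

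The main obstacle is the sharp $p$-versus-$p_\star$ accounting inside $\sigma_n^2$: a naive uniform spectral bound on $W_n^{(i)}$ would give $p$-factors throughout, whereas extracting $p_\star$ in the $\norm{I(z)}_{H_\star}^2$ term requires exploiting the zero-mean identity $(1/n)\sum_i W_n^{(i)} = 0$ together with the effective-dimension structure of $\Tr(H_\star^{-1/2} G_\star H_\star^{-1/2})$ inside the trace. Finally, \Cref{prop:a:total_error} combines the computational error $\Acal_1 + \Acal_2 + \Acal_3$ in the $H_n(\theta_n)$-norm with the additional statistical floor $R^2 p_\star^2/(\mu_\star n) \cdot \poly\log(p/\delta)$ in the $H_\star$-norm, the latter being absorbed into $\Acal_1$. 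A union bound over the events $\Gcal$ and the localization of $\theta_n$ inside the $\rho$-neighborhood delivers the claim.
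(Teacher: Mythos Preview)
Your high-level plan matches the paper's proof exactly: start from the SGD bound in \Cref{lem:sgd:computational_bound}, translate $\kappa_n$ and $\norm{I_n(z)}_{H_n(\theta_n)}^2$ to population quantities via \Cref{prop:cond-num} and \Cref{thm:if-stat-main}, bound $\sigma_n^2$ under \Cref{assm:bounded_hessian_1}, and finish with the triangle inequality as in \Cref{prop:a:total_error}. The paper packages the $\sigma_n^2$ bound as a separate lemma (\Cref{lem:sgd_noise}).

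Where you over-anticipate difficulty is the $p$-versus-$p_\star$ accounting inside $\sigma_n^2$. The paper does \emph{not} exploit any effective-dimension structure through $G_\star$ here; $G_\star$ never enters the bound on $\sigma_n^2$. Instead the paper uses the crude chain $\norm{\Sigmam_n}_2 \le \Tr(\Sigmam_n)$ and, after rewriting $\Tr(\Sigmam_n)$ as a quadratic form in $H_n(\theta_n)^{1/2} I_n(z)$, bounds the relevant operator norm by its trace, picking up a second factor of $p$. This yields a uniform $p^2$ prefactor multiplying $\norm{I_n(z)}_{H_n(\theta_n)}^2$. The $p_\star$ factors in the first two pieces of $\Acal_3$ then enter exactly where you already identified them: from decomposing $\norm{I_n(z)}_{H_n(\theta_n)}^2$ into $\norm{I(z)}_{H_\star}^2$ plus the statistical error $\lesssim r_n^2 \propto p_\star/n$ via \Cref{thm:if-stat-main}. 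The only algebraic identity used is $(1/n)\sum_i H_n(\theta_n)^{-1/2} H(Z_i,\theta_n) H_n(\theta_n)^{-1/2} = \id_p$, which simplifies $\Tr(\Sigmam_n)$; no further zero-mean or $G_\star$ argument is invoked. So the ``main obstacle'' you flag is not present, and the naive spectral-bound route you feared would be insufficient is in fact exactly what the paper does.
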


Before proving \Cref{prop:sgd-total-error}, 
we state the final total error bound in terms of the number of calls to a Hessian-vector product oracle. To this end, define
the coefficient $\sigma_\star^2$ as 
\begin{align}
    \sigma_\star^2 := p_\star^2\left( \frac{R^2}{\mu_\star} + 1 \right) + 
    p^2 \norm{I(z)}_{H_\star}^2 \,.
\end{align}

\begin{corollary}[Total Oracle Complexity for SGD]
\label{cor:sgd:total-error}
    Consider the setting of \Cref{prop:sgd-total-error}.
    If we choose
    \[
        n \ge \max\left\{1,
        \frac{R^2}{\mu_\star} \right\} \frac{p_\star^2}{\eps} \poly\log\frac{p}{\delta}
        \quad \mbox{and}  \quad
        t \ge \left( \frac{p^2 \norm{I(z)}_{H_\star}^2}{\eps} 
        + \kappa_\star \log \left(\frac{\kappa_\star \norm{I(z)}_{H_\star}^2}{\eps}\right)
        \right) \poly\log \frac{p}{\delta} \,,
    \]
    we have $\expect\left[ \norm{\hat I_n(z) - I(z)}_{H_\star}^2
    \, \middle| \Gcal \right]  \le \eps$. Then, the minimal total number of calls to a Hessian-vector product oracle is 
    \[
        N_{\text{sgd}} \le \left(
        \frac{\sigma_\star^2}{\eps} +
        \kappa_\star \log
        \left(\frac{\kappa_\star \norm{I(z)}_{H_\star}^2}{\eps}\right)
        \right) \poly\log\frac{p}{\delta} \,.
    \]
\end{corollary}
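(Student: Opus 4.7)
The plan is to plug the prescribed choices of $n$ and $t$ into the error decomposition $\Acal_1 + \Acal_2 + \Acal_3$ from the preceding proposition, verify each piece is $O(\eps)$ up to logarithmic factors, and then translate the inner-iteration count into an oracle count using the fact that tail-averaged SGD performs exactly one Hessian-vector product evaluation per step. The final algebraic step is to observe that $\sigma_\star^2 \ge p^2 \|I(z)\|_{H_\star}^2$ by definition, so the stated $t$ is automatically within the claimed bound on $N_{\text{sgd}}$.

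First, for $\Acal_1 = \frac{R^2 p_\star^2}{n\mu_\star}\bigl(1 + \kappa_\star e^{-t/(16\kappa_\star)}\bigr)$, the sample-size condition $n \gtrsim (R^2/\mu_\star)(p_\star^2/\eps)$ forces the prefactor to be $O(\eps)$, while $t \gtrsim \kappa_\star \log(\kappa_\star \|I(z)\|_{H_\star}^2/\eps)$ drives the exponential factor to $O(1)$; here I use the standing assumption $\eps \le \|I(z)\|_{H_\star}^2$ to absorb the leading $\kappa_\star$. The same exponential tail condition on $t$ directly controls $\Acal_2 = \kappa_\star \|I(z)\|_{H_\star}^2 e^{-t/(16\kappa_\star)} \le \eps$. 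For the three summands of $\Acal_3 = \frac{p_\star p^2}{nt} + \frac{R^2 p_\star p^2}{\mu_\star nt} + \frac{p_\star}{t}\|I(z)\|_{H_\star}^2$, substituting the minimal $n$ and $t$ into the first summand yields a bound of order $\eps^2/(p_\star \|I(z)\|_{H_\star}^2) \le \eps$ via $\eps \le \|I(z)\|_{H_\star}^2$ and $p_\star \ge 1$; the $R^2$-weighted summand is analogous, with the extra $R^2/\mu_\star$ absorbed into the larger lower bound on $n$; and $(p_\star/t)\|I(z)\|_{H_\star}^2 \le \eps$ reduces to $t \ge p_\star \|I(z)\|_{H_\star}^2/\eps$, which is implied by $t \ge p^2 \|I(z)\|_{H_\star}^2/\eps$ whenever $p_\star \le p^2$.

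For the oracle count, since each tail-averaged SGD iteration samples one index and evaluates a single Hessian-vector product $H(Z_{i_t},\theta_n) u_t$, we have $N_{\text{sgd}} = t$. The identity $\sigma_\star^2 = p_\star^2(R^2/\mu_\star + 1) + p^2 \|I(z)\|_{H_\star}^2 \ge p^2 \|I(z)\|_{H_\star}^2$ then yields the advertised bound $N_{\text{sgd}} \le (\sigma_\star^2/\eps + \kappa_\star \log(\kappa_\star \|I(z)\|_{H_\star}^2/\eps))\poly\log(p/\delta)$.

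The argument is essentially bookkeeping of three error terms under the stated scalings, so I do not foresee a deep obstacle. The only mildly delicate point is the third summand of $\Acal_3$, which is controlled by $t \ge p^2 \|I(z)\|_{H_\star}^2/\eps$ precisely when $p_\star \le p^2$; in regimes where $p_\star > p^2$ one replaces $p^2$ by $p_\star \vee p^2$ in the requirement on $t$ without changing the scaling of $\sigma_\star^2/\eps$, since $p_\star \le p_\star^2 \le \sigma_\star^2/\|I(z)\|_{H_\star}^2$ whenever $R^2/\mu_\star \gtrsim 1$.
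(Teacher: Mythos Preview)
Your argument is correct and follows essentially the same route as the paper: break the bound of \Cref{prop:sgd-total-error} into sub-conditions, verify that the stated $n$ and $t$ satisfy them (using $\eps \le \|I(z)\|_{H_\star}^2$ to show the redundant ones follow from the others), and then bound the oracle count via the definition of $\sigma_\star^2$. Two cosmetic differences: the paper takes $N_{\text{sgd}} \le n + t$ rather than your $N_{\text{sgd}} = t$ (both are absorbed into $\sigma_\star^2/\eps$ since $n \le p_\star^2(R^2/\mu_\star + 1)/\eps \le \sigma_\star^2/\eps$), and the last summand of $\Acal_3$ in the paper's own proof is $p^2 \Delta_\star/t$ rather than $p_\star \Delta_\star/t$ as written in the proposition, so your edge-case discussion of $p_\star > p^2$ is addressing a typo and is not needed.
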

\begin{proof}
    We use the shorthand $\Delta_\star := \norm{I(z)}_{H_\star}^2$. 
    We have that the total error is bounded as $
     \expect\left[ \norm{\hat I_n(z) - I(z)}_{H_\star}^2
    \, \middle| \Gcal \right]  \le 6\eps$ 
    if each of the terms of \Cref{prop:sgd-total-error} is bounded by $\eps$. 
    These conditions are (ignoring constants and the $\poly\log(p/\delta)$ term): 
    \begin{enumerate}[label=(\alph*)]
        \item \label{item:sgd-1}
        ${R^2 p_\star^2}/({n \mu_\star}) \le \eps$ holds, or the stronger condition 
        $n \ge \max\{1, R^2/\mu_\star\} p_\star^2 /\eps$ holds.
        \item \label{item:sgd-2}
        $R^2p_\star^2 \kappa_\star/(n \mu) \exp(-t/(16\kappa_\star)) \le \eps$ holds.
        \item \label{item:sgd-3}
        $\Delta_\star \kappa_\star \exp(-t / (16\kappa_\star))\leq \epsilon$ or 
        $t \ge 16 \kappa_\star \log(\Delta_\star \kappa_\star / \eps)$ holds. 
        \item \label{item:sgd-4}
        $p^2 p_\star / (nt) \le \eps$ or that $nt \ge p^2 p_\star / \eps$. 
        \item \label{item:sgd-5}
        $R^2 p_\star p^2 / (\mu_\star n t) \le \eps$ or that $nt \ge \frac{R^2 p_\star p^2}{\mu_\star \eps}$. 
        \item \label{item:sgd-6}
        $p^2 \Delta_\star / t \le \eps$ or that 
        $t \ge p^2 \Delta_\star / \eps$. 
    \end{enumerate}
    Under the assumption that $\eps < \Delta_\star$ (or else there is nothing to estimate), 
    the conditions \ref{item:sgd-1} and \ref{item:sgd-6} together imply that the conditions \ref{item:sgd-4} and \ref{item:sgd-5} hold. 
    Similarly, the conditions \ref{item:sgd-1} and \ref{item:sgd-3} together imply that condition \ref{item:sgd-2} holds. 
    Therefore, it suffices to have conditions 
     \ref{item:sgd-1},  \ref{item:sgd-3}, and  \ref{item:sgd-6}, which is the first claim. 
     For the second one, note that the total number of Hessian-vector product calls is $\max\{n, t\} \le n+t$. 
\end{proof}

We now prove \Cref{prop:sgd-total-error}.

\begin{proof}[Proof of \Cref{prop:sgd-total-error}]
    We denote $\Delta_\star := \norm{I(z)}_{H_\star}^2$
    and $\Delta_n := \norm{I_n(z)}_{H_n(\theta_n)}^2$ in this proof. 
    Under the event $\Gcal$, we have 
    \begin{align} \label{eq:sgd-proof-1}
        \norm{I_n(z) - I(z)}_{H_\star}^2 \le \frac{R^2 p_\star^2}{n \mu_\star} \poly\log \frac{p}{\delta} =: E_n \,.
    \end{align}
    The computational bound \Cref{lem:sgd:computational_bound} implies that 
    \begin{align*}
        \expect\left[ \norm{\hat I_n(z) - I_n(z)}_{H_n(\theta_n)}^2 
        \middle| \, Z_{1:n}\right]
        \le \kappa_n \Delta_n \exp\left(- \frac{t}{4\kappa_n} \right) + \frac{\sigma_n^2}{t} \,.
    \end{align*}
    Invoking \Cref{prop:cond-num} and \Cref{lem:sgd_noise} (which requires $n$ large enough as assumed), we can write 
    \begin{align} \label{eq:sgd-proof-2}
        \expect\left[
        \norm{\hat I_n(z) - I_n(z)}_{H_\star}^2 
        \middle| \, \Gcal \right]
        \le C \kappa_\star \Delta_\star \exp\left(- \frac{t}{16\kappa_\star} \right) + C_{K_1, M_2} \frac{p^2}{t} 
        \left( 
            \frac{p_\star}{n} 
            + \frac{\Delta_\star R^2 p_\star}{\mu_\star n}  + \Delta_\star 
        \right) \log\frac{p}{\delta} \,.
    \end{align}
    We invoke the triangle inequality to complete the proof. 
\end{proof}

The total error bounds rely on the following upper bound
of the noise term $\sigma_n^2$ in terms of the population quantities. 
Recall that, for $A, J \in \reals^{p \times p}$ with $J$ being p.s.d., the weighted spectral norm $\mynorm{A}_J := \mynorm{J^{1/2} A J^{1/2}}_2$.

\begin{lemma}\label{lem:sgd_noise}
    Under Assumptions~\ref{assm:pseudo_self_conc}, \ref{assm:sub_gaussian_grad}, \ref{assm:bounded_hessian_1}, we have, with probability at least $1-\delta$,
    \begin{align*}
        \sigma_n^2 \le C_{K_1, M_2} \cdot p^2 \left[\frac{p_\star}{n} \log{\frac{e}{\delta}} + \frac{\norm{I(z)}_{H_\star}^2}{n} \left[ \frac{R^2 p_\star}{\mu_\star} \log{\frac{e}{\delta}} + \log{\frac{2p}{\delta}} \right] + \norm{I(z)}_{H_\star}^2 \right]
    \end{align*}
    whenever $n \ge C_{K_1, M_2}\big( p_\star (R^2/\mu_\star + 1/\rho) \log{(e/\delta)} + \log{(2p/\delta)} \big)$.
\end{lemma}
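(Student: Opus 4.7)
The strategy is to bound $\sigma_n^2 = \Tr\Sigma_n + p\|\Sigma_n\|_2$ by reducing it to a multiple of $\|v\|^2$, where $v := H_n(\theta_n)^{1/2} I_n(z)$, and then relating $\|v\|^2$ to the population quantity $\|I(z)\|_{H_\star}^2$ using the concentration machinery already established for \Cref{thm:if-stat-main}. In particular, all the probabilistic work (matrix Bernstein on the Hessian, sub-Gaussian bound on the gradient, and the Dikin-type localization of $\theta_n$) has been done in the proof of \Cref{thm:if-stat-main}, and the task here is to assemble these pieces into a bound on $\sigma_n^2$.

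\emph{Step 1 (reduction to $\|v\|^2$).} Since $\Sigma_n = n^{-1}\sum_i (W_{n,i}v)(W_{n,i}v)^\top$ is a sum of rank-one PSD matrices, $\|\Sigma_n\|_2 \le \Tr\Sigma_n = v^\top\bigl(\tfrac1n\sum_i W_{n,i}^2\bigr) v \le (\max_i \|W_{n,i}\|_2)^2 \|v\|^2$, giving $\sigma_n^2 \le (1+p)(\max_i\|W_{n,i}\|_2)^2\|v\|^2$. Under the event $\Gcal$ from \Cref{thm:if-stat-main}, the spectral Hessian bound $H_n(\theta_n)\succeq H_\star/4$ combined with the localization $\|\theta_n-\theta_\star\|_{H_\star}\le\rho$ from \Cref{lem:est_error} (guaranteed by the stated sample size) and \Cref{assm:bounded_hessian_1} yield $\|H_n(\theta_n)^{-1/2} H(Z_i,\theta_n) H_n(\theta_n)^{-1/2}\|_2 \le 4 M_2$ and hence $\|W_{n,i}\|_2 \le C M_2$ uniformly in $i$.

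\emph{Step 2 (expansion of $\|v\|^2$).} Using $\|v\|^2 = \|\nabla\ell(z,\theta_n)\|_{H_n(\theta_n)^{-1}}^2$, apply the telescoping identity
\[
\|v\|^2 \;=\; \|I(z)\|_{H_\star}^2 \;+\; A \;+\; B,
\]
with $A = \nabla\ell(z,\theta_n)^\top\bigl(H_n(\theta_n)^{-1} - H_\star^{-1}\bigr)\nabla\ell(z,\theta_n)$ and $B = \|\nabla\ell(z,\theta_n)\|_{H_\star^{-1}}^2 - \|\nabla\ell(z,\theta_\star)\|_{H_\star^{-1}}^2$. For $B$, a mean value expansion together with \Cref{assm:bounded_hessian_1} gives $|B| \lesssim M_2^2 \|\theta_n-\theta_\star\|_{H_\star}^2$, and \Cref{lem:est_error} controls this by $K_1^2 p_\star \log(e/\delta)/n$, producing the first summand $p_\star \log(e/\delta)/n$ of the bracket. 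For $A$, we invoke \Cref{prop:bound_inverse_hess} in its squared form, obtaining $\|H_n(\theta_n)^{-1} - H_\star^{-1}\|_{H_\star}^2 \lesssim R^2 p_\star \log(e/\delta)/(\mu_\star n) + \log(p/\delta)/n$; multiplying by $\|\nabla\ell(z,\theta_n)\|_{H_\star^{-1}}^2 \lesssim \|I(z)\|_{H_\star}^2$ produces the cross term $\|I(z)\|^2 [R^2 p_\star \log(e/\delta)/(\mu_\star n) + \log(p/\delta)/n]$, while the leading $\|I(z)\|_{H_\star}^2$ term in the identity gives the final summand in the bracket.

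\emph{Step 3 (assembly).} A union bound over the high-probability events used in Steps 1 and 2, together with absorbing all problem-dependent constants into a single $C_{K_1, M_2}$, yields the stated inequality after multiplying through by the prefactor $(1+p)(C M_2)^2 \le C_{M_2}\, p^2$ from Step 1.

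\paragraph{Main obstacle.} The delicate point is Step 2: matching the specific $1/n$ scaling of the cross term requires applying \Cref{prop:bound_inverse_hess} in its squared form rather than a crude triangle inequality, which would only produce $1/\sqrt{n}$ corrections on $\|I(z)\|_{H_\star}^2$. One must also avoid absorbing $\|\nabla\ell(z,\theta_n)\|_{H_\star^{-1}}^2$ into $\|I(z)\|_{H_\star}^2$ before isolating the Hessian-inverse perturbation factor, since doing so inflates the bound. Once these accountings are done carefully, the remaining algebra is routine.
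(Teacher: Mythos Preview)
Your Step~1 is correct and in fact cleaner than the paper's route. The paper expands $\Tr\Sigmam_n$ through an auxiliary matrix $\Acal_n$, passes through $\|\Acal_n\|_2\le\Tr\Acal_n\le p\,\|{\cdot}\|_2$ (picking up an extra factor of $p$), and introduces an additional matrix-Bernstein event $\event_4$ on the \emph{squared} standardized Hessians; you sidestep all of this by bounding $\max_i\|W_{n,i}\|_2\le C M_2$ deterministically on the localization event, using Assumption~\ref{assm:bounded_hessian_1} directly. Your prefactor is really $O(p)$ rather than the $p^2$ appearing in the lemma, though you do not exploit this.

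There is, however, a genuine error in your Step~2 treatment of $A$. The only valid inequality is
\[
|A|\;\le\;\bigl\|H_n(\theta_n)^{-1}-H_\star^{-1}\bigr\|_{H_\star}\cdot\bigl\|\nabla\ell(z,\theta_n)\bigr\|_{H_\star^{-1}}^2,
\]
with the \emph{unsquared} operator-norm difference; there is no step that lets you replace $\|M\|_{H_\star}$ by $\|M\|_{H_\star}^2$ before multiplying by $\|g\|_{H_\star^{-1}}^2$. With the correct inequality, \Cref{prop:bound_inverse_hess} gives only a $1/\sqrt n$ correction, $|A|\lesssim \|I(z)\|_{H_\star}^2\cdot n^{-1/2}\bigl[R\sqrt{p_\star\mu_\star^{-1}\log(e/\delta)}+\sqrt{\log(p/\delta)}\bigr]$. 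This is still enough to prove the lemma as stated: under the sample-size condition the bracketed factor over $\sqrt n$ is bounded by a constant, so $|A|$ absorbs into the dominant $\|I(z)\|_{H_\star}^2$ term (indeed, the middle summand in the lemma's bracket is redundant for the same reason). If you want to reproduce the paper's exact $1/n$ cross term, the correct move is not to ``square'' \Cref{prop:bound_inverse_hess} but to use the decomposition the paper uses,
\[
\|I_n(z)\|_{H_n(\theta_n)}^2\;\le\;4\,\|I_n(z)-I(z)\|_{H_\star}^2+4\,\|I(z)\|_{H_\star}^2
\]
(via $H_n(\theta_n)\preceq 3H_\star$ and the triangle inequality), and then apply \Cref{thm:if-stat-main} to the first summand, which is already a squared quantity and hence scales as $1/n$. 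A minor related point: your claim $|B|\lesssim M_2^2\|\theta_n-\theta_\star\|_{H_\star}^2$ drops a cross term of order $M_2\,\|I(z)\|_{H_\star}\,\|\theta_n-\theta_\star\|_{H_\star}$; this is harmless after AM--GM but should be accounted for.
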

\begin{proof}
    Let $\Hcal_n(Z) := H_n(\est)^{-1/2} H(Z, \est) H_n(\theta_n)^{-1/2}$.
    Then
    \begin{align*}
        \Tr(\Sigmam_n)
        &= \Tr\left\{ \frac1n \sum_{i=1}^n [\Hcal_n(Z_i) - \id_p] H_n(\theta_n)^{1/2}I_n(z) I_n(z)^\top H_n(\theta_n)^{1/2} [\Hcal_n(Z_i) - \id_p] \right\} \\
        &= \Tr\left\{ \frac1n \sum_{i=1}^n [\Hcal_n(Z_i) - \id_p]^2 H_n(\theta_n)^{1/2} I_n(z) I_n(z)^\top H_n(\theta_n)^{1/2} \right\} \\
        &= I_n(z)^\top H_n(\theta_n)^{1/2} \left\{ \frac1n \sum_{i=1}^n [\Hcal_n(Z_i) - \id_p]^2 \right\} H_n(\theta_n)^{1/2} I_n(z).
    \end{align*}
    Note that $n^{-1} \sum_{i=1}^n \Hcal_n(Z_i) = \id_p$.
    It follows that
    \begin{align}
        \Tr(\Sigmam_n)
        &= I_n(z)^\top H_n(\theta_n)^{1/2} \left[ \frac{1}{n} \sum_{i=1}^n \Hcal_n(Z_i)^2 \right] H_n(\theta_n)^{1/2} I_n(z) - \mynorm{I_n(z)}_{H_n(\theta_n)}^2 \nonumber \\
        &= I_n(z)^\top H_n(\est)^{1/2} \left[ \frac1n \sum_{i=1}^n H(Z_i, \est) H_n(\est)^{-1} H(Z_i, \est) \right] H_n(\est)^{1/2} I_n(z)  - \mynorm{I_n(z)}_{H_n(\theta_n)}^2 \nonumber \\
        &= I_n(z)^\top H_n(\est)^{1/2} H_n(\est)^{-1/2} H_\star^{1/2} \mathcal{A}_n H_\star^{1/2} H_n(\est)^{-1/2} H_n(\est)^{1/2} I_n(z) - \mynorm{I_n(z)}_{H_n(\theta_n)}^2 \nonumber \\
        &\le \left[ \mynorm{\Acal_n}_2 \mynorm{H_n(\est)^{-1/2} H_\star H_n(\est)^{-1/2}}_2 - 1 \right] \mynorm{I_n(z)}_{H_n(\theta_n)}^2 \label{eq:trace_Mn_bound},
    \end{align}
    where
    \begin{align*}
        \Acal_n := \frac1n \sum_{i=1}^n H_\star^{-1/2} H(Z_i, \est) H_\star^{-1/2} H_\star^{1/2} H_n(\est)^{-1} H_\star^{1/2} H_\star^{-1/2} H(Z_i, \est) H_\star^{-1/2}.
    \end{align*}
    The term $\mynorm{H_n(\est)^{-1/2} H_\star H_n(\est)^{-1/2}}_2$ has been controlled in \Cref{prop:bound_inverse_hess}.
    Since
    \begin{align*}
        \mynorm{I_n(z)}_{H_n(\theta_n)}^2 \le 2\mynorm{I_n(z) - I(z)}_{H_n(\theta_n)}^2 + 2\mynorm{I(z)}_{H_n(\theta_n)}^2 
    \end{align*}
    it can be controlled using \Cref{thm:if-stat-main}.
    It remains to control $\norm{\mathcal{A}_n}_2$.
    Note that
    \begin{align}
        \norm{\Acal_n}_2
        &\le \Tr(\Acal_n)
        = \Tr\left\{ \left[ \frac1n \sum_{i=1}^n \big(H_\star^{-1/2} H(Z_i, \est) H_\star^{-1/2}\big)^2 \right] H_\star^{1/2} H_n(\est)^{-1} H_\star^{1/2} \right\} \nonumber \\
        &\le p \mynorm{\left[ \frac1n \sum_{i=1}^n \big(H_\star^{-1/2} H(Z_i, \est) H_\star^{-1/2}\big)^2 \right] H_\star^{1/2} H_n(\est)^{-1} H_\star^{1/2}}_2 \nonumber \\
        &\le p \mynorm{\frac1n \sum_{i=1}^n \big(H_\star^{-1/2} H(Z_i, \est) H_\star^{-1/2}\big)^2}_2 \mynorm{H_\star^{1/2} H_n(\est)^{-1} H_\star^{1/2}}_2 \label{eq:An_bound}.
    \end{align}
    Again, the term $\mynorm{H_\star^{1/2} H_n(\est)^{-1} H_\star^{1/2}}_2$ can be controlled via \Cref{prop:bound_inverse_hess}.
    As for the term
    \begin{align}\label{eq:sum_matrix_square}
        \mynorm{\frac1n \sum_{i=1}^n \big(H_\star^{-1/2} H(Z_i, \est) H_\star^{-1/2}\big)^2}_2,
    \end{align}
    it can be bounded by 1) using the Lipschitzness of the Hessian to replace $\est$ by $\theta_\star$, and 2) using the Matrix Bernstein inequality.
    
    Let us prove the result rigorously.
    Define
    \begin{equation*}
        r_n := \sqrt{CK_1^2 \log{(8e/\delta)} \frac{p_\star}{n}} \quad \mbox{and} \quad t_n := \frac{CM_2}{-1 + \sqrt{1 + C n / \log{(16p/\delta)}}}.
    \end{equation*}
    Define the following events
    \begin{align*}
        \event_1 &:= \left\{ \norm{\est - \theta_\star}_{H_\star}^2 \le r_n^2 \right\} \\
        \event_2 &:= \left\{ \norm{H_\star^{1/2} H_n(\est)^{-1} H_\star^{1/2} - \id_p}_2 \le \frac{Rr_n/\sqrt{\mu_\star} + t_n}{1 - Rr_n/\sqrt{\mu_\star} - t_n} \right\} \\
        \event_3 &:= \left\{ \norm{I_n(z) - I(z)}_{H_\star}^2 \le \left[ M_2 r_n + (\norm{S(z, \theta_\star)}_{H_\star^{-1}} + M_2 r_n) \frac{Rr_n/\sqrt{\mu_\star} + t_n}{1 - Rr_n/\sqrt{\mu_\star} - t_n} \right]^2 \right\} \\
        \event_4 &:= \left\{ \mynorm{ \frac{1}{n} \sum_{i=1}^n [H_\star^{-1/2} H(Z_i, \theta_\star) H_\star^{-1/2}]^2 - \expect\left\{[H_\star^{-1/2} H(Z, \theta_\star) H_\star^{-1/2}]^2\right\}}_2 \le \frac12 \right\}.
    \end{align*}
    Let $Q := [H_\star^{-1/2} H(z, \theta_\star) H_\star^{-1/2}]^2 - \expect\left\{[H_\star^{-1/2} H(Z, \theta_\star) H_\star^{-1/2}]^2\right\}$.
    Under Assumption \ref{assm:bounded_hessian_1}, it holds that
    \begin{align*}
        \mynorm{[H_\star^{-1/2} H(Z, \theta_\star) H_\star^{-1/2}]^2}_2 \le \mynorm{H_\star^{-1/2} H(Z, \theta_\star) H_\star^{-1/2}}_2^2 \le M_2^2.
    \end{align*}
    As a result, it holds that $\mynorm{Q}_2 \le 2M_2^2$.
    Moreover, we have
    \begin{align*}
        \mynorm{\expect[QQ^\top]}_2 \le \expect\mynorm{QQ^\top}_2 \le \expect\mynorm{Q}_2^2 \le 4M_2^4
    \end{align*}
    and, similarly, $\mynorm{\expect[Q] \expect[Q^\top]}_2 \le 4 M_2^4$.
    Consequently, $\mynorm{\Var(Q)}_2 \le 8 M_2^4$.
    This, together with \Cref{lem:bounded_bernstein} implies that $Q$ satisfies a matrix Bernstein condition with $K_2 = 2M_2^2$ and $\sigma_H^2 = 8 M_2^4$.
    Analogously, \Cref{assm:matrix_berstein} holds true with $K_2 = 2M_2$ and $\sigma_H^2 = 4 M_2^2$.
    In the following of the proof, we assume $n \ge C \max\{ M_2^4\log(2p/\delta), K_1^2\log(e/\delta)p_\star(R^2/\mu_\star + 1/\rho) \}$.
    This implies that $\norm{\theta_n - \theta_\star}_{H_\star} < \rho$ on the event $\event_1$.
    Furthermore, we have $Rr_n/\sqrt{\mu_\star} \le 1/6$ and $t_n \le 1/6$, and thus
    \begin{align}\label{eq:bound_event_2}
        \frac{Rr_n/\sqrt{\mu_\star} + t_n}{1 - Rr_n/\sqrt{\mu_\star} - t_n} \le 1/2.
    \end{align}
    
    \textbf{Step 1. Prove the bound on the event $\event_1 \event_2 \event_3 \event_4$.}
    By the event $\event_2$ and \eqref{eq:bound_event_2}, it holds that
    \begin{align}\label{eq:sandwich_hess_bound}
        \norm{H_\star^{1/2} H_n(\est)^{-1} H_\star^{1/2}}_2, \norm{H_n(\est)^{-1/2} H_\star H_n(\est)^{-1/2}}_2 \le \frac32,
    \end{align}
    and $H_n(\theta_n) \preceq 2H_\star$.
    It follows that 
    \begin{align*}
         \norm{I_n(z) - I(z)}_{H_n(\theta_n)}^2 \le  2\norm{I_n(z) - I(z)}_{H_\star}^2 \quad \mbox{and} \quad \norm{I(z)}_{H_n(\theta_n)}^2 \le  2\norm{I(z)}_{H_\star}^2.
    \end{align*}
    As a result,
    \begin{align}\label{eq:In_bound}
        \mynorm{I_n(z)}_{H_n(\theta_n)}^2
        \le 2\mynorm{I_n(z) - I(z)}_{H_n(\theta_n)}^2 + 2\mynorm{I(z)}_{H_n(\theta_n)}^2
        \le 4\norm{I_n(z) - I(z)}_{H_\star}^2 + 4\norm{I(z)}_{H_\star}^2.
    \end{align}
    By the event $\event_3$ and \eqref{eq:bound_event_2}, it holds that
    \begin{align}\label{eq:In_diff_bound}
        \norm{I_n(z) - I(z)}_{H_\star}^2 \le \frac92 M_2^2 r_n^2 + 2\norm{S(z, \theta_\star)}_{H_\star^{-1}}^2 \left( \frac{Rr_n/\sqrt{\mu_\star} + t_n}{1 - Rr_n/\sqrt{\mu_\star} - t_n} \right)^2.
    \end{align}
    On the event $\event_4$, we get
    \begin{align*}
        \mynorm{\frac1n \sum_{i=1}^n \big(H_\star^{-1/2} H(Z_i, \theta_\star) H_\star^{-1/2}\big)^2}_2
        \le \frac12 + \mynorm{\expect\left\{[H_\star^{-1/2} H(Z, \theta_\star) H_\star^{-1/2}]^2\right\}}_2 \le \frac12 + M_2^2.
    \end{align*}
    Furthermore, by \Cref{lem:hessian_lip}, it holds that
    \begin{align*}
        \mynorm{H(Z_i, \est) - H(Z_i, \theta_\star)}_{H_\star^{-1}} \le R e^{R\norm{\theta_n - \theta_\star}_2} \norm{H(Z_i, \theta_\star)}_{H_\star^{-1}} \norm{\est - \theta_\star}_2.
    \end{align*}
    Note that $\norm{H(z, \theta_\star)}_{H_\star^{-1}} \le M_2$ and $R\norm{\est - \theta_\star}_2 \le R\norm{\est - \theta_\star}_{H_\star}/\sqrt{\mu_\star} \le 1/2$ by the event $\event_1$.
    It follows that
    \begin{align*}
        \mynorm{H_\star^{-1/2} H(Z_i, \est) H_\star^{-1/2} - H_\star^{-1/2} H(Z_i, \theta_\star) H_\star^{-1/2}}_2 = \mynorm{H(Z_i, \est) - H(Z_i, \theta_\star)}_{H_\star^{-1}} \le M_2.
    \end{align*}
    Since $\norm{A^2 - B^2}_2 \le \norm{A(A-B)}_2 - \norm{(A-B)B}_2 \le (\norm{A}_2 + \norm{B}_2) \norm{A - B}_2$, we get
    \begin{align*}
        \mynorm{\big(H_\star^{-1/2} H(Z_i, \est) H_\star^{-1/2}\big)^2 - \big(H_\star^{-1/2} H(Z_i, \theta_\star) H_\star^{-1/2}\big)^2}_2
        \le 2M_2^2,
    \end{align*}
    and thus
    \begin{align}\label{eq:square_mat_bernstein_bound}
        &\quad \mynorm{\frac1n \sum_{i=1}^n \big(H_\star^{-1/2} H(Z_i, \est) H_\star^{-1/2}\big)^2}_2 \nonumber \\
        &\le \mynorm{\frac1n \sum_{i=1}^n \big(H_\star^{-1/2} H(Z_i, \theta_\star) H_\star^{-1/2}\big)^2}_2 \; + \nonumber \\
        &\quad \mynorm{\frac1n \sum_{i=1}^n \big(H_\star^{-1/2} H(Z_i, \est) H_\star^{-1/2}\big)^2 - \frac1n \sum_{i=1}^n \big(H_\star^{-1/2} H(Z_i, \theta_\star) H_\star^{-1/2}\big)^2}_2 \le 4M_2^2.
    \end{align}
    Putting \eqref{eq:trace_Mn_bound}, \eqref{eq:An_bound}, \eqref{eq:sandwich_hess_bound}, \eqref{eq:In_bound}, \eqref{eq:In_diff_bound}, and \eqref{eq:square_mat_bernstein_bound} together, we obtain
    \begin{align*}
        \Tr(\Sigmam_n) \le (CpM_2^2 - 1) \left[18M_2^2 r_n^2 + 8 \norm{S(z, \theta_\star)}_{H_\star^{-1}}^2 \left( \frac{Rr_n/\sqrt{\mu_\star} + t_n}{1 - Rr_n/\sqrt{\mu_\star} - t_n} \right)^2 + 4\norm{I(z)}_{H_\star}^2 \right].
    \end{align*}
    Now the claim follows from $\norm{\Sigmam_n}_2 \le \Tr(\Sigmam_n)$ and $I(z) = H_\star^{-1} S(z, \theta_\star)$.
    
    \textbf{Step 2. Control the probability of $\event_1 \event_2 \event_3 \event_4$.}
    According to \Cref{lem:est_error,prop:bound_inverse_hess}, we have $\Prob(\event_1) \ge 1 - \delta/4$ and $\Prob(\event_2) \ge 1 - \delta/4$.
    Following a similar proof as \Cref{thm:if-stat-main} and noticing that $\norm{H(z, \theta)}_{H_\star^{-1}} \le M_2$ for all $\norm{\theta - \theta_\star}_{H_\star} \le \rho$, we obtain $\Prob(\event_3) \ge 1 - \delta/4$.
    Finally, invoking the matrix Bernstein inequality yields $\Prob(\event_4) \ge 1 - \delta/4$.
    Hence, we have $\Prob(\event_1 \event_2 \event_3 \event_4) \ge 1 - \delta$.
\end{proof}

\subsection{Variance Reduction: SVRG and Accelerated SVRG}
We minimize the quadratic $g_n$ from \eqref{eq:quadratic} with 
SVRG~\cite{johnson2013accelerating} or its accelerated variant~\cite{lin2018catalyst,allen2017katyusha}. 
Let $u_\star = \argmin_u f(u)$ denote the minimizer of $f_n(u)$. 
A Taylor expansion gives us the expression 
\[
    f(u) - f(u_\star) = \frac{1}{2} \norm{u - u_\star}_{H_n(\theta_n)}^2 \,.
\]
Combining this fact with standard convergence bounds of SVRG and accelerated SVRG (cf.~\Cref{sec:techn:convergence} for a review) give us the following computational bound. 
\begin{theorem} \label{thm:comp:svrg}
Suppose that the loss function $\ell$ is convex and $L$-smooth, i.e., $0 \preceq \grad^2 \ell(\cdot, z) \preceq L \id_d$ for all $z \in \Zcal$. 
Further, assume that $f_n$ is $\mu_n$ strongly convex, i.e., $H_n(\theta_n) \succeq \mu_n \id_d$. Then, SVRG starting at $u_0 \in \reals^d$ returns an iterate $u_t$ satisfying $\expect\left[\norm{u_t - u_\star}_{H_n(\theta_n)}^2 \middle| Z_{1:n} \right] \le \eps$ after $t_{\mathrm{svrg}}$ steps where
\[
    t_{\mathrm{svrg}} \le C(n + \kappa_n) \log\left(  
        \frac{\kappa_n \norm{u_0 - u_\star}_{H_n(\theta_n)}^2}{\eps}
    \right) \,,
\]
where $\kappa_n = L/\mu_n$ and $C$ is an absolute constant. 
Accelerated SVRG satisfies the same condition after $t_{\mathrm{asvrg}}$ steps where 
steps where
\[
    t_{\mathrm{asvrg}} \le C\left(n + \sqrt{n \kappa_n}\right) \log\left(  
        \frac{\kappa_n \norm{u_0 - u_\star}_{H_n(\theta_n)}^2}{\eps}
    \right) \,.
\]
\end{theorem}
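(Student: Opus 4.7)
The plan is to instantiate standard SVRG (respectively accelerated SVRG) convergence guarantees on the quadratic objective $g_n(u)$ from \eqref{eq:quadratic} and then exploit the fact that, because $g_n$ is quadratic, its second-order Taylor expansion is exact, so that function-value progress and progress in the $H_n(\theta_n)$-norm are equivalent up to an absolute constant. This lets us quote the reference bounds in \Cref{sec:techn:convergence} in a black-box fashion and immediately translate them into the error criterion used in the theorem.

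First, I would verify the structural assumptions required by the reference bounds. Write $g_n(u) = \frac{1}{n}\sum_{i=1}^n g_{n,i}(u)$ where $g_{n,i}(u) := \frac{1}{2}\inp{u}{\grad^2 \ell(Z_i, \theta_n) u} + \inp{\grad \ell(z,\theta_n)}{u}$. By the $L$-smoothness hypothesis, $0 \preceq \grad^2 g_{n,i} = \grad^2\ell(Z_i, \theta_n) \preceq L \id_d$, so each component is convex and $L$-smooth. The full objective satisfies $\grad^2 g_n = H_n(\theta_n) \succeq \mu_n \id_d$, so it is $\mu_n$-strongly convex. These are exactly the assumptions under which SVRG contracts at a rate $1 - c/(n+\kappa_n)$ per Hessian-vector product evaluation, and accelerated SVRG contracts at the faster rate $1 - c/(n + \sqrt{n \kappa_n})$; see \Cref{sec:techn:convergence}.

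Second, I would exploit the quadratic structure. Because $g_n$ has constant Hessian $H_n(\theta_n)$, a second-order Taylor expansion around $u_\star$ gives the exact identity
\[
    g_n(u) - g_n(u_\star) = \tfrac{1}{2} \norm{u - u_\star}_{H_n(\theta_n)}^2 \qquad \text{for all } u \in \reals^d.
\]
Substituting this identity on both sides of the reference contraction bound
\[
    \expect\!\left[ g_n(u_t) - g_n(u_\star) \,\big|\, Z_{1:n} \right]
    \le C\, \kappa_n \left( 1 - \tfrac{c}{n + \kappa_n} \right)^{t} \bigl[g_n(u_0) - g_n(u_\star)\bigr]
\]
(where the factor $\kappa_n$ in front accounts for the conversion from function-value gap to iterate gap in the standard SVRG analysis, cf.~\Cref{sec:techn:convergence}) yields
\[
    \expect\!\left[ \norm{u_t - u_\star}_{H_n(\theta_n)}^2 \,\big|\, Z_{1:n} \right]
    \le C\, \kappa_n \left( 1 - \tfrac{c}{n + \kappa_n} \right)^{t} \norm{u_0 - u_\star}_{H_n(\theta_n)}^2.
\]
Solving for the smallest $t$ making the right-hand side at most $\eps$ gives $t_{\mathrm{svrg}} \le C(n + \kappa_n)\log(\kappa_n \norm{u_0-u_\star}_{H_n(\theta_n)}^2/\eps)$ via the inequality $\log(1/(1-x)) \ge x$ for $x \in (0,1)$. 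The accelerated SVRG claim follows by the identical argument, replacing $n + \kappa_n$ with $n + \sqrt{n \kappa_n}$.

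The main obstacle here is entirely one of bookkeeping rather than mathematics: one must make sure the reference convergence statements in \Cref{sec:techn:convergence} are stated with the iteration counter corresponding to Hessian-vector product calls (so that one outer pass counts as $n$ and each inner variance-reduced step counts as a constant number), and must carry through the constants that convert between function-value gap and squared $H_n(\theta_n)$-norm gap; these constants are absorbed into the logarithmic factor $\log(\kappa_n \norm{u_0 - u_\star}_{H_n(\theta_n)}^2/\eps)$ that appears in the stated bound.
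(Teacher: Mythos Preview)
Your proposal is correct and follows essentially the same approach as the paper: the paper's argument is precisely to combine the exact quadratic Taylor identity $g_n(u) - g_n(u_\star) = \tfrac{1}{2}\norm{u-u_\star}_{H_n(\theta_n)}^2$ with the black-box SVRG and accelerated SVRG bounds recorded in \Cref{sec:techn:convergence}, and then solve for $t$. Your additional verification that each component $g_{n,i}$ is convex and $L$-smooth and that $g_n$ is $\mu_n$-strongly convex is a useful sanity check that the paper leaves implicit.
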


This gives us the following full error bound. 
\begin{corollary}[Total Computational Cost; Variance Reduction]
\label{cor:total:svrg}
    Fix $\eps > 0$. Consider the setting of \Cref{thm:if-stat-main}, and let $\Gcal$ denote the high probability event under which its conclusions hold.
    Choose a sample size $n$ such that 
    \[
        n = C_{K_1, K_2, \sigma_H}   \frac{R^2 p_\star^2 }{\mu_\star \eps} \poly\log \frac{p}{\delta} \,.
    \]
    Then, the number $N_{\mathrm{svrg}}$ of gradient and Hessian-vector oracle calls required to obtain a point $\hat I_n(z)$ using SVRG initialized at $u_0 = 0$ such that 
    $\expect\left[\norm{\hat I_n(z) - I(z)}_{H_\star}^2 \, | \Gcal \right] \le \eps$ is bounded by 
    \[
        N_{\mathrm{svrg}} \le 
        C_{K_1, K_2, \sigma_H}  \,\,
        \kappa_\star\left(1 +  
        \frac{R^2  p_\star^2 }{L \eps}  \right) \,  \, \log\left(\frac{\kappa_\star \norm{I(z)}_{H_\star}^2}{\eps} + \kappa_\star\right) \poly\log \frac{p}{\delta}
         \,.
    \]
    The corresponding number $N_{\mathrm{asvrg}}$ for accelerated SVRG is 
    \[
        N_{\mathrm{asvrg}} \le 
        C_{K_1, K_2, \sigma_H}  \,\,  \kappa_\star
        \left(\sqrt{\frac{R^2 p_\star^2}{L \eps}} +  
        \frac{R^2 p_\star^2}{L \eps}  \right) \,  \, \log\left(\frac{\kappa_\star \norm{I(z)}_{H_\star}^2}{\eps} + \kappa_\star \right) \poly\log \frac{p}{\delta}
         \,.
    \]
\end{corollary}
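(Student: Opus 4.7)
The plan is to combine the total error decomposition of \Cref{prop:a:total_error} with the iteration complexity of \Cref{thm:comp:svrg}, and then translate the sample-dependent quantities appearing in the computational bound into the population quantities stated in the corollary using \Cref{prop:cond-num}. Throughout, work on the event $\Gcal$ of \Cref{thm:if-stat-main}, under which both the statistical bound and the spectral Hessian approximation $\tfrac14 H_\star \preceq H_n(\theta_n) \preceq 3 H_\star$ hold.

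First, split the budget: choose $n$ large enough so that the statistical contribution in \Cref{prop:a:total_error} is at most $\eps/2$, i.e., $C_{K_1,K_2,\sigma_H} R^2 p_\star^2 / (\mu_\star n) \, \mathrm{polylog}(p/\delta) \le \eps/2$; this is exactly the stated choice $n = C_{K_1,K_2,\sigma_H} R^2 p_\star^2 / (\mu_\star \eps) \, \mathrm{polylog}(p/\delta)$. The residual total error budget for the computational part in the $H_\star$-norm is $\eps/2$, and by the spectral sandwich it suffices to drive $\expect[\mynorm{\hat I_n(z) - I_n(z)}_{H_n(\theta_n)}^2 \mid Z_{1:n}] \le \eps/16$, since $\mynorm{\cdot}_{H_\star}^2 \le 4\mynorm{\cdot}_{H_n(\theta_n)}^2$ on $\Gcal$.

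Next, invoke \Cref{thm:comp:svrg} with target accuracy $\eps/16$ and initialization $u_0 = 0$, so that $\mynorm{u_0 - u_\star}_{H_n(\theta_n)}^2 = \mynorm{I_n(z)}_{H_n(\theta_n)}^2$. The number of Hessian-vector product calls is at most
\[
    t_{\mathrm{svrg}} \le C(n + \kappa_n) \log\!\left(\frac{\kappa_n \mynorm{I_n(z)}_{H_n(\theta_n)}^2}{\eps}\right),
    \qquad
    t_{\mathrm{asvrg}} \le C(n + \sqrt{n\kappa_n}) \log\!\left(\frac{\kappa_n \mynorm{I_n(z)}_{H_n(\theta_n)}^2}{\eps}\right).
\]
By \Cref{prop:cond-num}, on $\Gcal$ we have $\kappa_n \le 4\kappa_\star$ and $\mynorm{I_n(z)}_{H_n(\theta_n)}^2 \le 6\mynorm{I(z)}_{H_\star}^2 + 6\mynorm{I_n(z) - I(z)}_{H_\star}^2 \le 6\mynorm{I(z)}_{H_\star}^2 + C\eps$. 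Hence the logarithmic factor collapses to $\log(\kappa_\star \mynorm{I(z)}_{H_\star}^2 / \eps + \kappa_\star)$, absorbing constants.

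Finally, substitute the chosen $n$ and rewrite $1/\mu_\star = \kappa_\star / L$ to obtain $n \asymp \kappa_\star R^2 p_\star^2 / (L\eps) \, \mathrm{polylog}(p/\delta)$. Then $n + \kappa_n \lesssim \kappa_\star(1 + R^2 p_\star^2/(L\eps)) \, \mathrm{polylog}(p/\delta)$ for SVRG, and $\sqrt{n\kappa_n} \lesssim \kappa_\star \sqrt{R^2 p_\star^2 / (L\eps)} \, \mathrm{polylog}(p/\delta)$ gives the accelerated bound after combining with the $n$ term. No single step is technically hard; the main bookkeeping obstacle is keeping the translations between $(\kappa_n,\Delta_n)$ and $(\kappa_\star,\Delta_\star)$ and between the two Hessian norms consistent so that the promised log factor $\log(\kappa_\star \Delta_\star / \eps + \kappa_\star)$ (rather than something with extra sample-dependent terms) emerges cleanly.
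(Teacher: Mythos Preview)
Your proposal is correct and follows exactly the paper's approach: the paper's proof is the one-liner ``identical to that of \Cref{cor:total:cg} with \Cref{thm:comp:svrg} invoked instead of \Cref{prop:cgd_bound},'' and you have faithfully unpacked that---splitting the $\eps$ budget via \Cref{prop:a:total_error}, applying \Cref{thm:comp:svrg} at precision $\eps/16$, and translating $\kappa_n$ and $\mynorm{I_n(z)}_{H_n(\theta_n)}^2$ to population quantities via \Cref{prop:cond-num}. The only cosmetic difference is that you make explicit the substitution $1/\mu_\star = \kappa_\star/L$ and the algebra $n + \kappa_n \lesssim \kappa_\star(1 + R^2 p_\star^2/(L\eps))$, which the paper leaves implicit.
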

\begin{proof}
    The proof is identical to that of \Cref{cor:total:cg} with \Cref{thm:comp:svrg} invoked instead of \Cref{prop:cgd_bound}.
\end{proof}

\subsection{Low Rank Approximation}

Consider the eigenvalue decomposition $H_n(\theta_n) = Q \Lambda Q\T$, where  $\Lambda = (\lambda_1, \cdots, \lambda_p)$ contains the eigenvalues of $H_n(\theta_n)$ in non-increasing order.
Recall that this method relies on approximating $H_n(\theta_n)$ with its low-rank approximation $Q \Lambda_k Q\T$
where $\Lambda_k = \diag(\lambda_1, \cdots, \lambda_k, 0, \cdots, 0)$ to approximate the product with a vector $v$ as
$
    H_n(\theta_n)^{-1} v =
    Q \Lambda^{-1} Q\T v \approx Q \Lambda_k^+ Q\T v \,,
$
where $\Lambda_k^+ = \diag(\lambda_1^{-1}, \cdots, \lambda_k^{-1}, 0, \cdots, 0) $ is the pseudoinverse of $\Lambda$. 
The rank-$k$ approximation of $v = H_n(\theta_n)^{-1} u$ is given by 
$v_k = Q \diag(\lambda_1^{-1}, \cdots, \lambda_k^{-1}, 0 \cdots, 0) Q\T u$. 

Consequently, this section gives bounds for the method of \citet{arnoldi}, who compute the low-rank approximation of the Hessian using the Lanczos/Arnoldi iterations~\cite{lanczos1950iteration,arnoldi_it}. 

The computational bound we obtain depends on the low rank $k$. 
\begin{proposition} \label{prop:a:low-rank}
    Let $\lambda_1 \ge \cdots \ge \lambda_d$ denote the eigenvalues of $H_n(\theta_n)$. Then, the low-rank estimate $\hat I_{n, k}(z)$ of $I_n(z)$ satisfies 
    \[
        \norm*{\hat I_{n, k}(z) - I_n(z)}^2_{H_n(\theta_n)} 
        \le \norm{I_n(z)}_2^2 \, \sum_{i=k+1}^p \lambda_i \,.
    \]
    We have the following two regimes depending on the decay of eigenvalues $\lambda_i(H_n(\theta_n))$:
    \begin{itemize}[nosep]
        \item If $\lambda_i(H_n(\theta_n)) \le L \, i^{-\beta}$ for some $\beta > 1$, we have 
        \[
            \norm*{\hat I_{n, k}(z) - I_n(z)}^2_{H_n(\theta_n)}
            \le C_\beta \frac{\kappa_n \norm{I_n(z)}^2_{H_n(\theta_n)}}{k^{\beta-1}} \,.
        \]
        \item If $\lambda_i(H_n(\theta_n)) \le L \exp(-\nu(k-1))$ for some $\nu > 0$, we have 
        \[
            \norm*{\hat I_{n, k}(z) - I_n(z)}^2_{H_n(\theta_n)}
            \le  C_\nu \kappa_n \, \exp(-\nu k)  \norm{I_n(z)}^2_{H_n(\theta_n)} \,.
        \]
    \end{itemize}
\end{proposition}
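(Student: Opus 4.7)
The plan is to diagonalize the error in the eigenbasis of $H_n(\theta_n)$ and reduce the problem to bounding a tail sum of eigenvalues.

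Let $H_n(\theta_n) = Q \Lambda Q\T$ with $\Lambda = \diag(\lambda_1, \ldots, \lambda_p)$ and write $v := \grad \ell(z, \theta_n)$, so that $I_n(z) = -Q \Lambda^{-1} Q\T v$ and $\hat I_{n,k}(z) = -Q \Lambda_k^{+} Q\T v$. Since $\Lambda^{-1} - \Lambda_k^{+} = \diag(0, \ldots, 0, \lambda_{k+1}^{-1}, \ldots, \lambda_p^{-1})$, the error $\hat I_{n,k}(z) - I_n(z)$ has coordinates $\lambda_i^{-1}(Q\T v)_i$ for $i > k$ in the eigenbasis. The $H_n(\theta_n)$-norm then collapses to
\[
    \mynorm{\hat I_{n,k}(z) - I_n(z)}_{H_n(\theta_n)}^2 = \sum_{i=k+1}^p \frac{(Q\T v)_i^2}{\lambda_i}.
\]
First I would reparameterize using $v = -H_n(\theta_n) I_n(z)$, which gives $(Q\T v)_i = -\lambda_i (Q\T I_n(z))_i$, so the sum becomes $\sum_{i=k+1}^p \lambda_i (Q\T I_n(z))_i^2$. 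Bounding each coordinate squared by $\|Q\T I_n(z)\|_2^2 = \|I_n(z)\|_2^2$ (since $Q$ is orthogonal) immediately yields the first inequality of the proposition.

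Next I would handle the two decay regimes by estimating the tail $\sum_{i=k+1}^p \lambda_i$ under each hypothesis. Under polynomial decay $\lambda_i \le L i^{-\beta}$ with $\beta > 1$, an integral comparison gives $\sum_{i=k+1}^p \lambda_i \le L \int_k^\infty x^{-\beta}\D x = L/((\beta-1) k^{\beta-1})$. Under exponential decay $\lambda_i \le L e^{-\nu(i-1)}$, summing the geometric series gives $\sum_{i=k+1}^p \lambda_i \le L e^{-\nu k}/(1 - e^{-\nu})$, and I would absorb the $1/(1-e^{-\nu})$ factor into $C_\nu$.

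Finally, I would convert the Euclidean norm $\|I_n(z)\|_2^2$ into the $H_n(\theta_n)$-norm using strong convexity, $\|I_n(z)\|_2^2 \le \mu_n^{-1} \|I_n(z)\|_{H_n(\theta_n)}^2$, which introduces exactly the factor $L/\mu_n = \kappa_n$ appearing in both stated bounds. There is no substantive obstacle here; the only slightly delicate point is ensuring the coordinate-wise bound $w_i^2 \le \|w\|_2^2$ is loose enough to absorb the eigenvalue weighting yet tight enough for the tail sum to capture the correct rate, and this is automatic once the computation is set up in the eigenbasis.
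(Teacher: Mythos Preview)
Your proposal is correct and follows essentially the same argument as the paper: diagonalize in the eigenbasis of $H_n(\theta_n)$, rewrite the error as $\sum_{i>k}\lambda_i\langle q_i, I_n(z)\rangle^2$, bound each squared coordinate by $\norm{I_n(z)}_2^2$, estimate the tail sum by an integral (polynomial case) or geometric series (exponential case), and finally replace $\norm{I_n(z)}_2^2$ by $\mu_n^{-1}\norm{I_n(z)}_{H_n(\theta_n)}^2$ to extract the factor $\kappa_n$. The only cosmetic difference is that the paper frames the coordinate bound as Cauchy--Schwarz with $\norm{q_i}_2=1$, while you invoke orthogonality of $Q$; these are the same step.
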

\begin{proof}
    Denote $v = \grad \ell(\theta_n, z)$ and $u_\star = - H_n(\theta_n)^{-1} v$. Let $q_1, \cdots, q_p$ denote the columns of $Q$.
    Using $Q\T Q = \id_p$, we get
    \begin{align*}
        \norm*{\hat I_{n, k}(z) - I_n(z)}^2_{H_n(\theta_n)} 
        &= v\T Q(\Lambda^{-1} - \Lambda_k^+) \Lambda  (\Lambda^{-1} - \Lambda_k^+) Q\T v \\
        &= u_\star \T Q \Lambda (\Lambda^{-1} - \Lambda_k^+) \Lambda  (\Lambda^{-1} - \Lambda_k^+) Q u_\star \\
        &= \sum_{i=k+1}^p \lambda_i \inp{q_i}{u}_2^2 
        \le \sum_{i=k+1}^p \lambda_i \norm{u_\star}_2^2 \,,
    \end{align*}
    where the last inequality follows from the Cauchy-Schwarz inequality and $\norm{q_i}_2 = 1$.
    For the second part of the proof, we use the bound $\norm{u}_2^2 \le \norm{u}_A^2 / \lambda_{\min}(A)$ together with
    \[
        \sum_{i=k+1}^p i^{-\beta} \le \int_k^\infty x^{-\beta} \D x = 
        \frac{k^{-(\beta - 1)}}{\beta - 1} \,,
        \quad \text{and} \quad
        \sum_{i=k+1}^p \exp(-\nu(i-1)) \le \frac{\exp(-\nu k)}{1- \exp(-\nu)} \,.
    \]
\end{proof}

\begin{corollary}[Total Computational Cost; Low-Rank Approximation]
\label{cor:total:low-rank}
    Fix $\eps > 0$. Consider the setting of \Cref{thm:if-stat-main}, and let $\Gcal$ denote the high probability event under its conclusions hold.
    Choose a sample size
    \[
        n \ge C_{K_1, K_2, \sigma_H, R}   \frac{p_\star^2 }{\mu_\star \eps} \poly\log \frac{p}{\delta} \,.
    \]
    Then, under $\Gcal$, the rank-$k$ approximation $\hat I_{n, k}(z)$ satisfies
    $\norm{\hat I_{n, k}(z) - I(z)}_{H_\star}^2 \le \eps$ for all $k$ no smaller than
    \[
        k_\star = \min\left\{ k \,: \, \sum_{i=k+1}^p \lambda_i(H_\star) \, \norm{I_n(z)}_2^2 \le \eps/32 \right\} \,.
    \]
    We have the following two regimes depending on the decay of eigenvalues $\lambda_i(H_\star)$:
    \begin{itemize}[nosep]
        \item If $\lambda_i(H_\star) \le L \, i^{-\beta}$ for some $\beta > 1$, we have 
        \[
            k_\star \le C_\beta \,  \left( 
                \frac{ \kappa_\star \norm{I(z)}_{H_\star}^2}{\eps} + \kappa_\star
            \right)^{\frac{1}{\beta-1}} \,.
        \]
        \item If $\lambda_i(H_\star) \le L \exp(-\nu(k-1))$ for some $\nu > 0$, we have 
        \[
            k_\star \le \frac{1}{\nu} \log \left( \frac{\kappa_\star \norm{I(z)}^2_{H_\star}}{\eps} + \kappa_\star \right) \,.
        \]
    \end{itemize}
\end{corollary}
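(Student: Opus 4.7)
The plan is to combine the statistical error of \Cref{thm:if-stat-main}, the low-rank computational error of \Cref{prop:a:low-rank}, and the total error decomposition of \Cref{prop:a:total_error}, and then to translate the sample-dependent quantities back to population quantities using the spectral Hessian approximation available on $\Gcal$.

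First, under the event $\Gcal$, the choice of $n$ ensures via \Cref{thm:if-stat-main} that $\norm{I_n(z) - I(z)}_{H_\star}^2 \le \eps/2$. Moreover, on $\Gcal$ we have $(1/4) H_\star \preceq H_n(\theta_n) \preceq 3 H_\star$, so by the Courant--Fischer characterization $\lambda_i(H_n(\theta_n)) \le 3 \lambda_i(H_\star)$ for every $i$, and also $\mu_n \ge \mu_\star / 4$, i.e., $\kappa_n \le 4 \kappa_\star$. By \Cref{prop:a:low-rank}, the rank-$k$ computational error satisfies
\[
\norm{\hat I_{n,k}(z) - I_n(z)}_{H_n(\theta_n)}^2
\le \norm{I_n(z)}_2^2 \sum_{i=k+1}^p \lambda_i\big(H_n(\theta_n)\big)
\le 3 \norm{I_n(z)}_2^2 \sum_{i=k+1}^p \lambda_i(H_\star).
\]
By the definition of $k_\star$, this quantity is at most $3 \eps/32 \le \eps / 8$ when $k \ge k_\star$. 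Invoking \Cref{prop:a:total_error} with this computational bound then gives $\expect[\norm{\hat I_{n,k}(z) - I(z)}_{H_\star}^2 \mid \Gcal] \le 8 \cdot (\eps/8) + \eps/2 \le 2\eps$, which matches the claim up to an absorbed absolute constant (the proof is easily adjusted by tightening the $\eps/32$ in the definition of $k_\star$).

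Next, I would bound $k_\star$ under each eigendecay assumption. Using $\norm{u}_2^2 \le \norm{u}_{H_\star}^2 / \mu_\star$ together with the triangle inequality and the statistical bound from step one,
\[
\norm{I_n(z)}_2^2 \le \frac{1}{\mu_\star} \norm{I_n(z)}_{H_\star}^2
\le \frac{2}{\mu_\star} \left( \norm{I(z)}_{H_\star}^2 + \tfrac{\eps}{2} \right).
\]
For polynomial decay, a standard integral comparison yields $\sum_{i=k+1}^p \lambda_i(H_\star) \le L k^{-(\beta-1)} / (\beta - 1)$. Substituting into the definition of $k_\star$ and solving for $k$ gives
\[
k_\star \le C_\beta \left( \frac{L}{\mu_\star \eps} \big( \norm{I(z)}_{H_\star}^2 + \eps \big) \right)^{1/(\beta-1)}
= C_\beta \left( \frac{\kappa_\star \norm{I(z)}_{H_\star}^2}{\eps} + \kappa_\star \right)^{1/(\beta-1)}.
\]
For exponential decay, $\sum_{i=k+1}^p \lambda_i(H_\star) \le L e^{-\nu k}/(1 - e^{-\nu})$, and solving the corresponding inequality yields $k_\star \le \nu^{-1} \log\big( \kappa_\star \norm{I(z)}_{H_\star}^2 / \eps + \kappa_\star \big)$ up to $C_\nu$.

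The main obstacle is bookkeeping: the bound of \Cref{prop:a:low-rank} is stated in the $H_n(\theta_n)$-norm with the sample-dependent eigenvalues $\lambda_i(H_n(\theta_n))$ and the sample-dependent norm $\norm{I_n(z)}_2$, while the target statement uses $\lambda_i(H_\star)$, $\kappa_\star$, and $\norm{I(z)}_{H_\star}$. All the transitions go through the spectral sandwich $(1/4) H_\star \preceq H_n(\theta_n) \preceq 3 H_\star$ guaranteed by $\Gcal$, plus a triangle inequality that absorbs the statistical error (already bounded by $\eps/2$) into the population magnitude $\norm{I(z)}_{H_\star}^2$, which is exactly why the second term $\kappa_\star$ appears additively inside the parentheses of both bounds.
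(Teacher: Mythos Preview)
Your proposal is correct and follows essentially the same approach as the paper, which simply says ``the proof follows from combining \Cref{prop:a:low-rank} with \Cref{prop:a:total_error}.'' You have filled in exactly the details that statement implies: use the spectral sandwich on $\Gcal$ to pass from $\lambda_i(H_n(\theta_n))$ to $\lambda_i(H_\star)$ and from $\kappa_n$ to $\kappa_\star$, invoke \Cref{prop:cond-num}-style reasoning to replace $\norm{I_n(z)}$ by $\norm{I(z)}_{H_\star}$ plus an $\eps$ term, and then solve the tail-sum inequality in each eigendecay regime.
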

\begin{proof}
    The proof follows from combining \Cref{prop:a:low-rank} with \Cref{prop:a:total_error}. 
\end{proof}

\section{Most Influential Subset: Statistical Error Bound}
\label{sec:a:subset-if}

Our goal in this section is to prove \Cref{thm:subset:main}.

\subsection{Setup}
Throughout, we assume that the Hessian $\grad^2_\theta F(\theta)$ of the population is invertible for all $\theta \in \Theta$. 
For a continuously differentiable test function $h$ such as the loss of a test example $h(\theta) = \ell(z_{\text{test}}, \theta)$, recall that  we define the population influence as 
\begin{align}
    \SIF_\alpha(h) = 
    \sup_{Q \ll P} 
    \left\{ 
        - \grad_\theta h(\theta_\star)\T \, \grad^2_\theta H_\star^{-1} \, 
        \expect_{Z \sim Q} [\grad_\theta \ell(Z, \theta_\star)] 
        \,:\,
        \frac{\D Q}{\D P} \le \frac{1}{1-\alpha}
    \right\} \,.
\end{align}

We characterize the convergence of $\SIF_{n, \alpha}(h)$ towards $\SIF_\alpha(h)$ via finite sample bounds.
Recall that, for $A, J \in \reals^{p \times p}$ with $J$ being p.s.d., the weighted spectral norm $\mynorm{A}_J := \mynorm{J^{1/2} A J^{1/2}}_2$.

We retain \Cref{assm:pseudo_self_conc} but strengthen 
the other assumptions.

\begin{customasmp}{\ref{assm:sub_gaussian_grad}'}[Bounded Gradient]\label{assm:bounded_gradient}
    The normalized gradient is bounded in a neighborhood of $\theta_\star$, i.e., there exist $M_1 \ge 1, \rho \in (0, 1]$ such that $\mynorm{\nabla \ell(z, \theta)}_{H_\star^{-1}} \le M_1$ for all $z \in \Zcal$ and $\mynorm{\theta - \theta_\star}_{H_\star} \le \rho$.
\end{customasmp}
If the normalized gradient $H_\star^{-1/2} \grad\ell(z, \theta_\star)$ is bounded, then it is also sub-Gaussian, as required by \Cref{assm:sub_gaussian_grad}. In addition, we make this assumption in a neighborhood of $\theta_\star$. 
For the next assumption, we strengthen the Bernstein condition on the normalized Hessian into a spectral norm bound in a neighborhood around $\theta_\star$.

\begin{customasmp}{\ref{assm:matrix_berstein}'}[Bounded Hessian]\label{assm:bounded_hessian}
    The normalized Hessian is bounded in a neighborhood of $\theta_\star$, i.e., there exist $M_2 \ge 1, \rho \in (0, 1]$ such that $\mynorm{H(z, \theta)}_{H_\star^{-1}} \le M_2$ for all $z \in \Zcal$ and $\mynorm{\theta - \theta_\star}_{H_\star} \le \rho$.
\end{customasmp}

Finally, we also require that the gradient and Hessian of the test function $h$ are bounded. 
\begin{customasmp}{4}[Bounded Test Function]\label{assm:bounded_test}
    There exist $M_1', M_2', \rho > 0$ such that $\mynorm{\nabla h(\theta)}_{H_\star^{-1}} \le M_1'$ and $\mynorm{\nabla^2 h(\theta)}_{H_\star^{-1}} \le M_2'$ for all $\mynorm{\theta - \theta_\star}_{H_\star} \le \rho$.
\end{customasmp}

\subsection{Proof of the Statistical Bound of Theorem \ref{thm:subset:main}}

Recall that the maximum subset influence is defined as
\begin{align*}
    \SIF_{\alpha, n}(h) = 
    \max_{w \in W_\alpha} \sum_{i=1}^n w_i v_i, \quad \text{where }
    v_i = - \inp*{\grad h(\theta_n)}{H_n(\theta_n)^{-1} \grad \ell(Z_i, \theta_n)} \,.
\end{align*}
Here $H_n(\theta_n)^{-1} \nabla \ell(Z_i, \theta_n) = -I_n(Z_i)$.
Hence, the maximum subset influence can be equivalently defined as
\begin{align*}
    \SIF_{\alpha, n}(h) = 
    \max_{w \in W_\alpha} \sum_{i=1}^n w_i \inp{\grad h(\theta_n)}{I_n(Z_i)}.
\end{align*}

We state and prove the precise version of \Cref{thm:subset:main} below.
Note that we give a bound in terms of $\left|I_{\alpha, n}(h) - I_\alpha(h)\right|$ while the main paper 
gave a bound in terms of the square. 

\begin{customthm}{\ref{thm:subset:main}}
    Under Assumptions \ref{assm:pseudo_self_conc}, \ref{assm:bounded_gradient}, \ref{assm:bounded_hessian}, and \ref{assm:bounded_test}, it holds that, with probability at least $1 - \delta$,
    \begin{align*}
        \myabs{I_{\alpha,n}(h) - I_\alpha(h)}
        \le \frac{C_{M_1, M_2, M_1', M_2'}}{(1-\alpha)\sqrt{n}} \left(
        R\sqrt{\frac{p_\star}{\mu_\star} \log\left(\frac{e}{\delta}\right) } + 
        \sqrt{\log\left(\frac{2p}{\delta}\right)} + \sqrt{\log\left(\frac{n}{\delta}\right)} \right) \,.
    \end{align*}
    whenever $n \ge C_{M_1, M_2} \left(
    \big( \frac{R^2}{\mu_\star} + \frac{1}{\rho} \big) p_\star \log\left(\frac{e}{\delta}\right) + \log\left(\frac{2p}{\delta}\right) \right)$.
\end{customthm}

The proof centrally relies on the following duality property of the superquantile. 
\begin{lemma}[\citet{rockafellar2000optimization}] \label{lem:supq:duality}
For any integrable random variable $Z \sim P$ and any $\alpha \in (0, 1)$, the superquantile satisfies the equivalent expressions
\[
    \supq_\alpha(Z) = \inf_{\eta \in \reals} \left\{\eta + \frac{1}{1-\alpha} \expect(Z - \eta)_+ \right\}
    = \sup_{Q \ll P} \left\{  \expect_{Z \sim Q}[Z] \, :\, \frac{\D Q}{\D P} \le \frac{1}{1-\alpha} \right\} \,.
\]
\end{lemma}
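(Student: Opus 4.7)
The plan is to prove the equality of the three expressions by showing $\supq_\alpha(Z) \le \inf_{\eta} \{\eta + (1-\alpha)^{-1} \expect(Z-\eta)_+\}$ directly from the sup-over-$Q$ definition, and then exhibiting an explicit pair $(\eta^*, Q^*)$ that attains equality. This amounts to making Lagrangian duality for the linear program hidden in the supremum constructive.

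For the upper bound, I would fix an arbitrary $\eta \in \reals$ and an arbitrary $Q \ll P$ with $f := \D Q / \D P \le (1-\alpha)^{-1}$, and observe that
\[
\expect_Q[Z] = \eta + \expect_P[f(Z-\eta)] \le \eta + \expect_P[f(Z-\eta)_+] \le \eta + (1-\alpha)^{-1} \expect_P[(Z-\eta)_+] ,
\]
where the first inequality uses $f \ge 0$ together with $t \le t_+$, and the second uses $f \le (1-\alpha)^{-1}$ and $(Z-\eta)_+ \ge 0$. Taking the supremum over admissible $Q$ on the left-hand side and the infimum over $\eta$ on the right-hand side yields the inequality.

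For the matching lower bound, I would construct an explicit primal-dual pair attaining equality. Let $\eta^* := \inf\{t \in \reals : P(Z \le t) \ge \alpha\}$ be the lower $\alpha$-quantile, so that $P(Z < \eta^*) \le \alpha \le P(Z \le \eta^*)$ by right-continuity of the distribution function. Define the candidate density ratio
\[
f^*(z) := (1-\alpha)^{-1} \indone\{z > \eta^*\} + c \, \indone\{z = \eta^*\} ,
\]
where $c \in [0, (1-\alpha)^{-1}]$ is chosen so that $\expect_P[f^*] = 1$; the quantile inequality above guarantees that such a $c$ exists. Let $Q^*$ be the law with density $f^*$ with respect to $P$. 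Since $f^*$ vanishes on $\{Z < \eta^*\}$ and $(Z-\eta^*) f^* = (1-\alpha)^{-1}(Z-\eta^*)_+$ pointwise $P$-almost surely, one computes
\[
\expect_{Q^*}[Z] = \eta^* + \expect_P[f^*(Z - \eta^*)] = \eta^* + (1-\alpha)^{-1} \expect_P[(Z - \eta^*)_+] ,
\]
which matches the right-hand side of the upper bound evaluated at $\eta = \eta^*$, closing the gap and establishing both equalities.

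The main obstacle is the careful handling of atoms of $P$ at $\eta^*$: when $P(Z = \eta^*) > 0$, one cannot simply set $f^* = (1-\alpha)^{-1}$ on $\{Z \ge \eta^*\}$, as the resulting total mass may strictly exceed one. The partition via the scalar $c$ resolves this, and verifying its feasibility reduces to the elementary quantile inequality $P(Z < \eta^*) \le \alpha$. When $Z$ is continuous at $\eta^*$ the construction is immediate with $c = 0$; integrability of $Z$ is used only to ensure that $\expect_P[(Z-\eta)_+]$ is finite for every $\eta$, so that the infimum formulation is well defined.
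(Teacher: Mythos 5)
Your proof is correct. Note that the paper does not actually prove this lemma: it is imported by citation from \citet{rockafellar2000optimization}, and the appendix section on the superquantile merely lists the equivalent expressions without verification. Your argument is therefore a genuinely self-contained alternative to relying on the literature: weak duality follows from the pointwise estimate $f(Z-\eta) \le f(Z-\eta)_+ \le (1-\alpha)^{-1}(Z-\eta)_+$ for any admissible density ratio $f$, and strong duality is certified by the explicit primal--dual pair consisting of the lower $\alpha$-quantile $\eta^*$ and the measure $Q^*$ whose density equals $(1-\alpha)^{-1}$ above $\eta^*$, vanishes below, and carries a fractional weight $c$ on the atom at $\eta^*$; your feasibility check $P(Z<\eta^*) \le \alpha \le P(Z\le\eta^*)$ is exactly what is needed for $c \in [0,(1-\alpha)^{-1}]$ to exist, and the identity $f^*(Z-\eta^*) = (1-\alpha)^{-1}(Z-\eta^*)_+$ holds $P$-almost surely on all three events $\{Z<\eta^*\}$, $\{Z=\eta^*\}$, $\{Z>\eta^*\}$, so the chain $\supq_\alpha(Z) \ge \expect_{Q^*}[Z] = \eta^* + (1-\alpha)^{-1}\expect_P(Z-\eta^*)_+ \ge \inf_\eta\{\cdot\}$ closes the gap. (In the zero-atom case the two quantile inequalities force $P(Z>\eta^*)=1-\alpha$, so $f^*$ already integrates to one, which is the small point you gloss as ``immediate with $c=0$''.) Compared with invoking the Rockafellar--Uryasev conjugate-duality machinery, your construction buys an elementary proof that also identifies an attaining $\eta^*$ and $Q^*$, including in the presence of atoms of $P$ at the quantile, which is precisely the discrete situation exploited elsewhere in the paper (Proposition \ref{prop:sif-sq} with $\alpha n$ integer); integrability of $Z$ is indeed all that is needed for both sides to be finite.
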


We now prove \Cref{thm:subset:main}.
\begin{proof}[Proof of \Cref{thm:subset:main}]
    Define the shorthand for the per-point influence as 
    \begin{align*}
        \psi_n(z, \theta) := \nabla h(\theta)^\top H_n(\theta)^{-1} \nabla \ell(z, \theta) \quad \mbox{and} \quad
        \psi(z, \theta) := \nabla h(\theta)^\top H(\theta)^{-1} \nabla \ell(z, \theta).
    \end{align*}
    Motivated by the alternate expression for the superquantile in \Cref{lem:supq:duality}, we will define 
     \begin{align*}
        \varphi_{n, n}(\theta, \eta) &:= \eta + \frac{1}{(1-\alpha)n} \sum_{i=1}^n \left(-\psi_n(Z_i, \theta)  - \eta\right)_+ \,, \\
        \varphi_n(\theta, \eta) &:= \eta + \frac{1}{(1-\alpha)n} \sum_{i=1}^n \left(-\psi(Z_i, \theta)  - \eta\right)_+ \,, \\
        \varphi(\theta, \eta) &:= \eta + \frac{1}{1-\alpha} \, \expect_{Z \sim P} \left(- \psi(Z, \theta) - \eta\right)_+ \, .
    \end{align*}
    According to \Cref{lem:supq:duality}, it holds that
    \begin{align*}
        \myabs{I_{\alpha,n}(h) - I_\alpha(h)} = \myabs{\inf_{\eta \in \reals} \varphi_{n, n}(\est, \eta) - \inf_{\eta \in \reals} \varphi(\theta_\star, \eta)},
    \end{align*}
    By the triangle inequality,
    \begin{align}\label{eq:sub_influence_two_inf}
        \myabs{\inf_{\eta \in \reals} \varphi_{n, n}(\est, \eta) - \inf_{\eta \in \reals} \varphi(\theta_\star, \eta)}
        \le \underbrace{\myabs{\inf_{\eta \in \reals} \varphi_{n, n}(\est, \eta) - \inf_{\eta \in \reals} \varphi_n(\est, \eta)}}_{\Acal} + \underbrace{\myabs{\inf_{\eta \in \reals} \varphi_n(\est, \eta) - \inf_{\eta \in \reals} \varphi(\theta_\star, \eta)}}_{\Bcal}.
    \end{align}

    As before, we prove the bound on some events and control the probability of these events.
    Before we start, we make two observations.
    First, according to \Cref{lem:bounded_subG} and \Cref{assm:bounded_gradient}, the sub-Gaussian gradient assumption, \Cref{assm:sub_gaussian_grad}, holds true with $K_1 = CM_1$.
    Second, let $Q := H_\star^{-1/2} H(Z, \theta_\star) H_\star^{-1/2} - I_p$.
    Under Assumption \ref{assm:bounded_hessian}, it holds that $\mynorm{Q}_2 = \mynorm{H(Z, \theta_\star) - H_\star}_{H_\star^{-1}} \le 1 + M_2 \le CM_2$.
    Moreover, we have
    \begin{align*}
        \mynorm{\expect[QQ^\top]}_2 \le \expect\mynorm{QQ^\top}_2 \le \expect\mynorm{Q}_2^2 \le C^2M_2^2
    \end{align*}
    and, similarly, $\mynorm{\expect[Q] \expect[Q^\top]}_2 \le C^2 M_2^2$.
    Consequently, $\mynorm{\Var(Q)}_2 \le 2C^2 M_2^2$.
    This, together with \Cref{lem:bounded_bernstein}, implies that \Cref{assm:matrix_berstein} holds true with $K_2 = M_2$ and $\sigma_H^2 = 2C^2 M_2^2$.
    
    Fix $\varepsilon > 0$ and denote $M := e M_1 M_1'$.
    Let $\mathcal{R}_\varepsilon$ be an $\varepsilon$-net of $[-M, M]$.
    It is clear that $\myabs{\mathcal{R}_\varepsilon} \le \frac{M}{\epsilon}+1$.
    Denote
    \begin{align*}
        r_n := \sqrt{CM_1^2 \frac{p_\star}{n}\log{(2e/\delta)}} \quad \mbox{and} \quad t_n := \frac{C M_2}{-1 + \sqrt{1 + C n / \log{(4p/\delta)}}}.
    \end{align*}
    Define the following events
    \begin{align*}
        \event_1 &:= \left\{ \mynorm{\nabla \ell_n(\theta_\star)}_{H_\star^{-1}}^2 \le \frac{1}{n} C M_1^2 p_\star\log(3e/\delta) \right\} \\
        \event_2 &:= \left\{ (1 - t_n)  H_\star\preceq  H_n(\theta_\star) \preceq (1 + t_n) H_\star \right\} \\
        \event_3 &:= \left\{ \myabs{\varphi_n(\theta_\star, \eta) - \varphi(\theta_\star, \eta)} \le \frac{M}{1 - \alpha} \sqrt{\frac{2\log{(6\myabs{\mathcal{R}_\epsilon}/\delta)}}{n}} \mbox{ for all } \eta \in \mathcal{R}_\varepsilon \right\}.
    \end{align*}
    In what follows, we assume that
    \begin{align}\label{eq:n_large_enough_subset}
        n \ge \max\bigg\{ CM_2^2\log(6p/\delta), CM_1^2 p_\star \left( \frac{R^2}{\mu_\star} + \frac{1}{\rho} \right)\log(3e/\delta)\bigg\}.
    \end{align}
    From the proof of \Cref{prop:bound_inverse_hess}, we know that  $t_n \le 1/3$,
    \begin{align}\label{eq:est_error_3}
        \mynorm{\est - \theta_\star}_{H_\star}^2 \le r_n^2 = \frac1n CM_1^2  p_\star \log{(2e/\delta)}\quad \mbox{on the event } \event_1 \event_2,
    \end{align}
    and $\Prob(\event_k) \ge 1 - \delta/3$ for $k \in \{1, 2\}$.
    
    \textbf{Step 1. Control $\Acal$}.
    Since $(\cdot)_+$ is $1$-Lipschitz, we get
    \begin{align}
        \myabs{\varphi_{n, n}(\est, \eta) - \varphi_n(\est, \eta)}
        &\le \frac1{(1-\alpha)n} \sum_{i=1}^n \myabs{\psi_n(Z_i, \est) - \psi(Z_i, \est)} \nonumber \\
        &\le \frac1{(1-\alpha)n} \sum_{i=1}^n \mynorm{\nabla h(\est)}_{H_\star^{-1}} \mynorm{H_n(\est)^{-1} - H(\est)^{-1}}_{H_\star} \mynorm{\nabla \ell(Z_i, \est)}_{H_\star^{-1}}, \label{eq:fnn_diff_fn}
    \end{align}
    where the last inequality follows from the definition of matrix spectral norm.
    By \eqref{eq:n_large_enough_subset} and \eqref{eq:est_error_3}, we have the $\mynorm{\est - \theta_\star}_{H_\star} \le 1$.
    It then follows from Assumptions \ref{assm:bounded_gradient} and \ref{assm:bounded_test} that $\mynorm{\nabla \ell(Z_i, \est)}_{H_\star^{-1}} \le M_1$ and $\mynorm{\nabla h(\est)}_{H_\star^{-1}} \le M_1'$.
    It remains to control $\mynorm{H_n(\est)^{-1} - H(\est)^{-1}}_{H_\star}$.
    By the triangle inequality, we have
    \begin{align*}
        \mynorm{H_n(\est)^{-1} - H(\est)^{-1}}_{H_\star}
        \le \mynorm{H_n(\est)^{-1} - H_\star^{-1}}_{H_\star} + \mynorm{H(\est)^{-1} - H_\star^{-1}}_{H_\star}.
    \end{align*}
    The first term above has been taken care of in \Cref{prop:bound_inverse_hess}:
    \begin{align*}
        \mynorm{H_n(\est)^{-1} - H_\star^{-1}}_{H_\star} \le \frac{Rr_n/\sqrt{\mu_\star} + t_n}{1 - Rr_n/\sqrt{\mu_\star} - t_n}.
    \end{align*}
    The second term can be controlled similarly:
    \begin{align*}
        \mynorm{H(\est)^{-1} - H_\star^{-1}}_{H_\star} \le \frac{Rr_n/\sqrt{\mu_\star}}{1 - Rr_n/\sqrt{\mu_\star}}.
    \end{align*}
    Putting all together, we obtain
    \begin{align}\label{eq:Acal}
        \Acal \le \sup_{\eta \in \reals}\myabs{\varphi_{n, n}(\est, \eta) - \varphi_n(\est, \eta)}
        \le \frac{M_1 M_1'}{(1 - \alpha)} \left( \frac{Rr_n/\sqrt{\mu_\star} + t_n}{1 - Rr_n/\sqrt{\mu_\star} - t_n} + \frac{Rr_n/\sqrt{\mu_\star}}{1 - Rr_n/\sqrt{\mu_\star}} \right).
    \end{align}

    \textbf{Step 2. Control $\Bcal$.}
    On a high level, we first apply a covering number argument to restrict $\eta$ to a finite number of values.
    We then control the absolute difference $\myabs{\varphi_n(\theta_n, \eta) - \varphi(\theta_\star, \eta)}$ on this finite subset.
    
    \textbf{Step 2.1. Restrict $\eta$ to a compact subset.}
    According to Assumptions \ref{assm:bounded_gradient} and \ref{assm:bounded_test}, it holds that, for any $\mynorm{\theta - \theta_\star}_{H_\star} \le 1$,
    \begin{align*}
        \myabs{\psi(z, \theta)} \le M_1 M_1' \mynorm{H(\theta)^{-1}}_{H_\star} \le M_1 M_1' e^{R\mynorm{\theta - \theta_\star}_2},
    \end{align*}
    where the last inequality follows from \Cref{prop:hessian}.
    Recall that we have shown $\mynorm{\theta_n - \theta_\star}_{H_\star} \le 1$ and $\mynorm{\theta_n - \theta_\star}_2 \le 1/R$.
    It then follows that $\myabs{\psi(z, \theta)} \le e M_1 M_1' = M$.
    Consequently, we have
    \begin{align*}
        \varphi_n(\theta_n, \eta) =
        \begin{cases}
            \eta \ge \varphi_n(\theta_n, M) & \mbox{if } \eta \ge M \\
            \eta + \frac1{(1-\alpha)n} \sum_{i=1}^n [\psi(Z_i, \theta) - \eta] \ge \varphi_n(\theta_n, -M) & \mbox{if } \eta \le -M.
        \end{cases}
    \end{align*}
    Therefore, it holds that $\inf_{\eta \in \reals} \varphi_n(\theta_n, \eta) = \inf_{\myabs{\eta} \le M} \varphi_n(\theta_n, \eta)$.
    Similarly, it can be shown that $\inf_{\eta \in \reals} \varphi(\theta_\star, \eta) = \inf_{\myabs{\eta} \le M} \varphi(\theta_\star, \eta)$.
    
    \textbf{Step 2.2. Restrict $\eta$ to a finite subset.}
    By the triangle inequality, we have
    \begin{align*}
        \myabs{\varphi_n(\est, \eta) - \varphi_n(\est, \eta')} &\le \frac{1}{(1-\alpha)n}\sum_{i=1}^n \myabs{(-\psi(Z_i,\theta_n)-\eta)_+ - (-\psi(Z_i,\theta_n)-\eta')_+} + \myabs{\eta - \eta'} \\
        &\leq \frac{1}{1-\alpha} \myabs{\eta - \eta'} + \myabs{\eta -\eta'}, \quad \mbox{$(\cdot)_+$ is $1$-Lipschitz}\\
        &= \frac{2-\alpha}{1-\alpha} \myabs{\eta - \eta'}.
    \end{align*}
    For any $\eta \in [-M, M]$, we define $\pi(\eta)$ to be the projection of $\eta$ onto $\mathcal{R}_\varepsilon$, i.e., $\myabs{\eta - \pi(\eta)} \le \varepsilon$.
    As a result,
    \begin{align*}
         \varphi_n(\est, \pi(\eta)) &\le  \varphi_n(\est, \eta) + \frac{2-\alpha}{1-\alpha}\epsilon,
    \end{align*}
    which implies
    \begin{align*}
        \inf_{\eta \in [-M, M]} \varphi_n(\est, \eta) \le \inf_{\eta \in \mathcal{R}_\epsilon} \varphi_n(\est, \eta) \le \inf_{\eta \in [-M, M]} \varphi_n(\est, \eta) + \frac{2-\alpha}{1-\alpha}\epsilon.
    \end{align*}
    Similarly,
    \begin{align*}
        \inf_{\eta \in [-M, M]} \varphi(\theta_\star, \eta) \le \inf_{\eta \in \mathcal{R}_\varepsilon} \varphi(\theta_\star, \eta) \le \inf_{\eta \in [-M, M]} \varphi(\theta_\star, \eta) + \frac{2-\alpha}{1-\alpha} \epsilon.
    \end{align*} 
    From these results we can further conclude that
    \begin{align*}
        \myabs{\inf_{\eta \in [-M, M]} \varphi_n(\theta_n, \eta) - \inf_{\eta \in [-M, M]} \varphi(\theta_\star, \eta)}
        &\le \myabs{\inf_{\eta \in \mathcal{R}_\epsilon} \varphi_n(\theta_n, \eta) - \inf_{\eta \in \mathcal{R}_\epsilon} \varphi(\theta_\star, \eta)} + \frac{2-\alpha}{1-\alpha} \epsilon \\
        &\le \sup_{\eta \in \mathcal{R}_\varepsilon} \myabs{\varphi_n(\est, \eta) - \varphi(\theta_\star, \eta)} + \frac{2-\alpha}{1-\alpha} \varepsilon.
    \end{align*}
    Therefore, using the results from Step 2.1, we obtain
    \begin{align}
        \Bcal
        &= \myabs{\inf_{\eta \in [-M, M]} \varphi_n(\theta_n, \eta) - \inf_{\eta \in [-M, M]} \varphi(\theta_\star, \eta)} \nonumber \\
        &\le \underbrace{\sup_{\eta \in \mathcal{R}_\varepsilon} \myabs{\varphi_n(\est, \eta) - \varphi_n(\theta_\star, \eta)}}_{\Bcal_1} + \underbrace{\sup_{\eta \in \mathcal{R}_\varepsilon} \myabs{\varphi_n(\theta_\star, \eta) - \varphi(\theta_\star, \eta)}}_{\Bcal_2} + \frac{2-\alpha}{1-\alpha} \varepsilon \label{eq:Bcal}.
    \end{align}
    
    \textbf{Step 2.3. Control $\Bcal_1$.}
    By the $1$-Lipschitzness of $(\cdot)_+$, we have
    \begin{align*}
        \myabs{\varphi_n(\est, \eta) - \varphi_n(\theta_\star, \eta)} &\le \frac{1}{(1-\alpha)n} \sum_{i=1}^n \myabs{\psi(Z_i, \est) - \psi(Z_i, \theta_\star)}.
    \end{align*}
    It follows from the triangle inequality that
    \begin{align*}
        \myabs{\psi(Z_i, \est) - \psi(Z_i, \theta_\star)}
        \le D_1 + D_2 + D_3,
    \end{align*}
    where
    \begin{align*}
        D_1 &:= \myabs{\nabla h(\theta_n)^\top [H(\est)^{-1} - H_\star^{-1}] \nabla \ell(Z_i, \est)} \\
        D_2 &:= \myabs{\nabla h(\est)^\top H_\star^{-1} [\nabla \ell(Z_i, \est) - \nabla \ell(Z_i, \theta_\star)]} \\
        D_3 &:= \myabs{[\nabla h(\est) - \nabla h(\theta_\star)]^\top H_\star^{-1} \nabla \ell(Z_i, \theta_\star)}.
    \end{align*}
    Following the derivation of Step 1, it holds that
    \begin{align*}
        D_1 \le M_1 M_1' \frac{Rr_n/\sqrt{\mu_\star}}{1 - Rr_n/\sqrt{\mu_\star}}.
    \end{align*}
    To control $D_2$, we use the mean value theorem to write $\nabla \ell(Z_i, \est) - \nabla \ell(Z_i, \theta_\star) = \nabla^2 \ell(Z_i, \bar \theta) (\est - \theta_\star)$ for some $\bar \theta \in \conv\{\est, \theta_\star\}$.
    As a result,
    \begin{align*}
        D_2 \le \mynorm{\nabla h(\est)}_{H_\star^{-1}} \mynorm{\nabla^2 \ell(Z_i, \bar \theta)}_{H_\star^{-1}} \mynorm{\est - \theta_\star}_{H_\star} \le M_2 M_1' r_n,
    \end{align*}
    where the last inequality follows from \eqref{eq:est_error_3} and Assumptions \ref{assm:bounded_gradient} and \ref{assm:bounded_test}.
    Similarly, we can show that $D_3 \le M_1 M_2' r_n$.
    Therefore,
    \begin{align}\label{eq:Bcal_1}
        \Bcal_1 \le \frac1{1-\alpha}\left[ M_1 M_1' \frac{Rr_n/\sqrt{\mu_\star}}{1 - Rr_n/\sqrt{\mu_\star}} + M_1 M_2' r_n + M_2 M_1' r_n \right].
    \end{align}
    
    \textbf{Step 2.4. Control $\Bcal_2$.}
    By the event $\event_3$, it holds that
    \begin{align}\label{eq:Bcal_2}
        \Bcal_2 \le \frac{M}{1-\alpha} \sqrt{\frac{2 \log{(6\myabs{\mathcal{R}_\epsilon}/\delta)}}{n}} \le \frac{M}{1-\alpha} \sqrt{\frac{2 \log{(12M/(\delta\epsilon))}}{n}}
    \end{align}
    since $\myabs{\mathcal{R}_\epsilon} \le M/\epsilon + 1 \le 2M/\epsilon$.
    Setting $\epsilon = 1/\sqrt{n}$ and combining \eqref{eq:sub_influence_two_inf}, \eqref{eq:Acal}, \eqref{eq:Bcal}, \eqref{eq:Bcal_1}, and \eqref{eq:Bcal_2} lead to, after simplification,
    \begin{align*}
        \myabs{\inf_{\eta \in \reals} \varphi_{n, n}(\est, \eta) - \inf_{\eta \in \reals} \varphi(\theta_\star, \eta)} 
        \le \frac{C_{M_1, M_2, M_1', M_2'}}{(1-\alpha)\sqrt{n}} \left(
        R\sqrt{\frac{p_\star}{\mu_\star} \log\left(\frac{e}{\delta}\right) } + 
        \sqrt{\log\left(\frac{2p}{\delta}\right)} + \sqrt{\log\left(\frac{n}{\delta}\right)} \right) 
    \end{align*}
    
    \textbf{Step 2.5. Control $\Prob(\event_1 \event_2 \event_3)$.}
    Recall from Step 2.1 that $\myabs{\psi(z, \theta_\star)} \le M$ for all $z \in \Zcal$.
    This yields, for all $\eta \in \mathcal{R}_\epsilon$,
    \begin{align*}
        0 \le (-\psi(z, \theta_\star) - \eta)_+ \le M - \eta \le 2M.
    \end{align*}
    Consequently, it follows from Hoeffding's inequality that $\Prob(\event_3) \ge 1 - \delta/3$.
    Since $\Prob(\event_k) \ge 1 - \delta/3$ for $k \in \{1, 2\}$ (\Cref{prop:bound_inverse_hess}), we obtain $\Prob(\event_1 \event_2 \event_3) \ge 1 - \delta$, which completes the proof.
\end{proof}

\section{Experimental Details}\label{appx:exp_details}
We conduct our experimentation on six datasets (two simulated, two small datasets from economics, and two natural language datasets). Here, we provide full details of the experimentation used in this paper. We start with the dataset and model details in Appendix \ref{appx:exp_data}, hyperparameter choices in Appendix \ref{appx:exp_hyperparam}, and evaluation methodology in Appendix \ref{appx:exp_eval}. 

\subsection{Data and Models}\label{appx:exp_data}

\subsubsection{Linear Regression Simulation}
We simulate a linear model with orthogonal design, which we solve using penalized ridge regression to illustrate the theoretical influence function bound results in \Cref{thm:if-stat-main}. 
Following \cite{marco},
we simulate a model $y_i = x_i^T\theta + \mu_i$ for varying sample sizes $n \in [15, 10000]$. Each $x_i$ is i.i.d. standard normal variables and $\theta \in \mathbb{R}^9$ is fixed ahead of time. We introduce contamination into the dataset with $\mu_i = (1-b_i) \Ncal(0,1) + b_i \Ncal(0,10)$ where $b_i \sim \text{Bernoulli}(.1)$. All experimental results are the average of 100 simulations. 

\subsubsection{Logistic Regression Simulation}
We simulate a simple logistic regression model to illustrate the theoretical influence function bound results in Theorem \ref{thm:if-stat-main}. We simulate a model $y_i \sim \text{Binomial}(p_i)$, where $p_i  = \big(1+\exp(-(x_i\T \theta + \mu_i))\big)^{-1}$ for varying sample sizes $n \in [15, 1000]$. Each $x_i$ is i.i.d. standard normal variables and $\theta \in \mathbb{R}^9$ is fixed ahead of time. Similar to the linear regression case, we introduce contamination into the dataset with $\mu_i = (1-b_i) \Ncal(0,1) + b_i \Ncal(0,10)$ where $b_i \sim \text{Bernoulli}(.1)$. All experimental results are the average of 100 simulations. 

\subsubsection{Oregon Medicaid Dataset}

The dataset's covariates contains economic and demographic factors, as well as whether treatment was given. The goal is to predict various attributes of the health of a person. 

\myparagraph{Data}
This dataset comes from the Oregon Medicaid study \cite{oregon_exp}. In 2008, Oregon instituted a lottery system for choosing low-income adult resident to enroll in the Medicaid program. Due to the nature of the lottery, it simulates a randomized controlled design study. A year later, a comprehensive survey was conducted on both the treatment group (those who had won the lottery) and the control group (those who did not win the lottery). We analyzed the effects of the treatment (L) on two different health outcomes: overall health indicated by a binary self-reported measure of positive (not fair, good, very good, or excellent) or negative (poor), and the number of days with good physical or mental health in the past 30 days. After removing all datapoints without entries for each response variable, we used $n = 22517$ for the overall health indicator model and $n = 20902$ for the number of days of good health model.  
\begin{table}[t]
    \centering
    \renewcommand{\arraystretch}{1.2}
    \begin{tabular}{ll}
    \toprule
   \textbf{Variable Name}  &\textbf{Description}
    \\
    \hline
    hhsize & Household size including adults and children\\
    wave\_survey & Weights used for each draw of the survey (out of 8 draws)\\
    employ\_hrs & Average hours worked per week\\
    edu & Highest level of education completed\\
    dia\_dx* & Diagnosed by a health professional with diabetes/sugar diabetes\\
    ast\_dx* & Diagnosed by a health professional with asthma\\
   hbp\_dx* & Diagnosed by a health professional with high blood pressure\\
   emp\_dx* &Diagnosed by a health professional with COPD\\
    dep\_dx* & Diagnosed by a health professional with depression or anxiety\\
    ins\_any &Currently have any type of insurance\\
    ins\_ohp* & Currently have OHP insurance\\
   ins\_private* & Currently have private insurance\\
     ins\_other* & Currently have other insurance\\
    ins\_months & Number of months (in last 6 months) have had insurance\\
   \bottomrule
    \end{tabular}
    \caption{\textbf{Explanatory variables used in the Oregon Medicaid experimentation.} The "Variable Name" corresponds to the name used in the original analysis \cite{oregon_exp}, and then a brief description is given. Variables with a (*) are binary.
    }
    \label{tab:oregon_indp_var}
\end{table} 
\myparagraph{Models}
We use ordinary least squares to solve a linear system where outcomes per individual $i$ in a household $h$ is denoted by $y_{ih}$. Since all individuals in a household chosen by the lottery can apply for Medicaid, the variable $L_h$ is equal to one if the household $h$ won the Medicaid lottery and zero otherwise. Lastly, we use a set of demographic and economic covariates $x_i$ (shown in the \Cref{tab:oregon_indp_var}). Using these, we estimate the following model for each response variable $y_{ih}$ using the model:
\begin{align*}
    y_{ih} = \theta_0 + \theta_1 L_h + \theta_2 x_{i} + \epsilon_{ih} \,.
\end{align*}
Therefore, the covariates for each person are $x_{ih} = (1, x_i, L_h)$, where $\eps_{ih}$ is assumed to be zero mean Gaussian noise. 

We ran each model with increasing sample size; for the overall health indicator model (binary classification task) we used $n = 49, 169, 575, 1954, 6634$, and for the number of days of good health model (regression) we used $n = 49, 167, 559, 1869, 6251$. The model that ran using all the training data for each model was considered the population results. All experimental results are the average of 5 repetitions. 

\subsubsection{Cash Transfer}
\myparagraph{Data}
The cash transfer dataset comes from a study of the impact of Progresa, a social program in Mexico that gives cash gifts to low income households~\cite{cash_exp}. Although, the effects on the population receiving the cash transfers is important, \citet{cash_exp} argue that we must also analyze the impact on the remaining members of the village that are not eligible in order to understand the full impact of the program.  However, due to concerns that the non-poor households might have a large influence, the authors decided to limit the range of consumption outcomes for these households (less than 10,000). This results in robustness in the analysis for the poor household, but sensitive results for the non-poor households. For our analysis, we will  only use data from time period 8. After removing all entries with no response variable (household consumption), we used the remaining $n = 19180$ datapoints. 

\myparagraph{Model}
Following the analysis in Table 1 from \cite{cash_exp}, we use total household consumption $C_i$ for an individual $i$ as the response variable, and a set of demographic and variables $X_i$ as covariates (shown in Table \ref{tab:cash_indp_var}). Lastly, we use $\text{Poor}_i$ and $\text{Nonpoor}_i$, which are interaction terms between the treatment (getting cash transfer) and being a poor (non-poor) household, as our dependent variables of interest. The model is as below,
\begin{align}
    C_i = \theta_0 + \theta_1 \text{Poor}_i + \theta_2 \text{Nonpoor}_i + \theta_3 X_i
\end{align}
The model was run with increasing sample size $n = 49, 164, 540, 1775,  5835$. The model ran using all the training data for each model was considered the population results. All experimental results are the average of 5 repetition.
\begin{table}[t]
    \centering
    \renewcommand{\arraystretch}{1.2}
    \begin{tabular}{ll}
    \toprule
   \textbf{Variable Name}  &\textbf{Description}
    \\
    \hline
    hhhsex* & Sex of head of household\\
    hectareas &  Land size (hecta-acres)\\
    vhhnum & Number of household in the village\\
    hhhage\_cl &  Age of head of household\\
    hhhspouse\_cl* &  Head of household is married\\
   \bottomrule
    \end{tabular}
    \caption{\textbf{Explanatory variables used in the Cash Transfer experimentation.} The "Variable Name" corresponds to the name used in the original analysis \cite{cash_exp}, and then a brief description is given. Variables with a (*) are binary.  }
    \label{tab:cash_indp_var}
\end{table}

\subsubsection{Question-Answering with zsRE} 

\myparagraph{Data}
This is a question-answering task, in which the inputs $x_i$ are factual questions and the targets $y_i$ are the answers.  We used the Zero-Shot Relation Extraction (zsRE) dataset~\citep{zsre}, with custom test/train split provided by \citep{knowledge_editor}. 
An example of this data can be found in Table \ref{tab:lm_data}. We use a subsample of size $4499$ for our experiments. We take the full dataset of $n=4499$ as the population and experiment with subsamples of size $49, 122, 182, 302,$ and $ 743$. The test dataset has size $n_{\text{test}} = 200$. All experimental results are the average of 5 repetitions. 

\myparagraph{Model}
For these experiments, we use a BART-base model, which was fine-tuned on the zsRE dataset by \citet{knowledge_editor}. BART-base models have 12-layers, 768-hidden units, 16 heads, and 139M parameters \cite{lewis2020bart}. Each model was fine-tuned on a subset of the full data of size $n \in \{49, 122, 182, 302, 743, 4499\}$. Fine-tuning was done using stochastic gradient descent using the Adam optimizer with a learning rate of $\gamma = 10^{-6}$ for 20 iterations. 

\subsubsection{Wikitext} 

\myparagraph{Data}
The next task is an open-ended text continuation task. The prompt $x_i$ is a natural language text sequence, while the generation $y_i$ is a 10 token continuation of the prompt. The dataset consists of random passages from WikiText-103.  We use a subsample of size $1903$ for our experiments. We take the full dataset of $n=1903$ as the population and experiment with subsamples of size $40, 105, 275, 724,$ and $ 1903$. The test dataset has size $n_{\text{test}} = 200$. All experimental results are the average of 5 repetitions. An example of this data can be found in \Cref{tab:lm_data}. 

\myparagraph{Model}
We use a DistilGPT-2 model for this experiment, which was  finetuned on the WikiText-103 dataset~\cite{merity2017pointer}. DistilGPT2 models have 6-layers, 768-hidden units, 12 heads, and 82M parameters \cite{distilgpt2}. Each model was fine-tuned on a subset of the full data of size $n \in \{40, 105, 275, 724\}$. Fine-tuning was done using stochastic gradient descent using Adam optimizer with a learning rate of $\gamma = 10^{-6}$ for 20 iterations. 
\begin{table}
\centering
\begin{tabular}{p{2cm}p{8cm}p{6cm}} 
\toprule
\textbf{Task} & \textbf{Input ($x_i$)} & \textbf{Output ($y_i$)}\\
\hline
zsRE & What country did The Laughing Cow originate?& France\\
\hline
 WikiText &  The interchange is considered by Popular Mechanics to be one of "The World's 18 Strangest Roadways" because of its height (as high as a 12-story building), its 43 permanent bridges and other unusual... & design and construction features. In 2006, the American Public Works Association named the High Five Interchange\\
\bottomrule
\end{tabular}
\caption{\textbf{Examples of the zsRE and WikiTextdataset}. The zsRE data consists of an input question $x_i$, and target answer $y_i$. The WikiText data has a paragraph as the input $x_i$ and the next 10 token continuation as the output $y_i$.}
\label{tab:lm_data}
\end{table} 

\subsection{Hyperparameters}\label{appx:exp_hyperparam}
The hyperparameters for each experimentation are detailed below. 

\myparagraph{Linear Regression Simulation}
The linear simulation was run with a penalization hyperparameter for the Ridge regression, $\alpha = 10^{-3}$.

\myparagraph{Oregon Medicaid Dataet}
This was run with a regularization parameter of $0.01$.

\myparagraph{Cash Transfer Dataset}
This was run with a regularization parameter of $0.01$.

\myparagraph{zsRE}
Each of the methods requires a different set of hyperparameters, we list these in Table \ref{tab:lm_hyperparam}. We note that we use the same regularization parameter for each method $\lambda_1 = 100$. We used twice as many SGD epochs as SVRG epochs, because one iteration in SVRG takes twice as many Hessian-vector product class as SGD. We ran the Arnoldi method for 30 iteration, which is less than SGD, this was due to lack of memory to run the Arnoldi method for more iterations (discussed in our limitations for this method). 

\begin{table}[t]
    \centering
    \renewcommand{\arraystretch}{1.2}
    \begin{tabular}{llll}
    \toprule
   \textbf{Approx. Method}  &\textbf{Hyperparameter} & \textbf{zsRE} & \textbf{WikiText}
    \\
    \hline
   & Max. Iterations & $100$&$100$\\
    \cline{2-4}
   Conjugate Gradient & Early stopping & $0.01$&$0.01$\\
    \hline
     & Number of epochs &$ 50$&$50$\\
    \cline{2-4}
   SGD &Learning rate & $5\times 10^{-4}$&$1\times 10^{-2}$\\
    \hline
     & Number of epochs & $25$&$25$\\
    \cline{2-4}
   SVRG &Learning rate & $5\times 10^{-4}$&$1\times 10^{-3}$\\
    \hline
     & Number of iterations & $30$&$30$\\
    \cline{2-4}
     Arnoldi& Top\_k eigen. & $10$& $10$\\
     \cline{2-4}
     & Number of iterations & $30$ & $50$\\
   \bottomrule
    \end{tabular}
    \caption{\textbf{Hyperparameters for the language model experiements; zsRE and WikiTExt}.
    }
    \label{tab:lm_hyperparam}
\end{table} 
\myparagraph{WikiText} Similar to zsRE, each method requires a different set of hyperparameters, refer to Table \ref{tab:lm_hyperparam}. We note that we use the same regularization parameter for each method $\lambda_1 = 1$.

\subsection{Evaluation Methodology and Other Details}\label{appx:exp_eval}
Here, we specify the quantities that appear on the x and y axes of the plots in this paper. We also give some extra details of the experimentation. 

\myparagraph{$x$ Axis} We are interested in how the empirical influence function differs from the population influences functions as sample size increases. Therefore, on the $x$ axis we place the size of the subset (sample size) of the original population that was used to calculate the empirical influence. 

\myparagraph{$y$ Axis} In each of our experimentation's we demonstrate how certain quantities change as the sample size increases. For both of the simulations and the small economic datasets, we calculate the normalized  Hessian difference between the empirical influence and population's influence, $||I_n(z) - I(z)||^2_{H_\star}$.Lastly, for the $y$ axis for both of the language model experiments (zsRE and WikiText), we compute the difference in the influence on the test set between the empirical and population influence, $G_n(z) - G(z)$. 

\myparagraph{Software} We used Python 3.7.11, Pytorch 1.10.2 and HuggingFace Transformers 4.16.2.

\myparagraph{Hardware} All experiments were run on 4 NIVIDIA Titan V GPU with 12GB memory.

\section{Technical Definitions, Tools, and Results}
\label{sec:tools}

\subsection{Definitions}

\label{sec:defs}
\begin{theorem}[Integral (Cauchy) form of remainder]\label{defn:integral_form_remainder}
Let $f(x)$ be a differentiable function on interval $I$ around a real number $a$ and $T_{n,a}(b)$ be the $n$th Taylor polynomial of a real number $b$ around $a$. For $n\geq 0$ and $b \neq a$ in the interval $I$
\begin{align*}
    f(b) = T_{n,a}(b) + \int_a^b \frac{f^{(n+1)}(t)}{n!}(b-t)^n dt.
\end{align*}
Moreover, if $n=0$ then
\begin{align*}
    f(b) = f(a) + \int_a^b f'(t) dt.
\end{align*}
\end{theorem}

\begin{definition}[Sub-Gaussian variable]\label{defn:sub_gaussian}
Let $S \in \mathbb{R}$ be a mean-zero random variable. We say $S$ is sub-Gaussian with variance parameter $\sigma^2$, if for any $\lambda \in \mathbb{R}$
\begin{align*}
    \mathbb{E}[\exp(\lambda S)] \leq \exp\bigg(\frac{\sigma^2 \lambda^2}{2}\bigg).
\end{align*}
Moreover, we define the sub-Gaussian norm of $S$ as
\begin{align*}
    \norm{S}_{\psi_2} := \inf\left\{ t>0: \mathbb{E}\left[\exp\left( \frac{S^2}{t^2} \right)\right] \le 2 \right\}.
\end{align*}
\end{definition}

\begin{definition}[Sub-Gaussian vector]\label{defn:sub_gaussian_scalar}
Let $S \in \mathbb{R}^p$ be a mean-zero random vector. We say $S$ is sub-Gaussian if $\langle S,s \rangle$ is sub-Gaussian for every $s\in \mathbb{R}^p$. Moreover, we define the sub-Gaussian norm of S as
\begin{align*}
    \norm{S}_{\psi_2} := \sup_{\norm{s}_2 =1}\norm{\langle S,s\rangle}_{\psi_2}.
\end{align*}
Note that $\norm{.}_{\psi_2}$ is a norm and satisfies, e.g., the triangle inequality.
\end{definition}

\begin{definition}[Matrix Bernstein condition]\label{defn:matrix_bernstein}
Let $H \in \mathbb{R}^{p \times p}$ be a zero-mean symmetric random matrix. We say $H$ satisfies a Bernstein condition with parameter $b >0$ if, for all $j\geq3$,
\begin{align*}
    \mathbb{E}[H^j] \preceq \frac12 j!b^{j-2} \Var(H).
\end{align*}
\end{definition}

\begin{definition}[Pseudo self-concordance] Let $\mathcal{X} \subset \mathbb{R}^p$ be open and $f:\mathcal{X} \rightarrow \mathbb{R}$ be a closed convex function. For a constant $R >0$, we say $f$ is pseudo self-concordant on $\mathcal{X}$ if
\begin{align*}
    \vert D^3_xf(x)[u,u,v]\vert \leq R \norm{u}^2_{\nabla^2f(x)}\norm{v}_2
\end{align*}
\end{definition}

\subsection{Implications of Pseudo Self-Concordance}
\label{sub:self_concord}

We give in this section useful properties of pseudo self-concordant functions.
We denote by $f: \reals^p \rightarrow \reals$ a pseudo self-concordant function with parameter $R$ throughout this section.

The next result shows that the Hessian of a pseudo self-concordant function cannot vary too fast.
\begin{proposition}[\citet{bach2010self}, Prop.~1]
\label{prop:hessian}
    For any $x, y \in \reals^p$, we have
    \begin{align*}
        e^{-R\norm{y-x}_2} \nabla^2 f(x) \preceq \nabla^2 f(y) \preceq e^{R\norm{y-x}_2} \nabla^2 f(x).
    \end{align*}
\end{proposition}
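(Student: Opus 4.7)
The plan is to reduce the matrix inequality to a scalar differential inequality along the line segment from $x$ to $y$, to which we then apply a Gronwall-style argument.

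Fix arbitrary $x, y \in \reals^p$ and $u \in \reals^p$, and set $v = y - x$. Define the scalar function
\begin{align*}
    g(t) := u^\top \nabla^2 f(x + tv) u \,, \qquad t \in [0,1] \,.
\end{align*}
Note that $g(t) \ge 0$ since $f$ is convex, and $g(0) = \|u\|_{\nabla^2 f(x)}^2$, $g(1) = \|u\|_{\nabla^2 f(y)}^2$. Differentiating using the definition $D^3_x f(x)[u,u,v] = \tfrac{\D}{\D t}\inp{u}{\nabla^2 f(x+tv)u}|_{t=0}$ gives
\begin{align*}
    g'(t) = D^3 f(x + tv)[u, u, v] \,.
\end{align*}

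The next step is to apply pseudo self-concordance at the point $x + tv$ to the triple $(u, u, v)$, which yields
\begin{align*}
    |g'(t)| \le R \, \|u\|^2_{\nabla^2 f(x+tv)} \, \|v\|_2 = R \, \|v\|_2 \, g(t) \,.
\end{align*}
This is a scalar differential inequality of the form $|g'(t)| \le c\, g(t)$ with $c := R\|v\|_2$. If $g(t) > 0$ throughout $[0,1]$, this is equivalent to $|(\log g)'(t)| \le c$, and integrating from $0$ to $1$ gives $|\log g(1) - \log g(0)| \le R \|y - x\|_2$, i.e.,
\begin{align*}
    e^{-R\|y-x\|_2} g(0) \le g(1) \le e^{R\|y - x\|_2} g(0) \,.
\end{align*}
Since $u$ was arbitrary and the above reads $e^{-R\|y-x\|_2} \|u\|^2_{\nabla^2 f(x)} \le \|u\|^2_{\nabla^2 f(y)} \le e^{R\|y-x\|_2} \|u\|^2_{\nabla^2 f(x)}$, the claimed Loewner order relation follows immediately.

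The only technical wrinkle is the possibility that $g(t) = 0$ for some $t$, in which case $\log g$ is not defined there. This can be handled by a standard perturbation argument: replace $f$ by $f_\eps(x) := f(x) + \tfrac{\eps}{2}\|x\|_2^2$, so that $g_\eps(t) \ge \eps \|u\|_2^2 > 0$ uniformly. One checks easily that $f_\eps$ remains pseudo self-concordant with the same parameter $R$ (the third derivative is unchanged, and the Hessian only grows, so the inequality defining pseudo self-concordance still holds). Applying the argument above to $f_\eps$ and letting $\eps \to 0$ recovers the result for $f$. I expect this regularization to be the only delicate point; the core estimate is a one-line Gronwall argument.
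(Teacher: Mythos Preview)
Your proof is correct. The paper does not actually prove this proposition itself; it simply cites it as \citet{bach2010self}, Prop.~1, and your argument---reducing to a scalar differential inequality $|g'(t)| \le R\|v\|_2\, g(t)$ along the segment and applying Gronwall, with the $\eps$-regularization to handle degeneracy---is exactly the standard route taken in that reference.
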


We prove below a Lipschitz-type property for the normalized Hessian of a pseudo self-concordant function.
Let $A, J \in \reals^{p \times p}$ where $J$ is p.s.d.
We denote $\norm{A}_{J} := \norm{J^{1/2} A J^{1/2}}$.
\begin{lemma}\label{lem:hessian_lip}
    Let $J \in \reals^{p \times p}$ be p.s.d.
    For any $x_1, x_2, x_\star \in \reals^p$, we have
    \begin{align*}
        \norm{\nabla^2 f(x_2) - \nabla^2 f(x_1)}_{J} \le R e^{R\norm{x_1 - x_\star}_2 \vee \norm{x_2 - x_\star}_2} \norm{\nabla^2 f(x_\star)}_{J} \norm{x_2 - x_1}_2.
    \end{align*}
\end{lemma}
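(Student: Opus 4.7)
The plan is to reduce the $J$-norm bound to a matrix inequality of the form $-CH \preceq A \preceq CH$, where $A = \grad^2 f(x_2) - \grad^2 f(x_1)$ and $H = \grad^2 f(x_\star)$, with the scalar $C = R \, e^{R(\norm{x_1-x_\star}_2 \vee \norm{x_2 - x_\star}_2)} \norm{x_2 - x_1}_2$. Such a sandwich inequality, once established, transfers to a spectral-norm bound in the $J$-weighted geometry because $J^{1/2} \cdot J^{1/2}$ preserves the PSD order: $-C J^{1/2} H J^{1/2} \preceq J^{1/2} A J^{1/2} \preceq C J^{1/2} H J^{1/2}$ implies $\norm{J^{1/2} A J^{1/2}}_2 \leq C \norm{J^{1/2} H J^{1/2}}_2$, which is exactly $\norm{A}_J \leq C \norm{H}_J$.

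To obtain the sandwich inequality, I would fix an arbitrary unit vector $u$ and write the fundamental theorem of calculus along the segment $x_t := x_1 + t(x_2 - x_1)$:
\[
u\T \big(\grad^2 f(x_2) - \grad^2 f(x_1)\big) u
= \int_0^1 \frac{\D}{\D t}\, u\T \grad^2 f(x_t) u \, \D t
= \int_0^1 D^3 f(x_t)[u, u, x_2 - x_1] \, \D t.
\]
Applying the pseudo self-concordance assumption pointwise in $t$ gives $|D^3 f(x_t)[u,u,x_2-x_1]| \leq R \norm{u}^2_{\grad^2 f(x_t)} \norm{x_2 - x_1}_2$.

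The key remaining step is to pass from $\norm{u}^2_{\grad^2 f(x_t)}$ to $\norm{u}^2_{\grad^2 f(x_\star)}$, paying only the claimed exponential factor. This is where I would invoke \Cref{prop:hessian}: $\grad^2 f(x_t) \preceq e^{R\norm{x_t - x_\star}_2} \grad^2 f(x_\star)$. Since $x_t$ lies on the segment between $x_1$ and $x_2$, convexity of $y \mapsto \norm{y - x_\star}_2$ yields $\norm{x_t - x_\star}_2 \leq \norm{x_1 - x_\star}_2 \vee \norm{x_2 - x_\star}_2$, so the exponential factor is uniform in $t$ and comes out of the integral.

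Combining these pieces yields $|u\T A u| \leq C u\T H u$ for every $u$, which is equivalent to the PSD sandwich $-CH \preceq A \preceq CH$ since both matrices are symmetric. The $J$-norm transfer described above then finishes the proof. The argument is essentially a routine chaining of pseudo self-concordance with the Hessian comparison proposition; I don't expect a genuine obstacle, but the one subtle point worth being careful about is that the exponential bound in \Cref{prop:hessian} is anchored at $x_\star$ rather than at $x_1$ or $x_2$, which is precisely what makes the convexity observation on $\norm{x_t - x_\star}_2$ necessary.
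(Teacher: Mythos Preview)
Your proof is correct and follows essentially the same route as the paper: both establish the quadratic-form bound $|u\T(\grad^2 f(x_2)-\grad^2 f(x_1))u| \le C\, u\T \grad^2 f(x_\star) u$ via pseudo self-concordance and \Cref{prop:hessian}, then convert to the $J$-norm. The only cosmetic difference is that you integrate along the segment (fundamental theorem of calculus) while the paper invokes the mean value theorem at a single interior point $\bar x$, and you package the final step as a PSD-sandwich-plus-congruence whereas the paper substitutes $\bar v = J^{1/2}v$ directly; neither changes the substance.
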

\begin{proof}
    Take an arbitrary $v \in \reals^{p}$ with $\norm{v}_2 = 1$, and denote $\bar v := J^{1/2} v$.
    It holds that
    \begin{align*}
        \abs{\bar v^\top \nabla^2 f(x_2) \bar v - \bar v^\top \nabla^2 f(x_1) \bar v}
        &= \abs{D^2f(x_2)[\bar v, \bar v] - D^2 f(x_1)[\bar v, \bar v]}
        = \abs{D^3f(\bar x)[\bar v, \bar v, x_2 - x_1]}
    \end{align*}
    for some $\bar x \in \text{Conv}\{x_1, x_2\}$ by the mean value theorem.
    By the pseudo self-concordance of $f$, we obtain
    \begin{align*}
        \abs{D^3f(\bar x)[\bar v, \bar v, x_2 - x_1]} \le R \norm{\bar v}_{\nabla^2 f(\bar x)}^2 \norm{x_2 - x_1}_2.
    \end{align*}
    According to \Cref{prop:hessian}, we know $\nabla^2 f(\bar x) \preceq e^{R \norm{\bar x - x_\star}_2} \nabla^2 f(x_\star)$.
    As a result,
    \begin{align*}
        R \norm{\bar v}_{\nabla^2 f(\bar x)}^2 \norm{x_2 - x_1}_2 \le R e^{R \norm{x_1 - x_\star}_2 \vee \norm{x_2 - x_\star}_2} \bar v^\top \nabla^2 f(x_\star) \bar v \norm{x_2 - x_1}_2.
    \end{align*}
    Therefore,
    \begin{align*}
        \norm{\nabla^2 f(x_2) - \nabla^2 f(x_1)}_{J} 
        &= \sup_{\norm{v} = 1} \abs{\bar v^\top \nabla^2 f(x_2) \bar v - \bar v^\top \nabla^2 f(x_1) \bar v} \\
        &\le \sup_{\norm{v} = 1} R e^{R \norm{x_1 - x_\star}_2 \vee \norm{x_2 - x_\star}_2} \bar v^\top \nabla^2 f(x_\star) \bar v \norm{x_2 - x_1}_2 \\
        &\le R e^{R \norm{x_1 - x_\star}_2 \vee \norm{x_2 - x_\star}_2} \norm{\nabla^2 f(x_\star)}_{J} \norm{x_2 - x_1}_2.
    \end{align*}
\end{proof}

The next result shows that the local distance between the minimizer of $f$ and an arbitrary point $x$ only depends on the local information at $x$.
Its original version was given by \citet[Proposition 2]{bach2010self} and we state here a variant of it.

\begin{proposition}\label{prop:f_localization}
    Let $x \in \reals^p$ be such that $\nabla^2 f(x) \succ 0$.
    Whenever $\norm{\nabla f(x)}_{\nabla^2 f(x)^{-1}} \le \sqrt{\lambda_{\min}(\nabla^2 f(x))} / (2R)$, the function $f$ has a unique minimizer $x_\star$ and
    \begin{align*}
        \norm{x_\star - x}_{\nabla^2 f(x)} \le 4\norm{\nabla f(x)}_{\nabla^2 f(x)^{-1}}.
    \end{align*}
\end{proposition}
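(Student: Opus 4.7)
The plan is to establish a quantitative convexity lower bound from pseudo self-concordance and use it to trap the global minimizer in a suitable Dikin-type ellipsoid around $x$. Set $H := \nabla^2 f(x)$, $g := \nabla f(x)$, $\mu := \lambda_{\min}(H)$, and $\lambda := \|g\|_{H^{-1}}$; I may assume $\lambda > 0$, as otherwise $x$ itself is a critical point and the claim is trivial. I will consider the closed ellipsoid $\bar S := \{y \in \reals^p : \|y-x\|_H \le 4\lambda\}$ and show that $f(y) > f(x)$ strictly on its boundary; combined with convexity of $f$, this will yield existence, uniqueness, and the stated bound.

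The main step is to combine the integral form of Taylor's theorem with the exponential Hessian lower bound $\nabla^2 f(x + t(y-x)) \succeq e^{-Rt\|y-x\|_2} H$ from \Cref{prop:hessian}. This gives
\begin{align*}
f(y) - f(x) - \langle g, y-x\rangle
= \int_0^1 (1-t)(y-x)^\top \nabla^2 f(x + t(y-x))(y-x)\, dt
\ge \|y-x\|_H^2 \, \omega(R\|y-x\|_2),
\end{align*}
where $\omega(a) := \int_0^1 (1-t)e^{-at}\, dt = (a - 1 + e^{-a})/a^2$. A short calculation of $\omega'$ shows $\omega$ is strictly decreasing on $(0,\infty)$ with $\omega(0^+) = 1/2$. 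Combining with Cauchy--Schwarz $\langle g, y-x\rangle \ge -\lambda\|y-x\|_H$ yields the key inequality
\begin{align*}
f(y) - f(x) \ge -\lambda \|y-x\|_H + \|y-x\|_H^2\, \omega(R\|y-x\|_2).
\end{align*}

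For $y$ on the boundary with $\|y-x\|_H = 4\lambda$, the estimate $\|y-x\|_2 \le \|y-x\|_H/\sqrt{\mu} = 4\lambda/\sqrt{\mu}$, together with the hypothesis $R\lambda \le \sqrt{\mu}/2$, forces $R\|y-x\|_2 \le 2$. Monotonicity of $\omega$ then gives $\omega(R\|y-x\|_2) \ge \omega(2) = (1+e^{-2})/4$, so
\begin{align*}
f(y) - f(x) \ge -4\lambda^2 + 16\lambda^2 \cdot \tfrac{1+e^{-2}}{4} = 4 e^{-2} \lambda^2 > 0.
\end{align*}

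To conclude, I will use the fact that $f$ is continuous on the compact set $\bar S$ and satisfies $f|_{\partial S} > f(x)$, so the minimum of $f$ on $\bar S$ is attained at some interior point $\bar x$, which is therefore an unconstrained critical point of $f$. Convexity of $f$ (inherited from the definition of pseudo self-concordance) makes $\bar x$ a global minimizer, and $\nabla^2 f(\bar x) \succ 0$ (which holds throughout $\bar S$ by \Cref{prop:hessian}) rules out any flat segment of minimizers, giving uniqueness. The bound $\|\bar x - x\|_H \le 4\lambda$ is immediate from $\bar x \in \bar S$. The main technical ingredient is the convexity bound encoded by $\omega$ and verifying its monotonicity; the constant $4$ is essentially optimal for this argument, being the smallest integer $c$ with $c\cdot \omega(c/2) \ge 1$ at the boundary case $R\lambda = \sqrt{\mu}/2$.
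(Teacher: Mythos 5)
Your proposal is correct and complete. Note that the paper itself does not prove this proposition: it states it as a variant of \citet[Proposition~2]{bach2010self} and defers to that reference, so there is no in-paper argument to match. What you give is a self-contained reconstruction in the standard self-concordance style, and every step checks out: the integral Taylor remainder combined with the Hessian lower bound of \Cref{prop:hessian} yields $f(y)-f(x)\ge -\lambda\|y-x\|_{H}+\|y-x\|_{H}^{2}\,\omega(R\|y-x\|_2)$ with $\omega(a)=(a-1+e^{-a})/a^{2}$ decreasing (indeed $\omega'(a)=-\int_0^1 t(1-t)e^{-at}\,dt<0$); on the boundary $\|y-x\|_{H}=4\lambda$ the hypothesis $R\lambda\le\sqrt{\mu}/2$ gives $R\|y-x\|_2\le 2$ and hence $f(y)-f(x)\ge 4e^{-2}\lambda^{2}>0$; compactness then places a critical point in the interior of the ellipsoid, convexity upgrades it to a global minimizer, and positive definiteness of the Hessian (propagated by \Cref{prop:hessian}) rules out multiple minimizers. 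This is essentially the same mechanism as the cited result of Bach, so your proof fills the gap the paper leaves to the literature rather than diverging from it; your closing remark that $4$ is the smallest integer $c$ with $c\,\omega(c/2)\ge 1$ is also accurate, though not needed.
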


The lemma below is an inequality for the spectral norm used in the proof of \Cref{prop:bound_inverse_hess}.
Even though we prove it for general matrices $A$ and $B$, we will only use it for $B = I_d$.
\begin{lemma}\label{lem:spectral_norm}
    Let $A$ and $B$ be two p.d.~matrices of size $p \times p$.
    Assume that $\norm{A - B} \le s < \lambda_{\min}(B)$.
    Then we have
    \begin{align*}
        \norm{A^{-1} - B^{-1}} \le \frac{s}{\lambda_{\min}(B)\big(\lambda_{\min}(B) - s\big)}.
    \end{align*}
    In particular, if $B = I_p$ and $\norm{I-A} \le 1$, we have 
    \[
        \norm{A^{-1} - I} \le \frac{\norm{I-A}}{1-\norm{I-A}} \,.
    \]
\end{lemma}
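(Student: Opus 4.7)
The plan is to use the standard resolvent identity together with a Weyl-type lower bound on the eigenvalues of $A$. First, I would write
\[
    A^{-1} - B^{-1} = A^{-1}(B - A)B^{-1},
\]
which is the matrix analogue of $1/a - 1/b = (b-a)/(ab)$ and is verified by left-multiplying by $A$ and right-multiplying by $B$. Taking spectral norms and using submultiplicativity yields
\[
    \norm{A^{-1} - B^{-1}} \le \norm{A^{-1}} \, \norm{A - B} \, \norm{B^{-1}} \le s \cdot \norm{A^{-1}} \cdot \norm{B^{-1}}.
\]

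Next, I would bound the two operator norms on the right. Since $B$ is positive definite and symmetric, $\norm{B^{-1}} = 1/\lambda_{\min}(B)$. For $\norm{A^{-1}}$, I would invoke a Weyl-type perturbation inequality for symmetric matrices: since $A = B + (A - B)$, we have
\[
    \lambda_{\min}(A) \ge \lambda_{\min}(B) - \norm{A - B} \ge \lambda_{\min}(B) - s > 0,
\]
where positivity is guaranteed by the hypothesis $s < \lambda_{\min}(B)$. Hence $\norm{A^{-1}} = 1/\lambda_{\min}(A) \le 1/(\lambda_{\min}(B) - s)$. Substituting these two bounds into the displayed inequality yields the claim. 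The special case $B = I_p$ follows by plugging in $\lambda_{\min}(B) = 1$.

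There is no real obstacle here: the only step requiring mild care is the Weyl bound, and I would only need the elementary form $|\lambda_{\min}(A) - \lambda_{\min}(B)| \le \norm{A - B}$ for symmetric matrices, which follows directly from the Courant--Fischer min-max characterization. The argument above only uses that $A$ and $B$ are symmetric positive definite (so that their inverses and minimal eigenvalues are well-defined and $\norm{B^{-1}} = 1/\lambda_{\min}(B)$), matching the hypotheses of the lemma.
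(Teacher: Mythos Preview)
Your proof is correct and takes a different route from the paper's. You use the resolvent identity $A^{-1} - B^{-1} = A^{-1}(B-A)B^{-1}$ together with submultiplicativity and Weyl's eigenvalue perturbation bound $\lambda_{\min}(A) \ge \lambda_{\min}(B) - \norm{A-B}$. The paper instead works entirely through the Loewner ordering: from $\norm{A-B} \le s$ it deduces $B - sI_p \preceq A \preceq B + sI_p$, then uses $sI_p \preceq (s/\lambda_{\min}(B))B$ to obtain the two-sided sandwich $[1 - s/\lambda_{\min}(B)]B \preceq A \preceq [1 + s/\lambda_{\min}(B)]B$, inverts the ordering, and reads off the norm bound. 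Both arguments are short and elementary; your resolvent approach is arguably the more standard textbook move for operator-norm perturbation of inverses. The paper's ordering argument, however, yields as a byproduct the intermediate sandwich $c_- B^{-1} \preceq A^{-1} \preceq c_+ B^{-1}$, and spectral comparisons of exactly this form (not just the norm of the difference) are what the paper actually relies on elsewhere---for instance, the bounds $\tfrac{1}{4}H_\star \preceq H_n(\theta_n) \preceq 3H_\star$ that drive the total-error analysis.
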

\begin{proof}
    Since $\norm{A - B} \le s$, it holds that
    \begin{align*}
        B - sI_p \preceq A \preceq B + sI_p.
    \end{align*}
    It then follows from $\lambda_{\min}(B) I_p \preceq B$ that
    \begin{align*}
        [1 - s/\lambda_{\min}(B)] B \preceq A \preceq [1 + s/\lambda_{\min}(B)] B.
    \end{align*}
    As a result, we obtain
    \begin{align*}
        \frac1{1 + s/\lambda_{\min}(B)} B^{-1} \preceq A^{-1} \preceq \frac1{1 - s/\lambda_{\min}(B)} B^{-1}.
    \end{align*}
    Hence,
    \begin{align*}
        \norm{A^{-1} - B^{-1}} \le \frac{s/\lambda_{\min}(B)}{1 - s/\lambda_{\min}(B)} \norm{B^{-1}} \le \frac{s}{\lambda_{\min}(B)[\lambda_{\min}(B) - s]}.
    \end{align*}
\end{proof}

\subsection{Concentration of Random Vectors and Matrices}
\label{sub:rand_vec_mat}

It follows from \citet[Eq.~(2.17)]{vershynin2018high} that a bounded random vector is sub-Gaussian.
\begin{lemma}\label{lem:bounded_subG}
    Let $S$ be a random vector such that $\norm{S}_2 \overset{a.s.}{\le} M$ for some constant $M > 0$.
    Then $S$ is sub-Gaussian with $\norm{S}_{\psi_2} \le M/\sqrt{\log{2}}$.
\end{lemma}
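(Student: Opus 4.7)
The plan is to reduce the vector statement to the scalar case using the definition of the sub-Gaussian norm of a random vector, and then to verify the defining inequality of the scalar sub-Gaussian norm by a direct calculation with the candidate value $t = M/\sqrt{\log 2}$.

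First, I recall that $\norm{S}_{\psi_2} = \sup_{\norm{s}_2 = 1} \norm{\langle S, s\rangle}_{\psi_2}$ by \Cref{defn:sub_gaussian_scalar}. Fix an arbitrary unit vector $s \in \reals^p$. By Cauchy--Schwarz, $|\langle S, s\rangle| \le \norm{S}_2 \norm{s}_2 \le M$ almost surely. Thus it suffices to prove the stronger scalar statement: if $X$ is a real random variable satisfying $|X| \le M$ almost surely, then $\norm{X}_{\psi_2} \le M/\sqrt{\log 2}$.

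Next, I apply \Cref{defn:sub_gaussian} directly: setting $t = M/\sqrt{\log 2}$, I verify
\[
    \expect\left[ \exp\left( \frac{X^2}{t^2} \right) \right]
    = \expect\left[ \exp\left( \frac{X^2 \log 2}{M^2} \right) \right]
    \le \exp(\log 2) = 2,
\]
since $X^2 \le M^2$ almost surely. Therefore the infimum in the definition of the sub-Gaussian norm is attained at a value no larger than $M/\sqrt{\log 2}$, giving $\norm{X}_{\psi_2} \le M/\sqrt{\log 2}$. Taking the supremum over unit vectors $s$ then yields $\norm{S}_{\psi_2} \le M/\sqrt{\log 2}$, completing the proof.

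There is no real obstacle here; the argument is a one-line verification once the scalar reduction is made. The only subtlety worth flagging is being careful with the definition of the sub-Gaussian norm for a vector as a supremum over directions, so that Cauchy--Schwarz can be applied pointwise to pass to the scalar case uniformly in $s$.
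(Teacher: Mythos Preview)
Your proof is correct and is precisely the standard argument behind the reference the paper cites (Vershynin, Eq.~(2.17)): reduce to the scalar case via Cauchy--Schwarz and verify the Orlicz condition at $t = M/\sqrt{\log 2}$ using $X^2 \le M^2$ a.s. The paper does not give its own proof but simply invokes that reference, so your approach matches.
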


As a direct consequence of \citet[Prop.~2.6.1]{vershynin2018high}, the sum of i.i.d.~sub-Gaussian random vectors is also sub-Gaussian.
\begin{lemma}\label{lem:sum_subG}
    Let $S_1, \dots, S_n$ be i.i.d.~sub-Gaussian random vectors, then we have $\norm{\sum_{i=1}^n S_i}_{\psi_2}^2 \le C \sum_{i=1}^n \norm{S_i}_{\psi_2}^2$.
\end{lemma}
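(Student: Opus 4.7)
The plan is to reduce the vector statement to the classical independent-sum bound for scalar sub-Gaussian random variables. By \Cref{defn:sub_gaussian_scalar}, the vector $\psi_2$-norm is defined as
\[
    \mynorm{S}_{\psi_2} = \sup_{\mynorm{s}_2 = 1} \mynorm{\inp{S}{s}}_{\psi_2},
\]
so it suffices to control, uniformly in the unit vector $s \in \reals^p$, the scalar quantity $\big\|\inp{\sum_{i=1}^n S_i}{s}\big\|_{\psi_2}^2 = \big\|\sum_{i=1}^n \inp{S_i}{s}\big\|_{\psi_2}^2$, and then take the supremum on both sides.

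First, I would fix $s \in \reals^p$ with $\mynorm{s}_2 = 1$ and observe that $\inp{S_1}{s}, \dots, \inp{S_n}{s}$ are i.i.d.~mean-zero scalar random variables (mean-zero because \Cref{defn:sub_gaussian_scalar} assumes sub-Gaussian vectors to be centered). Directly from \Cref{defn:sub_gaussian_scalar}, each satisfies $\mynorm{\inp{S_i}{s}}_{\psi_2} \le \mynorm{S_i}_{\psi_2}$. Next, I would invoke Vershynin's classical independent-sum inequality for scalar centered sub-Gaussians (Prop.~2.6.1 in \emph{High-Dimensional Probability}), which gives an absolute constant $C > 0$ with
\[
    \mynorm*{\sum_{i=1}^n \inp{S_i}{s}}_{\psi_2}^2 \le C \sum_{i=1}^n \mynorm{\inp{S_i}{s}}_{\psi_2}^2 \le C \sum_{i=1}^n \mynorm{S_i}_{\psi_2}^2.
\]
Since the right-hand side is independent of $s$, taking the supremum over unit $s$ on the left and applying the vector $\psi_2$-norm definition yields the claim.

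There is essentially no obstacle: the argument is a one-step lift from the scalar to the vector $\psi_2$-norm through the projection-supremum definition. The only point deserving a brief sentence in the write-up is to justify that $\inp{S_i}{s}$ are centered, which is immediate from the paper's convention that sub-Gaussian vectors are mean-zero (\Cref{defn:sub_gaussian_scalar}), so the mean-zero hypothesis of Prop.~2.6.1 is satisfied.
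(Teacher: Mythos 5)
Your proof is correct and matches the paper's intent: the paper states this lemma as a direct consequence of Vershynin's Proposition 2.6.1, and your argument simply fills in the routine lift from the scalar case to the vector $\psi_2$-norm via the projection-supremum definition. No gaps.
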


We call a random vector $S \in \reals^d$ isotropic if $\Expect[S] = 0$ and $\Expect[SS^\top] = \id_d$.
The following theorem is a tail bound for quadratic forms of isotropic sub-Gaussian random vectors.
\begin{theorem}[\citet{ostrovskii2021finite}, Theorem A.1]\label{thm:isotropic_tail}
Let $S \in \reals^d$ be an isotropic random vector with $\norm{S}_{\psi_2} \le K$, and let $J \in \reals^{d \times d}$ be positive semi-definite.
Then,
\begin{align*}
    \Prob(\norm{S}_{J}^2 - \Tr(J) \ge t) \le \exp\left(-c\min\left\{ \frac{t^2}{K^2 \norm{J}_2^2}, \frac{t}{K\norm{J}_\infty} \right\} \right).
\end{align*}
In other words, with probability at least $1 - \delta$, it holds that
\begin{align}
  \norm{S}_{J}^2 - \Tr(J) \le C K^2\left( \norm{J}_2 \sqrt{\log{(e/\delta)}} + \norm{J}_{\infty} \log{(1/\delta)} \right),
\end{align}
where $C$ is an absolute constant.
\end{theorem}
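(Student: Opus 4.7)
My plan is to treat the bound as a Hanson--Wright-type inequality for a (possibly dependent-coordinate) isotropic sub-Gaussian vector and prove it in three stages: diagonalize the quadratic form, bound its moment generating function via a Gaussian decoupling trick, and conclude by a Chernoff-type optimization that splits into two tail regimes.

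First I would diagonalize $J$. Writing $J = U \Lambda U\T$ with $\Lambda = \diag(\lambda_1, \ldots, \lambda_d)$ and $\lambda_i \ge 0$, and setting $T = U\T S$, the $\psi_2$ norm is orthogonally invariant, so $T$ is isotropic with $\norm{T}_{\psi_2} \le K$. Then $\norm{S}_J^2 = \sum_i \lambda_i T_i^2$ and $\Tr(J) = \sum_i \lambda_i = \sum_i \expect[T_i^2]$, so the task reduces to bounding the upper tail of $\sum_i \lambda_i (T_i^2 - 1)$. The coordinates $T_i$ need not be independent, only uncorrelated, so I cannot simply multiply moment generating functions.

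Next I would bound the MGF of $\norm{S}_J^2$ using the Gaussian decoupling identity $\exp(\lambda \norm{v}_2^2) = \expect_g \exp(\sqrt{2\lambda} \inp{g}{v})$ for $g \sim \Ncal(0, \id_d)$ independent of $S$. Applying this with $v = J^{1/2} S$ and exchanging the order of expectation gives
\begin{align*}
    \expect_S\bigl[ \exp(\lambda \norm{S}_J^2) \bigr]
    = \expect_g \expect_S\bigl[\exp(\sqrt{2\lambda}\, \inp{J^{1/2} g}{S}) \bigr].
\end{align*}
Conditional on $g$, the inner expectation is the MGF of a scalar sub-Gaussian of parameter proportional to $K \norm{J^{1/2} g}_2$, hence is at most $\exp(C K^2 \lambda\, g\T J g)$ after a standard sub-Gaussian MGF bound. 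The outer Gaussian chi-squared expectation then gives $\det(\id - 2 C K^2 \lambda J)^{-1/2}$, valid for $\lambda < 1/(2 C K^2 \norm{J}_\infty)$, and log-expanding via $-\log(1-x) \le x + x^2$ produces an MGF bound of the form $C_1 K^2 \lambda \Tr(J) + C_2 K^4 \lambda^2 \norm{J}_2^2$ (with $\norm{J}_2^2 = \sum_i \lambda_i^2$ playing the role of the Frobenius contribution).

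Applying Chernoff to $\exp(\lambda(\norm{S}_J^2 - \Tr(J)))$ and optimizing over $\lambda \in (0, 1/(2C K^2 \norm{J}_\infty))$ then separates into two regimes: for $t$ small the optimum is interior and yields a sub-Gaussian tail $\exp(-c t^2/(K^4 \norm{J}_2^2))$, while for $t$ larger the optimum saturates at the boundary and gives a sub-exponential tail $\exp(-c t/(K^2 \norm{J}_\infty))$; taking the minimum recovers the stated form (after identifying $K$ with the sub-Gaussian norm according to the paper's convention). The delicate point is that the naive sub-Gaussian MGF estimate yields a spurious $(C K^2 - 1)\lambda \Tr(J)$ drift that does not vanish when we center by $\Tr(J)$, so the mean computed from isotropy is incompatible with the crude bound. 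The remedy is a second-moment-sharp MGF bound that exploits isotropy, of the form $\expect \exp(\lambda Z) \le \exp(\expect[Z^2] \lambda^2/2 + C|\lambda|^3 K^3)$ near $\lambda = 0$ for centered sub-Gaussian $Z$; applying this conditionally on $g$ makes the leading linear term collect to exactly $\lambda \Tr(J)$ and cancel upon centering, leaving only the Gaussian and sub-exponential pieces. As a shortcut, one can instead invoke directly the Hanson--Wright inequality for sub-Gaussian vectors with dependent coordinates, which is the route of \citet{ostrovskii2021finite}.
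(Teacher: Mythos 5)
You are attempting more than the paper itself does here: the paper gives no proof of this statement at all — it is imported verbatim as Theorem A.1 of \citet{ostrovskii2021finite} and used as a black-box tool, so the only "proof" in the paper is the citation. Judged on its own terms, your sketch has a genuine gap at exactly the point you flag as delicate. The decoupling-plus-crude-sub-Gaussian-MGF route gives $\expect_S\exp(\lambda \|S\|_J^2) \le \expect_g \exp(cK^2\lambda\, g^\top J g)$, whose linear-in-$\lambda$ term is $cK^2\lambda\Tr(J)$ rather than $\lambda\Tr(J)$, and your proposed repair is to apply, conditionally on $g$, the second-moment-sharp bound $\expect\exp(sZ)\le\exp(s^2\expect[Z^2]/2 + C|s|^3K_Z^3)$. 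But that bound is valid only when $|s|K_Z = \sqrt{2\lambda}\,K\|J^{1/2}g\|_2$ is below an absolute constant, and $g$ is Gaussian, hence unbounded: for every fixed $\lambda>0$ it fails on an event of positive probability, and even on the good event the correction term $C\lambda^{3/2}K^3\|J^{1/2}g\|_2^3$ is cubic in $g$, so its exponential moment under the Gaussian law is infinite — the determinant computation disappears and a direct H\"older/Cauchy--Schwarz split diverges. Making this rigorous requires truncating $g$ and handling the complementary event with the crude bound (or abandoning decoupling for a direct argument), none of which is sketched; and once the bookkeeping is done the natural outcome is the Hanson--Wright scaling $\min\{t^2/(K^4\|J\|_2^2),\, t/(K^2\|J\|_\infty)\}$, which is what the paper's second display actually corresponds to, so the powers of $K$ in your "recovers the stated form" step also need care.

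Your closing shortcut — invoke "the Hanson--Wright inequality for sub-Gaussian vectors with dependent coordinates" — is not an available off-the-shelf result in the form needed. The classical Hanson--Wright inequality requires independent coordinates (your own diagonalization step correctly notes that $T=U^\top S$ has merely uncorrelated coordinates), and the standard dependent-coordinate substitute for positive semidefinite quadratic forms (Hsu--Kakade--Zhang) centers at $K^2\Tr(J)$ rather than at $\Tr(J)$, which is precisely the drift problem you are trying to eliminate. In other words, the exactly-centered, dependent-coordinate statement is the content of the cited theorem itself, so invoking such a result is circular unless you simply cite \citet{ostrovskii2021finite} — which is what the paper does.
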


The next lemma, which follows from \citet[Eq.~(6.30)]{wainwright2019high}, shows that a matrix with bounded spectral norm satisfies the matrix Bernstein condition.
\begin{lemma}\label{lem:bounded_bernstein}
    Let $H$ be a zero-mean random matrix such that $\norm{H}_2 \overset{a.s.}{\le} M$ for some constant $M > 0$.
    Then $H$ satisfies the matrix Bernstein condition with $b = M$ and $\sigma_H^2 = \norm{\Var(H)}_2$.
    Moreover, $\sigma_H^2 \le 2M^2$.
\end{lemma}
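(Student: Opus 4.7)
The plan is to verify the two conclusions of the lemma separately: (i) the matrix Bernstein condition from \Cref{defn:matrix_bernstein}, namely $\expect[H^j] \preceq \tfrac12 j!\, M^{j-2}\Var(H)$ for all $j\ge 3$, and (ii) the variance bound $\sigma_H^2 \le 2 M^2$. Throughout I will use that the Bernstein condition in the paper applies to symmetric random matrices (as stated in \Cref{defn:matrix_bernstein}), so that $H$ is symmetric, $H^j$ is symmetric for every $j$, and $\Var(H) = \expect[H H\T] = \expect[H^2]$ because $H$ is zero-mean.

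For (i) the core step is a deterministic spectral comparison: if $A$ is symmetric with $\|A\|_2 \le M$, then $A^j \preceq M^{j-2} A^2$ for every integer $j\ge 2$. I would prove this by diagonalizing $A = \sum_i \lambda_i v_i v_i\T$ with $|\lambda_i|\le M$ and noting the scalar inequality $\lambda_i^j \le M^{j-2}\lambda_i^2$: when $\lambda_i \ge 0$ it is $\lambda_i^{j-2}\le M^{j-2}$; when $\lambda_i < 0$ and $j$ is odd the left-hand side is negative while the right-hand side is nonnegative; and when $\lambda_i<0$ and $j$ even we again have $\lambda_i^{j-2}=|\lambda_i|^{j-2}\le M^{j-2}$. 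Applying this pointwise to $H$ (which has $\|H\|_2\le M$ a.s.) gives $H^j \preceq M^{j-2} H^2$ almost surely. Taking expectations preserves the semidefinite order, so $\expect[H^j] \preceq M^{j-2}\expect[H^2] = M^{j-2}\Var(H)$. Finally, since $\tfrac12 j!\ge 3 > 1$ for $j\ge 3$, the Bernstein inequality $\expect[H^j] \preceq \tfrac12 j!\, M^{j-2}\Var(H)$ follows immediately, yielding $b = M$.

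For (ii) I would argue directly: $\sigma_H^2 = \|\Var(H)\|_2 = \|\expect[H^2]\|_2 \le \expect\|H^2\|_2 \le \expect\|H\|_2^2 \le M^2 \le 2M^2$, where the first inequality is convexity of the spectral norm (Jensen) and the second is submultiplicativity. The factor of two is slack but matches the statement as written.

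The only mild subtlety is the spectral comparison $H^j \preceq M^{j-2} H^2$ for odd $j$, where the ordering $A^j \preceq B^j$ is known not to follow in general from $A \preceq B$; here the argument goes through because I am comparing $H^j$ to a \emph{psd} upper bound, and the scalar inequality is verified case-by-case on the sign of the eigenvalue. Everything else is routine manipulation of the spectral decomposition and Jensen's inequality, so I do not anticipate any further obstacle.
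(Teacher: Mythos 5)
Your proof is correct. Note that the paper itself does not spell out an argument for this lemma: it simply states that the claim follows from Wainwright (2019), Eq.~(6.30), so your contribution is a self-contained version of that standard fact. Your key step, the almost-sure comparison $H^j \preceq M^{j-2} H^2$, is verified correctly: since $H^j$ and $M^{j-2}H^2$ are simultaneously diagonalizable, the semidefinite ordering reduces to the scalar inequality $\lambda^j \le M^{j-2}\lambda^2$ for $|\lambda|\le M$, and your case analysis on the sign of $\lambda$ and the parity of $j$ handles the only delicate point (odd powers of negative eigenvalues). Taking expectations, using $\Var(H)=\expect[H^2]$ for zero-mean symmetric $H$, and absorbing the slack factor $\tfrac12 j!\ge 1$ (legitimate because $\Var(H)\succeq 0$) gives the Bernstein condition with $b=M$; the bound $\sigma_H^2 \le M^2 \le 2M^2$ via Jensen and submultiplicativity is likewise fine and in fact slightly sharper than the stated constant. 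An equivalent, marginally slicker phrasing of the key step writes $H^j = H\,H^{j-2}\,H \preceq M^{j-2} H^2$ using $H^{j-2}\preceq M^{j-2}\id$ and congruence, but this buys nothing beyond brevity; your route is the standard one.
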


The next theorem is the Bernstein bound for random matrices.

\begin{theorem}[\citet{wainwright2019high}, Theorem 6.17]\label{thm:bernstein_matrix}
Let $\{H_i\}_{i=1}^n$ be a sequence of zero-mean independent symmetric random matrices that satisfies the Bernstein condition with parameter $b > 0$.
Then, for all $t > 0$, it holds that
\begin{align}
  \Prob\left( \norm*{\frac1n \sum_{i=1}^n H_i} \ge t \right) \le 2 \Rank\left(\sum_{i=1}^n \Var(H_i)\right) \exp\left\{ -\frac{n t^2}{2(\sigma^2 + bt)} \right\},
\end{align}
where $\sigma^2 := \frac1n \norm{\sum_{i=1}^n \Var(H_i)}_2$.
\end{theorem}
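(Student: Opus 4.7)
My plan is to follow the classical matrix Laplace transform proof, with one preliminary reduction step to replace the usual ambient-dimension factor $p$ by the rank factor $r := \Rank(\sum_{i=1}^n \Var(H_i))$. Let $W_n := \sum_{i=1}^n H_i$ and let $V \subset \reals^p$ be the range of $\sum_i \Var(H_i)$, an $r$-dimensional subspace. The key observation is that for any $v \in V^\perp$ we have $\sum_i v^\top \Expect[H_i^2] v = 0$, so $\Expect[\|H_i v\|_2^2] = 0$ and hence $H_i v = 0$ almost surely for every $i$ (using symmetry of $H_i$). Letting $U \in \reals^{p \times r}$ be an orthonormal basis of $V$, we may write $H_i = U \tilde H_i U\T$ with $\tilde H_i := U\T H_i U$. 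The $\tilde H_i$ are independent symmetric $r \times r$ zero-mean matrices inheriting the same Bernstein parameter $b$, and an elementary computation shows $\|\sum_i \Var(\tilde H_i)\|_2 = \|\sum_i \Var(H_i)\|_2 = n \sigma^2$. Since the nonzero spectrum of $W_n$ equals that of $\tilde W_n := \sum_i \tilde H_i$, it suffices to prove the bound for $\tilde W_n$, where the ambient dimension is now $r$.

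For the second step, I would invoke the standard matrix Laplace transform argument. By the two-sided decomposition $\Prob(\|\tilde W_n\|_2 \ge nt) \le \Prob(\lambda_{\max}(\tilde W_n) \ge nt) + \Prob(\lambda_{\max}(-\tilde W_n) \ge nt)$, it suffices to control one side (the other is symmetric). For $\theta \in (0, 1/b)$, Markov's inequality applied to the trace exponential gives
\[
    \Prob\big(\lambda_{\max}(\tilde W_n) \ge nt\big) \le e^{-\theta n t} \, \Expect\big[\Tr \exp(\theta \tilde W_n)\big].
\]
Lieb's concavity theorem then yields $\Expect[\Tr \exp(\theta \tilde W_n)] \le \Tr \exp\big(\sum_i \log \Expect[e^{\theta \tilde H_i}]\big)$. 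Expanding the matrix MGF in a power series and invoking the Bernstein condition on $\tilde H_i$, we get the operator inequality
\[
    \Expect[e^{\theta \tilde H_i}] \preceq I + \sum_{k \ge 2} \frac{\theta^k}{k!} \Expect[\tilde H_i^k] \preceq I + \frac{\theta^2/2}{1 - b\theta}\, \Var(\tilde H_i),
\]
and the operator monotonicity of the logarithm (via $\log(I+A) \preceq A$) then produces $\sum_i \log \Expect[e^{\theta \tilde H_i}] \preceq \frac{\theta^2/2}{1 - b\theta}\, \sum_i \Var(\tilde H_i)$. Its largest eigenvalue is bounded by $\theta^2 n \sigma^2/[2(1-b\theta)]$, and since the exponent is an $r \times r$ matrix, the trace inequality $\Tr \exp(M) \le r \cdot e^{\lambda_{\max}(M)}$ gives
\[
    \Prob\big(\lambda_{\max}(\tilde W_n) \ge nt\big) \le r \exp\!\left( -\theta n t + \frac{\theta^2 n \sigma^2 / 2}{1 - b\theta} \right).
\]

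The final step is the standard optimization: setting $\theta = t/(\sigma^2 + bt) \in (0, 1/b)$ makes the exponent equal to $-nt^2/[2(\sigma^2 + bt)]$, and combining with the factor of $2$ from the two-sided bound delivers the claim. I expect the main technical obstacle to be invoking Lieb's concavity theorem, which is a nontrivial result about operator-concave functions; in practice I would cite it as a black box (as done in standard references such as Wainwright or Tropp). A secondary conceptual point worth being careful about is justifying the dimension reduction: one must verify both that the Bernstein condition transfers to $\tilde H_i$ and that the variance norm is exactly preserved under restriction to $V$, which relies crucially on the a.s.~vanishing of $H_i$ on $V^\perp$.
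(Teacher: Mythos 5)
This statement is not proved in the paper at all: \Cref{thm:bernstein_matrix} is imported verbatim from \citet{wainwright2019high} (Theorem 6.17) as a black-box tool in the appendix, so there is no in-paper argument to compare against. Your proposal is essentially the standard proof of that cited result: Markov's inequality on the trace exponential, the Lieb/Tropp subadditivity bound for the matrix MGF, the Bernstein moment condition to get $\expect[e^{\theta \tilde H_i}] \preceq \id + \tfrac{\theta^2/2}{1-b\theta}\Var(\tilde H_i)$ for $\theta \in (0,1/b)$, and the choice $\theta = t/(\sigma^2 + bt)$. Your preliminary rank-reduction step is also correct and is exactly what is needed to turn the usual dimension prefactor into $\Rank\big(\sum_i \Var(H_i)\big)$: the observation that $v^\top \Var(H_i) v = \expect\|H_i v\|_2^2 = 0$ on the kernel of $\sum_i \Var(H_i)$, hence $H_i$ is a.s.\ supported on its range, is the right justification, and the Bernstein condition and variance norm do transfer under conjugation by the orthonormal basis $U$ as you claim.

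One caveat worth making explicit: your two-sided step ("the other side is symmetric") tacitly assumes that $-\tilde H_i$ also satisfies the Bernstein condition. The condition as defined in the paper (\Cref{defn:matrix_bernstein}) only upper-bounds $\expect[H^j]$ in the semidefinite order, so for odd $j$ it gives no lower bound, and the condition for $-H$ does not formally follow. This gap is inherited from the cited source's statement rather than introduced by you, and it is harmless in the paper's use cases, where the condition is always derived from an almost-sure spectral norm bound (\Cref{lem:bounded_bernstein}), which yields the condition for both $\pm H$. If you want a fully self-contained two-sided statement, either assume the condition for $\pm H_i$ or derive it from boundedness as the paper does.
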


\subsection{Generalized Linear Models Satisfy \Cref{thm:if-stat-main} Assumptions}\label{sec:techn:glms_examples}
The assumptions used to prove \Cref{thm:if-stat-main} hold for generalized linear models under some regularity conditions.
We give two concrete examples here. 

\noindent \textit{1. Least Squares:} Let $\Zcal \subset B_{p, M} \times B_{1, M}$, where $B_{p, M} := \{x \in \reals^p: \norm{x}_2 \le M\}$ for some $M > 0$.
Consider the loss $\ell(z, \theta) := \frac{1}{2}(y-\theta^\top x)^2$ where $z=(x, y)$ denotes an input-output pair.
Assume that $H(\theta_\star) = \expect[XX^\top] \succ 0$.
\begin{enumerate}[label=(\alph*), nosep]
    \item Pseudo self-concordance. Since $\nabla^2_\theta \ell(z, \theta) = xx^\top \succeq 0$ and $\nabla^3_\theta \ell(z, \theta) = 0$, the loss $\ell$ is pseudo self-concordant for all $R \ge 0$.
    \item Sub-Gaussian gradient. Note that $\norm{\nabla_\theta \ell(Z, \theta_\star)}_2 = \norm{XX^\top \theta_\star - XY}_2 \le M^2 (\norm{\theta_\star}_2 + 1)$ and $H(\theta_\star) = \expect[XX^\top] \succ 0$.
    This is sufficient to guarantee that the normalized gradient
    $H(\theta_\star)^{-1/2} \nabla \ell(Z,\theta_\star)$ is sub-Gaussian
    (cf. \Cref{lem:bounded_subG}). 
    \item Bernstein Hessian. Note that $\norm{\nabla_\theta^2 \ell(Z, \theta_\star)}_2 = \norm{XX^\top}_2 \le M^2$, 
    the standardized Hessian\\
    $H(\theta_\star)^{-1/2} \nabla_\theta^2 \ell(Z, \theta_\star) H(\theta_\star)^{-1/2} - I_p$ satisfies the matrix Bernstein condition 
    (cf. \Cref{lem:bounded_bernstein}).
\end{enumerate}

\noindent \textit{2. Logistic Regression:} Let $\Zcal \subset B_{p,M} \times \{\pm 1\}$ for some $M > 0$.
Consider the loss $\ell(z, \theta) = \log\big(1 + \exp(-y \inp{\theta}{x})\big)$ and let $\sigma(z) = \frac{1}{1+e^{-z}}$. Assume that $H(\theta_\star) \succ 0$.
\begin{enumerate}[label=(\alph*), nosep]
    \item Pseudo self-concordance. Note that $\nabla_\theta^2 \ell(z, \theta) = \sigma(\theta^\top x) [1 - \sigma(\theta^\top x)] xx^\top$ and $D_\theta^3 \ell(z, \theta)[u, u, v] = \sigma(\theta^\top x) [1 - \sigma(\theta^\top x)] [1 - 2\sigma(\theta^\top x)] (u^\top x)^2 (v^\top x)$.
    It follows that $\abs{D^3_\theta \ell(z, \theta)[u,u,v]} \le M \norm{v}_2 \norm{u}^2_{\nabla^2 \ell(z,\theta)}$ and thus $\ell$ is pseudo self-concordant with $R \ge M$.
    \item Sub-Gaussian gradient. Note that $\norm{\nabla_\theta \ell(Z, \theta_\star)}_2 = \norm{[1 - \sigma(Y \theta_\star^\top X)] Y X}_2 \le M$.
    Therefore, the normalized gradient $H(\theta_\star)^{-1/2} \nabla \ell(Z,\theta_\star)$ is sub-Gaussian (cf. \Cref{lem:bounded_subG}).
    \item Bernstein Hessian. Note that $\norm{\nabla_\theta^2 \ell(Z, \theta_\star)}_2 \le \norm{XX^\top}_2 / 4 \le M^2/4$. It follows that the standardized Hessian $H(\theta_\star)^{-1/2} \nabla_\theta^2 \ell(Z, \theta_\star) H(\theta_\star)^{-1/2} - I_p$ satisfies the matrix Bernstein condition (cf. \Cref{lem:bounded_bernstein}).
\end{enumerate}

\subsection{Convergence Bounds of Optimization Algorithms} \label{sec:techn:convergence}
We recall here the convergence bounds of various linear system solvers. 

\myparagraph{Stochastic Gradient Descent}
We give here the convergence bounds of tail-averaged stochastic gradient descent (SGD) for general strongly convex quadratics from \cite{jain2017parallelizing, jain2017markov}. 

Suppose we wish to minimize the function 
\begin{align} \label{eq:sgd:fn}
    f(u) = \frac{1}{2} \inp{u}{A u} + \inp{b}{u} \,,
\end{align}
where $A \in \reals^{d \times d}$ is strictly positive definite and $b \in \reals^d$ is given.
Denote $u_\star = \argmin_u f(u) = -A^{-1} b$. 

Starting from some $u_0 \in \reals^d$, consider the SGD iterations
\begin{align} \label{eq:sgd:iter}
    u_{t+1} = u_t - \gamma (\hat A_t  u_t + b) \,,
\end{align}
where $\hat A_t$ is a stochastic estimator of the Hessian $A$. 
We make the following assumptions:
\begin{enumerate}[label=(\alph*),nolistsep,leftmargin=\widthof{ (3) }]
    \item The Hessian estimator $\hat A$ of $A$ is unbiased, i.e., $\expect[\hat A] = A$.  Further, we have the second moment bound $\expect[\hat A^2] \preceq B^2 A$ for some $B^2 > 0$. 
    If $\hat A \preceq L \id$ almost surely, then $B^2 \le L$ is always true. 
    \item The minimal eigenvalue of the Hessian $A$ is bounded $\lambda_{\min}(A) \ge \mu$ for some $\mu > 0$. 
\end{enumerate}

The bounds depend on the covariance matrix of the stochastic gradients at $u = u_\star$:
\[
    \Sigma := \expect\left[
        (\hat A u_\star + b) (\hat A u_\star + b)\T
    \right]
    = \expect\left[ 
        \hat A A^{-1} bb\T A^{-1} \hat A
    \right] - bb\T \,.
\]
The noise contribution is characterized by
the trace of the sandwich matrix
\[
    \sigma^2 := 
    \Tr(A^{-1/2}\Sigma A^{-1/2}) 
    = \expect\left[
    u_\star\T A^{1/2} (A^{-1/2} \hat A A^{-1/2}  - I)^2 \, A^{1/2} u_\star 
    \right]
    \,.
\]
The degree of misspecification is captured by the scalar 
\[
    \rho = \frac{d  \, \norm{A^{-1/2} \Sigma A^{-1/2}}_2}{\Tr(A^{-1/2} \Sigma A^{-1/2})} \,.
\]

\begin{theorem}[\cite{jain2017parallelizing,jain2017markov}]
\label{thm:techn:sgd}
    Consider the sequence $(u_t)_{t=0}^\infty$ produced by stochastic gradient descent \eqref{eq:sgd:iter} on function \eqref{eq:sgd:fn} with a step size $\gamma = 1/(2B^2)$. 
    The tail-averaged iterate $\bar u_t = (2/t) = \sum_{\tau = t/2}^t u_\tau$ satisfies 
    \[
        \expect\norm{\bar u_\tau - u_\star}_A^2
        \le 2 \kappa \exp\left(
        - \frac{t}{4\kappa}\right)\norm{u_0 - u_\star}_A^2 
        + 8(1 + \rho) \frac{\sigma^2}{t} \,,
    \]
    where $\kappa = B^2 / \mu$ is a condition number. 
\end{theorem}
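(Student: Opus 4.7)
The plan is to follow the by-now standard bias-variance decomposition for tail-averaged SGD on strongly convex quadratics developed in \cite{jain2017parallelizing,jain2017markov}. I will introduce two auxiliary iterates that together reconstruct $u_t - u_\star$: a \emph{bias} process $(b_t)$ with $b_0 = u_0 - u_\star$ and $b_{t+1} = (\id - \gamma \hat A_t) b_t$, and a \emph{variance} process $(v_t)$ with $v_0 = 0$ and $v_{t+1} = (\id - \gamma \hat A_t) v_t - \gamma \xi_t$, where $\xi_t := \hat A_t u_\star + b$ is mean-zero with covariance $\Sigma$. By linearity $u_t - u_\star = b_t + v_t$, and the tail average then satisfies
\[
    \expect \|\bar u_t - u_\star\|_A^2 \,\le\, 2\, \expect \|\bar b_t\|_A^2 \,+\, 2\, \expect\|\bar v_t\|_A^2,
\]
so the task reduces to controlling each piece at the claimed rate.

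For the bias process, the key inequality I will establish is
\[
    \expect\bigl[\|b_{t+1}\|_A^2 \bigm| b_t\bigr] \,\le\, (1 - \gamma \mu)\, \|b_t\|_A^2,
\]
obtained by expanding the $A$-norm, using the second-moment bound $\expect[\hat A^2]\preceq B^2 A$ together with the step-size choice $\gamma = 1/(2B^2)$ to absorb the quadratic cross-term, and invoking $A \succeq \mu \id$. Iterating yields $\expect\|b_t\|_A^2 \le (1 - \gamma\mu)^t \|u_0 - u_\star\|_A^2$. For the tail average I would then use $\expect\|\bar b_t\|_A^2 \le (2/t) \sum_{\tau=t/2}^t \expect\|b_\tau\|_A^2$, sum the resulting geometric series, and use $(1-1/(2\kappa))^{t/2} \le \exp(-t/(4\kappa))$ to obtain $\expect\|\bar b_t\|_A^2 \le 2\kappa \exp(-t/(4\kappa)) \|u_0 - u_\star\|_A^2$; the factor $\kappa$ arises from summing over the tail window of length $t/2$.

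For the variance process, I will study the covariance $C_t = \expect[v_t v_t^\top]$, which satisfies the Lyapunov recursion $C_{t+1} = \mathcal T(C_t) + \gamma^2 \Sigma$ with operator $\mathcal T(M) = \expect[(\id - \gamma \hat A) M (\id - \gamma \hat A)]$. Under the step-size choice, $\mathcal T$ is a strict contraction on symmetric matrices in an $A$-weighted sense, so the recursion admits a stationary fixed point $C_\infty$; rearranging the fixed-point equation together with $\expect[\hat A^2] \preceq B^2 A$ will give the trace bound $\Tr(A C_\infty) \le \gamma \sigma^2$. Tail averaging provides the additional $1/(\gamma t)$ factor: expressing $\bar v_t$ via the recursion, using stationarity, and controlling the cross-covariance contributions through $\|A^{-1/2}\Sigma A^{-1/2}\|_2 = \rho \sigma^2/d$ will yield $\expect\|\bar v_t\|_A^2 \le 4(1+\rho)\sigma^2/t$. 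Combined with the factor of two from the decomposition, this matches the $8(1+\rho)\sigma^2/t$ term.

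The hard part will be the Lyapunov analysis in the variance step. Since $\mathcal T$ acts on the space of symmetric matrices rather than vectors, controlling its spectral properties requires testing $\mathcal T(M) - M$ against $A$ using the second-moment condition; this is where the precise step-size choice $\gamma = 1/(2B^2)$ is critical. The parameter $\rho$ enters only through the conversion between trace and operator-norm bounds when analysing the tail-averaged covariance, reflecting the gap between an isotropic noise analysis and the general case. A secondary difficulty is tracking constants tightly enough to recover the exact prefactors $2$ and $8(1+\rho)$, which requires being careful both at the Cauchy-Schwarz step and at the geometric-tail summation.
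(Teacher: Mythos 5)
You are reconstructing a result the paper itself never proves: Theorem \ref{thm:techn:sgd} is quoted from \citet{jain2017parallelizing,jain2017markov}, so the only available comparison is with the cited analysis, whose overall architecture (bias--variance splitting of $u_t-u_\star$, tail averaging, covariance-operator recursion for the noise) your proposal correctly identifies. The genuine gap is in your bias step. The inequality $\expect[\norm{b_{t+1}}_A^2\mid b_t]\le(1-\gamma\mu)\norm{b_t}_A^2$ cannot be derived from $\expect[\hat A^2]\preceq B^2A$ and $\gamma=1/(2B^2)$: expanding the $A$-norm gives $\expect[\norm{b_{t+1}}_A^2\mid b_t]=b_t^\top\bigl(A-2\gamma A^2+\gamma^2\expect[\hat A A\hat A]\bigr)b_t$, and the assumption controls $\expect[\hat A^2]$, not $\expect[\hat A A\hat A]$. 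It is in fact false in general: take $A=\diag(1,\mu)$, $\hat A=A\pm\sqrt{\mu}\,(e_1e_2^\top+e_2e_1^\top)$ with equal probability (so $\expect[\hat A^2]=B^2A$ with $B^2=1+\mu$) and $b_t=e_2$; a direct computation gives $\expect\norm{b_{t+1}}_A^2=\bigl(1-2\gamma\mu+\gamma^2(1+\mu^2)\bigr)\norm{b_t}_A^2\approx\tfrac54\norm{b_t}_A^2$ for small $\mu$. The cited proof instead contracts in the Euclidean norm, where the second-moment assumption applies directly, $\expect[\norm{b_{t+1}}_2^2\mid b_t]\le\norm{b_t}_2^2-\gamma(2-\gamma B^2)\norm{b_t}_A^2\le(1-\gamma\mu)\norm{b_t}_2^2$, and converts to the $A$-norm only at the end via $A\preceq B^2\id$ (a consequence of $A^2\preceq\expect[\hat A^2]\preceq B^2A$) and $\norm{b_0}_2^2\le\norm{b_0}_A^2/\mu$; that conversion, not the geometric sum over the tail window, is what produces the $\kappa$ prefactor. (Alternatively one tracks the full bias covariance operator, as the cited papers do.)

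On the variance side your plan has the right shape but the bookkeeping as stated does not close. The fixed point of $C_{t+1}=\Tcal(C_t)+\gamma^2\Sigma$ satisfies $2\Tr(AC_\infty)=\gamma\Tr(\Sigma)+\gamma\Tr\bigl(\expect[\hat A^2]C_\infty\bigr)\le\gamma\Tr(\Sigma)+\tfrac12\Tr(AC_\infty)$, so the natural stationary bound is $\Tr(AC_\infty)\lesssim\gamma\Tr(\Sigma)$, not $\gamma\sigma^2=\gamma\Tr(A^{-1}\Sigma)$, and multiplying a stationary level by $1/(\gamma t)$ is not how the $\sigma^2/t$ term arises: it comes from the fact that the tail average behaves to leading order like $\tfrac{1}{t}A^{-1}\sum_\tau\xi_\tau$, whose expected squared $A$-norm is of order $\Tr(A^{-1}\Sigma)/t$, with the finite-$\gamma$ corrections, the correlations of iterates across the averaging window, and the multiplicative part of the noise controlled by the operator analysis of \citet{jain2017parallelizing}; the factor $(1+\rho)$ and the exact prefactors $2\kappa$ and $8$ are inherited from that analysis and are not recoverable from the steps you outline.
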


\myparagraph{Stochastic Variance Reduced Gradient (SVRG) and its Acceleration}

Consider the optimization problem 
\[
    \min_{u \in \reals^d} \left[ f(u)  = \frac{1}{n} \sum_{i=1}^n f_i(u) \right]\,,
\]
where each $f_i$ is $L$-smooth and convex, and $f$ is $\mu$-strongly convex. If each $f_i$ is the quadratic 
\[
    f_i(u) = \frac{1}{2} \inp{u}{A_i u} + b \,,
\]
then the smoothness is equivalent to $0 \preceq A_i \preceq L \id_d$ for each $i$ and the strong convexity to $A := (1/n)\sum_{i=1}^n A_i \succeq \mu \id_d$.
Let $u_\star = \argmin f(u)$. For the  quadratic example above, we have 
$u_\star = A^{-1}b$

The following is the convergence bound for SVRG~\cite{johnson2013accelerating}. 
\begin{theorem}[\cite{hofmann2015variance}] 
\label{thm:techn:svrg}
    The sequence $(u_t)$ produced by SVRG satisfies 
    \[
        \expect[ f(u_t) - f(u_\star)]
        \le C_1 \kappa \exp\left(
        - \frac{t}{C_2(n + \kappa)}
        \right) \left(f(u_0) - f(u_\star)\right) \,,
    \]
    for $\kappa = L/\mu$ and some absolute constants $C_1$ and $C_2$. 
\end{theorem}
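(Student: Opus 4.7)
The plan is to prove the SVRG convergence bound by following the standard variance-reduction template of \citet{johnson2013accelerating} with the Hofmann--Lucchi--Lacoste-Julien sharpening that removes an extra log factor and yields the $(n+\kappa)$ inner-loop complexity. Throughout, let $\tilde u$ denote the reference point at the start of an outer epoch, let $u_t$ denote the inner iterate, and let
\[
g_t := \nabla f_{i_t}(u_t) - \nabla f_{i_t}(\tilde u) + \nabla f(\tilde u)
\]
with $i_t$ uniform on $[n]$, so that $\expect[g_t \mid u_t] = \nabla f(u_t)$.

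First I would establish the key variance-reduction lemma:
\[
\expect\bigl[\|g_t - \nabla f(u_t)\|_2^2 \,\big|\, u_t\bigr] \le 4L\bigl(f(u_t)-f(u_\star) + f(\tilde u)-f(u_\star)\bigr).
\]
This follows by writing $g_t - \nabla f(u_t) = \bigl(\nabla f_{i_t}(u_t)-\nabla f_{i_t}(u_\star)\bigr) - \bigl(\nabla f_{i_t}(\tilde u)-\nabla f_{i_t}(u_\star)\bigr) - \expect[\cdot]$, using $\expect\|X - \expect X\|^2 \le \expect \|X\|^2$, and applying the smoothness-based inequality $\|\nabla f_i(u)-\nabla f_i(u_\star)\|_2^2 \le 2L(f_i(u)-f_i(u_\star)-\inp{\nabla f_i(u_\star)}{u-u_\star})$ to each term, then averaging over $i$.

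Next I would carry out a single-step descent analysis: for a step size $\gamma = 1/(c L)$ with a small absolute constant $c$, expand $\expect\|u_{t+1}-u_\star\|_2^2$ and apply the variance bound together with convexity $\inp{\nabla f(u_t)}{u_t-u_\star} \ge f(u_t)-f(u_\star)$ to obtain
\[
\expect\|u_{t+1}-u_\star\|_2^2 \le \expect\|u_t-u_\star\|_2^2 - c_1 \gamma \expect[f(u_t)-f(u_\star)] + c_2 \gamma^2 L \bigl(f(\tilde u)-f(u_\star)\bigr).
\]
Telescoping over the inner loop and using strong convexity $\|u_t - u_\star\|_2^2 \le (2/\mu)(f(u_t)-f(u_\star))$ to convert the squared-distance term, one obtains a recursion relating the expected suboptimality at the end of an outer epoch to that at its start. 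The Hofmann--Lucchi--Lacoste-Julien trick is to pick the next reference point by \emph{randomly} sampling an iterate from the inner loop (or equivalently use a geometric loop length), which turns the telescoping inequality directly into a contraction of the form
\[
\expect\bigl[f(\tilde u_{s+1}) - f(u_\star)\bigr] \le \rho \cdot \bigl(f(\tilde u_s) - f(u_\star)\bigr), \qquad \rho \le \exp\bigl(-1/(C_2'(n+\kappa))\bigr),
\]
provided the inner loop length and step size are chosen so that $\gamma \asymp 1/L$ and the expected number of gradient evaluations per epoch is $\Theta(n+\kappa)$.

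Finally I would iterate the per-epoch contraction over $s = t/(n+\kappa)$ outer epochs to reach the claimed bound, absorbing the passage between an iterate $u_t$ and the nearest reference point into the constant $C_1$ (this is where the factor $\kappa$ in front of $\exp(-t/(C_2(n+\kappa)))$ arises, since one pays $L/\mu$ when translating between $f(u_t)-f(u_\star)$ at an arbitrary iterate and the reference-point suboptimality that the recursion controls). The main obstacle is the careful accounting in the one-step descent: ensuring the constants $c_1, c_2$ can be balanced so that strong convexity absorbs the $\gamma^2 L$ noise term with a slack that gives a strict contraction, and verifying that the Hofmann et al.\ averaging trick remains valid in our setting where $f$ is a sum of convex quadratics rather than arbitrary smooth convex summands. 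Since each $f_i(u) = \tfrac12\inp{u}{A_i u} + \inp{b_i}{u}$ is in fact quadratic, all smoothness and convexity inequalities we invoke hold with equality or with sharp constants, so no further work is needed beyond the generic template.
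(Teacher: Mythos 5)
The paper does not prove this statement at all: \Cref{thm:techn:svrg} is invoked as a black-box result cited to \cite{hofmann2015variance}, so there is no internal proof to compare against. Your sketch reconstructs the standard variance-reduction argument (the second-moment/variance lemma via $\|\nabla f_i(u)-\nabla f_i(u_\star)\|_2^2 \le 2L\big(f_i(u)-f_i(u_\star)-\inp{\nabla f_i(u_\star)}{u-u_\star}\big)$, the one-step descent with $\gamma \asymp 1/L$, telescoping over the inner loop, strong convexity at the reference point, and a randomized reference/geometric epoch length to obtain a per-epoch contraction), which is exactly the route taken in the cited literature and is sound for the quadratic objectives used here; your explanation of the $\kappa$ prefactor (translating between an arbitrary iterate and the reference-point suboptimality via smoothness and strong convexity) is also the right mechanism.

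One piece of bookkeeping in your final step is inconsistent as written. You state a \emph{per-epoch} contraction $\expect[f(\tilde u_{s+1})-f(u_\star)] \le \rho\,(f(\tilde u_s)-f(u_\star))$ with $\rho \le \exp\big(-1/(C_2'(n+\kappa))\big)$, and then iterate over $s = t/(n+\kappa)$ epochs; that combination yields $\exp\big(-t/(C_2'(n+\kappa)^2)\big)$, which is strictly weaker than the claimed bound. The correct accounting is that with inner-loop length (or expected geometric length) $\Theta(\kappa)$ and $\gamma \asymp 1/L$, each epoch contracts the reference-point suboptimality by an \emph{absolute constant} factor $\rho \le e^{-c}$, while costing $\Theta(n+\kappa)$ gradient evaluations; iterating over $t/(n+\kappa)$ such epochs then gives $\exp\big(-\Omega(t/(n+\kappa))\big)$ as claimed. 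Equivalently, you may interpret your $\rho$ as a per-gradient-evaluation contraction and iterate it $t$ times, but you cannot mix the two normalizations. With that correction, the proposal matches the standard proof underlying the citation.
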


Accelerated SVRG~\cite{lin2018catalyst,allen2017katyusha} satisfies the following bound. 

\begin{theorem}
\label{thm:techn:csvrg}
    The sequence $(u_t)$ produced by accelerated SVRG satisfies 
    \[
        \expect[ f(u_t) - f(u_\star)]
        \le  C_1 \kappa \exp\left(
        - \frac{t}{C_2(n + \sqrt{n\kappa})}
        \right) \left(f(u_0) - f(u_\star)\right) \,,
    \]
    where $\kappa = L/\mu$ is the condition number and $C_1$ and $C_2$ are absolute constants. 
\end{theorem}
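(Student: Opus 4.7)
The plan is to derive this bound as a direct consequence of the Catalyst acceleration framework of \citet{lin2018catalyst} applied to the SVRG guarantee of \Cref{thm:techn:svrg}. Catalyst wraps any linearly convergent inner solver in an outer proximal-point loop with Nesterov extrapolation: at outer iteration $k$ one approximately solves the auxiliary subproblem
\[
    y_{k+1} \approx \argmin_{u \in \reals^d} \left\{ f(u) + \frac{\tau}{2} \|u - x_k\|_2^2 \right\}
\]
to a prescribed relative accuracy, and then sets $x_{k+1}$ via a momentum update combining $y_{k+1}$ and $y_k$. The regularized objective is $(L+\tau)$-smooth and $(\mu+\tau)$-strongly convex, hence has condition number $\kappa_\tau := (L+\tau)/(\mu+\tau)$, and the outer accelerated sequence contracts suboptimality by the factor $1 - \sqrt{\mu/(\mu+\tau)}$ per iteration, up to controlled inner errors.

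The key step is tuning $\tau$ to balance inner and outer cost. By \Cref{thm:techn:svrg}, the inner subproblem can be solved to the required inexactness in $O((n + \kappa_\tau)\log(1/\delta_k))$ component-gradient evaluations, with inner tolerances $\{\delta_k\}$ scheduled to decay geometrically. Choosing $\tau = L/n - \mu$ (and reverting to unaccelerated SVRG in the already-well-conditioned regime $\kappa \le n$, where the stated bound anyway reduces to \Cref{thm:techn:svrg}) yields $\kappa_\tau = O(n)$, so each outer step costs $O(n)$ component evaluations up to logarithmic factors, while the outer contraction rate becomes $\Theta(\sqrt{n\mu/L}) = \Theta(\sqrt{n/\kappa})$, so $K$ outer iterations shrink expected suboptimality by the factor $\exp(-cK\sqrt{n/\kappa})$. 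Composing the two gives a total of $O\bigl((n + \sqrt{n\kappa})\log(1/\eps)\bigr)$ component-gradient evaluations to reach relative accuracy $\eps$, which is equivalent after inversion to the claimed exponential-in-$t$ bound, with Catalyst warm-start and stopping-criterion constants absorbed into the $C_1 \kappa$ prefactor.

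The hard part is the inner-accuracy bookkeeping: one must verify that the geometrically-decaying $\{\delta_k\}$ tolerances are actually reached by SVRG within the stated per-outer budget, and that the resulting \emph{inexact} accelerated outer sequence still contracts at essentially the ideal Nesterov rate without accumulated inner errors spoiling the bound. This is the technical heart of the Catalyst inexactness analysis and is where the extra logarithmic factors and the $\kappa$ prefactor enter. An alternative single-loop route yielding the same complexity is the Katyusha analysis of \citet{allen2017katyusha}, which couples a negative-momentum ``magnet'' term with SVRG-style variance reduction in one loop and thereby avoids nested scheduling, at the cost of a more delicate Lyapunov argument; either route suffices to obtain the stated bound.
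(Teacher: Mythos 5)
Your proposal is correct and follows essentially the same route as the paper, which does not prove this statement itself but imports it from exactly the sources you invoke (\citet{lin2018catalyst} for Catalyst-accelerated SVRG and \citet{allen2017katyusha} for Katyusha), with \Cref{thm:techn:svrg} as the inner-solver guarantee. Your sketch of the smoothing-parameter choice $\tau \approx L/n$, the resulting $O(n)$ inner cost and $\Theta(\sqrt{n/\kappa})$ outer contraction, and the deferral of the inexactness bookkeeping to the Catalyst analysis is a faithful reconstruction of how those cited results yield the stated bound.
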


\subsection{Superquantile Review} \label{sec:a:superquantile}

\begin{figure}
    \centering
    \includegraphics[width=0.5\textwidth]{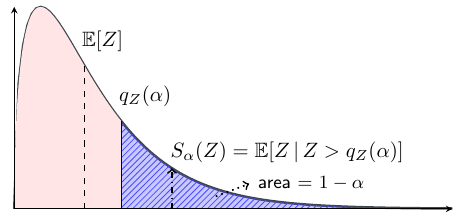}
    \caption{Expectation, quantile, and superquantile of a continuous random variable $Z$ at level $\alpha \in (0, 1)$.}
    \label{fig:superquantile}
\end{figure}

We review the various equivalent expressions of the superquantile. Consider a real-valued random variable $Z$ with distribution $P$, cumulative distribution function $F_Z$ and quantile function $q_Z(\alpha) = F_Z^{-1}(\alpha)$.

The following are equivalent expressions for the superquantile: 
\begin{align}
\begin{aligned}
    \supq_\alpha(Z) &= \sup\left\{ \expect_{Q} [Z] \, :\, \frac{\D Q}{\D P} \le \frac{1}{1-\alpha} \right\} \\
    &= \inf_{\eta \in \reals} \left\{ \eta + \frac{1}{1-\alpha} \expect_P \left(Z - \eta\right)_+  \right\} \\
    & = \frac{1}{1-\alpha} \int_\alpha^1 q_Z(\beta) \, \D \beta \,.
\end{aligned}
\end{align}
When $Z$ is a continuous random variable, the third expression is equivalent to (see \Cref{fig:superquantile})
\[
    S_\alpha(Z) = \expect[ Z \, | \, Z > q_Z(\alpha)] \,.
\]  
When $Z$ is discrete  and
takes equiprobable values $z_1, \ldots, z_n$, the three expressions above reduce to the following
\begin{align}
\begin{aligned}
    \supq_\alpha(Z) &= 
    \max\left\{ 
    \sum_{i=1}^n w_i z_i \,: \, 
    0 \le w_i \le  \frac{1}{(1-\alpha)n} 
    \text{ for all } i \in [n] \,,
    \sum_{i=1}^n w_i = 1
    \right\} \\
    &= \min_{\eta \in \reals} \left\{ 
        \eta + \frac{1}{(1-\alpha)n} \sum_{i=1}^n (z_i - \eta)_+ 
    \right\} \\
    &= \frac{1}{(1-\alpha)n} \sum_{i \in I} z_i  + \frac{\delta_\alpha}{1-\alpha} q_Z(\alpha) \,,
\end{aligned}
\end{align}
where $I = \{i \,:\, z_i > q_Z(\alpha)\}$ 
and $\delta_\alpha = F_Z(q_Z(\alpha)) - \alpha$. Note that $\delta_\alpha = 0$  when $\alpha n$ is an integer.

\vfill

\end{document}